\title[Provably Efficient Generalized Lagrangian Policy Optimization for Safe MARL]{Provably Efficient Generalized Lagrangian Policy Optimization for Safe Multi-Agent Reinforcement Learning}
\let\hat\widehat
\let\tilde\widetilde
\newtheorem{assumption}{Assumption}
\DeclareMathOperator*{\minimize}{minimize}
\DeclareMathOperator*{\maximize}{maximize}
\DeclareMathOperator*{\subject}{subject~to}
\newcommand{\calL}{{\mathcal{L}}}
\newcommand{\calX}{{\mathcal{X}}}
\newcommand{\calY}{{\mathcal{Y}}}
\newcommand{\calP}{{\mathcal{P}}}
\newcommand{\calF}{{\mathcal{F}}}
\DeclareMathOperator*{\argmin}{argmin}
\DeclareMathOperator*{\argmax}{argmax}
\DeclareMathOperator*{\argminimax}{argminimax}
\newcommand{\field}[1]{\mathbb{#1}}
\newcommand{\Ind}[1]{ \field{I}{\{{#1}\}} }
\newcommand{\norm}[1]{\left\|{#1}\right\|}
\newcommand{\DefinedAs}[0]{\mathrel{\mathop:}=}
\newcommand{\inner}[2]{\left\langle #1,#2 \right\rangle}
\newcommand{\rbr}[1]{\left(#1\right)}
\newcommand{\sbr}[1]{\left[#1\right]}
\newcommand{\abr}[1]{\left|#1\right|}
\DeclareFontFamily{OMX}{MnSymbolE}{}
\DeclareFontShape{OMX}{MnSymbolE}{m}{n}{
    <-6>  MnSymbolE5
   <6-7>  MnSymbolE6
   <7-8>  MnSymbolE7
   <8-9>  MnSymbolE8
   <9-10> MnSymbolE9
  <10-12> MnSymbolE10
  <12->   MnSymbolE12}{}
\DeclareSymbolFont{mnlargesymbols}{OMX}{MnSymbolE}{m}{n}
\DeclareMathDelimiter{\llangle}{\mathopen}{mnlargesymbols}{'164}{mnlargesymbols}{'164}
\DeclareMathDelimiter{\rrangle}{\mathclose}{mnlargesymbols}{'171}{mnlargesymbols}{'171}
\newcommand{\savehyperref}[2]{\texorpdfstring{\hyperref[#1]{#2}}{#2}}
\author{%
 \Name{Dongsheng Ding} \Email{dongshed@seas.upenn.edu}\\
 \addr University of Pennsylvania, Philadelphia, PA 19104, USA
 \AND
 \Name{Xiaohan Wei} \Email{ubimeteor@fb.com}\\
 \addr Meta, Menlo Park, CA 94065 USA
 \AND
 \Name{Zhuoran Yang} \Email{zhuoran.yang@yale.edu}\\
 \addr Yale University,
 New Haven, CT 06511, USA
 \AND
 \Name{Zhaoran Wang} \Email{zhaoranwang@gmail.com}\\
 \addr Northwestern University, 
 Evanston, IL 60208, USA
   \AND
 \Name{Mihailo R. Jovanovi\'c} \Email{mihailo@usc.edu}\\
 \addr University of Southern California, Los Angeles, CA 90089, USA
}
\begin{document}

\maketitle

\begin{abstract}%
	 We examine online safe multi-agent reinforcement learning using constrained Markov games in which agents compete by maximizing their expected total rewards under a constraint on expected total utilities. Our focus is confined to an episodic two-player zero-sum constrained Markov game with independent transition functions that are unknown to agents, adversarial reward functions, and stochastic utility functions. For such a Markov game, we employ an approach based on the occupancy measure to formulate it as an online constrained saddle-point problem with an explicit constraint. We extend the Lagrange multiplier method in constrained optimization to handle the constraint by creating a generalized Lagrangian with minimax decision primal variables and a dual variable. Next, we develop an upper confidence reinforcement learning algorithm to solve this Lagrangian problem while balancing exploration and exploitation. Our algorithm updates the minimax decision primal variables via online mirror descent and the dual variable via projected gradient step and we prove that it enjoys sublinear rate $ O((|X|+|Y|) L \sqrt{T(|A|+|B|)}))$ for both regret and constraint violation after playing $T$ episodes of the game. Here, $L$ is the horizon of each episode, $(|X|,|A|)$ and $(|Y|,|B|)$ are the state/action space sizes of the min-player and the max-player, respectively.
	 To the best of our knowledge, we provide the first provably efficient online safe reinforcement learning algorithm in constrained Markov games.

\end{abstract}

\begin{keywords}%
	safe multi-agent reinforcement learning, constrained Markov game, upper confidence reinforcement learning, generalized Lagrange multiplier method, online mirror descent
\end{keywords}

\section{Introduction}

Safe Reinforcement Learning (RL) studies how a single agent learns to maximize its expected total reward subject to safety-concerned constraints by interacting with an unknown environment over time~\citep{garcia2015comprehensive,thomas2015safe,amodei2016concrete}. The constrained Markov decision processes (MDPs) provide a standard class of constraint critical environment models~\citep{altman1999constrained} that are utilized in autonomous robots~\citep{feyzabadi2017robot,fisac2018general}, personalized medicine~\citep{girard2018structural}, online advertising~\citep{boutilier2016budget}, and financial management~\citep{abe2010optimizing}. General constrained MDPs for two or more agents are often formulated as constrained Markov games (MGs) in which agents compete under constraints~\citep{altman2000constrained,altman2005zero,altman2008constrained}, providing an effective model for safe multi-agent RL~\citep{nguyen2014decentralized,shalev2016safe,zhang2019multi}.

Considerable recent progress has been made in single-agent safe RL, especially for solving constrained MDP problems with constraint satisfaction guarantees~\citep{efroni2020exploration,brantley2020constrained,bai2020model,ding2020provably,chen2021primal,singh2020learning,ding2022convergence}. In these references, Lagrangian-based methods have been combined with the optimistic exploration to address exploration-exploitation trade-off under constraints. These constrained MDP learning algorithms are sample-efficient (in achieving both low regret and low constraint violation) and they effectively enhance classical RL methods to attain safety requirements. However, most of these algorithms are limited to the single-agent setting and it is an open question how to balance the exploration-exploitation trade-off under constraints for multiple agents. Another motivation for our work comes from recent advances on the efficient competitive RL algorithms in MGs~\citep{wei2017online,bai2020provable,bai2020near,xie2020learning}.

In this work, we take initial steps towards developing provably efficient safe multi-agent RL algorithms. We examine perhaps the most basic safe multi-agent RL setup that involves a two-player zero-sum constrained MG with independent state transitions~\citep{altman2000constrained,altman2005zero,altman2008constrained,singh2014characterization}. 
This problem represents a generalization of constrained MDPs to the two-player case with coupled constraints. In such a constrained MG, two players follow their own state transitions independently, take actions simultaneously, and observe the reward and utility functions while competing against each other by maximizing/minimizing the reward while both are restrained by the constraint regarding some utility for safety reasons. The decision-coupling that arises from the constraint is often encountered in multi-agent systems~\citep{rosen1965existence,li2014decoupling,kulkarni2011generalized,kulkarni2017games,de2019resource}. More specifically, we aim to design an online RL algorithm for solving episodic two-player zero-sum constrained MGs. Here, two players do not know the transition models and have no access to a generative model, but can play the game for multiple episodes using arbitrary policies. The goal is to find an approximate constrained Nash equilibrium of the game in hindsight, a generalization of Nash equilibrium to characterize violating constraints if any unilateral deviations occur. We utilize a notion of regret to quantify the approximation error of the constrained Nash equilibrium and employ a constraint dissatisfaction (which results from violation of any utility constraints) to evaluate the constraint violation.

\vspace*{1ex}
\noindent\textbf{Contribution}. We develop the first provably efficient algorithm for a constrained Markov game (MG) with $O(\sqrt{T})$ regret and $O(\sqrt{T})$ constraint violation. Specifically, we introduce an episodic constrained MG with unknown independent transition functions and decision-couplings that come from both adversarial reward functions and coupled stochastic constraints on utility functions. We use the occupancy measure approach to formulate such a MG as a constrained saddle-point problem with an explicit constraint. 
We extend the Lagrange method in constrained optimization to deal with the constraint by creating a generalized Lagrangian with minimax decision primal variables and a dual variable.
We develop an upper confidence reinforcement learning algorithm -- an $\underline{\text{U}}$pper $\underline{\text{C}}$onfidence $\underline{\text{B}}$ound 
$\underline{\text{C}}$onstrained $\underline{\text{SA}}$ddle-$\underline{\text{P}}$oint $\underline{\text{O}}$ptimization (UCB-CSAPO) algorithm -- to solve this Lagrangian problem while balancing exploration and exploitation. Our algorithm updates the minimax decision primal variables via optimistic mirror descent and the dual variable via projected gradient step and we prove that it enjoys sublinear rate $ O((|X|+|Y|) L \sqrt{T(|A|+|B|)}))$ for both regret and constraint violation after playing $T$ episodes. Here, $L$ is the horizon of each episode, $(|X|,|A|)$ and $(|Y|,|B|)$ are the state/action space sizes of the min-player and max-player, respectively. 

\vspace*{1ex}
\noindent\textbf{Related Work}. We briefly review the most-related work; see Appendix~\ref{ap.related} for details. Our work is closely related to safe multi-agent RL in constrained MGs. 
The Nash equilibrium for constrained MGs have been studied in~\cite{altman2000constrained,gomez2003saddle,altman2005zero,alvarez2006existence,altman2007constrained,altman2008constrained,altman2009constrained,singh2014characterization} using the notion of \emph{constrained Nash equilibrium} (which generalizes the concept of \emph{generalized Nash equilibrium} in static games~\citep{arrow1954existence} to MGs); see more studies in~\cite{yaji2015necessary,zhang2019discrete,wei2020discrete,wei2021constrained,zhang2021constrained}. These results are not applicable to the RL setting that assumes unknown models. Recently, asymptotic convergence in learning constrained MGs was examined in~\cite{hakami2015learning,jiang2020finding} but sample efficiency and exploration were not fully addressed, except for a concurrent work on learning correlated equilibria~\citep{
	chenfinding}. Our work fills this gap by adding built-in exploration mechanisms under constraints and proving the first non-asymptotic convergence for learning constrained Nash equilibria. 

Our work is also pertinent to a rich RL literature on learning constrained MDPs~\citep{zheng2020constrained,qiu2020upper,kalagarla2020sample,bai2020model,chow2017risk,tessler2018reward,ding2020natural,ding2020provably, ding2022convergence, wachi2020safe,efroni2020exploration,brantley2020constrained,chen2021primal,liu2021learning,ying2021dual,liu2021fast,bai2021achieving,zhao2021primal,li2021faster,chen2022learning}. While these results provide provably efficient algorithms regarding regret and constraint satisfaction in the single-agent setting, they are not applicable to our multi-agent game being played under constraints, because of the \emph{non-convexity} nauture of constrained multi-agent policy optimization and the \emph{non-stationary} environment each agent is facing. An extended line of work on constrained MDPs focuses on cooperative multi-agent learning under constraints and most efforts study the case where multiple agents have independent MDPs with a coupled budget/resource constraint~\citep{meuleau1998solving,boutilier2016budget,wei2018online,de2020risk,gagrani2020weakly}. All these results assume knowing transition models or system dynamics. Only a few studies considered the shared MDP case~\citep{diddigi2019actor,ludecentralized,parnika2021attention,gu2021multi}, but they lack theoretical guarantees and do not handle exploration. In contrast, our work focuses on the MG setting with unknown models and attacks the exploration challenge directly.

\vspace*{-1ex}
\section{Problem Setup}
\label{prelim}

In this section, we introduce zero-sum Markov games (MGs) with constraints, which are categorized as constrained Markov/stochastic games~\citep{altman2000constrained,altman2005zero,altman2008constrained}. 

In an episodic constrained MG there are two players; a \emph{min-player} -- $(X, A, P_1, r, g,T)$, which minimizes the reward, and a \emph{max-player} -- $(Y, B, P_2, r, h,T)$, which maximizes the reward, while adhering to a coupled utility constraint. Here, $T$ is the number of episodes, $X$ and $Y$  are finite state spaces, $A$ and $B$ are finite action spaces, $P_1$ and $P_2$ are transition probability measures where $P_1 (\cdot \,|\, x, a)$ is a distribution over $X$ if the min-player takes action $a$ in state $x$ and $P_2 (\cdot \,|\, y, b)$ is a distribution over $Y$ if the max-player takes action $b$ in state $y$, $r \DefinedAs \{r^t\}_{t\,=\,1}^T$ is a collection of players' reward functions $r^t$: $X\times Y\times A\times B\to [0,1]$, whereas $g \DefinedAs \{g^t\}_{t\,=\,1}^T$ and $h\DefinedAs \{h^t \}_{t\,=\,1}^T$ are collections of players' utility functions $g^t$: $X\times A\to [0,1]$, $h^t$: $Y\times B\to [0,1]$. For two independent transitions, players are coupled via the reward function and a constraint on their utility functions. 

We utilize layered Markov decision processes to model the environment dynamics. For each player, e.g., the min-player, we assume that the state space $X$ has $L+1$ layers and that it satisfies the loop-free property: (i) $X \DefinedAs X_{0} \cup\cdots\cup X_{L}$ and $X_{\ell_1}\cap X_{\ell_2}=\emptyset$ for $\ell_1\neq \ell_2$; (ii) $X_0=\{ x_0 \}$ and $X_L=\{ x_L \}$; (iii) if $P_1( x' \,\vert\, x, a) >0$, then $x' \in X_{\ell+1}$ and $x\in X_\ell$ for some $\ell \in \{0,1,\cdots,L\}$. This assumption is common in loop-free stochastic shortest path problems~\citep{gyorgy2007line,jaksch2010near,neu2010online,rosenberg2019online,jin2020learning}; it is often used to simplify notation/analysis since any episodic MDPs can be reduced to be loop-free.

The min/max players interact with the environment in episode $t$ as follows. At the beginning, the environment determines the reward function $r^t$ and the utility functions $g^t$ and $h^t$. Meanwhile, two players decide their policies $\pi^t$: $X\times A \to [0,1]$ and $\mu^t$: $Y\times B\to [0,1]$, where $\pi^t( \cdot\,\vert\,x)$ and $\mu^t( \cdot\,\vert\,y)$ are probability distributions over their action spaces $A$ and $B$, respectively. Then, given initial states $x_0$ and $y_0$, both players execute their own policies $\pi^t$ or $\mu^t$ for $L$ steps. At step $\ell\in \{0,\ldots,L-1 \}$, each player only observes its own state $x_{\ell}\in X$ or $y_\ell\in Y$, takes action $a_\ell$ or $b_\ell$ following its own policy $\pi^t$ or $\mu^t$, transits to next state $x_{\ell+1}$ or $y_{\ell+1}$ according to its own transition $P_1(\cdot\,\vert\,x_\ell,a_\ell)$ or $P_2(\cdot\,\vert\,y_\ell,b_\ell)$, and observes reward $r^t$ and local utility $g^t$ or $h^t$. Assume there is no dependence between functions $r^t$, $g^t$, and $h^t$ and they are independent of the underlying MDPs.

To define the learning objective, for the min-player in episode $t$ we introduce the occupancy measure $q_1^t$: $X\times A\times X\to [0,1]$ by $q_1^t(x,a,x') \DefinedAs \text{Prob} (x_\ell=x, a_\ell = a, x_{\ell+1} = x')$ for $x\in X_\ell$, describing the marginal probability of visiting $(x,a,x')$ when executing policy $\pi^t$ under the transition $P_1$. Similarly, we introduce the occupancy measure $q_2^t$: $Y\times B\times Y\to [0,1]$ for the max-player. We recall that a function $q$: $X\times A\times X\to [0,1]$ is an occupancy measure associated with policy $\pi$ and transition $P$ if and only if it satisfies two conditions~\citep{altman1999constrained}: (i) $\sum_{x\in X_\ell}\sum_{a\in A}\sum_{x'\in X_{\ell+1}} q(x, a, x') = 1$ for $\ell \in\{0,\ldots,L-1\}$; (ii) $\sum_{x\in X_{\ell-1}}\sum_{a\in A} q(x, a, x') =\sum_{a\in A}\sum_{x''\in X_{\ell+1}} q(x', a, x'')$ for $x'\in X_\ell$ and $\ell \in\{1,\ldots,L-1\}$. We denote by $\Delta(P)$ a set of valid occupancy measures under $P$,
\[
\Delta(P) \;\DefinedAs\; \big\{ q\!:\! X\times A\times X\to [0,1] \;\vert\; q\text{ satisfies (i) and (ii) as shown above}\big\}.
\]
It is worth noting that the occupancy measure set is convex and compact for finite MDPs~\citep{altman1999constrained}.
Using an occupancy measure $q$, we can express associated transition $P$ and policy $\pi$ as
\begin{equation}\label{eq.P_pi}
P(x'\,\vert\,x,a) \;=\;\frac{q(x,a,x')}{\sum_{x''\,\in\,X_{\ell+1}}q(x,a,x'') }
\; \text{ and } \;
\pi(a\,\vert\,x) \;=\; \frac{\sum_{x'\,\in\,X_{\ell+1}} q(x,a,x')}{\sum_{a\,\in\,A}\sum_{x''\,\in\,X_{\ell+1}}q(x,a,x'') }
\end{equation}
where $x\in X_\ell$. Slightly extending the notation $q$, we use it to represent the probability of visiting $(x,a)$, i.e., $q(x, a) = \sum_{x'\,\in\,X_{\ell+1}} q(x,a,x')$ for $x\neq x_L$. These properties imply that the problem of learning a policy equals learning the associated occupancy measure~\citep{zimin2013online}.

In episode $t$, given a min-policy $\pi^t$ and a max-policy $\mu^t$, we introduce the expected total reward,
\begin{equation}\label{eq.reward}
\begin{array}{lcl}
&& \!\!\!\! \!\!\!\! \!\!\!\! \!\! \displaystyle
\mathbb{E}_{P_1,P_2,\pi^t,\mu^t }
\sbr{\,
	\sum_{\ell\,=\,0}^{L-1} r^{t}(x_\ell,y_\ell,a_\ell,b_\ell) 
	\,} 
\;=\;
\sum_{\ell\,=\,0}^{L-1}\sum_{x\,\in\,X_\ell, \,y\,\in\,Y_\ell}\sum_{a\,\in\,A,\,b\,\in\,B} q_1^t(x,a) q_2^t(y,b) r^t(x,y,a,b)
\\[0.2cm]
&& \!\!\!\! \!\!\!\! \!\!\!\! \!\!
\displaystyle
\;\DefinedAs\;
\inner{q_1^t\cdot q_2^t}{r^t}
\end{array}
\end{equation}
where the expectation $\mathbb{E}$ is taken over the random state-action sequence $\{ (x_\ell,y_\ell,a_\ell,b_\ell) \}_{\ell \,=\,0}^{L-1}$; the action $a_\ell$ follows the policy $\pi^t(\cdot\,\vert\,x_\ell)$  in the state $x_\ell$ and the next state $x_{\ell+1}$ follows the transition $P_1(\cdot\,\vert\,x_\ell,a_\ell)$; the action $b_\ell$ follows the policy $\mu^t(\cdot\,\vert\,y_\ell)$  in the state $y_\ell$ and the next state $y_{\ell+1}$ follows the transition $P_2(\cdot\,\vert\,y_\ell,b_\ell)$. 
Similarly, we can define the expected total utilities as
\begin{subequations}\label{eq.utility}
	\begin{equation}\label{eq.utilityx}
	\mathbb{E}_{P_1,\pi^t}
	\sbr{\,\sum_{\ell\,=\,0}^{L-1} g_{x}^{t}(x_\ell,a_\ell)  \,} 
	\;=\;
	\sum_{\ell\,=\,0}^{L-1}\sum_{x\,\in\,X_\ell}\sum_{a\,\in\,A} q_1^t(x,a) g^t(x,a)
	\;\DefinedAs\;
	\inner{q_1^t}{g^t}
	\end{equation}
	\\[-0.5cm]
	\begin{equation}\label{eq.utilityy}
	\mathbb{E}_{P_2,\mu^t} 
	\sbr{\,\sum_{\ell\,=\,0}^{L-1} h^{t}(y_\ell,a_\ell) \,} 
	\;=\;
	\sum_{\ell\,=\,0}^{L-1}\sum_{y\,\in\,Y_\ell}\sum_{b\,\in\,B} q_2^t(y,b) h^t(y,b)
	\;\DefinedAs\;
	\inner{q_2^t}{h^t}.
	\end{equation}
\end{subequations}

In general, reward function $r^t$ and utility functions $g^t$ and $h^t$ all can change arbitrarily, i.e., being adversarial. However, even if we fix the opponent's policy, there is no algorithm for the player to achieve sublinear regret and constraint violation at the same time when the constraints are changing adversarially~\citep{mannor2009online}. Hence, we restrict the utility functions to be stochastic: $g^t(x,a) \DefinedAs g(x,a; \xi^t)$, $h^t(y,b) \DefinedAs h(y,b; \xi^t)$ with $\mathbb{E}\sbr{g^t(x,a)} = g(x,a)$ and  $\mathbb{E}\sbr{h^t(y,b)} = h(y,b)$, for any $x\in X$, $a\in A$ and $y\in Y$, $b\in B$, where $\xi^t$ is an independent random variable. 

\noindent\textbf{Learning Performance}. We now define the underlying constrained optimization problem and the solution concept for learning constrained MGs. Using the notion of occupancy measure, we formulate a constrained minimax problem in which the objective function is a sum of the expected total rewards over $T$ episodes and the constraint is on a sum of two agent' expected total utilities, 
\begin{equation}\label{eq.opt_wc}
\begin{array}{rcl}
\minimize\limits_{q_1\,\in\,\Delta(P_1)} \,\,\maximize\limits_{q_2\,\in\,\Delta(P_2)} &&\!\!\!\! \displaystyle \sum_{t\,=\,0}^{T-1} \inner{q_1\cdot q_2}{r^t} 
\;\;\; \subject \;\;\;
\inner{q_1}{g} \,+\,\inner{q_2}{h}\;\leq\; b
\end{array}
\end{equation}
where we take $b \in (0, {2L}]$ to avoid trivial cases since we note that $\langle{q_1},{g}\rangle$, $\langle{q_2},{h}\rangle\in [0,L]$. The coupled constraint is used to model the limited use of budget/resource for two players; multi-agent problems with a common constraint are often called~\emph{weakly-coupled} or~\emph{non-orthogonal} in the literature on CMDPs~\citep{meuleau1998solving,boutilier2016budget,wei2018online,salemi2018approximate,gagrani2020weakly} and constrained MGs~\citep{altman2008constrained,altman2009constrained,kulkarni2011generalized,singh2014characterization,kulkarni2017games}. We can generalize it to multiple or local side constraints, e.g., $\langle{q_1},{g}\rangle \leq b_1$ or $\langle{q_2},{h}\rangle\leq b_2$. When transitions $P_1$ and $P_2$ are known, the occupancy measure sets $\Delta(P_1)$ and $\Delta(P_2)$ define convex polytopes on $q_1$ and $q_2$. 

Let $(q_1^\star,q_2^\star)$ be a solution to Problem~\eqref{eq.opt_wc} in hindsight. The existence of $(q_1^\star,q_2^\star)$ follows from compactness of the constraint sets~\citep{neumann1928theorie,rosen1965existence}. It is standard to define an intuitive solution -- {constrained Nash equilibrium} -- via two conditions~\citep{altman2000constrained,daskalakis2020complexity}:
\begin{itemize}
	\item [(i)] $\displaystyle \sum_{t\,=\,0}^{T-1}\langle q_1^\star\cdot q_2^\star, r^t \rangle \;\leq\; \sum_{t\,=\,0}^{T-1} \langle q_1\cdot q_2^\star, r^t \rangle$ for any $q_1 \in \Delta(P_1)$ satisfying $\langle{q_1},{g}\rangle + \langle{q_2^\star},{h}\rangle\leq b$; 
	
	\item [(ii)] $\displaystyle \sum_{t\,=\,0}^{T-1}\langle q_1^\star\cdot q_2, r^t \rangle \;\leq\; \sum_{t\,=\,0}^{T-1}\langle q_1^\star\cdot q_2^\star, r^t \rangle$ for any $q_2\in\Delta(P_2)$ satisfying $\inner{q_1^\star}{g}+\inner{q_2}{h}\leq b$. 
\end{itemize}
Any unilateral deviation from the constrained Nash equilibrium will either break the constraint, or if it is not, then there is no benefit for this player.
With this solution concept, we define the regret for any algorithm that plays the game for $T$ episodes by 
\begin{equation}\label{eq.regret}
\text{Regret}(T) \;=\; \sum_{t\,=\,0}^{T-1} \rbr{ \inner{q_1^t \cdot q_2^\star}{r^t} - \inner{q_1^\star \cdot q_2^t}{r^t} }
\end{equation}
which adds two side optimality gaps, $\sum_{t\,=\,0}^{T-1}\inner{q_1^t \cdot q_2^\star}{r^t} - \inner{q_1^\star \cdot q_2^\star}{r^t}$ for the min-player and $\sum_{t\,=\,0}^{T-1}\langle{q_1^\star \cdot q_2^\star},{r^t}\rangle - \langle{q_1^\star \cdot q_2^t},{r^t}\rangle$ for the max-player, and two players take policies $\pi^t$ and $\mu^t$ in episode $t$ and they define occupancy measures $q_1^t$ and $q_2^t$ under the true transitions $P_1$ and $P_2$. This regret works in a notion of weak regret~\citep{brafman2002r,bai2020provable,xie2020learning} instead of the single-agent type regret~\citep{tian2020provably,bai2020near} which is statistically and computationally hard to bound sublinearly.

To measure the constraint satisfaction, we introduce the violation as a non-negative part of accumulated constraint violations $\langle{q_1^t},{g}\rangle+\langle{q_2^t},{h}\rangle-b$ over $T$ episodes,
\begin{equation}\label{eq.violation}
\text{Violation}(T) \;=\; \sbr{\sum_{t\,=\,0}^{T-1} \rbr{\inner{q_1^t}{g^t}+\inner{q_2^t}{h^t}-b}}_+.
\end{equation}

We next assume feasibility that ensures the existence of constrained Nash equilibrium~\citep{altman2000constrained}. Feasibility can be verified by a priori knowledge on feasible policies.

\begin{assumption}[Feasibility]\label{as.feasibility}
	There exists a joint policy $(\bar \pi,\bar\mu)$ associated to the occupancy measure $(\bar q_1,\bar q_2)$ and a constant $\xi>0$ such that $\inner{\bar q_1}{g}+\inner{\bar q_2}{h}+\xi\leq b$.
\end{assumption}

Having defined the learning performance, we will work with the occupancy measure in the online learning setting where the two players do not know the transition functions, only observe reward/utility functions at the end of each episode, repeatedly play the game for a fixed number of episodes to learn the constrained Nash equilibrium in hindsight.

\vspace*{-1ex}
\section{Proposed Algorithm}
\label{algorithm}

We present a variant of upper confidence reinforcement learning in Algorithm~\ref{UCB-MPD} -- an $\underline{\text{U}}$pper $\underline{\text{C}}$onfidence $\underline{\text{B}}$ound $\underline{\text{C}}$onstrained $\underline{\text{SA}}$ddle-$\underline{\text{P}}$oint $\underline{\text{O}}$ptimization (UCB-CSAPO) algorithm -- for learning constrained MGs. Conceptually, the algorithm works as the primal-dual policy optimization~\citep{efroni2020exploration,ding2020provably,chen2021primal} in the Lagrangian-based framework, which makes it a simple policy optimization algorithm. However, our primal update exploits the structure of constrained MGs to maintain two players' occupancy measures. The domain set of occupancy measures builds on the upper confidence bound exploration or optimism~\citep{jaksch2010near} regarding the estimated transition models using past trajectories. The dual update determines the penalty weight by collecting the possible constraint violation already acquired. In each episode, our algorithm has two key stages: (i) The generalized Lagrangian mirror descent step for updating the occupancy measures with optimism; (ii) The estimation of confidence sets on the occupancy measures.

\noindent\textbf{Generalized Lagrangian Mirror Descent Step}. The main idea of this step is to apply the online primal-dual mirror descent -- an algorithmic generalization of online mirror descent to the constrained problems~\citep{wei2020online}  -- to the constrained MG setting~\citep{altman2000constrained,altman2005zero,altman2008constrained,singh2014characterization}. Let us recall that the occupancy measures $q_1^t$ for the min-player and $q_2^t$ for the max-player are defined over the true transitions $P_1$ and $P_2$ in episode $t$. The primal update of our algorithm maintains two occupancy measures $\hat q_1^{\,t}$, $\hat q_2^{\,t}$ to estimate $q_1^t$, $q_2^t$, separately. Although $\hat q_1^{\,t}$, $\hat q_2^{\,t}$ do not necessarily come from the true transitions $P_1$, $P_2$, they propose a min-policy $\pi^t$ for the min-player and a max-policy $\mu^t$ for the max-player according to the occupancy measure's property~\eqref{eq.P_pi}, i.e., for all $(x,a) \in X\times A$ and $(y,b)\in Y\times B$,
\begin{equation}\label{eq.policy}
\pi^t(a\,\vert\,x) \;=\; \frac{\underset{x'}{\sum} \hat{q}_1^{\,t}(x,a,x')}{\underset{a,x''}{\sum} \hat q_1^{\,t}(x,a,x'') }
\; \text{ and }\;
\mu^t(b\,\vert\,y) \;=\; \frac{\underset{y'}{\sum} \hat q_2^{\,t}(y,b,y')}{\underset{b,y''}{\sum} \hat q_2^{\,t}(y,b,y'') }.
\end{equation}

We describe our Lagrangian-based design to update estimates $\hat q_1^{\,t}$ and $\hat q_2^{\,t}$ in an online fashion. 
Assume that the transitions $P_1$ and $P_2$ are known. We consider a one-episode constrained minimax problem based on reward/utility functions: $r^{t-1}$, $g^{t-1}$, $h^{t-1}$, revealed at the end of episode $t-1$,
\[
\begin{array}{rcl}
\minimize\limits_{q_1\,\in\,\Delta(P_1)} \,\,\maximize\limits_{q_2\,\in\,\Delta(P_2)}
&& \!\!\!\! \inner{q_1\cdot q_2}{r^{t-1}}
\;\;\; \subject \;\;\;
\inner{q_1}{g^{t-1}} \,+\,\inner{q_2}{h^{t-1}} \;\leq\; b
\end{array}
\]
where $\Delta(P_1)$ and $\Delta(P_2)$ are sets of valid occupancy measures under $P_1$ and $P_2$, respectively. 

It is standard to use the method of Lagrange multipliers~\citep{bertsekas2014constrained} to handle constraints by adding penalty terms, if any constraint violation appears, into the original objective, and formulate an unconstrained problem. This is found in constrained games with separate side constraints~\citep{pearsall1976lagrange} and multiple MDPs with coupled constraints~\citep{boutilier2016budget,wei2018online}. However, for constrained MGs either player can contribute to constraint violation $\langle{q_1},{g^{t-1}} \rangle+\langle{q_2},{h^{t-1}}\rangle- b$. It is important to specify which player should get such penalty terms~\citep{altman2009constrained,dai2020optimality}. We employ an attitude that the two players are jointly against the constraint while competing for rewards~\citep{altman2009constrained}. As a result, both would sacrifice their rewards to satisfy the constraint if any violation occurs. We approximate the violation for each player as: $\langle{q_1},{g^{t-1}}\rangle +\langle{\hat q_2^{\,t} },{h^{t-1}}\rangle- b$ for the min-player, and $\langle{\hat q_1^{\,t}},{g^{t-1}}\rangle +\langle{ q_2},{h^{t-1}}\rangle- b$ for the max-player. We formulate a generalized Lagrangian-type function,
\[
\begin{array}{rcl}
L^t(q_1,q_2;\lambda) 
&\DefinedAs&
\langle{q_1\cdot q_2},{r^{t-1}}\rangle
\\[0.2cm]
&&
\,+\,
\lambda \big( \langle{q_1},{g^{t-1}}\rangle +\langle{\hat q_2^{\,t} },{h^{t-1}}\rangle- b \big)
\,-\,
\lambda \big( \langle{\hat q_1^{\,t}},{g^{t-1}}\rangle +\langle{ q_2},{h^{t-1}}\rangle- b \big)
\end{array}
\]
where $q_1$ is the first primal variable for the min-player, $q_2$ is the second primal variable for the max-player, and $\lambda\geq 0$ works as the Lagrange multiplier or the dual variable in penalizing the min-player/max-player via the first/second $\lambda$-term.  Once we update $\lambda=\lambda^{t-1}$ from the last episode, we reach a constrained saddle-point problem, 
$\minimize_{q_1\,\in\,\Delta(P_1)}\,\maximize_{q_2\,\in\,\Delta(P_2)}\, L^t(q_1,q_2;\lambda^{t-1})$.

However, it is not feasible to take the domains $\Delta(P_1)$ and $\Delta(P_2)$ since the true transitions $P_1$ and $P_2$ are unknown. Instead, by the optimism in the face of uncertainty, we use their optimistic estimates $\Delta(k_1^t)$ and $\Delta(k_2^t)$ in sense that $q_1^t\in \Delta(k_1^t)$ and $q_2^t\in \Delta(k_2^t)$ hold with high probability in Lemma~\ref{lem.empiricalP}, where $\Delta(k_1^t)$ and $\Delta(k_2^t)$ are given by \eqref{eq.confidence}. Let $\hat q^{\,t} \DefinedAs (\hat q_1^{\,t},\hat q_2^{\,t})$ and $D(p\,\vert\,q) \DefinedAs \sum_{i} p_i \ln\frac{p_i}{q_i} - \sum_{i} (p_i-q_i)$ that is the unnormalized Kullback-Leibler (KL) divergence between two distributions $p$, $q$.
By a linear approximation of $L^t(q_1,q_2;\lambda^{t-1})$ at the previous iterate $(q_1^{t-1},q_2^{t-1})$, we update the primal variable via an online mirror descent step over the domains of $q_1$ and $q_2$,
\begin{equation}\label{eq.primal}
\begin{array}{rcl}
&&  \!\!\!\! \!\!\!\! \!\!\!\! \!\!
\hat q^{\,t}
\,\leftarrow\,
\displaystyle \argmin_{q_1\,\in\,\Delta(k_1^t)} \argmax_{q_2\,\in\,\Delta(k_2^t)} 
\Big(
V\, 
\big\langle{q_1\cdot \hat q_2^{\,t-1}+\hat q_1^{\,t-1}\cdot q_2},{r^{t-1}}\big\rangle 
\\[0.2cm]
&& \;\;\;\;  \;\;\;\;  \;\;\;\;  \;\;\;\;  \;\;\;\;  \;\;\;\;  \;\;\;\;  \;\;\;\;
\,+\,
\lambda^{t-1} \big( \langle{q_1},{g^{t-1}}\rangle 
- \langle{q_2},{h^{t-1}}\rangle\big)
\, +\, 
\displaystyle \eta^{-1} D\big(q\,\vert\,  \tilde{q}^{\,t-1}\big)\Big)
\end{array}
\end{equation}
where $V>0$ provides the tradeoff between the minimax objective and the constraint, $\eta>0$ is the learning rate, $D(\cdot\,\vert\,\cdot)$ is the unnormalized Kullback-Leibler divergence with a slightly abuse in a way that $D(q\,\vert\,q') \DefinedAs D(q_1\,\vert\,q_1')-D(q_2\,\vert\,q_2')$, $\tilde{q}_1^{\,t-1}$ and $\tilde{q}_2^{\,t-1}$ are mixing policies, e.g., 
\begin{equation}\label{eq.mixing}
\tilde{q}_1^{\,t-1}(x,a) \;=\; (1-\theta) \,\hat {q}_1^{\,t-1}(x,a)\,+\,\theta\,\frac{1}{|X_\ell||A|}
\end{equation}
for $(x,a)\in X_\ell\times A$, $\ell\in\{ 0,1,\ldots,L-1\}$, $\theta\in (0,1]$. 
The mixing step ensures the uniform boundedness of KL divergence and also adds extra exploration into policy search~\citep{wei2020online}. Moreover, we offer an efficient implementation of~\eqref{eq.primal} as solving a convex program in Appendix~\ref{ap.implementation}.

\begin{algorithm}[t]
	\caption{ $\underline{\text{U}}$pper $\underline{\text{C}}$onfidence $\underline{\text{B}}$ound 
		$\underline{\text{C}}$onstrained $\underline{\text{SA}}$ddle-$\underline{\text{P}}$oint $\underline{\text{O}}$ptimization (UCB-CSAPO) }
	\label{UCB-MPD}
	\begin{algorithmic}[1]
		\STATE \textbf{Input}: State/action spaces $(X,A)$ and $(Y,B)$, episode $T$, parameters $V$, $\eta$, $\theta$, and $p\in (0,1)$. 
		\STATE \textbf{Initialization}: 	
		The min-player: $\hat q_1^{\,0}(x,a,x') = \frac{1}{|X^\ell| |A||X^{\ell+1}|}$, $\forall(x,a,x')\in X^\ell\times A \times X^{\ell+1}, \ell \in [0, L-1]$; $n_1^1(x,a) = N_1^1(x,a) = 0$, $\forall(x,a)$; $m_1^1(x,a,x') = M_1^1(x,a,x') =\bar P_1^1(x'\,\vert\,x,a) =0$, $\forall(x,a,x')$.
		
		The max-player: $\hat q_2^{\,0}(y,b,y')  = \frac{1}{|Y^\ell| |B||Y^{\ell+1}|}$, $\forall (y,b,y')\in Y^\ell\times B\times Y^{\ell+1}, \ell \in [0, L-1]$; $n_2^1(y,b) = N_2^1(y,b) = 0$, $\forall (y,b)$; $m_2^1(y,b,y') = M_2^1(y,b,y') = \bar P_2^1(y'\,\vert\,y,b) =0$, $\forall (y,b,y')$.
		
		Let $r^0$, $g^0$, $h^0$ be zero functions, $\lambda^{0}$ be zero, and $k_1^1 = k_2^1=1$.
		
		\FOR{episode $t=1,\ldots,T$} 
		\STATE Update the primal variable $\hat q^{\,t}$ via~\eqref{eq.primal} and the dual variable $\lambda^t$ via~\eqref{eq.dual}.
		\STATE Compute the min-policy $\pi^t$ and the max-policy $\mu^t$ via~\eqref{eq.policy}.
		Execute them for $L$ steps and record trajectories $(x^0,a^0, x^1,\cdots, a^{L-1}, x^{L-1})$ and $(y^0,b^0, y^1,\cdots, b^{L-1}, y^{L-1})$, and reward/utility functions $r^t$, $g^t$, and $h^t$. 
		\STATE  Update local visitation counters at visited trajectories,
		\[
		n_1^{k_1^t} (x^\ell,a^\ell) \,\leftarrow\, n_1^{k_1^t} (x^\ell,a^\ell)+1
		\; 
		\text{ and }
		\;
		m_1^{k_1^t} (x^\ell,a^\ell, x^{\ell+1}) \,\leftarrow\, m_1^{k_1^t} (x^\ell,a^\ell,x^{\ell+1})+1
		\]
		\[
		n_2^{k_2^t} (y^\ell,b^\ell) \,\leftarrow\, n_2^{k_2^t} (y^\ell,b^\ell)+1
		\;
		\text{ and }
		\;
		m_2^{k_2^t} (y^\ell,b^\ell, y^{\ell+1}) \,\leftarrow\, m_2^{k_2^t} (y^\ell,b^\ell,y^{\ell+1})+1.
		\]
		
		\IF{ $n_1^{k_1^t}(x,a)\geq N_1^{k_1^t}(x,a)$ or $n_2^{k_2^t}(y,b)\geq N_2^{k_2^t}(y,b)$ for some $(x,a)\in X\times A$ or $(y,b)\in Y\times B$}
		\STATE Increase epoch counter by one, $k_1^{t+1} \leftarrow k_1^t+1$ or $k_2^{t+1} \leftarrow k_2^t+1$, and update global visitation counters,
		\[
		N_1^{k_1^{t+1}} (x,a) \,\leftarrow\, N_1^{k_1^t} (x,a)\,+\,n_1^{k_1^t}(x,a)
		\; \text{ or } \;
		N_2^{k_2^{t+1}} (y,b) \,\leftarrow\, N_2^{k_2^t} (y,b)\,+\,n_2^{k_2^t}(y,b)
		\]
		\[
		\!\!\!\!  \!\!\!\!  \!\!\!\!  \!\!\!\!  \!\!\!\!
		M_1^{k_1^{t+1}}\! (x,a,x') \leftarrow M_1^{k_1^t}\! (x,a,x') + m_1^{k_1^t}(x,a,x')
		\,	\text{ or } \,
		M_2^{k_2^{t+1}}\! (y,b,y') \leftarrow M_2^{k_2^t}\! (y,b,y') + m_2^{k_2^t}(y,b,y').
		\]
		Update the confidence bounds for $\Delta(k_1^t)$ or $\Delta(k_2^t)$ in~\eqref{eq.confidence}, and set $n_1^{k_1^{t+1}}(x,a) = m_1^{k_1^{t+1}}(x,a,x') = 0$ for all $(x,a)$ and $(x,a,x')$ or $n_2^{k_2^{t+1}}(y,b) = m_2^{k_2^{t+1}}(y,b,y') = 0$ for all $(y,b)$ and $(y,b,y')$.
		\ELSE
		\STATE Set either $k_1^{t+1} = k_1^{t}$ or $k_2^{t+1} = k_2^{t}$.
		\ENDIF
		
		\ENDFOR
	\end{algorithmic}
\end{algorithm}

Once we obtain $\hat q^{\,t}$, we next perform the dual update. If we treat two $\lambda$-related regularization terms in $L^t(\hat q_1^{\,t}, \hat q_2^{\,t};\lambda)$ separately, then gradient ascent/descent over either $\lambda$ leads to the same update rule using the constraint violation $\langle{\hat q_1^{\,t}},{g^{t-1}}\rangle +\langle{\hat q_2^{\,t}},{h^{t-1}}\rangle-b$. Hence, the dual update works in the usual way by adding up all past constraint violations,
\begin{equation}\label{eq.dual}
\lambda^{t} \;=\; \max \big( \lambda^{t-1} \,+\, (\langle{\hat q_1^{\,t}},{g^{t-1}}\rangle +\langle{\hat q_2^{\,t}},{h^{t-1}}\rangle-b\, ),\, 0 \big).
\end{equation}
The dual update~\eqref{eq.dual} increases $\lambda^{t-1}$ when $\hat q^{\,t}$ violates the approximate constraint $\langle{q_1},{g^{t-1}}\rangle +\langle{q_2},{h^{t-1}}\rangle\leq b$. It penalizes both players by yielding individual gains to the constraint satisfaction. The dual update finds uses in constrained MDP problems~\citep{efroni2020exploration,ding2020provably}. 

\noindent\textbf{Estimation of Confidence Sets}. To deal with unknown transitions $P_1$ and $P_2$, we employ the upper confidence bound~\citep{jaksch2010near,neu2010online} to estimate occupancy measure sets $\Delta(P_1)$, $\Delta(P_2)$. We exploit players' history trajectories to estimate their true transitions: $P_1$, $P_2$, and describe estimation uncertainty as confidence sets. The estimation proceeds in epochs as follows.

Let the epoch index for the min-player be $k_1\in \{1,2,\ldots \}$ and the epoch index for the max-player be $k_2\in\{1,2,\ldots \}$. We may represent them by $k_1^t$ and $k_2^t$ for showing the dependence on episode $t$. The epoch counters work in the following way. For each player, e.g., the min-player, we denote by $N_{1}^{k_1}(x,a)$ and $M_1^{k_1}(x,a,x')$ the total numbers of visitations to $(x,a)$ and $(x,a,x')$ before epoch $k_1$, respectively; we represent the total numbers of visitations to $(x,a)$ and $(x,a,x')$ in epoch $k_1$ by $n_{1}^{k_1}(x,a)$ and $m_1^{k_1}(x,a,x')$, respectively; If there exists $(x,a)$ such that $n_{1}^{k_1}(x,a)\geq N_{1}^{k_1}(x,a)$, then we set a new epoch by increasing $k_1$ by one. Similarly, we define $N_{2}^{k_2}(y,b)$, $M_2^{k_2}(y,b,y')$,  $n_{2}^{k_2}(y,b)$, and $m_2^{k_2}(y,b,y')$ for the max-player. Using the defined epoch and visitation counters, we empirically estimate the true transitions $P_1$ or $P_2$ in epoch $k_1$ or $k_2$ by 
\[
\bar P_1^{k_1} (x'\,\vert\,x,a) \; = \;  \frac{M_1^{k_1}(x,a,x')}{\max (1,N_{1}^{k_1}(x,a))}
\; \text{ and } \;
\bar P_2^{k_2} (y'\,\vert\,y,b) \; = \; \frac{M_2^{k_2}(y,b,y')}{\max (1,N_{2}^{k_2}(y,b))}
\]
for all $(x,a,x') \in X\times A\times X$ and $(y,b,y')\in Y\times B\times Y$. 

Let the confidence set of epoch $k_1$ for the min-player be $\calP_1^{k_1}$ and the confidence set of epoch $k_2$ for the max-player be $\calP_2^{k_2}$. We take $\calP_1^{k_1}$ and $\calP_2^{k_2}$ as collections of transitions that deviate from the empirical ones at most $\epsilon_1^{k_1}$ and $\epsilon_2^{k_2}$,
\[
\calP_1^{k_1} \;=\; \big\{ \hat P_1\,\big\vert\, \Vert \hat P_1(\cdot\,\vert\,x,a) - \bar P_1^{k_1}(\cdot\,\vert\,x,a) \Vert_1 \leq \epsilon_1^{k_1}, \forall(x,a) \big \}
\]
\vspace*{-0.4cm}
\[
\calP_2^{k_2} \;=\; \big\{ \hat P_2\,\big\vert\, \Vert \hat P_2(\cdot\,\vert\,y,b) - \bar P_2^{k_2}(\cdot\,\vert\,y,b) \Vert_1 \leq \epsilon_2^{k_2}, \forall (y,b) \big\}
\]
where we take
$\epsilon_1^{k_1}(x,a) = \sqrt{ \frac{ 2|X_{\ell(x)+1}| \log(T|A||X|/\delta)}{\max (1, N_1^{k_1}(x,a)) } } $ and $\epsilon_2^{k_2}(y,b) = \sqrt{ \frac{ 2|Y_{\ell(y)+1}| \log(T|B||Y|/\delta)}{\max (1, N_2^{k_2}(y,b)) } } $, 
$\ell(x)$ and $\ell(y)$ are the layers that certain states belong to, and $\delta\in (0,1)$. We recall the occupancy measure sets $\Delta(P_1)$ or $\Delta(P_2)$ that are induced by the true transitions $P_1$ or $P_2$. We generalize this notion to define $\Delta(\calP_1^{k_1^t})$ or $\Delta(\calP_2^{k_2^t})$ as collections of all possible occupancy measures that are induced by the estimated transitions $\hat P_1 \in\calP_1^k$ or $\hat P_2\in\calP_2^k$,
\begin{equation}\label{eq.confidence}
\Delta(k_1^t) \DefinedAs \Delta(\calP_1^{k_1^t})
\;\text{ or }\;
\Delta(k_2^t) \DefinedAs \Delta(\calP_2^{k_2^t});
\;\; \text{see~\eqref{eq.deltaset} in Appendix \ref{ap.implementation} for explicit forms.}
\end{equation}

\begin{lemma}\label{lem.empiricalP}
	Fix $\delta\in (0,1)$. With probability $1-\delta$, $\Delta(P_1)\subset \Delta(\calP_1^{k_1})$ and $\Delta(P_2)\subset \Delta(\calP_2^{k_2})$ for all $k_1,k_2\in\{ 1,2,\ldots \}$.
\end{lemma}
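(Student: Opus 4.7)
The statement is a standard optimism-of-the-transition result: once we show that the true transitions $P_1, P_2$ themselves lie in the confidence sets $\calP_1^{k_1}, \calP_2^{k_2}$ with high probability, the set inclusion on occupancy measures follows immediately from the definition of $\Delta(\calP_1^{k_1})$ and $\Delta(\calP_2^{k_2})$ as the union of all occupancy-measure polytopes induced by transitions in these confidence sets. So the plan reduces to controlling $\|\bar P_1^{k_1}(\cdot\,|\,x,a) - P_1(\cdot\,|\,x,a)\|_1$ and analogously for $P_2$, uniformly in epochs and state-action pairs.

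\textbf{Step 1: Per-$(x,a)$, per-epoch concentration.} Fix $(x,a)$ and an epoch $k_1$. The vector $\bar P_1^{k_1}(\cdot\,|\,x,a)$ is the empirical distribution of $N_1^{k_1}(x,a)$ i.i.d.\ samples (conditionally on the visitation times) drawn from $P_1(\cdot\,|\,x,a)$, supported on $X_{\ell(x)+1}$. By the Weissman $\ell_1$ concentration inequality for empirical distributions over an alphabet of size $|X_{\ell(x)+1}|$, with probability at least $1-\delta'$,
\begin{equation*}
\|\bar P_1^{k_1}(\cdot\,|\,x,a) - P_1(\cdot\,|\,x,a)\|_1 \;\leq\; \sqrt{\frac{2|X_{\ell(x)+1}|\,\log(1/\delta')}{\max(1, N_1^{k_1}(x,a))}}.
\end{equation*}
Choosing $\delta' = \delta/(2T|A||X|)$ makes the right-hand side equal to $\epsilon_1^{k_1}(x,a)$ up to constants that are absorbed in the stated bound, and the analogous argument with $\delta' = \delta/(2T|B||Y|)$ handles the max-player's transition $P_2$.

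\textbf{Step 2: Union bound over epochs and state-action pairs.} The doubling rule $n_1^{k_1}(x,a) \geq N_1^{k_1}(x,a)$ that triggers a new epoch ensures that there are at most $O(|X||A|\log T)$ epochs in $T$ rounds, and in any case at most $T$ epochs. Hence it suffices to take a union bound over $(x,a) \in X \times A$ and $k_1 \in \{1,\dots,T\}$, yielding a failure probability of at most $T \cdot |X| \cdot |A| \cdot \delta/(2T|A||X|) = \delta/2$ for the min-player, and symmetrically $\delta/2$ for the max-player. So with probability at least $1-\delta$, $P_1 \in \calP_1^{k_1}$ and $P_2 \in \calP_2^{k_2}$ simultaneously for all epochs.

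\textbf{Step 3: From transition inclusion to occupancy-measure inclusion.} On this high-probability event, $P_1 \in \calP_1^{k_1}$ and $P_2 \in \calP_2^{k_2}$ for every $k_1, k_2$. Since $\Delta(\calP_1^{k_1}) = \bigcup_{\hat P_1 \in \calP_1^{k_1}} \Delta(\hat P_1)$ by definition \eqref{eq.confidence}, and likewise for $\Delta(\calP_2^{k_2})$, it follows at once that $\Delta(P_1) \subset \Delta(\calP_1^{k_1})$ and $\Delta(P_2) \subset \Delta(\calP_2^{k_2})$ for all epochs, which is the claim.

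\textbf{Main obstacle.} The only non-routine piece is making the conditional-independence argument in Step~1 rigorous: the number of samples $N_1^{k_1}(x,a)$ is itself random and determined by the policies played in previous episodes, so the samples into $(x,a)$ are i.i.d.\ from $P_1(\cdot\,|\,x,a)$ only \emph{conditionally} on visits. The standard fix is to either appeal to a martingale version of Weissman's inequality or to invoke a stopping-time/optional-stopping argument (as in the UCRL2 analysis of \citet{jaksch2010near}), noting that next-state transitions are independent of past history given the current $(x,a)$. Once this conditional-independence / uniform-in-$n$ concentration is in hand, the rest of the argument is bookkeeping through the union bound.
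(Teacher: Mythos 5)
Your proposal is correct and follows essentially the same route as the paper: reduce the claim to a uniform $\ell_1$ concentration bound $\Vert \bar P_i^{k_i}(\cdot\,\vert\,\cdot)-P_i(\cdot\,\vert\,\cdot)\Vert_1\le\epsilon_i^{k_i}$ over all state-action pairs and epochs (Weissman-type bound plus union bound, with the random visit counts handled exactly as in the cited Lemma~17 of \citet{jaksch2010near} / Lemma~1 of \citet{neu2012adversarial}, which is also all the paper does for the probability part), and then observe that the occupancy-measure inclusion $\Delta(P_i)\subset\Delta(\calP_i^{k_i})$ is immediate because the transition induced by any $q_i\in\Delta(P_i)$ is $P_i$ itself, which lies in the confidence set on that event. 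The paper phrases this last step via a (trivially collapsing) triangle inequality on the induced transition rather than via the union-of-polytopes description, but the content is identical.
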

The proof of Lemma~\ref{lem.empiricalP} follows the confidence bound construction; we provide it in Appendix~\ref{ap.empiricalP}. For all epoch $k_1^t$ or $k_2^t$ (episode $t$), the true transitions $P_1$ and $P_2$ are contained in $\calP_1^{k_1^t}$ and $\calP_2^{k_2^t}$, respectively, with high probability. This supports the primal update~\eqref{eq.primal} such that both players are optimistically searching solutions in a large but tractable domain.  

\vspace*{-1ex}
\section{Performance Guarantees}
\label{regret}

In Theorem~\ref{thm.main}, we present our main theoretical result on the regret and the constraint violation for Algorithm~\ref{UCB-MPD}. We recall the total number of games played by the algorithm $T$, the size of state/action spaces of the min-player $|X|$, $|A|$, and the size of state/action spaces of the max-player $|Y|$, $|B|$.

\begin{theorem}[Regret Bound and Constraint Violation]\label{thm.main}
	Let Assumption~\ref{as.feasibility} hold. 
	Fix $p\in\rbr{0,1}$ and $T\geq \max(|X||A|,|B||Y|)$. 
	In Algorithm~\ref{UCB-MPD}, we set
	$V=L\sqrt{T}$, $\eta = 1/(TL)$, and $\theta = 1/T$.
	Then, with probability $1-p$, the regret~\eqref{eq.regret} and the constraint violation~\eqref{eq.violation} satisfy 
	\[
	\begin{array}{rcl}
	\text{\normalfont Regret}(T),\; \text{\normalfont Violation}(T) & \leq & 
	\tilde O \big(\, (|X|+|Y|) \,L \sqrt{T(|A|+|B|)} \,\big)
	\end{array}
	\]
	where $\tilde O(\cdot)$ hides the logarithmic factor $\log\tfrac{1}{p}$.
\end{theorem}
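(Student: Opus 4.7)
The plan is to decompose both the regret and the constraint violation through the generalized Lagrangian of Section~\ref{algorithm} and to control three independent sources of error: an optimistic transition estimation error, the primal online mirror descent regret, and the projected-gradient dual regret. I would work on the high-probability event of Lemma~\ref{lem.empiricalP}, on which the hindsight equilibrium $(q_1^\star,q_2^\star)$ is a feasible comparator in the optimistic minimax problem~\eqref{eq.primal} at every episode $t$.

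First, I would pass from the true occupancy $q_i^{t}$ under the played policies~\eqref{eq.policy} to the optimistic occupancy $\hat q_i^{\,t}$ used by the algorithm. A standard UCRL-style analysis based on the epoch-doubling counters of Algorithm~\ref{UCB-MPD} and the confidence sets of Lemma~\ref{lem.empiricalP} yields $\sum_{t\,=\,1}^T\|\hat q_i^{\,t}-q_i^{t}\|_1=\tilde O(|X|L\sqrt{|A|T})$ for the min-player and the analogous bound for the max-player; this contributes additively to both regret and violation since rewards and utilities lie in $[0,1]$. Next, because $D(q\,\vert\,\tilde q^{\,t-1})$ and the linearized objective in~\eqref{eq.primal} are separable in $q_1$ and $q_2$, the primal step reduces to two independent KL mirror-descent updates. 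Applying the standard OMD regret bound to each player against $(q_1^\star,q_2^\star)$ and summing gives an inequality of the form
\[
\sum_{t\,=\,1}^{T}\!\bigl(V\langle\hat q_1^{\,t}q_2^\star-q_1^\star\hat q_2^{\,t},r^{t}\rangle+\lambda^{t-1}(\langle\hat q_1^{\,t},g^{t-1}\rangle+\langle\hat q_2^{\,t},h^{t-1}\rangle-b)\bigr)\;\le\;\tfrac{D(q^\star\,\vert\,\tilde q^{\,0})}{\eta}+\eta L^2\sum_{t}(V^2+(\lambda^{t-1})^2)+\mathcal{E}_{\mathrm{stab}},
\]
where $\mathcal{E}_{\mathrm{stab}}$ collects the one-step stability error from linearizing at the stale iterate $(\hat q_1^{\,t-1},\hat q_2^{\,t-1})$. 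The mixing~\eqref{eq.mixing} with $\theta=1/T$ bounds $D(q^\star\,\vert\,\tilde q^{\,0})$ by $O(L\log(|X||A||Y||B|T))$ and keeps $\mathcal{E}_{\mathrm{stab}}$ small.

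Summing this with the telescoping of $(\lambda^t)^2$ in the dual update~\eqref{eq.dual}, which yields $\sum_{t}(\lambda^{t-1}-\lambda)(\langle\hat q_1^{\,t},g^{t-1}\rangle+\langle\hat q_2^{\,t},h^{t-1}\rangle-b)\le\tfrac12\lambda^2+O(L^2T)$ for any $\lambda\ge 0$, and averaging stochastic $g^{t},h^{t}$ against their expectations via Azuma--Hoeffding, I would set $\lambda=0$, $V=L\sqrt T$, $\eta=1/(TL)$, $\theta=1/T$ to read off the regret bound. For the violation, I would plug the Slater pair $(\bar q_1,\bar q_2)$ from Assumption~\ref{as.feasibility} into the primal inequality: the strict feasibility $\inner{\bar q_1}{g}+\inner{\bar q_2}{h}+\xi\le b$ forces $\xi\sum_{t}\lambda^{t-1}\le\tilde O(V\sqrt T)$, so $\max_{t}\lambda^{t}=\tilde O(\sqrt T/\xi)$; substituting $\lambda=\max_{t}\lambda^{t}$ into the dual inequality converts $\sbr{\sum_{t}(\langle q_1^{t},g\rangle+\langle q_2^{t},h\rangle-b)}_+$ into the same $\tilde O((|X|+|Y|)L\sqrt{T(|A|+|B|)})$ order.

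The main obstacle is controlling $\mathcal{E}_{\mathrm{stab}}$: because the primal update linearizes the bilinear payoff at stale iterates, relating the OMD output $\langle\hat q_1^{\,t}\hat q_2^{\,t-1},r^{t-1}\rangle$ to the one-shot saddle value $\langle\hat q_1^{\,t}q_2^\star-q_1^\star\hat q_2^{\,t},r^{t}\rangle$ requires a one-step KL stability bound on occupancy measures whose individual coordinates would otherwise be arbitrarily small. This is exactly where the mixing step~\eqref{eq.mixing} with $\theta=1/T$ enters, and the balance among $V$, $\eta$, $\theta$, the transition error, and the Slater-based dual bound is what determines the final exponents in the stated rate.
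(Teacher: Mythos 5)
Your overall skeleton (occupancy-estimation error as in Lemma~\ref{lem.qqdifference}, a KL mirror-descent/push-back inequality for the primal step with the mixing $\theta=1/T$ controlling the stale-iterate stability term, and a Slater-based control of the dual variable) does track the paper's argument, but the place where your proof actually has to work -- bounding the dual variable and converting it into a violation bound -- has genuine gaps. First, the claim ``strict feasibility forces $\xi\sum_t\lambda^{t-1}\le\tilde O(V\sqrt T)$, so $\max_t\lambda^t=\tilde O(\sqrt T/\xi)$'' does not follow from plugging $(\bar q_1,\bar q_2)$ into your summed primal inequality: the $V$-weighted bilinear reward terms do not cancel against a fixed comparator and contribute $O(VLT)=O(L^2T^{3/2})$, so that route only gives $\sum_t\lambda^{t-1}=O(L^2T^{3/2}/\xi)$, and passing from a sum bound to a max bound (via $|\lambda^t-\lambda^{t-1}|\le 2L$) then yields $\max_t\lambda^t=O(T^{3/4})$, not $O(\sqrt T)$. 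With $\lambda_{\max}$ of order $T^{3/4}$ your term $\eta L^2\sum_t(\lambda^{t-1})^2$, after dividing by $V=L\sqrt T$, is linear in $T$, so the regret bound collapses. The paper needs a genuinely different device here: a windowed stochastic drift analysis of $(\lambda^t)^2$ over windows of length $t_0=\sqrt T$ (Lemma~\ref{lem.lambda}, built on the drift lemma of Yu et al., Lemma~\ref{lem.drift}), in which the per-window reward contribution is only $O(t_0VL)$ and the Slater term supplies negative drift once $\lambda^t$ exceeds $\Theta=\tilde O(L^2\sqrt T/\xi)$; this, together with a generalized Azuma--Hoeffding for supermartingales with unbounded increments (Lemma~\ref{lem.azuma_general}, needed because the increments scale with the random $\lambda^t$, so plain Azuma does not apply), is what makes both $\eta\sum_t(\lambda^t)^2$ and the term $\sum_t\lambda^t(\langle q_1^\star,g^t\rangle+\langle q_2^\star,h^t\rangle-b)$ of order compatible with $\tilde O(L\sqrt T)$ regret.

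Second, your conversion of the dual information into the violation bound is not workable as stated. The dual-regret inequality with comparator $\lambda=\max_t\lambda^t$ requires an upper bound on $\sum_t\lambda^{t-1}\big(\langle\hat q_1^{\,t},g^{t-1}\rangle+\langle\hat q_2^{\,t},h^{t-1}\rangle-b\big)$, and the only bound available from the primal inequality again carries the $O(VLT)$ bilinear terms, leaving an $O(T)$ violation after dividing by $\lambda_{\max}$. The paper instead uses the elementary telescoping $\lambda^t\ge\lambda^{t-1}+\big(\langle\hat q_1^{\,t},g^{t-1}\rangle+\langle\hat q_2^{\,t},h^{t-1}\rangle-b\big)$, so the accumulated (index-shifted) violation is at most $\lambda^T$ plus $\sum_t\big(\|\hat q_1^{\,t}-\hat q_1^{\,t-1}\|_1+\|\hat q_2^{\,t}-\hat q_2^{\,t-1}\|_1\big)$; bounding this consecutive-iterate drift requires a separate KL-stability argument for the mirror-descent step together with the epoch-switch count of Lemma~\ref{lem.epoch} (Theorem~\ref{thm.violation_hat}), a step your proposal omits entirely and which is responsible for the $\tilde O(L\sqrt{T(|X||A|+|Y||B|)})$ contribution. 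Without the windowed drift bound on $\lambda^t$ and this consecutive-iterate/epoch argument, neither the stated regret nor the stated violation rate is reached.
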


In Theorem~\ref{thm.main}, we prove that UCB-CSAPO enjoys $O(\sqrt{T})$ regret and $O(\sqrt{T})$ constraint violation using appropriate algorithm parameters $\{V,\eta,\theta,p\}$ and Assumption~\ref{as.feasibility}; see Appendix~\ref{proof} for proof. Our bounds have the optimal dependence on the total number of episodes $T$ up to some logarithmic factors. The $\sqrt{ |A| +  |B|} $ dependence matches the existing lower bound for the single-player case~\citep{bai2020provable}. The only suboptimal dependence comes from $|X|$, $|Y|$ that also exists in existing unconstrained loop-free stochastic shortest path problems~\citep{rosenberg2019online}. It is straightforward to remove knowledge of $T$ by using the doubling trick while not altering our bounds up to logarithmic factors~\citep{rakhlin2013online}. 

We see that Assumption~\ref{as.feasibility} does not impose any restrictions on rewards. Hence, UCB-CSAPO is robust against adversarial reward functions.
Moreover, Theorem~\ref{thm.main} carries to other settings, e.g., constrained MGs with side constraints; see Appendix~\ref{ap.CMG_sc}. 

\vspace*{-1ex}
\section{Concluding Remarks}
\label{conclusion}

We have examined an episodic two-player zero-sum constrained Markov game (MG) with independent transition functions. In our setup, transition functions are unknown to agents, reward functions are adversarial, and utility functions are stochastic. We have proposed the first provably efficient algorithm for playing constrained MGs with $O(\sqrt{T})$ regret and constraint violation. Our algorithm provides a principled extension of the upper confidence reinforcement learning to deal with coupled constraints in constrained MGs. We also remark that the developed algorithmic framework can be readily applied to learning other constrained MGs, e.g., the ones that involve a single controller. 

Our work opens up many interesting directions for future work, such as sharper algorithms with sample complexity lower bounds, constrained rational algorithms, and how to perform safe exploration in other models of constrained MGs.

\newpage

\vspace*{-2ex}
\section*{Acknowledgments}

The work of D. Ding and M. R. Jovanovi\'c was supported in part by the National Science Foundation under awards
ECCS-1708906 and 1809833. Part of this work was done while D. Ding was with the University of Southern California. We also thank NeurIPS 2022 reviewers for providing helpful comments. 

\bibliography{dd-bib}

\begin{thebibliography}{108}
\providecommand{\natexlab}[1]{#1}
\providecommand{\url}[1]{\texttt{#1}}
\expandafter\ifx\csname urlstyle\endcsname\relax
  \providecommand{\doi}[1]{doi: #1}\else
  \providecommand{\doi}{doi: \begingroup \urlstyle{rm}\Url}\fi

\bibitem[Abe et~al.(2010)Abe, Melville, Pendus, Reddy, Jensen, Thomas, Bennett,
  Anderson, Cooley, Kowalczyk, et~al.]{abe2010optimizing}
Naoki Abe, Prem Melville, Cezar Pendus, Chandan~K Reddy, David~L Jensen,
  Vince~P Thomas, James~J Bennett, Gary~F Anderson, Brent~R Cooley, Melissa
  Kowalczyk, et~al.
\newblock Optimizing debt collections using constrained reinforcement learning.
\newblock In \emph{ACM SIGKDD International Conference on Knowledge Discovery
  and Data Mining}, pages 75--84, 2010.

\bibitem[Achiam et~al.(2017)Achiam, Held, Tamar, and
  Abbeel]{achiam2017constrained}
Joshua Achiam, David Held, Aviv Tamar, and Pieter Abbeel.
\newblock Constrained policy optimization.
\newblock In \emph{International Conference on Machine Learning}, volume~70,
  pages 22--31, 2017.

\bibitem[Altman(1999)]{altman1999constrained}
Eitan Altman.
\newblock \emph{Constrained {M}arkov decision processes}, volume~7.
\newblock CRC Press, 1999.

\bibitem[Altman and Shwartz(2000)]{altman2000constrained}
Eitan Altman and Adam Shwartz.
\newblock Constrained markov games: {N}ash equilibria.
\newblock In \emph{Advances in Dynamic Games and Applications}, pages 213--221.
  Birkh{\"a}user Boston, 2000.

\bibitem[Altman and Solan(2009)]{altman2009constrained}
Eitan Altman and Eilon Solan.
\newblock Constrained games: {T}he impact of the attitude to adversary's
  constraints.
\newblock \emph{IEEE Transactions on Automatic Control}, 54\penalty0
  (10):\penalty0 2435--2440, 2009.

\bibitem[Altman et~al.(2005)Altman, Avrachenkov, Marquez, and
  Miller]{altman2005zero}
Eitan Altman, Konstantin Avrachenkov, Richard Marquez, and Gregory Miller.
\newblock Zero-sum constrained stochastic games with independent state
  processes.
\newblock \emph{Mathematical Methods of Operations Research}, 62\penalty0
  (3):\penalty0 375--386, 2005.

\bibitem[Altman et~al.(2007)Altman, Sarkar, and Solan]{altman2007constrained}
Eitan Altman, Saswati Sarkar, and Eilon Solan.
\newblock Constrained {M}arkov games with transition probabilities controlled
  by a single player.
\newblock In \emph{International Conference on Performance Evaluation
  Methodologies and Tools}, pages 1--6, 2007.

\bibitem[Altman et~al.(2008)Altman, Avrachenkov, Bonneau, Debbah, El-Azouzi,
  and Menasche]{altman2008constrained}
Eitan Altman, Konstantin Avrachenkov, Nicolas Bonneau, Merouane Debbah, Rachid
  El-Azouzi, and Daniel~Sadoc Menasche.
\newblock Constrained cost-coupled stochastic games with independent state
  processes.
\newblock \emph{Operations Research Letters}, 36\penalty0 (2):\penalty0
  160--164, 2008.

\bibitem[Alvarez-Mena and Hern{\'a}ndez-Lerma(2006)]{alvarez2006existence}
Jorge Alvarez-Mena and On{\'e}simo Hern{\'a}ndez-Lerma.
\newblock Existence of {N}ash equilibria for constrained stochastic games.
\newblock \emph{Mathematical Methods of Operations Research}, 63\penalty0
  (2):\penalty0 261--285, 2006.

\bibitem[Amodei et~al.(2016)Amodei, Olah, Steinhardt, Christiano, Schulman, and
  Man{\'e}]{amodei2016concrete}
Dario Amodei, Chris Olah, Jacob Steinhardt, Paul Christiano, John Schulman, and
  Dan Man{\'e}.
\newblock Concrete problems in {AI} safety.
\newblock \emph{arXiv preprint arXiv:1606.06565}, 2016.

\bibitem[Arrow and Debreu(1954)]{arrow1954existence}
Kenneth~J Arrow and Gerard Debreu.
\newblock Existence of an equilibrium for a competitive economy.
\newblock \emph{Econometrica: Journal of the Econometric Society}, pages
  265--290, 1954.

\bibitem[Bai et~al.(2020{\natexlab{a}})Bai, Aggarwal, and
  Gattami]{bai2020model}
Qinbo Bai, Vaneet Aggarwal, and Ather Gattami.
\newblock Model-free algorithm and regret analysis for {MDP}s with long-term
  constraints.
\newblock \emph{arXiv preprint arXiv:2006.05961}, 2020{\natexlab{a}}.

\bibitem[Bai et~al.(2022)Bai, Bedi, Agarwal, Koppel, and
  Aggarwal]{bai2021achieving}
Qinbo Bai, Amrit~Singh Bedi, Mridul Agarwal, Alec Koppel, and Vaneet Aggarwal.
\newblock Achieving zero constraint violation for constrained reinforcement
  learning via primal-dual approach.
\newblock In \emph{AAAI Conference on Artificial Intelligence}, volume~36,
  pages 3682--3689, 2022.

\bibitem[Bai and Jin(2020)]{bai2020provable}
Yu~Bai and Chi Jin.
\newblock Provable self-play algorithms for competitive reinforcement learning.
\newblock In \emph{International Conference on Machine Learning}, pages
  551--560, 2020.

\bibitem[Bai et~al.(2020{\natexlab{b}})Bai, Jin, and Yu]{bai2020near}
Yu~Bai, Chi Jin, and Tiancheng Yu.
\newblock Near-optimal reinforcement learning with self-play.
\newblock \emph{Advances in Neural Information Processing Systems}, 33,
  2020{\natexlab{b}}.

\bibitem[Bertsekas(2014)]{bertsekas2014constrained}
Dimitri~P Bertsekas.
\newblock \emph{Constrained optimization and {L}agrange multiplier methods}.
\newblock Academic Press, 2014.

\bibitem[Borkar(2005)]{borkar2005actor}
Vivek~S Borkar.
\newblock An actor-critic algorithm for constrained {M}arkov decision
  processes.
\newblock \emph{Systems \& control letters}, 54\penalty0 (3):\penalty0
  207--213, 2005.

\bibitem[Boutilier and Lu(2016)]{boutilier2016budget}
Craig Boutilier and Tyler Lu.
\newblock Budget allocation using weakly coupled, constrained {M}arkov decision
  processes.
\newblock In \emph{Conference on Uncertainty in Artificial Intelligence}, pages
  52--61, 2016.

\bibitem[Boyd et~al.(2004)Boyd, Boyd, and Vandenberghe]{boyd2004convex}
Stephen Boyd, Stephen~P Boyd, and Lieven Vandenberghe.
\newblock \emph{Convex optimization}.
\newblock Cambridge University Press, 2004.

\bibitem[Brafman and Tennenholtz(2002)]{brafman2002r}
Ronen~I Brafman and Moshe Tennenholtz.
\newblock {R}-max-a general polynomial time algorithm for near-optimal
  reinforcement learning.
\newblock \emph{Journal of Machine Learning Research}, 3:\penalty0 213--231,
  2002.

\bibitem[Brantley et~al.(2020)Brantley, Dudik, Lykouris, Miryoosefi,
  Simchowitz, Slivkins, and Sun]{brantley2020constrained}
Kiant{\'e} Brantley, Miroslav Dudik, Thodoris Lykouris, Sobhan Miryoosefi, Max
  Simchowitz, Aleksandrs Slivkins, and Wen Sun.
\newblock Constrained episodic reinforcement learning in concave-convex and
  knapsack settings.
\newblock \emph{Advances in Neural Information Processing Systems},
  33:\penalty0 16315--16326, 2020.

\bibitem[Busoniu et~al.(2008)Busoniu, Babuska, and
  De~Schutter]{busoniu2008comprehensive}
Lucian Busoniu, Robert Babuska, and Bart De~Schutter.
\newblock A comprehensive survey of multiagent reinforcement learning.
\newblock \emph{IEEE Transactions on Systems, Man, and Cybernetics, Part C
  (Applications and Reviews)}, 38\penalty0 (2):\penalty0 156--172, 2008.

\bibitem[Bu{\c{s}}oniu et~al.(2010)Bu{\c{s}}oniu, Babu{\v{s}}ka, and
  De~Schutter]{bucsoniu2010multi}
Lucian Bu{\c{s}}oniu, Robert Babu{\v{s}}ka, and Bart De~Schutter.
\newblock Multi-agent reinforcement learning: {A}n overview.
\newblock \emph{Innovations in multi-agent systems and applications-1}, pages
  183--221, 2010.

\bibitem[Chen et~al.(2022{\natexlab{a}})Chen, Jain, and Luo]{chen2022learning}
Liyu Chen, Rahul Jain, and Haipeng Luo.
\newblock Learning infinite-horizon average-reward {M}arkov decision process
  with constraints.
\newblock In \emph{International Conference on Machine Learning}, pages
  3246--3270, 2022{\natexlab{a}}.

\bibitem[Chen et~al.(2021)Chen, Dong, and Wang]{chen2021primal}
Yi~Chen, Jing Dong, and Zhaoran Wang.
\newblock A primal-dual approach to constrained {M}arkov decision processes.
\newblock \emph{arXiv preprint arXiv:2101.10895}, 2021.

\bibitem[Chen et~al.(2022{\natexlab{b}})Chen, Ma, and Zhou]{chenfinding}
Ziyi Chen, Shaocong Ma, and Yi~Zhou.
\newblock Finding correlated equilibrium of constrained {M}arkov game: {A}
  primal-dual approach.
\newblock In \emph{Advances in Neural Information Processing Systems},
  2022{\natexlab{b}}.

\bibitem[Chow et~al.(2017)Chow, Ghavamzadeh, Janson, and Pavone]{chow2017risk}
Yinlam Chow, Mohammad Ghavamzadeh, Lucas Janson, and Marco Pavone.
\newblock Risk-constrained reinforcement learning with percentile risk
  criteria.
\newblock \emph{The Journal of Machine Learning Research}, 18\penalty0
  (1):\penalty0 6070--6120, 2017.

\bibitem[Cover(1999)]{cover1999elements}
Thomas~M Cover.
\newblock \emph{Elements of information theory}.
\newblock John Wiley \& Sons, 1999.

\bibitem[Dai and Zhang(2020)]{dai2020optimality}
Yu-HOng Dai and Liwei Zhang.
\newblock Optimality conditions for constrained minimax optimization.
\newblock \emph{arXiv preprint arXiv:2004.09730}, 2020.

\bibitem[Daskalakis et~al.(2021)Daskalakis, Skoulakis, and
  Zampetakis]{daskalakis2020complexity}
Constantinos Daskalakis, Stratis Skoulakis, and Manolis Zampetakis.
\newblock The complexity of constrained min-max optimization.
\newblock In \emph{Annual ACM SIGACT Symposium on Theory of Computing}, pages
  1466--1478, 2021.

\bibitem[De~Nijs(2019)]{de2019resource}
Frits De~Nijs.
\newblock \emph{Resource-constrained multi-agent {M}arkov decision processes}.
\newblock PhD thesis, Delft University of Technology, 2019.

\bibitem[de~Nijs and Stuckey(2020)]{de2020risk}
Frits de~Nijs and Peter~J Stuckey.
\newblock Risk-aware conditional replanning for globally constrained
  multi-agent sequential decision making.
\newblock In \emph{International Conference on Autonomous Agents and MultiAgent
  Systems}, pages 303--311, 2020.

\bibitem[Diddigi et~al.(2019)Diddigi, Danda, Bhatnagar,
  et~al.]{diddigi2019actor}
Raghuram~Bharadwaj Diddigi, Sai Koti~Reddy Danda, Shalabh Bhatnagar, et~al.
\newblock Actor-critic algorithms for constrained multi-agent reinforcement
  learning.
\newblock \emph{arXiv preprint arXiv:1905.02907}, 2019.

\bibitem[Ding and Jovanovi{\'c}(2022)]{ding2022policy}
Dongsheng Ding and Mihailo~R Jovanovi{\'c}.
\newblock Policy gradient primal-dual mirror descent for constrained {MDP}s
  with large state spaces.
\newblock In \emph{2022 IEEE 61st Conference on Decision and Control}, pages
  4892--4897, 2022.

\bibitem[Ding et~al.(2020)Ding, Zhang, Basar, and Jovanovic]{ding2020natural}
Dongsheng Ding, Kaiqing Zhang, Tamer Basar, and Mihailo Jovanovic.
\newblock Natural policy gradient primal-dual method for constrained {M}arkov
  decision processes.
\newblock \emph{Advances in Neural Information Processing Systems},
  33:\penalty0 8378--8390, 2020.

\bibitem[Ding et~al.(2021)Ding, Wei, Yang, Wang, and
  Jovanovic]{ding2020provably}
Dongsheng Ding, Xiaohan Wei, Zhuoran Yang, Zhaoran Wang, and Mihailo Jovanovic.
\newblock Provably efficient safe exploration via primal-dual policy
  optimization.
\newblock In \emph{International Conference on Artificial Intelligence and
  Statistics}, pages 3304--3312, 2021.

\bibitem[Ding et~al.(2022{\natexlab{a}})Ding, Zhang, Ba{\c{s}}ar, and
  Jovanovi{\'c}]{ding2022convergenceACC}
Dongsheng Ding, Kaiqing Zhang, Tamer Ba{\c{s}}ar, and Mihailo~R Jovanovi{\'c}.
\newblock Convergence and optimality of policy gradient primal-dual method for
  constrained {M}arkov decision processes.
\newblock In \emph{2022 American Control Conference}, pages 2851--2856,
  2022{\natexlab{a}}.

\bibitem[Ding et~al.(2022{\natexlab{b}})Ding, Zhang, Duan, Ba{\c{s}}ar, and
  Jovanovi{\'c}]{ding2022convergence}
Dongsheng Ding, Kaiqing Zhang, Jiali Duan, Tamer Ba{\c{s}}ar, and Mihailo~R
  Jovanovi{\'c}.
\newblock Convergence and sample complexity of natural policy gradient
  primal-dual methods for constrained {MDP}s.
\newblock \emph{arXiv preprint arXiv:2206.02346}, 2022{\natexlab{b}}.

\bibitem[Du et~al.(2021)Du, Kakade, Lee, Lovett, Mahajan, Sun, and
  Wang]{du2021bilinear}
Simon Du, Sham Kakade, Jason Lee, Shachar Lovett, Gaurav Mahajan, Wen Sun, and
  Ruosong Wang.
\newblock Bilinear classes: {A} structural framework for provable
  generalization in {RL}.
\newblock In \emph{International Conference on Machine Learning}, pages
  2826--2836, 2021.

\bibitem[Efroni et~al.(2020)Efroni, Mannor, and Pirotta]{efroni2020exploration}
Yonathan Efroni, Shie Mannor, and Matteo Pirotta.
\newblock Exploration-exploitation in constrained {MDP}s.
\newblock \emph{arXiv preprint arXiv:2003.02189}, 2020.

\bibitem[Feyzabadi(2017)]{feyzabadi2017robot}
Seyedshams Feyzabadi.
\newblock \emph{Robot Planning with Constrained {M}arkov Decision Processes}.
\newblock PhD thesis, UC Merced, 2017.

\bibitem[Fisac et~al.(2018)Fisac, Akametalu, Zeilinger, Kaynama, Gillula, and
  Tomlin]{fisac2018general}
Jaime~F Fisac, Anayo~K Akametalu, Melanie~N Zeilinger, Shahab Kaynama, Jeremy
  Gillula, and Claire~J Tomlin.
\newblock A general safety framework for learning-based control in uncertain
  robotic systems.
\newblock \emph{IEEE Transactions on Automatic Control}, 64\penalty0
  (7):\penalty0 2737--2752, 2018.

\bibitem[Foster et~al.(2021)Foster, Kakade, Qian, and
  Rakhlin]{foster2021statistical}
Dylan~J Foster, Sham~M Kakade, Jian Qian, and Alexander Rakhlin.
\newblock The statistical complexity of interactive decision making.
\newblock \emph{arXiv preprint arXiv:2112.13487}, 2021.

\bibitem[Gagrani and Nayyar(2020)]{gagrani2020weakly}
Mukul Gagrani and Ashutosh Nayyar.
\newblock Weakly coupled constrained {M}arkov decision processes in {B}orel
  spaces.
\newblock In \emph{2020 American Control Conference}, pages 2790--2795, 2020.

\bibitem[Garc{\i}a and Fern{\'a}ndez(2015)]{garcia2015comprehensive}
Javier Garc{\i}a and Fernando Fern{\'a}ndez.
\newblock A comprehensive survey on safe reinforcement learning.
\newblock \emph{Journal of Machine Learning Research}, 16\penalty0
  (1):\penalty0 1437--1480, 2015.

\bibitem[Girard(2018)]{girard2018structural}
Cory~Jay Girard.
\newblock \emph{STRUCTURAL RESULTS FOR CONSTRAINED {M}ARKOV DECISION
  PROCESSES}.
\newblock PhD thesis, Cornell University, 2018.

\bibitem[G{\'o}mez-Ram{\i}rez et~al.(2003)G{\'o}mez-Ram{\i}rez, Najim, and
  Poznyak]{gomez2003saddle}
E~G{\'o}mez-Ram{\i}rez, K~Najim, and AS~Poznyak.
\newblock Saddle-point calculation for constrained finite {M}arkov chains.
\newblock \emph{Journal of Economic Dynamics and Control}, 27\penalty0
  (10):\penalty0 1833--1853, 2003.

\bibitem[Gu et~al.(2021)Gu, Kuba, Wen, Chen, Wang, Tian, Wang, Knoll, and
  Yang]{gu2021multi}
Shangding Gu, Jakub~Grudzien Kuba, Munning Wen, Ruiqing Chen, Ziyan Wang, Zheng
  Tian, Jun Wang, Alois Knoll, and Yaodong Yang.
\newblock Multi-agent constrained policy optimisation.
\newblock \emph{arXiv preprint arXiv:2110.02793}, 2021.

\bibitem[Gy{\"o}rgy et~al.(2007)Gy{\"o}rgy, Linder, Lugosi, and
  Ottucs{\'a}k]{gyorgy2007line}
Andr{\'a}s Gy{\"o}rgy, Tam{\'a}s Linder, G{\'a}bor Lugosi, and Gy{\"o}rgy
  Ottucs{\'a}k.
\newblock The on-line shortest path problem under partial monitoring.
\newblock \emph{Journal of Machine Learning Research}, 8\penalty0 (10), 2007.

\bibitem[Hakami and Dehghan(2015)]{hakami2015learning}
Vesal Hakami and Mehdi Dehghan.
\newblock Learning stationary correlated equilibria in constrained general-sum
  stochastic games.
\newblock \emph{IEEE transactions on cybernetics}, 46\penalty0 (7):\penalty0
  1640--1654, 2015.

\bibitem[Jaksch et~al.(2010)Jaksch, Ortner, and Auer]{jaksch2010near}
Thomas Jaksch, Ronald Ortner, and Peter Auer.
\newblock Near-optimal regret bounds for reinforcement learning.
\newblock \emph{Journal of Machine Learning Research}, 11\penalty0 (4), 2010.

\bibitem[Jiang et~al.(2020)Jiang, Chen, Yang, Hu, and Zhang]{jiang2020finding}
Xiaofeng Jiang, Shuangwu Chen, Jian Yang, Han Hu, and Zhenliang Zhang.
\newblock Finding the equilibrium for continuous constrained {M}arkov games
  under the average criteria.
\newblock \emph{IEEE Transactions on Automatic Control}, 65\penalty0
  (12):\penalty0 5399--5406, 2020.

\bibitem[Jin et~al.(2020)Jin, Jin, Luo, Sra, and Yu]{jin2020learning}
Chi Jin, Tiancheng Jin, Haipeng Luo, Suvrit Sra, and Tiancheng Yu.
\newblock Learning adversarial {M}arkov decision processes with bandit feedback
  and unknown transition.
\newblock In \emph{International Conference on Machine Learning}, pages
  4860--4869, 2020.

\bibitem[Jin et~al.(2021)Jin, Liu, and Miryoosefi]{jin2021bellman}
Chi Jin, Qinghua Liu, and Sobhan Miryoosefi.
\newblock Bellman {E}luder dimension: {N}ew rich classes of {RL} problems, and
  sample-efficient algorithms.
\newblock \emph{Advances in Neural Information Processing Systems}, 34, 2021.

\bibitem[Jin et~al.(2022{\natexlab{a}})Jin, Liu, Wang, and Yu]{jin2021v}
Chi Jin, Qinghua Liu, Yuanhao Wang, and Tiancheng Yu.
\newblock V-learning-- {A} simple, efficient, decentralized algorithm for
  multiagent {RL}.
\newblock In \emph{ICLR 2022 Workshop on Gamification and Multiagent
  Solutions}, 2022{\natexlab{a}}.

\bibitem[Jin et~al.(2022{\natexlab{b}})Jin, Liu, and Yu]{jin2021power}
Chi Jin, Qinghua Liu, and Tiancheng Yu.
\newblock The power of exploiter: {P}rovable multi-agent {RL} in large state
  spaces.
\newblock In \emph{International Conference on Machine Learning}, pages
  10251--10279, 2022{\natexlab{b}}.

\bibitem[Kalagarla et~al.(2020)Kalagarla, Jain, and Nuzzo]{kalagarla2020sample}
Krishna~C Kalagarla, Rahul Jain, and Pierluigi Nuzzo.
\newblock A sample-efficient algorithm for episodic finite-horizon {MDP} with
  constraints.
\newblock In \emph{AAAI Conference on Artificial Intelligence}, 2020.

\bibitem[Kulkarni(2011)]{kulkarni2011generalized}
Ankur~A Kulkarni.
\newblock \emph{Generalized {N}ash games with shared constraints: existence,
  efficiency, refinement and equilibrium constraints}.
\newblock PhD thesis, University of Illinois at Urbana-Champaign, 2011.

\bibitem[Kulkarni(2017)]{kulkarni2017games}
Ankur~A Kulkarni.
\newblock Games and teams with shared constraints.
\newblock \emph{Philosophical Transactions of the Royal Society A:
  Mathematical, Physical and Engineering Sciences}, 375\penalty0
  (2100):\penalty0 20160302, 2017.

\bibitem[Li and Marden(2014)]{li2014decoupling}
Na~Li and Jason~R Marden.
\newblock Decoupling coupled constraints through utility design.
\newblock \emph{IEEE Transactions on Automatic Control}, 59\penalty0
  (8):\penalty0 2289--2294, 2014.

\bibitem[Li et~al.(2021)Li, Guan, Zou, Xu, Liang, and Lan]{li2021faster}
Tianjiao Li, Ziwei Guan, Shaofeng Zou, Tengyu Xu, Yingbin Liang, and Guanghui
  Lan.
\newblock Faster algorithm and sharper analysis for constrained {M}arkov
  decision process.
\newblock \emph{arXiv preprint arXiv:2110.10351}, 2021.

\bibitem[Littman(1994)]{littman1994markov}
Michael~L Littman.
\newblock {M}arkov games as a framework for multi-agent reinforcement learning.
\newblock In \emph{International Conference on Machine Learning}, pages
  157--163, 1994.

\bibitem[Liu et~al.(2021{\natexlab{a}})Liu, Zhou, Kalathil, Kumar, and
  Tian]{liu2021learning}
Tao Liu, Ruida Zhou, Dileep Kalathil, Panganamala Kumar, and Chao Tian.
\newblock Learning policies with zero or bounded constraint violation for
  constrained {MDP}s.
\newblock \emph{Advances in Neural Information Processing Systems},
  34:\penalty0 17183--17193, 2021{\natexlab{a}}.

\bibitem[Liu et~al.(2021{\natexlab{b}})Liu, Zhou, Kalathil, Kumar, and
  Tian]{liu2021fast}
Tao Liu, Ruida Zhou, Dileep Kalathil, PR~Kumar, and Chao Tian.
\newblock Fast global convergence of policy optimization for constrained
  {MDP}s.
\newblock \emph{arXiv preprint arXiv:2111.00552}, 2021{\natexlab{b}}.

\bibitem[Lu et~al.(2020)Lu, Zhang, Chen, Basar, and Horesh]{ludecentralized}
Songtao Lu, Kaiqing Zhang, Tianyi Chen, Tamer Basar, and Lior Horesh.
\newblock Decentralized policy gradient descent ascent for safe multi-agent
  reinforcement learning.
\newblock In \emph{AAAI Conference on Artificial Intelligence}, 2020.

\bibitem[Mannor et~al.(2009)Mannor, Tsitsiklis, and Yu]{mannor2009online}
Shie Mannor, John~N Tsitsiklis, and Jia~Yuan Yu.
\newblock Online learning with sample path constraints.
\newblock \emph{Journal of Machine Learning Research}, 10\penalty0 (3), 2009.

\bibitem[Meuleau et~al.(1998)Meuleau, Hauskrecht, Kim, Peshkin, Kaelbling,
  Dean, and Boutilier]{meuleau1998solving}
Nicolas Meuleau, Milos Hauskrecht, Kee-Eung Kim, Leonid Peshkin, Leslie~Pack
  Kaelbling, Thomas~L Dean, and Craig Boutilier.
\newblock Solving very large weakly coupled {M}arkov decision processes.
\newblock In \emph{AAAI/IAAI}, pages 165--172, 1998.

\bibitem[Nemirovski et~al.(2009)Nemirovski, Juditsky, Lan, and
  Shapiro]{nemirovski2009robust}
Arkadi Nemirovski, Anatoli Juditsky, Guanghui Lan, and Alexander Shapiro.
\newblock Robust stochastic approximation approach to stochastic programming.
\newblock \emph{SIAM Journal on optimization}, 19\penalty0 (4):\penalty0
  1574--1609, 2009.

\bibitem[Neu et~al.(2010)Neu, Gy{\"o}rgy, and Szepesv{\'a}ri]{neu2010online}
Gergely Neu, Andr{\'a}s Gy{\"o}rgy, and Csaba Szepesv{\'a}ri.
\newblock The online loop-free stochastic shortest-path problem.
\newblock In \emph{COLT}, volume 2010, pages 231--243, 2010.

\bibitem[Neu et~al.(2012)Neu, Gyorgy, and Szepesv{\'a}ri]{neu2012adversarial}
Gergely Neu, Andras Gyorgy, and Csaba Szepesv{\'a}ri.
\newblock The adversarial stochastic shortest path problem with unknown
  transition probabilities.
\newblock In \emph{Artificial Intelligence and Statistics}, pages 805--813,
  2012.

\bibitem[Neumann(1928)]{neumann1928theorie}
J~v Neumann.
\newblock Zur theorie der gesellschaftsspiele.
\newblock \emph{Mathematische annalen}, 100\penalty0 (1):\penalty0 295--320,
  1928.

\bibitem[Nguyen et~al.(2014)Nguyen, Yeoh, Lau, Zilberstein, and
  Zhang]{nguyen2014decentralized}
Duc~Thien Nguyen, William Yeoh, Hoong~Chuin Lau, Shlomo Zilberstein, and
  Chongjie Zhang.
\newblock Decentralized multi-agent reinforcement learning in average-reward
  dynamic {DCOP}s.
\newblock In \emph{AAAI conference on artificial intelligence}, 2014.

\bibitem[OroojlooyJadid and Hajinezhad(2019)]{oroojlooyjadid2019review}
Afshin OroojlooyJadid and Davood Hajinezhad.
\newblock A review of cooperative multi-agent deep reinforcement learning.
\newblock \emph{arXiv preprint arXiv:1908.03963}, 2019.

\bibitem[Parnika et~al.(2021)Parnika, Diddigi, Danda, and
  Bhatnagar]{parnika2021attention}
P~Parnika, Raghuram~Bharadwaj Diddigi, Sai Koti~Reddy Danda, and Shalabh
  Bhatnagar.
\newblock Attention actor-critic algorithm for multi-agent constrained
  co-operative reinforcement learning.
\newblock \emph{arXiv preprint arXiv:2101.02349}, 2021.

\bibitem[Pearsall(1976)]{pearsall1976lagrange}
Edward~S Pearsall.
\newblock A {L}agrange multiplier method for certain constrained min-max
  problems.
\newblock \emph{Operations Research}, 24\penalty0 (1):\penalty0 70--91, 1976.

\bibitem[Piunovskiy and Mao(2000)]{piunovskiy2000constrained}
Alexei~B Piunovskiy and Xuerong Mao.
\newblock Constrained {M}arkovian decision processes: the dynamic programming
  approach.
\newblock \emph{Operations research letters}, 27\penalty0 (3):\penalty0
  119--126, 2000.

\bibitem[Qiu et~al.(2020)Qiu, Wei, Yang, Ye, and Wang]{qiu2020upper}
Shuang Qiu, Xiaohan Wei, Zhuoran Yang, Jieping Ye, and Zhaoran Wang.
\newblock Upper confidence primal-dual reinforcement learning for {CMDP} with
  adversarial loss.
\newblock \emph{Advances in Neural Information Processing Systems},
  33:\penalty0 15277--15287, 2020.

\bibitem[Rakhlin and Sridharan(2013)]{rakhlin2013online}
Alexander Rakhlin and Karthik Sridharan.
\newblock Online learning with predictable sequences.
\newblock In \emph{Conference on Learning Theory}, pages 993--1019, 2013.

\bibitem[Rosen(1965)]{rosen1965existence}
J~Ben Rosen.
\newblock Existence and uniqueness of equilibrium points for concave n-person
  games.
\newblock \emph{Econometrica: Journal of the Econometric Society}, pages
  520--534, 1965.

\bibitem[Rosenberg and Mansour(2019)]{rosenberg2019online}
Aviv Rosenberg and Yishay Mansour.
\newblock Online stochastic shortest path with bandit feedback and unknown
  transition function.
\newblock In \emph{Advances in Neural Information Processing Systems}, pages
  2212--2221, 2019.

\bibitem[Salemi~Parizi(2018)]{salemi2018approximate}
Mahshid Salemi~Parizi.
\newblock \emph{Approximate dynamic programming for weakly coupled {M}arkov
  decision processes with perfect and imperfect information}.
\newblock PhD thesis, The University of Washington, 2018.

\bibitem[Schmidt et~al.(2022)Schmidt, Brosig, Plinge, Eskofier, and
  Mutschler]{schmidt2022introduction}
Lukas~M Schmidt, Johanna Brosig, Axel Plinge, Bjoern~M Eskofier, and
  Christopher Mutschler.
\newblock An introduction to multi-agent reinforcement learning and review of
  its application to autonomous mobility.
\newblock \emph{arXiv preprint arXiv:2203.07676}, 2022.

\bibitem[Shalev-Shwartz et~al.(2016)Shalev-Shwartz, Shammah, and
  Shashua]{shalev2016safe}
Shai Shalev-Shwartz, Shaked Shammah, and Amnon Shashua.
\newblock Safe, multi-agent, reinforcement learning for autonomous driving.
\newblock \emph{arXiv preprint arXiv:1610.03295}, 2016.

\bibitem[Shapley(1953)]{shapley1953stochastic}
Lloyd~S Shapley.
\newblock Stochastic games.
\newblock \emph{Proceedings of the National Academy of Sciences}, 39\penalty0
  (10):\penalty0 1095--1100, 1953.

\bibitem[Singh et~al.(2022)Singh, Gupta, and Shroff]{singh2020learning}
Rahul Singh, Abhishek Gupta, and Ness Shroff.
\newblock Learning in {M}arkov decision processes under constraints.
\newblock \emph{IEEE Transactions on Control of Network Systems}, 2022.

\bibitem[Singh and Hemachandra(2014)]{singh2014characterization}
Vikas~Vikram Singh and N~Hemachandra.
\newblock A characterization of stationary {N}ash equilibria of constrained
  stochastic games with independent state processes.
\newblock \emph{Operations Research Letters}, 42\penalty0 (1):\penalty0 48--52,
  2014.

\bibitem[Song et~al.(2021)Song, Mei, and Bai]{song2021can}
Ziang Song, Song Mei, and Yu~Bai.
\newblock When can we learn general-sum {M}arkov games with a large number of
  players sample-efficiently?
\newblock In \emph{International Conference on Learning Representations}, 2021.

\bibitem[Tessler et~al.(2019)Tessler, Mankowitz, and Mannor]{tessler2018reward}
Chen Tessler, Daniel~J Mankowitz, and Shie Mannor.
\newblock Reward constrained policy optimization.
\newblock In \emph{International Conference on Learning Representations}, 2019.

\bibitem[Thomas(2015)]{thomas2015safe}
Philip~S Thomas.
\newblock \emph{Safe reinforcement learning}.
\newblock PhD thesis, University of Massachusetts Libraries, 2015.

\bibitem[Tian et~al.(2020)Tian, Wang, Yu, and Sra]{tian2020provably}
Yi~Tian, Yuanhao Wang, Tiancheng Yu, and Suvrit Sra.
\newblock Provably efficient online agnostic learning in {M}arkov games.
\newblock \emph{arXiv preprint arXiv:2010.15020}, 2020.

\bibitem[Tseng(2009)]{tseng2009accelerated}
Paul Tseng.
\newblock On accelerated proximal gradient methods for convex-concave
  optimization.
\newblock \emph{URL http://www. math. washington. edu/\~{} tseng/papers/apgm.
  pdf}, 2009.

\bibitem[Wachi and Sui(2020)]{wachi2020safe}
Akifumi Wachi and Yanan Sui.
\newblock Safe reinforcement learning in constrained {M}arkov decision
  processes.
\newblock In \emph{International Conference on Machine Learning}, pages
  9797--9806, 2020.

\bibitem[Wei et~al.(2017)Wei, Hong, and Lu]{wei2017online}
Chen-Yu Wei, Yi-Te Hong, and Chi-Jen Lu.
\newblock Online reinforcement learning in stochastic games.
\newblock In \emph{Advances in Neural Information Processing Systems}, pages
  4994--5004, 2017.

\bibitem[Wei(2020)]{wei2020discrete}
Qingda Wei.
\newblock Discrete-time constrained stochastic games with the expected average
  payoff criteria.
\newblock \emph{Optimization}, pages 1--32, 2020.

\bibitem[Wei(2021)]{wei2021constrained}
Qingda Wei.
\newblock Constrained expected average stochastic games for continuous-time
  jump processes.
\newblock \emph{Applied Mathematics \& Optimization}, 83\penalty0 (3):\penalty0
  1277--1309, 2021.

\bibitem[Wei et~al.(2018)Wei, Yu, and Neely]{wei2018online}
Xiaohan Wei, Hao Yu, and Michael~J Neely.
\newblock Online learning in weakly coupled {M}arkov decision processes: {A}
  convergence time study.
\newblock \emph{ACM on Measurement and Analysis of Computing Systems},
  2\penalty0 (1):\penalty0 1--38, 2018.

\bibitem[Wei et~al.(2020)Wei, Yu, and Neely]{wei2020online}
Xiaohan Wei, Hao Yu, and Michael~J Neely.
\newblock Online primal-dual mirror descent under stochastic constraints.
\newblock In \emph{Abstracts of the 2020 SIGMETRICS/Performance Joint
  International Conference on Measurement and Modeling of Computer Systems},
  pages 3--4, 2020.

\bibitem[Xie et~al.(2020)Xie, Chen, Wang, and Yang]{xie2020learning}
Qiaomin Xie, Yudong Chen, Zhaoran Wang, and Zhuoran Yang.
\newblock Learning zero-sum simultaneous-move {M}arkov games using function
  approximation and correlated equilibrium.
\newblock In \emph{Conference on Learning Theory}, pages 3674--3682, 2020.

\bibitem[Yaji and Bhatnagar(2015)]{yaji2015necessary}
Vinayaka~G Yaji and Shalabh Bhatnagar.
\newblock Necessary and sufficient conditions for optimality in constrained
  general sum stochastic games.
\newblock \emph{Systems \& Control Letters}, 85:\penalty0 8--15, 2015.

\bibitem[Yang and Wang(2020)]{yang2020overview}
Yaodong Yang and Jun Wang.
\newblock An overview of multi-agent reinforcement learning from game
  theoretical perspective.
\newblock \emph{arXiv preprint arXiv:2011.00583}, 2020.

\bibitem[Ying et~al.(2022)Ying, Ding, and Lavaei]{ying2021dual}
Donghao Ying, Yuhao Ding, and Javad Lavaei.
\newblock A dual approach to constrained {M}arkov decision processes with
  entropy regularization.
\newblock In \emph{International Conference on Artificial Intelligence and
  Statistics}, pages 1887--1909, 2022.

\bibitem[Yu et~al.(2017)Yu, Neely, and Wei]{yu2017online}
Hao Yu, Michael Neely, and Xiaohan Wei.
\newblock Online convex optimization with stochastic constraints.
\newblock In \emph{Advances in Neural Information Processing Systems}, pages
  1428--1438, 2017.

\bibitem[Zhang et~al.(2021)Zhang, Yang, and Ba{\c{s}}ar]{zhang2019multi}
Kaiqing Zhang, Zhuoran Yang, and Tamer Ba{\c{s}}ar.
\newblock Multi-agent reinforcement learning: {A} selective overview of
  theories and algorithms.
\newblock \emph{Handbook of Reinforcement Learning and Control}, pages
  321--384, 2021.

\bibitem[Zhang(2019)]{zhang2019discrete}
Wenzhao Zhang.
\newblock Discrete-time constrained average stochastic games with independent
  state processes.
\newblock \emph{Mathematics}, 7\penalty0 (11):\penalty0 1089, 2019.

\bibitem[Zhang and Zou(2021)]{zhang2021constrained}
Wenzhao Zhang and Xiaolong Zou.
\newblock Constrained average stochastic games with continuous-time independent
  state processes.
\newblock \emph{Optimization}, pages 1--24, 2021.

\bibitem[Zhao and You(2021)]{zhao2021primal}
Feiran Zhao and Keyou You.
\newblock Primal-dual learning for the model-free risk-constrained linear
  quadratic regulator.
\newblock In \emph{Learning for Dynamics and Control}, pages 702--714, 2021.

\bibitem[Zheng and Ratliff(2020)]{zheng2020constrained}
Liyuan Zheng and Lillian Ratliff.
\newblock Constrained upper confidence reinforcement learning.
\newblock In \emph{Conference on Learning for Dynamics and Control}, volume
  120, pages 620--629, 2020.

\bibitem[Zimin and Neu(2013)]{zimin2013online}
Alexander Zimin and Gergely Neu.
\newblock Online learning in episodic {M}arkovian decision processes by
  relative entropy policy search.
\newblock In \emph{Neural Information Processing Systems 26}, 2013.

\end{thebibliography}

\newpage

\onecolumn

~\\
\centerline{{\fontsize{14}{14}\selectfont \textbf{Supplementary Materials for }}}

\vspace{6pt}
\centerline{{\fontsize{14}{14}\selectfont
		\textbf{``Provably Efficient Generalized Lagrangian }}}

\vspace{6pt}
\centerline{\fontsize{14}{14}\selectfont \textbf{
		Policy Optimization for Safe Multi-Agent Reinforcement Learning''}}
\vspace{10pt}

\section{Related Work}\label{ap.related}

Safety constraints have gained increasing attention in the literature on multi-agent reinforcement learning (RL); see surveys~\citep{busoniu2008comprehensive,bucsoniu2010multi,zhang2019multi,oroojlooyjadid2019review,yang2020overview,schmidt2022introduction}. We first discusss some related work in framework of Markov games (MGs)~\citep{shapley1953stochastic,littman1994markov}.

\noindent\textbf{Constrained MGs}. 
Our work is closely related to safe multi-agent RL in constrained MGs. 
The constrained MGs generalize constrained MDPs~\citep{altman1999constrained} to multiple agents and Markov/stochastic games~\citep{shapley1953stochastic,littman1994markov} to account for constraints. The Nash equilibrium for constrained MGs have been studied in~\cite{altman2000constrained,gomez2003saddle,altman2005zero,alvarez2006existence,altman2007constrained,altman2008constrained,altman2009constrained,singh2014characterization} using the notion of \emph{constrained Nash equilibrium} (which generalizes the concept of \emph{generalized Nash equilibrium} in static games~\citep{arrow1954existence} to MGs) by assuming some particular transition models and constraints on reward/utility functions \emph{a priori}. More general studies include~\cite{yaji2015necessary,zhang2019discrete,wei2020discrete,wei2021constrained,zhang2021constrained}. These results are not applicable to the RL setting where transition models and reward/utility functions are unknown, and only a finite number of samples are available. Recently, asymptotic convergence in learning constrained MGs was examined in~\cite{hakami2015learning,jiang2020finding} but sample efficiency, constraint satisfaction, and exploration were not fully addressed. Our development fills this gap by adding built-in exploration mechanisms under constraints and proving the first non-asymptotic convergence for learning constrained Nash equilibria. We notice that learning general equilibria with non-asymptotic convergence was studied by~\cite{
	chenfinding}, which was concurrent to us since this work was under review in May 2022. 

\noindent\textbf{Constrained MDPs}. Our work is also pertinent to a rich RL literature on learning unknown constrained MDPs~\citep{zheng2020constrained,qiu2020upper,kalagarla2020sample,bai2020model,chow2017risk,tessler2018reward,ding2020natural,ding2020provably,ding2022convergence,ding2022convergenceACC,ding2022policy,wachi2020safe,efroni2020exploration,brantley2020constrained,chen2021primal,liu2021learning,ying2021dual,liu2021fast,bai2021achieving,zhao2021primal,li2021faster,chen2022learning}. While these results provide provably efficient algorithms regarding regret and constraint satisfaction in the single-agent setting, they are not applicable to our multi-agent game being played under constraints, because of the \emph{non-convexity} nauture of constrained multi-agent policy optimization and the \emph{non-stationary} environment each agent is facing. An extended line of work on constrained MDPs focuses on cooperative multi-agent learning under constraints and most efforts study the case where multiple agents have independent MDPs with a coupled budget/resource constraint~\citep{meuleau1998solving,boutilier2016budget,wei2018online,de2020risk,gagrani2020weakly}. All these results assume that transition models or system dynamics are known. Only a few studies considered the shared MDP case~\citep{diddigi2019actor,ludecentralized,parnika2021attention,gu2021multi}, but they either lack of theoretical guarantees or do not handle exploration. In contrast, our work focuses on the MG setting with unknown transition models, and attacks the exploration challenge directly.

\noindent\textbf{Single-agent RL in MDPs \& multi-agent RL in MGs}.
A considerable literature has provided sample-efficient online RL methods in single-agent and multi-agent unconstrained RL settings; see recent summaries in~\cite{foster2021statistical,du2021bilinear,jin2021bellman} for single-agent RL and~\cite{jin2021power,jin2021v,song2021can} for multi-agent RL. However, it is largely open to extend those sample-efficient online RL methods to constrained MGs due to several technical challenges. First, since the Bellman optimality fails even in constrained MDPs~\citep{piunovskiy2000constrained,borkar2005actor} and the optimal constrained policy is often stochastic~\citep{altman1999constrained}, value-based RL methods are not suitable. Second, applying policy-based RL methods often warrants solving constrained policy optimization problems that are not convex~\citep{achiam2017constrained,ding2020natural}, not mentioning multi-agent policy optimization problems. Third, designing a sample-efficient online RL algorithm for constrained MGs has to deal with the fundamental exploitation/exploration tradeoff under constraints~\citep{efroni2020exploration,brantley2020constrained,ding2020provably}. Despite some recent progress in dealing with each technical issue individually, it is crucial to address them together for multi-agent RL in constrained MGs. In this work, we offer the first positive answer by identifying a class of zero-sum constrained MGs, establishing a new policy optimization algorithm with online exploration for learning such games, and proving near-optimal sample efficiency.

\section{Proof Sketch of Theorem~\ref{thm.main}}
\label{proof}

\noindent\textbf{Regret Analysis}. 
We recall that our algorithm maintains the occupancy measures $(\hat q_1^{\,t}, \hat q_2^{\,t})$ for estimating policies $(\pi^t,\mu^t)$ and Problem~\eqref{eq.opt_wc} defines the comparison solution $(q_1^\star,q_2^\star)$ in hindsight. Naturally, we decompose the regret~\eqref{eq.regret} into two side regrets for both players by inserting $\langle q_1^\star\cdot q_2^\star,r^t\rangle $. By the occupancy measures $(q_1^t, q_2^t)$ associated with $(\pi^t,\mu^t)$ under the true transitions $P_1$ and $P_2$, we further decompose two side regrets into two terms by inserting $\langle \hat q_1^{\,t}\cdot  q_2^\star,r^{t}\rangle$ and $\langle q_1^\star\cdot \hat q_2^{\,t},r^{t}\rangle$, individually. Specifically, we have
\[
\begin{array}{rcl}
\text{Regret}(T) 
&=&\displaystyle 
\underbrace{
	\sum_{t\,=\,0}^{T-1} \big\langle{ \hat q_1^{\,t}\cdot  q_2^\star- q_1^\star\cdot \hat q_2^{\,t} },{r^{t}}\big\rangle 
}_{\hat{\text{\normalfont Regret}}(T)}
\,+\,
\underbrace{
	\sum_{t\,=\,0}^{T-1} \inner{( q_1^t-\hat q_1^{\,t}) \cdot q_2^\star}{r^t} 
}_{\text{\normalfont Error}_1}
\,+\,
\underbrace{
	\sum_{t\,=\,0}^{T-1} \inner{q_1^\star \cdot (\hat q_2^{\,t}-q_2^t)}{r^t} 
}_{\text{\normalfont Error}_2}
\end{array}
\]
where ${\hat{\text{\normalfont Regret}}(T)}$ depicts a regret of an online primal-dual mirror descent problem, ${\text{\normalfont Error}_1}$ is the error of using $\hat q_1^{\,t}$ for the min-player, and ${\text{\normalfont Error}_2}$ is the error of using $\hat q_2^{\,t}$ for the max-player.

We begin with a relatively standard lemma on estimation errors of $\hat q_1^{\,t}$, $\hat q_2^{\,t}$; we prove it in Appendix~\ref{ap.qqdifference}. 

\begin{lemma}\label{lem.qqdifference}
	Fix $\delta\in(0,1)$. Then, with probability $1-2\delta$,
	\[
	\begin{array}{rcl}
	\displaystyle\sum_{t\,=\,0}^{T-1} \norm{\hat q_1^{\,t}-q_1^t}_1
	&\leq& O \Big( L|X|\sqrt{T |A| \log \tfrac{T|X||A|}{\delta}}\Big)
	\\[0.2cm]
	\displaystyle\sum_{t\,=\,0}^{T-1} \norm{\hat q_2^{\,t}-q_2^t}_1
	&\leq& O\Big( L|Y|\sqrt{T |B| \log\tfrac{T|Y||B|}{\delta}}\Big).
	\end{array}
	\]
\end{lemma}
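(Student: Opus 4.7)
The plan is a two-step argument. First, I observe that $\hat q_1^{\,t}$ and $q_1^t$ are both induced by the \emph{same} policy $\pi^t$ (via~\eqref{eq.P_pi}--\eqref{eq.policy}), differing only in the underlying transition model; thus the per-episode $\ell_1$-gap reduces to a transition-model estimation error, which Lemma~\ref{lem.empiricalP} already controls. Second, I sum this per-episode bound over $t$ by invoking a martingale concentration together with the doubling-epoch counter design of Algorithm~\ref{UCB-MPD}. The max-player bound is identical by symmetry, so I only describe the min-player argument.

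\emph{Per-episode bound.} Since $\hat q_1^{\,t}\in \Delta(k_1^t)=\Delta(\mathcal{P}_1^{k_1^t})$, there exists some transition $\hat P_1^{\,t}\in\mathcal{P}_1^{k_1^t}$ such that $\hat q_1^{\,t}$ is the occupancy measure of $(\pi^t,\hat P_1^{\,t})$, while $q_1^t$ is the occupancy measure of $(\pi^t, P_1)$. By Lemma~\ref{lem.empiricalP}, the event $\mathcal{E}_1 = \{P_1\in \mathcal{P}_1^{k_1^t}\text{ for every }t\}$ holds with probability at least $1-\delta$, so by the triangle inequality
\[
\|\hat P_1^{\,t}(\cdot\,|\,x,a) - P_1(\cdot\,|\,x,a)\|_1 \;\leq\; 2\,\epsilon_1^{k_1^t}(x,a), \quad \forall(x,a).
\]
A standard layer-by-layer ``change-of-dynamics'' calculation (coupling two trajectories sharing $\pi^t$ and doing induction on the layer index) then yields
\[
\|\hat q_1^{\,t} - q_1^t\|_1 \;\leq\; \sum_{\ell=0}^{L-1}\sum_{(x,a)\,\in\, X_\ell\times A} q_1^t(x,a)\,\|\hat P_1^{\,t}(\cdot\,|\,x,a) - P_1(\cdot\,|\,x,a)\|_1 \;\leq\; 2\sum_{\ell=0}^{L-1}\sum_{(x,a)\,\in\, X_\ell\times A} q_1^t(x,a)\,\epsilon_1^{k_1^t}(x,a).
\]

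\emph{Summation over episodes.} Plugging in the explicit $\epsilon_1^{k_1^t}(x,a)$, the target becomes bounding $\sum_{t=0}^{T-1} q_1^t(x,a)/\sqrt{\max(1,N_1^{k_1^t}(x,a))}$. I first pass from the expected visitation $q_1^t(x,a)$ to the empirical indicator $\mathbf{1}\{(x_\ell^t,a_\ell^t)=(x,a)\}$ via Azuma--Hoeffding on the martingale $\{q_1^t(x,a) - \mathbf{1}\{(x_\ell^t,a_\ell^t)=(x,a)\}\}_t$, with a union bound over $(x,a)$. Call the resulting event $\mathcal{E}_2$; it holds with probability at least $1-\delta$. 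Under $\mathcal{E}_2$, the doubling epoch rule of Algorithm~\ref{UCB-MPD} (line 7) guarantees $n_1^{k_1^t}(x,a)\leq N_1^{k_1^t}(x,a)$ within an epoch, so by the standard pigeonhole/doubling argument
\[
\sum_{t=0}^{T-1}\frac{\mathbf{1}\{(x_\ell^t,a_\ell^t)=(x,a)\}}{\sqrt{\max(1,N_1^{k_1^t}(x,a))}} \;=\; O\bigl(\sqrt{N_T(x,a)}\bigr),
\]
where $N_T(x,a)$ is the total number of visits to $(x,a)$ by episode $T$. Applying Cauchy--Schwarz in each layer and using $\sum_{(x,a)\in X_\ell\times A} N_T(x,a) = T$,
\[
\sum_{t=0}^{T-1}\|\hat q_1^{\,t} - q_1^t\|_1 \;\lesssim\; \sum_{\ell=0}^{L-1}\sqrt{|X_{\ell+1}|\log(T|X||A|/\delta)}\,\sqrt{|X_\ell||A|\,T} \;\lesssim\; L|X|\sqrt{T|A|\log(T|X||A|/\delta)},
\]
which is the claimed bound. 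Intersecting $\mathcal{E}_1\cap\mathcal{E}_2$ gives total probability at least $1-2\delta$.

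The main obstacle is the bookkeeping in the second step rather than any single clever estimate: one must (a) justify replacing the expected visitation $q_1^t(x,a)$ by the empirical indicator uniformly over all $(x,a)$ using a single Azuma union bound, (b) exploit the doubling-epoch design so that $N_1^{k_1^t}(x,a)$ does not lag behind the true running count by more than a factor of two, and (c) carry out the layered Cauchy--Schwarz carefully to produce the $L|X|$ prefactor. The per-episode change-of-dynamics inequality is the only non-bookkeeping step and is a now-standard coupling argument for loop-free episodic MDPs.
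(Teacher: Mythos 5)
Your proof follows essentially the same route as the paper's: a layered change-of-dynamics bound reducing $\|\hat q_1^{\,t}-q_1^t\|_1$ to occupancy-weighted transition-estimation errors (the paper's Lemma~\ref{lem.qdifference}), an Azuma--Hoeffding step replacing expected visitations by empirical indicators, and the doubling-epoch pigeonhole (Lemma~\ref{lem.series}) followed by Cauchy--Schwarz/Jensen over states and layers. One caveat: your displayed per-episode inequality $\|\hat q_1^{\,t}-q_1^t\|_1\le\sum_{\ell}\sum_{(x,a)}q_1^t(x,a)\,\|\hat P_1^{\,t}(\cdot\,|\,x,a)-P_1(\cdot\,|\,x,a)\|_1$ is too strong as stated, since the left side sums the layer-marginal total variations over all $L$ layers and each of these accumulates the transition errors of all earlier layers --- this is exactly why the paper's bound carries the double sum $\sum_{j=0}^{L-1}\sum_{\ell=0}^{j}$, i.e., an extra factor of at most $L$. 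Because your final bound already carries the $L|X|$ prefactor, this slip is harmless and the stated conclusion still follows.
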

\vspace*{-0.2cm}

We note that $r^t\in [0,1]$, $q_2^\star$ is a probability distribution, and 
${\text{\normalfont Error}_1} 
=
\sum_{t\,=\,0}^{T-1} \inner{( q_1^t-\hat q_1^{\,t} ) \cdot q_2^\star}{r^t} 
\leq
\sum_{t\,=\,0}^{T-1} \norm{q_1^t-\hat q_1^{\,t} }_1.$
Application of Lemma~\ref{lem.qqdifference} yields the following bounds on ${\text{\normalfont Error}_1}$ and ${\text{\normalfont Error}_2}$. 
\begin{lemma}\label{lem.error12}
	Fix $\delta\in(0,1)$. Then, with probability $1-2\delta$,
	\[
	{\text{\normalfont Error}_1} \; \leq \; O\Big(L|X|\sqrt{T|A| \log\tfrac{T|X||A|}{\delta}}\Big)
	\;  \text{ and } \;
	{\text{\normalfont Error}_2} \; \leq \; O\Big(L|Y|\sqrt{T|B| \log\tfrac{T|Y||B|}{\delta}}\Big).
	\]
\end{lemma}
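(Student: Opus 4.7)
The plan is to reduce both error terms to the occupancy-measure discrepancies $\sum_{t} \|\hat q_1^{\,t} - q_1^t\|_1$ and $\sum_{t} \|\hat q_2^{\,t} - q_2^t\|_1$ that are already controlled by \pref{lem.qqdifference}, and then apply that lemma directly. The excerpt itself sketches the first step for $\text{Error}_1$: using $r^t(x,y,a,b)\in[0,1]$ and the fact that, for each layer $\ell$, $\sum_{y\in Y_\ell,\,b\in B} q_2^\star(y,b) = 1$, the inner sum $\sum_{y,b} q_2^\star(y,b)\,r^t(x,y,a,b)$ is bounded by $1$ for every $(x,a)$. Swapping the order of summation in the definition $\inner{(q_1^t-\hat q_1^{\,t})\cdot q_2^\star}{r^t} = \sum_\ell \sum_{x\in X_\ell,a}(q_1^t(x,a)-\hat q_1^{\,t}(x,a))\sum_{y\in Y_\ell,b}q_2^\star(y,b)\,r^t(x,y,a,b)$ and applying this pointwise bound together with Hölder's inequality gives $\text{Error}_1 \leq \sum_{t=0}^{T-1}\|\hat q_1^{\,t}-q_1^t\|_1$, where the $\ell_1$ norm is taken over all $(x,a)\in X\times A$ across all layers.

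Having established this reduction, I would simply invoke \pref{lem.qqdifference} (the first bound) to conclude $\text{Error}_1 \leq O(L|X|\sqrt{T|A|\log(T|X||A|/\delta)})$ on the event of probability at least $1-\delta$ granted by the lemma. For $\text{Error}_2$ the argument is symmetric: the inner sum $\sum_{x\in X_\ell,a}q_1^\star(x,a)\,r^t(x,y,a,b) \leq 1$ for each $(y,b)$ layerwise, yielding $\text{Error}_2 \leq \sum_{t=0}^{T-1}\|\hat q_2^{\,t}-q_2^t\|_1$, and the second bound in \pref{lem.qqdifference} supplies the stated rate on its own event of probability $1-\delta$. Taking a union bound over the two events of \pref{lem.qqdifference} delivers the joint $1-2\delta$ guarantee in the statement.

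The argument is essentially a Hölder-type reduction plus bookkeeping, so there is no real obstacle in the analysis itself. The only subtle point worth writing out carefully is that $q_2^\star$ (resp.\ $q_1^\star$) is not a single probability distribution but a collection of layerwise distributions, so one must justify the pointwise bound $\sum_{y,b}q_2^\star(y,b)r^t(x,y,a,b)\leq 1$ by restricting the sum to the layer $\ell$ that $x$ belongs to, using the normalization condition (i) in the definition of $\Delta(P_2)$ layer by layer. Once this is made explicit, the rest of the proof is immediate from \pref{lem.qqdifference}.
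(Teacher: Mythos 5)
Your proposal is correct and matches the paper's own argument: the paper likewise bounds $\text{Error}_1 \leq \sum_{t}\|q_1^t-\hat q_1^{\,t}\|_1$ using $r^t\in[0,1]$ and the (layerwise) normalization of $q_2^\star$, then invokes Lemma~\ref{lem.qqdifference}, with the symmetric argument for $\text{Error}_2$. Your extra care about the layerwise normalization is a faithful elaboration of the step the paper leaves implicit.
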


We next bound ${\hat{\text{\normalfont Regret}}(T)}$ by establishing an upper bound in Lemma~\ref{lem.gap_pd} first that is crucial to our regret analysis. The proof idea of Lemma~\ref{lem.gap_pd} is similar to the analysis of online constrained convex optimization~\citep{yu2017online,wei2020online}. A distinction is that we analyze the primal update~\eqref{eq.primal} via a new property of KL divergence for the minimax objective; see it in Appendix~\ref{ap.gap_pd}. 
\begin{lemma}\label{lem.gap_pd}
	Fix $\delta\in (0,1)$. Then, with probability $1-\delta$,
	\[
	\begin{array}{rcl}
	\displaystyle
	{\hat{\text{\normalfont Regret}}(T)}
	&\leq& \displaystyle V^{-1}
	\sum_{t\,=\,0}^{T-1}
	\lambda^{t} \big(\langle{ q_1^\star},{g^{t}}\rangle 
	+ \langle{ q_2^\star},{h^{t}}\rangle -b\big)
	\\[0.2cm]
	&&\displaystyle 
	\,+\, (\eta V)^{-1} L (1+\theta T)\big(\log (|X||A|) +\log (|Y||B|)\big) 
	\,+\,(2 V^{-1}L+4\theta +\eta V) LT.
	\end{array}
	\]
\end{lemma}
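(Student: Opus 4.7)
The plan is to treat the primal-dual scheme~\eqref{eq.primal}--\eqref{eq.dual} as an online constrained saddle-point method against the fixed comparator $(q_1^\star, q_2^\star)$. By \pref{lem.empiricalP}, with probability $1-\delta$ one has $q_1^\star \in \Delta(k_1^t)$ and $q_2^\star \in \Delta(k_2^t)$ for every $t$, so the comparator is admissible in the constraint sets used by the update. I would then split $\hat{\text{Regret}}(T) = \sum_t \inner{\hat q_1^{\,t}\cdot q_2^\star - q_1^\star \cdot \hat q_2^{\,t}}{r^t}$ along the two players and, for each episode, insert the ``frozen opponent'' proxies $\inner{\hat q_1^{\,t}\cdot \hat q_2^{\,t-1}}{r^{t-1}}$ and $\inner{\hat q_1^{\,t-1}\cdot \hat q_2^{\,t}}{r^{t-1}}$ that actually appear in the linearized update. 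The differences between the proxies and the true per-episode rewards will be handled via a stability estimate on consecutive iterates.

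The heart of the argument is the standard three-point identity for the KL-regularized update. Using first-order optimality of~\eqref{eq.primal} with the comparator $q_1^\star$ for the min-block, I expect to obtain, for each $t$,
\[
\inner{V\hat q_2^{\,t-1}\cdot r^{t-1} + \lambda^{t-1} g^{t-1}}{\hat q_1^{\,t} - q_1^\star}
\;\leq\; \eta^{-1}\big(D(q_1^\star\,|\,\tilde q_1^{\,t-1}) - D(q_1^\star\,|\,\hat q_1^{\,t}) - D(\hat q_1^{\,t}\,|\,\tilde q_1^{\,t-1})\big),
\]
and a sign-flipped analogue for the max-block (this is where the sign convention $D(q\,|\,q') = D(q_1\,|\,q_1') - D(q_2\,|\,q_2')$ built into~\eqref{eq.primal} pays off). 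Summing over $t$ and telescoping the KL potentials while absorbing the $O(\theta L)$ loss incurred by the mixing step~\eqref{eq.mixing} at each comparison $D(q^\star\,|\,\tilde q^{\,t}) \le D(q^\star\,|\,\hat q^{\,t}) + O(\theta L)$, and using that $D(q^\star\,|\,\tilde q^{\,0}) \le L\log(|X||A|) + L\log(|Y||B|)$ thanks to the uniform initialization, yields the term $(\eta V)^{-1}L(1+\theta T)(\log(|X||A|)+\log(|Y||B|))$ after dividing through by $V$.

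Next I would collect the $\lambda$-dependent terms from both players into $\sum_t \lambda^{t-1}(\inner{q_1^\star - \hat q_1^{\,t}}{g^{t-1}} + \inner{\hat q_2^{\,t} - q_2^\star}{h^{t-1}})$, add and subtract $\lambda^{t-1} b$, and use the dual update~\eqref{eq.dual} which tracks $\inner{\hat q_1^{\,t}}{g^{t-1}} + \inner{\hat q_2^{\,t}}{h^{t-1}} - b$, so that after a shift of index only $\lambda^t(\inner{q_1^\star}{g^t} + \inner{q_2^\star}{h^t} - b)$ survives, producing the first term in the bound. The remaining task is to upgrade the ``frozen opponent'' proxies back to the desired $\inner{\hat q_1^{\,t}\cdot q_2^\star}{r^t}$: this requires (i) a one-step stability estimate $\|\hat q^{\,t} - \hat q^{\,t-1}\|_1 = O(\eta V L + \theta L)$ obtained by applying Pinsker's inequality to the negative $-\eta^{-1}D(\hat q^{\,t}\,|\,\tilde q^{\,t-1})$ term, together with the $O(\theta L)$ mixing bias and the crude bound $|r^t|\le L$, giving the $(4\theta + \eta V)LT$ and the $2V^{-1}L\cdot LT$ contributions.

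The main obstacle I expect is exactly this last bookkeeping: the gradient used inside~\eqref{eq.primal} is built from $(\hat q_2^{\,t-1}, r^{t-1}, g^{t-1})$ whereas the regret is measured against $(q_2^\star, r^t)$ and the constraint against $(g^t, h^t)$, so one must carefully pair stability bounds on consecutive iterates with the index shift that lets the dual update telescope, all while keeping the KL potential uniformly finite (which is exactly why the mixing step~\eqref{eq.mixing} was introduced). Once this bookkeeping is resolved, the three sources of error combine into the claimed bound.
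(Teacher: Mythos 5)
Your proposal follows essentially the same route as the paper's proof: restrict to the high-probability event of Lemma~\ref{lem.empiricalP} so that $(q_1^\star,q_2^\star)$ is feasible for every update, apply the three-point/pushback inequality for the KL-regularized update (Lemma~\ref{lem.pushback}, which you apply blockwise and which already yields the correctly indexed regret terms $\langle q_1^\star\cdot\hat q_2^{\,t-1}-\hat q_1^{\,t-1}\cdot q_2^\star, r^{t-1}\rangle$ with no separate proxy-to-regret upgrade), eliminate the algorithm's own violation term via the dual update (the paper packages this as the drift $\Delta^t=\tfrac12((\lambda^t)^2-(\lambda^{t-1})^2)$ with $\sum_t\Delta^t\ge 0$), control the mixing bias and telescope the KL potentials (Lemma~\ref{lem.D_difference}), and absorb the consecutive-iterate proxy term by trading the $V$-scaled linear piece against the Pinsker lower bound on the KL terms (Lemma~\ref{lem.D_lb}). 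Two minor bookkeeping slips: the collected $\lambda$-terms should be $\lambda^{t-1}\big(\langle q_1^\star-\hat q_1^{\,t},g^{t-1}\rangle+\langle q_2^\star-\hat q_2^{\,t},h^{t-1}\rangle\big)$ (your $h$-term has its sign flipped), and the $2V^{-1}L^2T$ contribution arises from the squared violation $\tfrac12\big(\langle\hat q_1^{\,t},g^{t-1}\rangle+\langle\hat q_2^{\,t},h^{t-1}\rangle-b\big)^2\le 2L^2$ in the dual drift, not from bounding the rewards.
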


Lemma~\ref{lem.gap_pd} establishes an upper bound relying on a stochastic process of duals $\{\lambda^t,t\geq 0\}$. To analyze this bound, we establish the boundedness of $\lambda^t$ in Lemma~\ref{lem.lambda} first. Then, we apply a general Azuma-Hoeffding inequality for supermartingales in Lemma~\ref{lem.violation}. We delay their proofs to Appendix~\ref{ap.violation}.

\begin{lemma}\label{lem.lambda}
	Let Assumption~\ref{as.feasibility} hold. 
	Fix $\delta\in (0,1)$. For any integer $t_0>0$, with probability $1-T\delta$,
	\[
	\lambda^t \; \leq\; \displaystyle \Theta + 2 t_0 L + t_0 \frac{64 L^2}{\xi} \log \rbr{\frac{128 L^2}{\xi}} + t_0 \frac{64 L^2}{\xi} \log\frac{1}{\delta}
	\]
	for all $t = 1,\ldots,T$, where $\xi >0$ and
	\[
	\begin{array}{rcl}
	\Theta &\DefinedAs& t_0
	\rbr{\tfrac{1}{2}\xi  +2 L} \,+\,\tfrac{4L^2 + (8\theta +2\eta V
		+2 )VL}{\xi}
	\,+\,\tfrac{2L (\log(|X||A|/\theta)+\log(|Y||B|/\theta)) }{t_0\xi\eta}.
	\end{array}
	\]
\end{lemma}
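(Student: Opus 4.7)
The plan is a drift-style argument on the dual iterates $\{\lambda^t\}$ that uses Slater's condition (Assumption~\ref{as.feasibility}) to prevent sustained growth of $\lambda^t$, together with a windowed reuse of Lemma~\ref{lem.gap_pd} and a supermartingale concentration step to pass from the stochastic utilities $g^s,h^s$ to their expectations $g,h$. Rewrite~\eqref{eq.dual} as $\lambda^s=(\lambda^{s-1}+c^s)_+$ with $c^s \DefinedAs \inner{\hat q_1^{\,s}}{g^{s-1}}+\inner{\hat q_2^{\,s}}{h^{s-1}}-b$. Because $g,h\in[0,1]$ and every occupancy measure has $\ell_1$-mass $L$ across layers, $|c^s|\leq 2L$; since the $(\cdot)_+$ projection can only increase the iterate, this yields the monotone control $\lambda^{t-k}\geq\lambda^t-2Lk$ for every $k\geq 0$, hence $\sum_{s=t-t_0}^{t-1}\lambda^s\geq t_0\lambda^t - L\,t_0(t_0+1)$.

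Next I apply Lemma~\ref{lem.gap_pd} on the sub-window $[t-t_0,t-1]$ with the $\xi$-feasible pair $(\bar q_1,\bar q_2)$ from Assumption~\ref{as.feasibility} as comparator; the mirror-descent telescoping in the proof of Lemma~\ref{lem.gap_pd} localizes verbatim to any sub-window, with $t_0$ replacing $T$ in the cardinality-log and exploration remainders (precisely the pieces that assemble into $\Theta$ after dividing by $\xi t_0$). The primal gap on the left is bounded below by $-2Lt_0$ since $r^s\in[0,1]$ and occupancy measures sum to $L$, so rearrangement gives
\[
\sum_{s=t-t_0}^{t-1}\lambda^s\bigl(\inner{\bar q_1}{g^s}+\inner{\bar q_2}{h^s}-b\bigr) \;\geq\; -\,2VLt_0 \,-\, V\,\Psi(t_0),
\]
where $\Psi(t_0)$ collects the windowed Lemma~\ref{lem.gap_pd} remainder. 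Splitting $g^s=g+(g^s-g)$ and $h^s=h+(h^s-h)$ and using the Slater margin $\inner{\bar q_1}{g}+\inner{\bar q_2}{h}-b\leq-\xi$ converts the deterministic part into $-\xi\sum_{s=t-t_0}^{t-1}\lambda^s$. The stochastic remainder is a martingale-difference sum with conditional increments bounded by $2L\lambda^s$; a supermartingale-form Azuma-Hoeffding inequality (cf.~Lemma~\ref{lem.violation}) combined with a union bound over $t\in\{1,\dots,T\}$ (explaining the failure probability $T\delta$) controls it by a term of order $L\sqrt{t_0\log(1/\delta)\cdot\max_{s<t}\lambda^s}$. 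Substituting the monotone lower bound from the first paragraph produces a self-referential inequality
\[
\xi t_0\lambda^t \;\leq\; \xi L t_0(t_0+1) + 2VLt_0 + V\Psi(t_0) + C L\sqrt{t_0\log(1/\delta)}\sqrt{\lambda^t},
\]
which the elementary implication $x\leq\alpha+\beta\sqrt{x}\Rightarrow x\leq 2\alpha+\beta^2$ solves; dividing by $\xi t_0$ and grouping the window-independent constants into $\Theta$ recovers the claimed bound.

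The principal obstacle is the concentration step: the martingale increments scale with $\lambda^s$ itself, so Azuma-Hoeffding with a deterministic step size delivers only a circular inequality. The remedy is a peeling argument over dyadic levels of $\max_{s<t}\lambda^s$, combined with the supermartingale form of Azuma-Hoeffding; peeling is what produces the $\log(128L^2/\xi)$ factor and the union bound over $t$ produces the $\log(1/\delta)$ factor, jointly accounting for both $t_0\frac{64L^2}{\xi}$ terms. A secondary accounting issue is verifying that the window-edge loss $\xi L\,t_0(t_0+1)/t_0\approx \xi L t_0$ coming from the monotone control is absorbable into the $\tfrac12\xi t_0$ and $2Lt_0$ terms in the statement; this is what pins down the specific numerical factor $\tfrac12$ in front of $\xi$ inside $\Theta$, and is the only place where tight constant-tracking is needed.
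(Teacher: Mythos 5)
Your high-level plan (a windowed drift argument with the Slater point $(\bar q_1,\bar q_2)$ from Assumption~\ref{as.feasibility} as comparator, reusing the per-episode bound underlying Lemma~\ref{lem.gap_pd}) is the right spirit, but the concentration step at the heart of your argument does not work as stated. The fluctuation term $\sum_{s}\lambda^{s}\bigl(\langle\bar q_1,g^{s}-g\rangle+\langle\bar q_2,h^{s}-h\rangle\bigr)$ is a martingale-difference sum whose increments are bounded by $2L\lambda^{s}$, so any Azuma/Freedman-type bound scales \emph{linearly} in $\max_{s}\lambda^{s}$, namely as $L\sqrt{t_0\log(1/\delta)}\,\max_{s}\lambda^{s}$, not as $L\sqrt{t_0\log(1/\delta)\cdot\max_{s}\lambda^{s}}$. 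With the correct scaling your self-referential inequality reads $\xi t_0\lambda^{t}\le\alpha+\beta\,\lambda^{t}$ (after using $\max_{s\in[t-t_0,t-1]}\lambda^{s}\le\lambda^{t}+2Lt_0$), which only closes when $\beta<\xi t_0$, i.e.\ $t_0\gtrsim L^{2}\log(1/\delta)/\xi^{2}$; it cannot deliver the lemma for every integer $t_0>0$, and the $x\le\alpha+\beta\sqrt{x}$ trick you invoke rests on the unjustified $\sqrt{\lambda^{t}}$ scaling. The dyadic peeling you sketch as a remedy is exactly the part that would need to be carried out, and your attribution of the $\log(128L^{2}/\xi)$ factor to it is speculative rather than derived; you also never account for the confidence-set event of Lemma~\ref{lem.empiricalP}, which is needed for $(\bar q_1,\bar q_2)$ to lie in the optimistic domains of the primal update.

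The paper's proof sidesteps this circularity entirely, and that is the idea you are missing: instead of a pathwise bound over the window, it takes \emph{conditional expectations} given $\calF^{t}$ (and the good event), so by the tower property the stochastic utilities integrate out exactly to $\langle\bar q_1,g\rangle+\langle\bar q_2,h\rangle-b\le-\xi$ and no concentration of $g^{s},h^{s}$ is needed at this stage. Summing the per-episode drift inequality forward over $[t+1,t+t_0]$, using $\lambda^{\tau}\ge\lambda^{t}-2(\tau-t)L$ and Lemma~\ref{lem.D_difference} to bound the KL terms by $L\log(|X||A|/\theta)+L\log(|Y||B|/\theta)$, one shows that whenever $\lambda^{t}\ge\Theta$ the conditional expected $t_0$-step drift is at most $-\xi t_0/4$, while the increments are always bounded by $2L$. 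A single black-box drift lemma (Lemma~\ref{lem.drift}, from Yu et al.) then converts this into the stated high-probability bound, and the constants $64L^{2}/\xi$ and $128L^{2}/\xi$ fall out mechanically from $\delta_{\max}=2L$, $\zeta=\xi/4$; the factor $T\delta$ is just the union bound over $t$. So your proposal has a genuine gap: the martingale step as written is incorrect, and repairing it either requires restricting $t_0$ or executing a peeling argument you have not supplied, whereas the intended route replaces that step with a conditional-expectation argument plus the drift lemma.
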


\begin{lemma}\label{lem.violation}
	Let Assumption~\ref{as.feasibility} hold.
	Fix $\delta\in (0,1)$. 
	For any integer $t_0>0$, with probability $1-2T\delta$,
	\[
	\sum_{t\,=\,0}^{T-1}
	\lambda^{t} \big(\langle{ q_1^\star},{g^{t}}\rangle 
	+ \langle{ q_2^\star},{h^{t}}\rangle -b\big)
	\;\leq\; \sqrt{2T c^2 \log (1/(\delta T))}
	\]
	where
	$c \DefinedAs  2\Theta L + 4 t_0 L^2 + \tfrac{128 t_0 L^3}{\xi} \rbr{\log \rbr{\tfrac{128 L^2}{\xi}} +\log\tfrac{1}{\delta}}$ 
	and
	$\xi >0$. 
\end{lemma}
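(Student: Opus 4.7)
My plan is to interpret the left-hand side as a supermartingale in $t$ with bounded increments, then apply an Azuma--Hoeffding inequality, invoking Lemma~\ref{lem.lambda} to control the per-step range.

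\textbf{Step 1: Supermartingale structure.} Let $\calF_t$ denote the $\sigma$-algebra generated by the utility noise $\{\xi^s\}_{s\,\leq\,t}$. Because $\lambda^t$ is built from $\hat q^{\,s}, g^{s-1}, h^{s-1}$ for $s\leq t$, it is $\calF_{t-1}$-measurable. Writing $X_t \DefinedAs \langle q_1^\star, g^t\rangle + \langle q_2^\star, h^t\rangle - b$, the conditional mean is $\mathbb{E}[X_t \mid \calF_{t-1}] = \langle q_1^\star, g\rangle + \langle q_2^\star, h\rangle - b \leq 0$, since the hindsight saddle point $(q_1^\star, q_2^\star)$ is feasible for Problem~\eqref{eq.opt_wc} (existence is guaranteed by Assumption~\ref{as.feasibility}). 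Together with $\lambda^t \geq 0$ this yields $\mathbb{E}[\lambda^t X_t \mid \calF_{t-1}] \leq 0$, so $M_t \DefinedAs \sum_{s\,=\,0}^{t-1} \lambda^s X_s$ is a supermartingale.

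\textbf{Step 2: Bounded increments.} Since $g^t, h^t$ take values in $[0,1]$ per coordinate, $\langle q_i^\star,\cdot\rangle \in [0,L]$, and $b \in (0,2L]$, one has $|X_t| \leq 2L$. Define $B \DefinedAs \Theta + 2t_0 L + (t_0 \cdot 64L^2/\xi)\big(\log(128L^2/\xi) + \log(1/\delta)\big)$. By Lemma~\ref{lem.lambda}, there is an event $\calE$ with $\Pr(\calE) \geq 1 - T\delta$ on which $\lambda^t \leq B$ for every $t \in [1,T]$. Hence $|M_t - M_{t-1}| = |\lambda^{t-1} X_{t-1}| \leq 2LB$ on $\calE$, and inspection of the definition in the statement shows $c = 2LB$, so the per-step range of $M$ is precisely $c$.

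\textbf{Step 3: Concentration.} Applying Azuma--Hoeffding for supermartingales with bounded differences $c$ on $\calE$ gives, for every $\tau > 0$, $\Pr(M_T \geq \tau \mid \calE) \leq \exp(-\tau^2/(2T c^2))$. Setting $\tau = \sqrt{2T c^2 \log(1/(\delta T))}$ turns the tail into $\delta T$, and a union bound with $\Pr(\calE^c) \leq T\delta$ delivers $\Pr(M_T \geq \tau) \leq 2T\delta$, matching the statement.

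\textbf{Main obstacle.} The subtle point is that $\lambda^t$ is \emph{not} independent of the utility noise driving $X_{t-1}$ -- it is actually a deterministic function of $g^0, h^0, \ldots, g^{t-1}, h^{t-1}$ -- so one cannot apply Hoeffding directly to the products $\lambda^t X_t$. The supermartingale viewpoint resolves this: $\calF_{t-1}$-measurability of $\lambda^t$ combined with the conditionally non-positive mean of $X_t$ produces a negative drift, and the bounded-difference property, which requires a uniform-in-$t$ bound on $\lambda^t$, is exactly what Lemma~\ref{lem.lambda} supplies. A secondary technicality is that $c$ itself depends on $\delta$ via Lemma~\ref{lem.lambda}, so the confidence parameter must be fixed at the start and carried consistently through the union bound, which is the origin of the $\log(1/(\delta T))$ factor in the final bound.
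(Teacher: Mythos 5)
Your Steps 1 and 2 match the paper's argument: the paper also forms $Z^t = \sum_{\tau<t}\lambda^{\tau}\big(\langle q_1^\star,g^{\tau}\rangle+\langle q_2^\star,h^{\tau}\rangle-b\big)$, uses predictability of $\lambda^{\tau}$ together with feasibility of the hindsight solution to get the supermartingale property, bounds the increment by $2\lambda^{t}L$, and plugs in the bound from Lemma~\ref{lem.lambda} so that $c=2LB$ with exactly your $B$. The gap is in Step 3. Writing ``apply Azuma--Hoeffding on $\calE$'' and concluding $\Pr(M_T\geq\tau\mid\calE)\leq\exp(-\tau^2/(2Tc^2))$ is not a valid use of Azuma--Hoeffding: the event $\calE=\{\lambda^t\leq B,\ \forall t\leq T\}$ depends on the whole trajectory, and under the conditional measure $\Pr(\cdot\mid\calE)$ the process $M_t$ is in general no longer a supermartingale, nor do its increments retain the conditional-mean and bounded-difference properties that the inequality requires. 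So the displayed conditional tail bound does not follow from the stated hypotheses; this is precisely the kind of conditioning error that restricted-to-a-good-event concentration arguments must avoid.

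The paper sidesteps this by invoking Lemma~\ref{lem.azuma_general} (the Azuma--Hoeffding inequality for supermartingales with \emph{unbounded} differences from Yu et al.): one only needs the inclusion $\{|Z^{t+1}-Z^t|>c\}\subset\{\lambda^t>c/(2L)\}$, and the lemma directly yields $\Pr(Z^T\geq z)\leq {\rm e}^{-z^2/(2c^2T)}+\sum_{t}\Pr(\lambda^t>c/(2L))$, after which the per-episode bound from Lemma~\ref{lem.lambda} gives the extra $T\delta$ and the choice $z=\sqrt{2Tc^2\log(1/(\delta T))}$ gives the first $\delta T$. Your argument can be repaired in an essentially equivalent way without that lemma: since $\lambda^t$ is $\calF_{t-1}$-measurable, the truncation event $\{\lambda^t\leq c/(2L)\}$ is predictable, so the process with increments $\lambda^t X_t\,\Ind{\lambda^t\leq c/(2L)}$ is still a supermartingale with differences bounded by $c$; apply standard Azuma--Hoeffding to it \emph{unconditionally}, and note that it coincides with $M_T$ on $\calE$, then union bound with $\Pr(\calE^c)\leq T\delta$. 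Either fix closes the gap; as written, the conditional-on-$\calE$ step would not survive scrutiny. Your ``main obstacle'' paragraph correctly identifies the $\lambda^t$--$X_t$ dependence issue but misses this second, conditioning-based subtlety, which is exactly what the paper's choice of concentration inequality is there to handle.
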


We now ready to conclude a bound on ${\hat{\text{\normalfont Regret}}(T)}$ by combining Lemma~\ref{lem.violation} and Lemma~\ref{lem.gap_pd}.
\begin{theorem}\label{thm.regret_hat}
	Let Assumption~\ref{as.feasibility} hold. Fix $T\geq \max(|X||A|,|B||Y|)$. Let $V=L\sqrt{T}$, $\eta = 1/(TL)$, $t_0=\sqrt{T}$, and $\theta = 1/T$. Then,
	with probability $1-2T\delta$ it holds that
	\[
	{\hat{\text{\normalfont Regret}}(T)}\;\leq\;
	\tilde O \big({(|X|+|Y|) L \sqrt T}\big).
	\]
\end{theorem}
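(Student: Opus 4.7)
The plan is to apply Lemma~\ref{lem.gap_pd} to split $\hat{\text{Regret}}(T)$ into a stochastic penalty, a mirror-descent log term, and a tradeoff/discretization term, then control the stochastic term with Lemma~\ref{lem.violation}, substitute the chosen parameters, and finally upgrade the resulting $\tilde O(L^{2}\sqrt{T})$ bound to $\tilde O((|X|+|Y|)L\sqrt{T})$ using the layering constraint $L+1\leq\min(|X|,|Y|)$.

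First, Lemma~\ref{lem.gap_pd} yields
\[
\hat{\text{Regret}}(T) \;\leq\; (\mathrm A) \,+\, (\mathrm B) \,+\, (\mathrm C),
\]
where $(\mathrm A) \DefinedAs V^{-1}\sum_{t\,=\,0}^{T-1}\lambda^{t}\big(\langle q_1^\star,g^{t}\rangle+\langle q_2^\star,h^{t}\rangle-b\big)$ is the stochastic penalty, $(\mathrm B) \DefinedAs (\eta V)^{-1}L(1+\theta T)\big(\log(|X||A|)+\log(|Y||B|)\big)$ is the mirror-descent log term, and $(\mathrm C) \DefinedAs (2V^{-1}L+4\theta+\eta V)LT$ is the tradeoff term. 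I would dispatch the deterministic terms first: plugging in $V=L\sqrt{T}$, $\eta=1/(TL)$, $\theta=1/T$ gives $(\eta V)^{-1}=\sqrt{T}$, $V^{-1}L=\eta V=1/\sqrt{T}$, and $1+\theta T=2$, so $(\mathrm B)=\tilde O(L\sqrt{T})$ and $(\mathrm C)=(3/\sqrt{T}+4/T)LT=O(L\sqrt{T})$.

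The main work lies in bounding $(\mathrm A)$ via Lemma~\ref{lem.violation}, which gives $(\mathrm A)\leq V^{-1}\sqrt{2Tc^{2}\log(1/(\delta T))}$ with $c$ defined through $\Theta$ from Lemma~\ref{lem.lambda}. The delicate step is to verify that, at the chosen parameters, each summand of $\Theta$ scales identically in $T$. With $t_0=\sqrt{T}$, $\theta=1/T$, $\eta=1/(TL)$, $V=L\sqrt{T}$: the first summand of $\Theta$ is $t_0(\tfrac12\xi+2L)=O(L\sqrt{T})$; the second is $\tilde O(L^{2}\sqrt{T}/\xi)$, using $(8\theta+2\eta V+2)VL=O(L^{2}\sqrt{T})$; the third is $\tilde O(L^{2}\sqrt{T}/\xi)$, since $1/(t_0\eta)=L\sqrt{T}$. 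Hence $\Theta=\tilde O(L^{2}\sqrt{T}/\xi)$ and $c=2\Theta L+4t_0L^{2}+\tilde O(t_0L^{3}/\xi)=\tilde O(L^{3}\sqrt{T}/\xi)$. Plugging back, $\sqrt{Tc^{2}\log(1/(\delta T))}=\tilde O(L^{3}T/\xi)$, and dividing by $V=L\sqrt{T}$ yields $(\mathrm A)=\tilde O(L^{2}\sqrt{T}/\xi)$.

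Assembling the three pieces gives $\hat{\text{Regret}}(T)=\tilde O(L^{2}\sqrt{T})$ with probability at least $1-(2T+1)\delta$ after a union bound over the events of Lemmas~\ref{lem.gap_pd} and~\ref{lem.violation} (which can then be reabsorbed into the stated $2T\delta$). The final upgrade is combinatorial: the loop-free layering of $X$ and $Y$ forces $|X|\geq L+1$ and $|Y|\geq L+1$, so $L^{2}\leq L\min(|X|,|Y|)\leq\tfrac12 L(|X|+|Y|)$, delivering $\hat{\text{Regret}}(T)=\tilde O\big((|X|+|Y|)L\sqrt{T}\big)$. The hard part throughout is the parameter balancing inside $\Theta$: the tuning $V=L\sqrt{T}$, $\eta=1/(TL)$, $t_0=\sqrt{T}$, $\theta=1/T$ is precisely what makes the competing factors $V^{-1}$, $(\eta V)^{-1}$, $\eta V$, and $1/(t_0\eta)$ all scale as $\sqrt{T}$ (up to $L$-powers), so that no single summand dominates beyond $\tilde O(L^{2}\sqrt{T})$.
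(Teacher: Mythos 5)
Your proposal is correct and follows essentially the same route as the paper's proof: invoke Lemma~\ref{lem.gap_pd} with the stated parameters so the deterministic terms are $\tilde O(L\sqrt{T})$, bound the dual-weighted stochastic sum by $\tilde O(L^3 T)$ via Lemma~\ref{lem.violation} (using $\Theta \leq \tilde O(L^2\sqrt{T})$ from Lemma~\ref{lem.lambda}), divide by $V = L\sqrt{T}$, and convert $L^2\sqrt{T}$ to $(|X|+|Y|)L\sqrt{T}$ using the layered structure ($L \leq |X|+|Y|$). Your extra bookkeeping on the summands of $\Theta$ and $c$, and on the union bound over the two good events, only spells out what the paper states tersely.
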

\begin{proof} 
	Using the given parameters $V$, $\eta$, $t_0$, and $\theta$ for Lemma~\ref{lem.gap_pd}, ${\hat{\text{\normalfont Regret}}(T)}$ is upper bounded by 
	$\frac{1}{L\sqrt{T}}
	\sum_{t\,=\,0}^{T-1}
	\lambda^{t} \big(\langle{ q_1^\star},{g^{t}}\rangle 
	+ \langle{ q_2^\star},{h^{t}}\rangle -b\big)
	+ \tilde O(L\sqrt{T })$
	with probability $1-\delta$.
	We note that $\Theta\leq	\tilde O(L^2\sqrt{T})$ and $T\geq \max(|X||A|,|B||Y|)$.	
	Using parameters in Lemma~\ref{lem.violation}, with probability $1-2T\delta$,
	\[
	\sum_{t\,=\,0}^{T-1}
	\lambda^{t} \big(\langle{ q_1^\star},{g^{t}}\rangle 
	+ \langle{ q_2^\star},{h^{t}}\rangle -b\big)
	\;\leq\; \tilde O(L^3 T).
	\] 
	We complete the proof by noting $L\leq |X|+|Y|$. 
\end{proof}

We conclude the regret bound in Theorem~\ref{thm.main} by combining Lemma~\ref{lem.error12} and Theorem~\ref{thm.regret_hat}, and $\delta=p/(2T)$.

\noindent\textbf{Constraint Violation Analysis}. 
We begin with a decomposition using the auxiliary occupancy measures $(q_1^t, q_2^t)$. By inserting $\langle \hat q_1^{\,t},g^{t}\rangle$ and $\langle \hat q_2^{\,t},h^{t}\rangle$ into $\text{Violation}(T) $, we have
\[
\begin{array}{rcl}
\text{Violation}(T) 
&=&
\underbrace{\sbr{\sum_{t\,=\,0}^{T-1} \rbr{\inner{\hat q_1^{\,t}}{g^t}+\inner{\hat q_2^{\,t}}{h^t}-b}}_+}_{\hat{\text{\normalfont Violation}}(T)}
\,+\,
\underbrace{
	\sum_{t\,=\,0}^{T-1} \inner{q_1^t-\hat q_1^{\,t}}{g^t}
}_{\text{\normalfont Error}_3}
\, + \,
\underbrace{
	\sum_{t\,=\,0}^{T-1}\inner{q_2^t-\hat q_2^{\,t}}{h^t}
}_{\text{\normalfont Error}_4}.
\end{array}
\]

Similar to Lemma~\ref{lem.error12}, we can prove the following bounds on ${\text{\normalfont Error}_3}$ and ${\text{\normalfont Error}_4}$.
\begin{lemma}\label{lem.error34}
	Fix $\delta\in(0,1)$. Then, with probability $1-2\delta$,
	\[
	{\text{\normalfont Error}_3} \; \leq \; O\Big( L|X|\sqrt{T|A| \log\tfrac{T|X||A|}{\delta}}\Big)
	\; \text{ and } \;
	{\text{\normalfont Error}_4} \; \leq \; O\Big(L|Y|\sqrt{T|B| \log\tfrac{T|Y||B|}{\delta}}\Big).
	\]
\end{lemma}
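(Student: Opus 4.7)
The plan is to follow the same two-line argument used for Lemma~\ref{lem.error12}, simply replacing the role of the reward function with the utility functions. The key observation is that the inner products defining $\text{Error}_3$ and $\text{Error}_4$ involve only one occupancy measure each (unlike the product $q_1 \cdot q_2^\star$ appearing in $\text{Error}_1$), so the reduction to an $\ell_1$ distance between occupancy measures is in fact even more direct.

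For $\text{Error}_3$, I would first use boundedness of the utility function, namely $g^t(x,a)\in[0,1]$ for all $(x,a)$. Since the inner product $\langle q_1^t-\hat q_1^{\,t}, g^t\rangle$ is a sum over $X\times A$, Hölder's inequality gives
\[
\inner{q_1^t - \hat q_1^{\,t}}{g^t} \;\leq\; \nbr{q_1^t - \hat q_1^{\,t}}_1 \cdot \nbr{g^t}_\infty \;\leq\; \nbr{q_1^t - \hat q_1^{\,t}}_1.
\]
Summing over $t=0,\ldots,T-1$ and invoking Lemma~\ref{lem.qqdifference} (the first displayed bound), which holds with probability $1-\delta$, yields the stated $O(L|X|\sqrt{T|A|\log(T|X||A|/\delta)})$ bound on $\text{Error}_3$. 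A minor technical point: Lemma~\ref{lem.qqdifference} bounds the $\ell_1$ distance of the extended occupancy measures over $X\times A\times X$, whereas the inner product here uses the marginalized occupancy over $X\times A$; by the triangle inequality $|q_1(x,a)-\hat q_1(x,a)|=|\sum_{x'}(q_1(x,a,x')-\hat q_1(x,a,x'))|\le \sum_{x'}|q_1(x,a,x')-\hat q_1(x,a,x')|$, so the same upper bound transfers to the marginal version.

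The treatment of $\text{Error}_4$ is completely symmetric: using $h^t(y,b)\in[0,1]$, Hölder's inequality gives $\langle q_2^t-\hat q_2^{\,t},h^t\rangle \le \|q_2^t-\hat q_2^{\,t}\|_1$, and applying the second bound in Lemma~\ref{lem.qqdifference} (also holding with probability $1-\delta$) produces the claimed $O(L|Y|\sqrt{T|B|\log(T|Y||B|/\delta)})$ bound. A single union bound over the two high-probability events in Lemma~\ref{lem.qqdifference} gives both estimates simultaneously with probability at least $1-2\delta$, matching the statement.

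There is no real obstacle here: all the work is in Lemma~\ref{lem.qqdifference}, which in turn relies on the confidence set construction (Lemma~\ref{lem.empiricalP}) and a telescoping argument over epochs. The only thing to double-check in the writeup is consistency of the norm conventions for occupancy measures (extended vs.\ marginalized), which is handled by the triangle inequality argument above.
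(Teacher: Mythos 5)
Your proposal is correct and follows essentially the same route as the paper: the paper likewise reduces $\text{Error}_3$ and $\text{Error}_4$ to $\sum_t\|q_1^t-\hat q_1^{\,t}\|_1$ and $\sum_t\|q_2^t-\hat q_2^{\,t}\|_1$ using $g^t,h^t\in[0,1]$ and then invokes Lemma~\ref{lem.qqdifference}, exactly as done for Lemma~\ref{lem.error12}. Your extra remark on the marginal versus extended $\ell_1$ norm is a harmless clarification consistent with the paper's conventions.
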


We next bound ${\hat{\text{\normalfont Violation}}(T)}$ by applying the epoch property~\citep{jaksch2010near}; see a proof in Appendix~\ref{ap.violation_hat}.

\begin{theorem}\label{thm.violation_hat}
	Let $V=L\sqrt{T}$, $\eta = 1/(TL)$, $t_0=\sqrt{T}$, and $\theta = 1/T$. Then,
	\[
	\begin{array}{rcl}
	{\hat{\text{\normalfont Violation}}(T)}
	&\leq&\displaystyle
	\lambda^T
	\,+\, 
	\frac{2 }{T-1}
	\sum_{t\,=\,1}^{T} \lambda^{t-1}
	\,+\,
	\tilde O\big(L\sqrt{ T(|X||A|+|Y||B|)}\big).
	\end{array}
	\]
\end{theorem}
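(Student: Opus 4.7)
The plan is to use the dual recursion to bound a one-step-delayed version of the cumulative violation by $\lambda^T$ directly, and then correct for the resulting index mismatch by a stability argument on the primal mirror-descent iterates. Dropping the nonnegativity projection in the dual update~\eqref{eq.dual} gives $\lambda^t \geq \lambda^{t-1} + \langle \hat q_1^{\,t}, g^{t-1}\rangle + \langle \hat q_2^{\,t}, h^{t-1}\rangle - b$ for every $t\geq 1$; telescoping over $t=1,\ldots,T$ and using $\lambda^0=0$ yields
\[
\sum_{t\,=\,0}^{T-1}\bigl(\langle \hat q_1^{\,t+1},g^{t}\rangle+\langle \hat q_2^{\,t+1},h^{t}\rangle -b\bigr) \;\leq\; \lambda^T.
\]
Adding and subtracting the ``same-index'' inner products to align with the summand appearing in $\hat{\text{Violation}}(T)$ gives
\[
\sum_{t\,=\,0}^{T-1}\bigl(\langle \hat q_1^{\,t},g^{t}\rangle+\langle \hat q_2^{\,t},h^{t}\rangle -b\bigr) \;\leq\; \lambda^T + \sum_{t\,=\,0}^{T-1}\bigl[\langle \hat q_1^{\,t}-\hat q_1^{\,t+1},g^{t}\rangle+\langle \hat q_2^{\,t}-\hat q_2^{\,t+1},h^{t}\rangle\bigr],
\]
so the whole problem reduces to bounding the telescoping-style residual on the right-hand side.

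For that residual I would establish an $\ell_1$-stability bound of the form
$\|\hat q_i^{\,t+1}-\hat q_i^{\,t}\|_1 \;\lesssim\; \eta L\,(V+\lambda^{t-1}) \,+\, O(\theta L) \,+\, (\text{epoch-jump term})$
for $i\in\{1,2\}$. This follows from the closed-form multiplicative update of~\eqref{eq.primal}: the Lagrangian gradient with respect to $q_i$ has sup-norm $O(V+\lambda^{t-1})$ (reward part bounded by $V$, constraint part by $\lambda^{t-1}$), KL-regularized mirror descent with step $\eta$ contracts $\ell_1$ change accordingly, the mixing step~\eqref{eq.mixing} introduces an additional $O(\theta L)$ drift per episode, and the feasible set $\Delta(k_i^t)$ changes only at epoch boundaries, whose number is controlled by the doubling rule. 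Since $\|g^t\|_\infty,\|h^t\|_\infty\leq 1$, one can replace $|\langle \hat q_i^{\,t}-\hat q_i^{\,t+1},g^t\rangle|$ by $\|\hat q_i^{\,t}-\hat q_i^{\,t+1}\|_1$ and sum over $t$.

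Plugging in the prescribed $\eta=1/(TL)$, $V=L\sqrt T$, $\theta=1/T$: the $\eta L V T$ contribution sums to $\tilde O(L\sqrt T)$, the $O(\theta L T)$ mixing contribution is $O(L)$, and the epoch-jump contribution yields the $\tilde O\bigl(L\sqrt{T(|X||A|+|Y||B|)}\bigr)$ term via the standard upper-confidence epoch argument (at most $\tilde O(|X||A|+|Y||B|)$ epochs, whose total jump magnitude is root-in-$T$ by Cauchy--Schwarz on a doubling sequence). The $\eta L\sum_t \lambda^{t-1}$ contribution becomes $\tfrac{1}{T}\sum_{t=1}^T \lambda^{t-1}$ per player; after combining the two players (and absorbing an off-by-one in the index) we obtain the $\tfrac{2}{T-1}\sum_{t=1}^T \lambda^{t-1}$ term. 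Finally, since the left-hand side upper-bounds its positive part, the claimed bound on $\hat{\text{Violation}}(T)$ follows.

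The hard part will be step three, namely the $\ell_1$-stability of the OMD iterates on the layered occupancy-measure polytope whose feasible set itself changes across epochs. One must cleanly separate (i) the gradient-induced drift, scaling with $\eta(V+\lambda^{t-1})$, (ii) the explicit mixing drift induced by replacing $\hat q_i^{\,t-1}$ with $\tilde q_i^{\,t-1}$, and (iii) the projection jumps that occur precisely when $\Delta(k_i^t)\neq \Delta(k_i^{t+1})$. The doubling nature of the epoch schedule is what ultimately makes the cumulative jump contribution sub-root in $T$ and produces the advertised $\sqrt{|X||A|+|Y||B|}$ dependence rather than a linear one.
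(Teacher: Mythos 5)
Your proposal follows essentially the same route as the paper's proof: drop the projection in the dual update~\eqref{eq.dual} and telescope so that $\hat{\text{Violation}}(T)$ is bounded by $\lambda^T$ plus $\sum_{t}\big(\|\hat q_1^{\,t}-\hat q_1^{\,t-1}\|_1+\|\hat q_2^{\,t}-\hat q_2^{\,t-1}\|_1\big)$, and then control this sum by mirror-descent stability within epochs (the $2\eta L(V+\lambda^{t-1})$ and $O(\theta L)$ terms, obtained in the paper via Lemma~\ref{lem.pushback} and Lemma~\ref{lem.D_lb}) plus a trivial $2L$ per epoch change counted by Lemma~\ref{lem.epoch}. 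The only cosmetic difference is your appeal to ``Cauchy--Schwarz on a doubling sequence'' for the epoch-jump term, where the paper simply multiplies the epoch counts by $2L$ and uses $T\geq\max(|X||A|,|Y||B|)$ to absorb them into $\tilde O\big(L\sqrt{T(|X||A|+|Y||B|)}\big)$.
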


To get the violation bound, we apply Lemma~\ref{lem.lambda} to Theorem~\ref{thm.violation_hat}, use Lemma~\ref{lem.error34}, and take $\delta=p/(2T)$.

\section{Efficient Implementation of~\eqref{eq.primal}}\label{ap.implementation}

In this section, we provide an efficient implementation for the primal update~\eqref{eq.primal}. 

Since the minimax objective in the primal update~\eqref{eq.primal} is separable for two players, it is equivalent to update two occupancy measures individually via
\begin{subequations}\label{eq.primal12}
	\begin{equation}\label{eq.primal1}
	\hat q_1^{\,t}
	\;=\;
	\displaystyle \argmin_{q_1\,\in\,\Delta(k_1^t)}  \;
	V\, \big\langle {q_1\cdot \hat q_2^{\,t-1}},{r^{t-1}}\big\rangle 
	\,+\,
	\lambda^{t-1} \langle{q_1},{g^{t-1}}\rangle 
	\, +\, 
	\displaystyle \eta^{-1} D\big(q_1\,\vert\,  \tilde{q}_1^{\,t-1}\big)
	\end{equation}
	\begin{equation}\label{eq.primal2}
	\hat q_2^{\,t}
	\;=\;
	\displaystyle \argmax_{q_2\,\in\,\Delta(k_2^t)} \;
	V\, \big\langle{\hat q_1^{\,t-1}\cdot q_2},{r^{t-1}}\big\rangle 
	\,-\,\lambda^{t-1} \langle{q_2},{h^{t-1}}\rangle 
	\, -\, 
	\displaystyle \eta^{-1} D\big(q_2\,\vert\,  \tilde{q}_2^{\,t-1}\big).
	\end{equation}
\end{subequations}

Note that $\langle {q_1\cdot \hat q_2^{\,t-1}},{r^{t-1}}\rangle  = \langle {q_1 },{ \hat q_2^{\,t-1}\cdot r^{t-1}}\rangle$ and $\langle{\hat q_1^{\,t-1}\cdot q_2},{r^{t-1}}\rangle =\langle{ q_2},{\hat q_1^{\,t-1}\cdot r^{t-1}}\rangle$. Let 
\[
\phi_1^{t-1} \; \DefinedAs\; V\, \hat q_2^{\,t-1}\cdot r^{t-1} \,+\, \lambda^{t-1} g^{t-1}
\;\text{ and }\;
\phi_2^{t-1} \; \DefinedAs\; -V\, \hat q_1^{\,t-1}\cdot r^{t-1} \,+\, \lambda^{t-1} h^{t-1}.
\]
We can express~\eqref{eq.primal12} in a more compact form,
\begin{subequations}\label{eq.primal12c}
	\begin{equation}\label{eq.primal1c}
	\hat q_1^{\,t}
	\;=\;
	\displaystyle \argmin_{q_1\,\in\,\Delta(k_1^t)}  \;
	\eta \, \langle{q_1},{\phi_1^{t-1}}\rangle 
	\, +\, 
	\displaystyle D\big(q_1\,\vert\,  \tilde{q}_1^{\,t-1}\big)
	\end{equation}
	\begin{equation}\label{eq.primal2c}
	\hat q_2^{\,t}
	\;=\;
	\displaystyle \argmin_{q_2\,\in\,\Delta(k_2^t)} \;
	\eta \, \langle{q_2},{\phi_2^{t-1}}\rangle 
	\, +\, 
	\displaystyle D\big(q_2\,\vert\,  \tilde{q}_2^{\,t-1}\big)
	\end{equation}
\end{subequations}
where we flip the $\argmax$ in~\eqref{eq.primal2} to write $\argmin$ in~\eqref{eq.primal2c} and scale both objectives by multiplying $\eta>0$.

Now, we state an efficient implementation for the primal update~\eqref{eq.primal} by solving convex optimization problems.
The proof is based on the method of Lagrange multipliers and the Lagrange duality theory; they also find uses in the literature~\citep{zimin2013online,rosenberg2019online,jin2020learning}.

\begin{lemma}[Efficient Implementation]
	The primal update~\eqref{eq.primal} is equivalent to
	\begin{subequations}
		\begin{equation}\label{eq.complete1}
		\hat q_1^{\,t} (x,a)\;=\;\frac{\tilde q_1^{\,t}(x,a)}{Z_{1,\ell}^t(	\beta_1^t, \mu_1^{+,t}, \mu_1^{-,t}) }\,{\rm e}^{-B_{1,t}^{\beta_1^t ,\mu_1^{+,t}, \mu_1^{-,t}}(x,a,x')}
		\end{equation}
		\begin{equation}\label{eq.complete2}
		\hat q_2^{\,t} (y,b)\;=\;\frac{\tilde q_2^{\,t}(x,a)}{Z_{2,\ell}^t(	\beta_2^t, \mu_2^{+,t}, \mu_2^{-,t}) }\,{\rm e}^{-B_{2,t}^{\beta_2^t ,\mu_2^{+,t}, \mu_2^{-,t}}(y,b,y')}
		\end{equation}
	\end{subequations}
	where $B_{1,t}^{\beta_1,\mu_1^+,\mu_1^-}(x,a,x') $ and $B_{2,t}^{\beta_2,\mu_2^+,\mu_2^-}(y,b,y') $ are given by
	\[
	\begin{array}{rcl}
	B_{1,t}^{\beta_1,\mu_1^+,\mu_1^-}(x,a,x') &\DefinedAs& \beta_1(x') \,-\, \beta_1(x) \,+\, \eta \phi_1^{t-1}
	\\[0.2cm]
	&& \,+\, (1-\epsilon_1^{k_1}(x,a))\mu_1^{+}(x,a,x') \,-\, (1+\epsilon_1^{k_1}(x,a))\mu_1^{-}(x,a,x')
	\\[0.2cm]
	&& \displaystyle \,+\,\sum_{x''\,\in\,X_{\ell+1}}  \bar P_1^{k_1}(x''\,\vert\,x,a) (\mu_1^{-}(x,a,x'')-\mu_1^{+}(x,a,x''))
	\end{array}
	\]
	\[
	\begin{array}{rcl}
	B_{2,t}^{\beta_2,\mu_2^+,\mu_2^-}(y,b,y') &\DefinedAs& \beta_2(y') \,-\, \beta_2(y) \,+\, \eta \phi_2^{t-1}
	\\[0.2cm]
	&& \,+\, (1-\epsilon_2^{k_2}(y,b))\mu_2^{+}(y,b,y') \,-\, (1+\epsilon_2^{k_2}(y,b))\mu_2^{-}(y,b,y')
	\\[0.2cm]
	&& \displaystyle \,+\, \sum_{y''\,\in\,Y_{\ell+1}}  \bar P_2^{k_2}(y''\,\vert\,y,b) (\mu_2^{-}(y,b,y'')-\mu_2^{+}(y,b,y''))
	\end{array}
	\]
	and $Z_{1,\ell}^t (\beta_1,\mu_1^+,\mu_1^-)$ and $Z_{2,\ell}^t (\beta_2,\mu_2^+,\mu_2^-)$ are given by
	\[
	Z_{1,\ell}^t (\beta_1,\mu_1^+,\mu_1^-)\;=\; \sum_{x\in X_\ell}\sum_{a\in A} \sum_{x'\in X_{\ell+1}}\tilde q_1^{\,t}(x,a)\,{\rm e}^{-B_{1,t}^{\beta_1,\mu_1^+,\mu_1^-}(x,a,x')}
	\]
	\[
	Z_{2,\ell}^t (\beta_2,\mu_2^+,\mu_2^-)\;=\; \sum_{y\in Y_\ell}\sum_{b\in B} \sum_{y'\in Y_{\ell+1}}\tilde q_2^{\,t}(y,b)\,{\rm e}^{-B_{2,t}^{\beta_2,\mu_2^+,\mu_2^-}(y,b,y')}
	\]
	and the dual variables $\beta_1^t(x)$, $\mu_1^{+,t}(x,a,x')$, $\mu_1^{-,t}(x,a,x')$ and $\beta_2^t(y)$, $\mu_2^{+,t}(y,b,y')$, $\mu_2^{-,t}(y,b,y')$ are the solutions to
	\[
	\beta_1^t, \mu_1^{+,t}, \mu_1^{-,t}
	\;=\;
	\argmin_{\beta_1,\, \mu_1^+,\,\mu_1^-\,\geq \,0}\; \sum_{\ell\,=\,0}^{L-1} \ln Z_{1,\ell}^t (\beta_1,\mu_1^+,\mu_1^-)
	\]
	\[
	\beta_2^t, \mu_2^{+,t}, \mu_2^{-,t}
	\;=\;
	\argmin_{\beta_2,\, \mu_2^+,\,\mu_2^-\,\geq \,0}\; \sum_{\ell\,=\,0}^{L-1} \ln Z_{2,\ell}^t (\beta_2,\mu_2^+,\mu_2^-).
	\]
	
\end{lemma}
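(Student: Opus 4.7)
The plan is to solve the minimax problem in~\eqref{eq.primal} by first decoupling it into two independent convex programs, one per player, and then applying Lagrangian duality to each KL-regularized linear program to obtain the claimed Gibbs-form primal solution together with the reduced dual objective.

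First, I would verify the decoupling. The bilinear reward term splits as $\langle q_1\cdot\hat q_2^{\,t-1},r^{t-1}\rangle+\langle\hat q_1^{\,t-1}\cdot q_2,r^{t-1}\rangle$ because $\hat q^{\,t-1}$ is fixed from the previous episode, the $\lambda^{t-1}$ penalty is already additive across players by construction, and by the convention $D(q\,\vert\,\tilde q^{\,t-1})=D(q_1\,\vert\,\tilde q_1^{\,t-1})-D(q_2\,\vert\,\tilde q_2^{\,t-1})$. Since the feasible set $\Delta(k_1^t)\times\Delta(k_2^t)$ is a product set acting on disjoint blocks of variables, the saddle-point problem reduces to the two independent problems~\eqref{eq.primal12}, and after flipping the sign in the $q_2$ program and rescaling by $\eta$, to the compact KL-projection form~\eqref{eq.primal12c} with driving vectors $\phi_i^{t-1}$.

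Second, to dualize the $q_1$ program, I would write $\Delta(k_1^t)$ as a system of explicit linear constraints in $q_1$: (i) the layer-normalization together with the flow-balance $\sum_{x\in X_{\ell-1},a}q_1(x,a,x')=\sum_{a,x''\in X_{\ell+1}}q_1(x',a,x'')$ for every interior state $x'\in X_\ell$; and (ii) the transition-confidence bound, which, after multiplying the $\ell_1$ inequality by $q_1(x,a)=\sum_{x''}q_1(x,a,x'')\ge 0$ and introducing per-$(x,a,x')$ two-sided slacks, becomes a pair of linear inequalities with coefficients $(1\pm\epsilon_1^{k_1}(x,a))$ on $q_1(x,a,x')$ and $\mp\bar P_1^{k_1}(x''\,\vert\,x,a)$ on $q_1(x,a,x'')$. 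I assign multipliers $\beta_1(x)$ to the flow-balance (the layer normalization combines with flow balance to produce, by telescoping across $(x,x')$, the term $\beta_1(x')-\beta_1(x)$ in $B_{1,t}$), and nonnegative $\mu_1^+(x,a,x'),\mu_1^-(x,a,x')$ to the upper and lower confidence inequalities. Because $\eta^{-1}D(q_1\,\vert\,\tilde q_1^{\,t-1})$ is strictly convex, the Lagrangian admits a unique primal minimizer characterized by $\partial_{q_1(x,a,x')}\calL=0$. Solving this first-order condition gives $\log\!\bigl(q_1(x,a,x')/\tilde q_1^{\,t-1}(x,a)\bigr)=-B_{1,t}^{\beta_1,\mu_1^+,\mu_1^-}(x,a,x')$, and enforcing the per-layer sum-to-one normalization produces the partition function $Z_{1,\ell}^t$ of~\eqref{eq.complete1}. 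The identical derivation handles~\eqref{eq.complete2}, with the sign flip absorbed into $\phi_2^{t-1}$.

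Third, I would close the argument by showing strong duality and reducing the dual to the stated log-partition program. Slater's condition is met since the uniform occupancy measure lies strictly inside $\calP_i^{k_i^t}$ with the chosen confidence radii, so strong duality applies and the optimal $(\beta_i^t,\mu_i^{+,t},\mu_i^{-,t})$ are the minimizers of the dual function. Substituting the Gibbs primal back into the Lagrangian collapses the dual objective, up to constants independent of the multipliers, into $\sum_\ell\log Z_{i,\ell}^t(\beta_i,\mu_i^+,\mu_i^-)$, yielding the convex program in the statement (which can be solved by any standard first-order method). The principal obstacle is the bookkeeping in the second step: the cross-terms generated when $q_1(x,a)=\sum_{x''}q_1(x,a,x'')$ is expanded inside the confidence constraint are exactly what produce the sum $\sum_{x''\in X_{\ell+1}}\bar P_1^{k_1}(x''\,\vert\,x,a)\bigl(\mu_1^-(x,a,x'')-\mu_1^+(x,a,x'')\bigr)$ in $B_{1,t}$; care is required to ensure these cross-terms redistribute correctly over the $(x,a,x')$ index and combine properly with the telescoping $\beta_1$-contribution. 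Everything else is mechanical convex duality.
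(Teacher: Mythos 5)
Your proposal matches the paper's proof in all essentials: decouple the minimax update into the two KL-regularized linear programs \eqref{eq.primal12c}, express $\Delta(k_1^t)$ (resp.\ $\Delta(k_2^t)$) via explicit linear constraints including the slack-variable linearization of the $\ell_1$ confidence bound, solve the Lagrangian stationarity conditions to obtain the Gibbs-form primal with per-layer normalizers $Z_{i,\ell}^t$, and invoke strong duality to collapse the dual into the log-partition program. The one quibble is your Slater justification --- the uniform occupancy measure need not lie strictly inside $\calP_i^{k_i^t}$ once the confidence radii shrink --- but since every constraint is affine, strong duality follows from feasibility alone (which is how the paper treats it), so nothing in your argument breaks.
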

\begin{proof}
	In~\eqref{eq.primal12c}, we have two standard mirror descent problems. Since two problems enjoy the same structure, we only prove an efficient solution to the first problem~\eqref{eq.primal1c}.
	
	By the online mirror descent optimization~\citep{zimin2013online}, Problem~\eqref{eq.primal1c} is equivalent to
	\begin{equation}\label{eq.primal12steps1}
	\bar q_1^{\,t} 
	\;=\;
	\displaystyle \argmin_{q_1}  \;
	\eta \, \langle{q_1},{\phi_1^{t-1}}\rangle 
	\, +\, 
	\displaystyle D\big(q_1\,\vert\,  \tilde{q}_1^{\,t-1}\big)
	\;
	\text{ and }
	\;
	\hat q_1^{\,t}
	\;=\;
	\displaystyle \argmin_{q_1\,\in\,\Delta(k_1^t)}  \;
	\displaystyle D\big(q_1\,\vert\,  \bar{q}_1^{\,t}\big)
	\end{equation}
	where $\bar q_1^{\,t}$ is a solution to an unconstrained problem and $\hat q_1^{\,t}$ simply takes the projection of $\bar q_1^{\,t}$ to the domain $\Delta(k_1^t)$ in the unnormalized Kullback-Leibler divergence. 
	
	It is straightforward to compute a closed-form solution for the unconstrained problem,
	\begin{equation}\label{eq.unconstrained}
	\bar q_1^{\,t} (x,a) \;=\; \tilde q_1^{\,t} (x,a) \,{\rm e}^{- \eta\phi_1^{t-1}(x,a)},\,\text{ for all } (x,a)\in X\times A.
	\end{equation}
	
	To compute the projection of $\bar q_1^{\,t}$, we recall that the domain set $\Delta(k_1^t)$ explicitly takes the following linear constraints on $q_1$: $X\times A\to [0,1]$,
	\begin{equation}\label{eq.deltaset}
	\Delta(k_1^t) \;\DefinedAs\; \{ q_1: X\times A\to [0,1] \,\vert\, q_1 \text{satisfies the following (i), (ii), (iii), (iv)} \}
	\end{equation}
	\begin{itemize}
		\item [(i)] $q_1(x, a) = \sum_{x'\,\in\,X_{\ell+1}} q_1(x,a,x')$ for $(x,a)\in X_\ell\times A$ and $\ell \in\{0,1,\cdots,L-1\}$;
		
		\item [(ii)] $\sum_{x\in X_\ell}\sum_{a\in A}\sum_{x'\in X_{\ell+1}} q_1(x, a, x') = 1$ for $\ell \in\{0,1,\cdots,L-1\}$; 
		
		\item [(iii)] $\sum_{x\in X_{\ell-1}}\sum_{a\in A} q_1(x, a, x') =\sum_{a\in A}\sum_{x''\in X_{\ell+1}} q_1(x', a, x'')$ for $x'\in X_\ell$ and $\ell \in\{1,\cdots,L-1\}$;
		
		\item [(iv)] 
		$q_1(x,a,x')-\bar P_1^{k_1}(x'\,\vert\,x,a) \sum_{x''\,\in\,X_{\ell+1}}q_1(x,a,x'') \leq \epsilon(x,a,x')$, 
		
		$\bar P_1^{k_1}(x'\,\vert\,x,a) \sum_{x''\,\in\,X_{\ell+1}}q_1(x,a,x'')-q_1(x,a,x') \leq \epsilon(x,a,x')$, 
		
		and $\sum_{x'\,\in\,X_{\ell+1}} \epsilon(x,a,x') \leq \epsilon_1^{k_1}(x,a) \sum_{x'\,\in\,X_{\ell+1}}q_1(x,a,x')$ for $(x,a,x')\in X_\ell \times A\times X_{\ell+1}$ and $\ell \in\{0,1,\cdots,L-1\}$.
	\end{itemize}
	
	\noindent where (ii) and (iii) follow the occupancy measure's property and (iv) displays the confidence set condition for $q_1\in \Delta(k_1^t)$,
	\[
	\Bigg\Vert \frac{q_1(x,a,\cdot)}{\sum_{x''\,\in\,X_{\ell+1}}q_1(x,a,x'') }- \bar P_1^{k_1}(\cdot\,\vert\,x,a) \Bigg\Vert_1 \; \leq \; \epsilon_1^{k_1}(x,a), \text{ for all }(x,a) \in X\times A
	\]
	and we also introduce $\epsilon$: $X\times A\times X\to [0,\infty)$ additionally. Therefore, the projection problem is a convex optimization with the linear constraints. By the method of Lagrange multipliers, we have the following Lagrangian $\calL(q_1,\epsilon; \alpha, \lambda,\beta,\mu^+,\mu^-,\mu)$,
	\[
	\begin{array}{rcl}
	&& \;\calL(q_1,\epsilon; \alpha, \lambda,\beta,\mu^+,\mu^-,\mu) 
	\\[0.2cm]
	&& \displaystyle \;=\;  D\big(q_1\,\vert\,  \bar{q}_1^{\,t}\big) \,+\, \sum_{\ell\,=\,0}^{L-1} \sum_{x\,\in\,X_\ell}\sum_{a\,\in\, A}\alpha(x,a)\rbr{q_1(x, a) - \sum_{x'\,\in\,X_{\ell+1}} q_1(x,a,x')}
	\\[0.2cm]
	&& \displaystyle \,+\, \sum_{\ell\,=\,0}^{L-1}\lambda_\ell\rbr{\sum_{x\in X_\ell}\sum_{a\in A}\sum_{x'\in X_{\ell+1}} q_1(x, a, x') - 1}
	\\[0.2cm]
	&& \displaystyle \,+\, \sum_{\ell\,=\,1}^{L-1}\sum_{x'\,\in\,X_\ell}\beta(x') \rbr{\sum_{x\in X_{\ell-1}}\sum_{a\in A} q_1(x, a, x') -\sum_{a\in A}\sum_{x''\in X_{\ell+1}} q_1(x', a, x'')}
	\\[0.2cm]
	&& \!\!\!\! \!\!\!\! \!\!\!\! \!\!\!\! 
	\displaystyle \,+\, \sum_{\ell\,=\,0}^{L-1}\sum_{x\,\in\,X_\ell}\sum_{a\,\in\, A}\sum_{x'\,\in\,X_{\ell+1}} \mu^+(x,a,x')\rbr{q_1(x,a,x')-\bar P_1^{k_1}(x'\,\vert\,x,a) \sum_{x''\,\in\,X_{\ell+1}}q_1(x,a,x'') - \epsilon(x,a,x')}
	\\[0.2cm]
	&& \!\!\!\! \!\!\!\! \!\!\!\! \!\!\!\! 
	\displaystyle \,+\, \sum_{\ell\,=\,0}^{L-1}\sum_{x\,\in\,X_\ell}\sum_{a\,\in\, A}\sum_{x'\,\in\,X_{\ell+1}} \mu^-(x,a,x')\rbr{\bar P_1^{k_1}(x'\,\vert\,x,a) \sum_{x''\,\in\,X_{\ell+1}}q_1(x,a,x'') -q_1(x,a,x')- \epsilon(x,a,x')}
	\\[0.2cm]
	&& \displaystyle \,+\, \sum_{\ell\,=\,0}^{L-1}\sum_{x\,\in\,X_\ell}\sum_{a\,\in\,A}\mu(x,a) \rbr{\sum_{x'\,\in\,X_{\ell+1}} \epsilon(x,a,x') - \epsilon_1^{k_1}(x,a) \sum_{x'\,\in\,X_{\ell+1}}q_1(x,a,x')}
	\end{array}
	\]
	where $\alpha (x,a)$, $\lambda_\ell$, $\beta(x)$, $\mu^+(x,a,x')\geq0$, $\mu^-(x,a,x')\geq0$, and $\mu(x,a,x')\geq0$ for $(x,a,x')\in X_\ell\times A\times X_{\ell+1}$ are Lagrange multipliers associated to the linear constraints. 
	
	By the Lagrange duality theory, the strong duality holds. To find the optimal solution to the projection problem in~\eqref{eq.primal12steps1}, it suffices to check the first-order stationary conditions. We first take the derivative over $\epsilon(x,a,x')$ for $(x,a,x')\in X_\ell \times A\times X_{\ell+1}$,
	\[
	\frac{\partial \calL}{\partial \epsilon(x,a,x') }
	\;=\; 
	-\, \mu^+(x,a,x') \,-\, \mu^-(x,a,x') \,+\, \mu(x,a)
	\]
	which is zero if we take $\mu(x,a)=\mu^+(x,a,x')+\mu^-(x,a,x')$. Using this stationary condition, we simplify the Lagrangian $\calL(q_1,\epsilon; \alpha, \lambda,\beta,\mu^+,\mu^-,\mu)$ by eliminating $\mu$ and $\epsilon$ into,
	\[
	\begin{array}{rcl}
	&& \;
	\calL(q_1; \alpha, \lambda,\beta,\mu^+,\mu^-) 
	\\[0.2cm]
	&& \;=\; \displaystyle D\big(q_1\,\vert\,  \bar{q}_1^{\,t}\big) \,+\, \sum_{\ell\,=\,0}^{L-1} \sum_{x\,\in\,X_\ell}\sum_{a\,\in\, A}\alpha(x,a)\rbr{q_1(x, a) - \sum_{x'\,\in\,X_{\ell+1}} q_1(x,a,x')}
	\\[0.2cm]
	&& \displaystyle \,+\, \sum_{\ell\,=\,0}^{L-1} \lambda_\ell\rbr{\sum_{x\in X_\ell}\sum_{a\in A}\sum_{x'\in X_{\ell+1}} q_1(x, a, x') - 1}
	\\[0.2cm]
	&& \displaystyle \,+\, \sum_{\ell\,=\,1}^{L-1}\sum_{x'\,\in\,X_\ell}\beta(x') \rbr{\sum_{x\in X_{\ell-1}}\sum_{a\in A} q_1(x, a, x') -\sum_{a\in A}\sum_{x''\in X_{\ell+1}} q_1(x', a, x'')}
	\\[0.2cm]
	&& \!\!\!\! \!\!\!\! \!\!\!\! \!\!\!\! 
	\displaystyle \,+\, \sum_{\ell\,=\,0}^{L-1}\sum_{x\,\in\,X_\ell}\sum_{a\,\in\, A}\sum_{x'\,\in\,X_{\ell+1}} \mu^+(x,a,x')\rbr{ (1-\epsilon_1^{k_1}(x,a))q_1(x,a,x')-\bar P_1^{k_1}(x'\,\vert\,x,a) \sum_{x''\,\in\,X_{\ell+1}}q_1(x,a,x'') }
	\\[0.2cm]
	&& \!\!\!\! \!\!\!\! \!\!\!\! \!\!\!\! 
	\displaystyle \,+\, \sum_{\ell\,=\,0}^{L-1}\sum_{x\,\in\,X_\ell}\sum_{a\,\in\, A}\sum_{x'\,\in\,X_{\ell+1}} \mu^-(x,a,x')\rbr{\bar P_1^{k_1}(x'\,\vert\,x,a) \sum_{x''\,\in\,X_{\ell+1}}q_1(x,a,x'') -(1+\epsilon_1^{k_1}(x,a))q_1(x,a,x')}.
	\end{array}
	\]
	
	For the notational simplicity, we take $\beta(x_0)=\beta(x_L) = 0$. We next check the first-order stationary conditions of $\calL(q_1; \alpha, \lambda,\beta,\mu^+,\mu^-)$ and solve them for the stationary point. 
	We first take the derivative over $q_1(x,a,x')$ and $q_1(x,a)$ for $(x,a,x')\in X_\ell \times A\times X_{\ell+1}$, respectively,
	\[
	\begin{array}{rcl}
	\displaystyle \frac{\partial \calL}{\partial q_1(x,a,x') } &=& - \, \alpha(x,a) \,+\, \lambda_\ell \,+\, \beta(x')-\beta(x)
	\\[0.2cm]
	&& \,+\, (1-\epsilon_1^{k_1}(x,a))\mu^+(x,a,x') \,-\, (1+\epsilon_1^{k_1}(x,a))\mu^-(x,a,x')
	\\[0.2cm]
	&& \displaystyle \,+\, \sum_{x''\,\in\,X_{\ell+1}}  \bar P_1^{k_1}(x''\,\vert\,x,a) (\mu^-(x,a,x'')-\mu^+(x,a,x''))
	\end{array}
	\]
	\[
	\frac{\partial \calL}{\partial q_1(x,a) }\;=\; \ln q_1(x,a) \,-\, \ln \bar q_1^{\,t}(x,a) \,+\, \alpha(x,a).
	\] 
	By setting the second derivative above to be zero, we have $\alpha(x,a) = -\ln q_1(x,a)+\ln \bar q_1^{\,t}(x,a)$. Then, substituting it into the first zero-derivative by eliminating $\alpha(x,a)$ yields,
	\[
	\begin{array}{rcl}
	\displaystyle \ln q_1(x,a) &=& \ln \bar q_1^{\,t}(x,a) \,+\,\eta \phi_1^{t-1}(x,a) \,-\, \lambda_\ell-B^{t}(x,a,x')
	\end{array}
	\]
	\[
	\begin{array}{rcl}
	B^{t}(x,a,x') &=& \beta(x')-\beta(x) \,+\, \eta \phi_1^{t-1}(x,a)
	\\[0.2cm]
	&& \,+\, (1-\epsilon_1^{k_1}(x,a))\mu^+(x,a,x') \,-\, (1+\epsilon_1^{k_1}(x,a))\mu^-(x,a,x')
	\\[0.2cm]
	&& \displaystyle \,+\, \sum_{x''\,\in\,X_{\ell+1}}  \bar P_1^{k_1}(x''\,\vert\,x,a) (\mu^-(x,a,x'')-\mu^+(x,a,x'')).
	\end{array}
	\]
	The solution $q_1^\star(x,a)$ leads to an explicit formula for $\hat q_1^{\,t}$,
	\begin{equation}\label{eq.incomplete}
	\hat q_1^{\,t} (x,a) \;=\;q_1^\star(x,a) \;=\; \bar q_1^{\,t}(x,a) \,{\rm e}^{\eta \phi_1^{t-1}(x,a)- \lambda_\ell-B^{t}(x,a,x')}\;=\;\tilde q_1^{\,t}(x,a) \,{\rm e}^{- \lambda_\ell-B^{t}(x,a,x')}
	\end{equation}
	where the last equality is due to~\eqref{eq.unconstrained} and $x\neq x_L$. We note that it is not unique to determine $\alpha(x,a)$ since it takes the form $\alpha^\star(x,a)=-\eta \phi_1^{t-1}(x,a)+\lambda_\ell+B^{t}(x,a,x')$ for some $x'$. It remains to determine the optimal $\beta$, $\mu^+$, and $\mu^-$.
	
	Bofore showing the optimal $\beta$, $\mu^+$, and $\mu^-$, we take another derivative over $\lambda_\ell$ at $q_1=\hat q_1^t$ and set it to be zero,
	\[
	\sum_{x\in X_\ell}\sum_{a\in A}\sum_{x'\in X_{\ell+1}} \hat q_1^{\,t}(x, a, x') \;=\;1
	\]
	or, equivalently,
	\[
	{\rm e}^{\lambda_\ell} \;=\; \sum_{x\in X_\ell}\sum_{a\in A}\sum_{x'\in X_{\ell+1}}  \tilde q_1^{\,t}(x,a)\,{\rm e}^{-B^{t}(x,a,x')}\;\DefinedAs\; Z_\ell^t
	\]
	which shows that $\lambda_\ell^\star = \ln Z_\ell^t$. It also leads to $\alpha^\star(x,a)=-\eta \phi_1^{t-1}(x,a)+\lambda_\ell^\star+B^{t}(x,a,x')$.
	
	We note that 
	\[
	\begin{array}{rcl}
	&&\!\!\!\! \!\!\!\! \!\!
	\calL(q_1; \alpha, \lambda,\beta,\mu^+,\mu^-)  
	\\[0.2cm]
	&=& \displaystyle
	D\big(q_1\,\vert\,  \bar{q}_1^{\,t}\big)+ \sum_{\ell\,=\,0}^{L-1}\sum_{x\,\in\,X_\ell}\sum_{a\,\in\, A}\sum_{x'\,\in\,X_{\ell+1}} \bigg(\frac{\partial \mathcal{L}}{\partial q_1(x,a,x') } + \alpha(x,a) \bigg)q_1(x,a,x')-\sum_{\ell\,=\,0}^{L-1} \lambda_\ell
	\\[0.2cm]
	&=&  \displaystyle D\big(q_1\,\vert\,  \bar{q}_1^{\,t}\big)+
	\sum_{\ell\,=\,0}^{L-1}\sum_{x\,\in\,X_\ell}\sum_{a\,\in\, A}\sum_{x'\,\in\,X_{\ell+1}} \frac{\partial \mathcal{L}}{\partial q_1(x,a,x') }q_1(x,a,x')
	\\[0.2cm]
	&& \displaystyle
	+\sum_{\ell\,=\,0}^{L-1}\sum_{x\,\in\,X_\ell}\sum_{a\,\in\, A} \bigg(\frac{\partial \mathcal{L}}{\partial q_1(x,a) }- \ln q_1(x,a)+\ln \bar q_1^{\,t}(x,a) \bigg) q_1(x,a)
	-\sum_{\ell\,=\,0}^{L-1} \lambda_\ell
	\\[0.2cm]
	&=&  \displaystyle 
	\sum_{\ell\,=\,0}^{L-1}\sum_{x\,\in\,X_\ell}\sum_{a\,\in\, A}\sum_{x'\,\in\,X_{\ell+1}} \frac{\partial \mathcal{L}}{\partial q_1(x,a,x') }q_1(x,a,x')
	\\[0.2cm]
	&& \displaystyle
	+\sum_{\ell\,=\,0}^{L-1}\sum_{x\,\in\,X_\ell}\sum_{a\,\in\, A} \bigg(\bigg(\frac{\partial \mathcal{L}}{\partial q_1(x,a) }-1\bigg)  q_1(x,a)+\ln \bar q_1^{\,t}(x,a) \bigg) 
	-\sum_{\ell\,=\,0}^{L-1} \lambda_\ell.
	\end{array}
	\]
	We now collect all previously determined optimal dual variables and apply the strong duality,
	\[
	\begin{array}{rcl}
	\beta^\star, \mu^{+,\star}, \mu^{-,\star}
	&=&\displaystyle
	\argmax_{\beta,\, \mu^+,\,\mu^-\,\geq \,0} \maximize_{\alpha,\,\lambda}\,\minimize_{q_1}\; \calL(q_1; \alpha, \lambda,\beta,\mu^+,\mu^-)
	\\[0.2cm]
	&=&\displaystyle
	\argmax_{\beta,\, \mu^+,\,\mu^-\,\geq \,0}\; \calL(q_1^\star; \alpha^\star, \lambda^\star,\beta,\mu^+,\mu^-)
	\\[0.2cm]
	&=&\displaystyle
	\argmax_{\beta,\, \mu^+,\,\mu^-\,\geq \,0}\; -L+\sum_{\ell\,=\,0}^{L-1}\sum_{x\,\in\,X_\ell}\sum_{a\,\in\, A} \ln \bar q_1^{\,t}(x,a) -\sum_{\ell\,=\,0}^{L-1} \lambda_\ell^\star
	\\[0.2cm]
	&=&\displaystyle
	\argmin_{\beta,\, \mu^+,\,\mu^-\,\geq \,0}\; \sum_{\ell\,=\,0}^{L-1} \ln Z_\ell^t
	\end{array}
	\]
	where the third equality is due to: $ \frac{\partial \calL}{\partial q_1(x,a,x') } \vert_{q_1^\star(x,a,x')}=0 $ and $\frac{\partial \calL}{\partial q_1(x,a) } \vert_{q_1^\star(x,a)}= 0$, and we ignore all constants that are independent of $\beta$, $\mu^+$, and $\mu^-$ for the last equality; we note that this minimization problem is a convex optimization problem over the nonnegative orthant. Hence, we have proved the update~\eqref{eq.complete1} as an efficient update~\eqref{eq.incomplete}.
	Similarly, we have an efficient update~\eqref{eq.complete2} for the second problem~\eqref{eq.primal2c} and the proof is complete.
	
\end{proof}

\section{Proof of Lemma~\ref{lem.empiricalP}}
\label{ap.empiricalP}

For any $q_1\in \Delta(P_1)$ and $q_2\in \Delta(P_2)$, we estimate
\[
\hat P_1(\cdot\,\vert\,x,a)\;=\;\frac{q_1(x,a, \cdot)}{\sum_{x'\,\in\,X_{\ell+1}}q_1(x,a, x')}
\;\text{ and }\;
\hat P_2(\cdot\,\vert\,y,b)\;=\;\frac{q_2(y,b, \cdot)}{\sum_{y'\,\in\,Y_{\ell+1}}q_2(y,b, y')}.
\]
Consequently,
\[
\begin{array}{rcl}
&& \!\!\!\! \!\!\!\! \!\! 
\displaystyle
\norm{ \frac{q_1(x,a, \cdot)}{\sum_{x'\,\in\,X_{\ell+1}}q_1(x,a, x')} - \bar P_1^{k_1}(\cdot\,\vert\,x,a) }_1 
\\[0.2cm]
&\leq &
\displaystyle
\norm{ \frac{q_1(x,a, \cdot)}{\sum_{x'\,\in\,X_{\ell+1}}q_1(x,a, x')} -\hat P_1(\cdot\,\vert\,x,a) }_1
\,+\,
\norm{ \hat P_1(\cdot\,\vert\,x,a)-\bar P_1^{k_1}(\cdot\,\vert\,x,a) }_1
\\[0.2cm]
&=&\displaystyle 
\norm{ \hat P_1(\cdot\,\vert\,x,a)-\bar P_1^{k_1}(\cdot\,\vert\,x,a) }_1
\end{array}
\]
which implies that $q_1\in \Delta(P_1^{k_1})$. Similarly, we have $q_2\in \Delta(P_2^{k_2})$. Therefore, $\Delta(P_1)\subset \Delta(\calP_1^{k_1})$ and $\Delta(P_2)\in\Delta(\calP_2^{k_2})$. The probability argument follows Lemma~1~\citep{neu2012adversarial} or its original version, Lemma~17~\citep{jaksch2010near}: with probability $1-\delta$ it holds that
\[
\Vert \hat P_1(\cdot\,\vert\,x,a) - \bar P_1^{k_1}(\cdot\,\vert\,x,a) \Vert_1 \leq \epsilon_1^{k_1}
\;\text{ and }\;
\Vert \hat P_2(\cdot\,\vert\,y,b) - \bar P_2^{k_2}(\cdot\,\vert\,y,b) \Vert_1 \leq \epsilon_2^{k_2}
\]
for all $(x,a)\in X\times A$, $(y,b)\in Y\times B$, and all epochs $k_1$ and $k_2$.

\section{Proof of Lemma~\ref{lem.qqdifference}}
\label{ap.qqdifference}

We recall the occupancy measures induced by the empirical transitions $\hat P_1$ and $\hat P_2$,
\[
\hat{q}_1^{\,t} (x,a,x') \;=\; \hat{d}_1^{\,t} (x) \pi^t(a\,\vert\,x) \hat P_1^{k_1}(x'\,\vert\,x,a)
\;\text{ and }\;
\hat q_2^{\,t} (y,b,y') \;=\; \hat d_2^{\,t} (y) \mu^t(b\,\vert\,y) \hat P_2^{k_2}(y'\,\vert\,y,b)
\] 
\[
\hat q_1^{\,t} (x,a) \;=\; \sum_{x'\,\in\,X_{\ell+1}} \hat q_1^{\,t} (x,a,x')
\;\text{ and }\;
\hat q_2^{\,t} (y,b) \;=\;  \sum_{y'\,\in\,Y_{\ell+1}} \hat q_2^{\,t} (y,b,y')
\] 
where $\hat d_1^{\,t} (x)$ and $\hat d_2^{\,t} (x)$ are the stationary state visitation probabilities, and  the occupancy measures induced by the true transitions $ P_1$ and $ P_2$,
\[
q_1^t (x,a,x') \;=\;  d_1^t(x) \pi^t(a\,\vert\,x)  P_1(x'\,\vert\,x,a)
\;\text{ and }\;
q_2^t (y,b,y') \;=\;  d_2^t(y) \mu^t(b\,\vert\,y)  P_2(y'\,\vert\,y,b)
\] 
\[
q_1^t (x,a) \;=\; \sum_{x'\,\in\,X_{\ell+1}}  q_1^t (x,a,x')
\;\text{ and }\;
q_2^t (y,b) \;=\;  \sum_{y'\,\in\,Y_{\ell+1}}  q_2^t (y,b,y')
\] 
where $d_1^t(x)$ and $d_2^t(x)$ are the stationary state visitation probabilities. We denote by $\ell$ the layer that $x$ or $y$ belongs to.

We first present a useful property on how the transition estimation errors affect the mismatch of occupancy measures.

\begin{lemma}\label{lem.qdifference}
	Let $\hat q_1^{\,t}$, $\hat q_2^{\,t}$, $q_1^t$, and $q_2^t$ be generated by Algorithm~\ref{UCB-MPD}. Then,
	\begin{subequations}
		\begin{equation}\label{eq.qdifference1}
		\begin{array}{rcl}
		\displaystyle
		\norm{\hat q_1^{\,t} - q_1^t}_1
		&\leq&\displaystyle\sum_{j\,=\,0}^{L-1} \sum_{\ell\,=\,0}^{j}\sum_{x\,\in\,X_\ell}\sum_{a\,\in\,A} \pi^t(a\,\vert\,x) \hat d_1^{\,t} (x) \norm{ \hat P_1^{k_1}(\cdot\,\vert\,x,a)-  P_1(\cdot\,\vert\,x,a) }_1
		\end{array}
		\end{equation}
		\begin{equation}\label{eq.qdifference2}
		\begin{array}{rcl}
		\displaystyle
		\norm{\hat q_2^{\,t} - q_2^t}_1
		&\leq&\displaystyle\sum_{j\,=\,0}^{L-1} \sum_{\ell\,=\,0}^{j}\sum_{y\,\in\,Y_\ell}\sum_{b\,\in\,B} \mu^t(b\,\vert\,y) \hat d_2^{\,t} (y) \norm{ \hat P_2^{k_2}(\cdot\,\vert\,y,b)-  P_2(\cdot\,\vert\,y,b) }_1.
		\end{array}
		\end{equation}
	\end{subequations}
\end{lemma}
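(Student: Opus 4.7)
The plan is to exploit the layered loop-free structure of the state space to propagate transition-estimation errors forward through the $L$ layers. Since $\hat q_1^{\,t}$ and $q_1^t$ share the policy $\pi^t$ and differ only through the state-visitation distributions ($\hat d_1^{\,t}$ versus $d_1^t$) and the one-step kernels ($\hat P_1^{k_1}$ versus $P_1$), I would decompose $\norm{\hat q_1^{\,t}-q_1^t}_1$ layer by layer and, within each layer, add and subtract the hybrid quantity $\hat d_1^{\,t}(x)\pi^t(a\,\vert\,x)P_1(x'\,\vert\,x,a)$. The triangle inequality then splits the layer-$\ell$ contribution into a ``transition'' term $\delta_\ell \DefinedAs \sum_{x\in X_\ell,\, a}\hat d_1^{\,t}(x)\pi^t(a\,\vert\,x)\norm{\hat P_1^{k_1}(\cdot\,\vert\,x,a)-P_1(\cdot\,\vert\,x,a)}_1$ and a ``visitation'' term controlled by $\epsilon_\ell \DefinedAs \sum_{x\in X_\ell}\abr{\hat d_1^{\,t}(x)-d_1^t(x)}$.

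The heart of the argument is a forward recursion for $\epsilon_\ell$. For $x'\in X_{\ell+1}$, I would write $\hat d_1^{\,t}(x')-d_1^t(x')$ as a sum over predecessors $(x,a)\in X_\ell\times A$ and apply the same add-and-subtract step used above. Summing absolute values over $x'\in X_{\ell+1}$, then using that both $\hat P_1^{k_1}(\cdot\,\vert\,x,a)$ and $P_1(\cdot\,\vert\,x,a)$ are probability distributions on $X_{\ell+1}$ and that $\pi^t(\cdot\,\vert\,x)$ is a stochastic kernel, collapses the ``mass-preserving'' terms and yields $\epsilon_{\ell+1}\le \epsilon_\ell + \delta_\ell$. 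Because both players start from a deterministic initial state ($X_0 = \{x_0\}$), we have $\epsilon_0 = 0$, and unrolling gives $\epsilon_\ell \le \sum_{\ell'=0}^{\ell-1}\delta_{\ell'}$.

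Combining the per-layer bound $\sum_{x\in X_\ell,\, a,\, x'}\abr{\hat q_1^{\,t}(x,a,x')-q_1^t(x,a,x')}\le \delta_\ell + \epsilon_\ell$ with this recursion yields
\[
\norm{\hat q_1^{\,t}-q_1^t}_1 \;\le\; \sum_{\ell\,=\,0}^{L-1}\delta_\ell \,+\, \sum_{\ell\,=\,0}^{L-1}\sum_{\ell'\,=\,0}^{\ell-1}\delta_{\ell'} \;=\; \sum_{j\,=\,0}^{L-1}\sum_{\ell\,=\,0}^{j}\delta_\ell,
\]
where the final equality reindexes by swapping the order of summation. Unpacking the definition of $\delta_\ell$ gives exactly~\eqref{eq.qdifference1}, and~\eqref{eq.qdifference2} follows by a mechanically identical argument with $(X,A,P_1,\pi^t,\hat d_1^{\,t})$ replaced by $(Y,B,P_2,\mu^t,\hat d_2^{\,t})$. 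The overall calculation is routine; the only mildly delicate point is the choice to weight the transition-error term by the empirical visitation $\hat d_1^{\,t}$ (rather than $d_1^t$) in the add-and-subtract, which is precisely what makes the bound in the lemma sharp with $\hat d_1^{\,t}$ as the leading factor.
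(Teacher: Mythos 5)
Your proposal is correct and follows essentially the same route as the paper: the same add-and-subtract of the hybrid term $\hat d_1^{\,t}(x)\pi^t(a\,\vert\,x)P_1(\cdot\,\vert\,x,a)$, the same triangle-inequality split into a transition-error term weighted by $\hat d_1^{\,t}$ and a visitation-error term, and the same layer-by-layer propagation unrolled into the double sum. The only difference is bookkeeping—you run the recursion directly on the per-layer visitation error $\epsilon_\ell$ (with $\epsilon_0=0$), whereas the paper phrases it as a recursion on the cumulative occupancy-measure differences—but the two are equivalent and yield the identical bound.
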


\begin{proof}
	Since two players have the independent transitions, it suffices to just prove one of two players. We next prove~\eqref{eq.qdifference1} for the min-player. 
	By the definitions, we can bound $\norm{\hat q_1^{\,t} - q_1^t}_1$ by
	\begin{equation}\label{eq.qdifference_xa}
	\begin{array}{rcl}
	\norm{\hat q_1^{\,t} - q_1^t}_1	&=&\displaystyle \sum_{\ell\,=\,0}^{L-1}\sum_{x\,\in\,X_\ell}\sum_{a\,\in\,A}\abr{ \sum_{x'\,\in\,X_{\ell+1}}  \hat q_1^{\,t} (x,a,x')- \sum_{x'\,\in\,X_{\ell+1}}  q_1^t (x,a,x')}
	\\[0.2cm]
	&\leq&\displaystyle \sum_{\ell\,=\,0}^{L-1}\sum_{x\,\in\,X_\ell}\sum_{a\,\in\,A}\norm{ \hat q_1^{\,t} (x,a,\cdot)- q_1^t (x,a,\cdot)}_1
	\\[0.2cm]
	&=&\displaystyle \sum_{\ell\,=\,0}^{L-1}\sum_{x\,\in\,X_\ell}\sum_{a\,\in\,A} \pi^t(a\,\vert\,x)  \norm{ \hat d_1^{\,t} (x)\hat P_1^{k_1}(\cdot\,\vert\,x,a)- d_1^t(x) P_1(\cdot\,\vert\,x,a) }_1
	\end{array}
	\end{equation}
	where we apply the triangle inequality to obtain the inequality. We add and subtract $\hat d_1^{\,t} (x)P_1(\cdot\,\vert\,x,a)$ into the norm $ \Vert{ \hat d_1^{\,t} (x)\hat P_1^{k_1}(\cdot\,\vert\,x,a)- d_1^t(x) P_1(\cdot\,\vert\,x,a) }\Vert_1$, and apply the triangle inequality again,
	\[
	\begin{array}{rcl}
	&&  \!\!\!\! \!\!\!\! \!\!
	\displaystyle
	\norm{ \hat d_1^{\,t} (x)\hat P_1^{k_1}(\cdot\,\vert\,x,a)  -  d_1^t(x) P_1(\cdot\,\vert\,x,a) }_1
	\\[0.2cm]
	&\leq&\displaystyle \norm{ \hat d_1^{\,t} (x)\hat P_1^{k_1}(\cdot\,\vert\,x,a)- \hat d_1^{\,t} (x) P_1(\cdot\,\vert\,x,a) }_1
	\,+\, \norm{ \hat d_1^{\,t} (x) P_1(\cdot\,\vert\,x,a)- d_1^t(x) P_1(\cdot\,\vert\,x,a) }_1.
	\end{array}
	\]
	Therefore,
	\begin{equation}\label{eq.qdifference}
	\begin{array}{rcl}
	&& \!\!\!\! \!\!\!\! \!\!
	\displaystyle
	\sum_{\ell\,=\,0}^{L-1}\sum_{x\,\in\,X_\ell}\sum_{a\,\in\,A}\norm{ \hat q_1^{\,t} (x,a,\cdot)- q_1^t (x,a,\cdot)}_1
	\\[0.2cm]
	&\leq&\displaystyle \sum_{\ell\,=\,0}^{L-1}\sum_{x\,\in\,X_\ell}\sum_{a\,\in\,A} \pi^t(a\,\vert\,x) \hat d_1^{\,t} (x) \norm{ \hat P_1^{k_1}(\cdot\,\vert\,x,a)-  P_1(\cdot\,\vert\,x,a) }_1
	\\[0.2cm]
	&&\displaystyle+ \sum_{\ell\,=\,0}^{L-1}\sum_{x\,\in\,X_\ell}\sum_{a\,\in\,A} \pi^t(a\,\vert\,x)  \abr{\hat d_1^{\,t}(x) - d_1^t(x) }\norm{P_1(\cdot\,\vert\,x,a) }_1.
	\end{array}
	\end{equation}
	We can further simplify the upper bound in~\eqref{eq.qdifference}. Using $\norm{P_1(\cdot\,\vert\,x,a) }_1=1$ and $\sum_{a\,\in\,A} \pi^t(a\,\vert\,x) =1$, we have
	\[
	\begin{array}{rcl}
	\displaystyle\sum_{\ell\,=\,0}^{L-1}\sum_{x\,\in\,X_\ell}\sum_{a\,\in\,A} \pi^t(a\,\vert\,x)  \abr{\hat d_1^{\,t} (x) - d_1^t(x) }\norm{P_1(\cdot\,\vert\,x,a) }_1
	&=& \displaystyle \sum_{\ell\,=\,0}^{L-1}\sum_{x\,\in\,X_\ell}\abr{\hat d_1^{\,t}(x) - d_1^t(x) }.
	\end{array}
	\]
	By the definitions, $\hat d_1^{\,t}(x) = d_1^t(x) =1$ for $x\in X_0$, and $\hat d_1^{\,t}(x)=\sum_{x^\circ\,\in\,X_{\ell-1}}\sum_{a\,\in\, A}\hat q_1^{\,t} (x^\circ,a,x)$ and $d_1^t(x)=\sum_{x^\circ\,\in\,X_{\ell-1}}\sum_{a\,\in\, A} q_1^t (x^\circ,a,x)$ for $x\in X_\ell$. Thus, 
	\[
	\begin{array}{rcl}
	&& \!\!\!\! \!\!\!\! \!\!
	\displaystyle \sum_{\ell\,=\,0}^{L-1}\sum_{x\,\in\,X_\ell}\abr{\hat d_1^{\,t} (x) - d_1^t(x) }
	\\[0.2cm]
	&=& \displaystyle \sum_{\ell\,=\,1}^{L-1}\sum_{x\,\in\,X_\ell}\abr{\hat d_1^{\,t} (x) - d_1^t(x) }
	\\[0.2cm]
	&=& \displaystyle \sum_{\ell\,=\,1}^{L-1}\sum_{x\,\in\,X_\ell}\abr{\sum_{x^\circ\,\in\,X_{\ell-1}}\sum_{a\,\in\, A}\hat q_1^{\,t} (x^\circ,a,x)- \sum_{x^\circ\,\in\,X_{\ell-1}}\sum_{a\,\in\, A} q_1^t (x^\circ,a,x)}
	\\[0.2cm]
	&\leq& \displaystyle \sum_{\ell\,=\,1}^{L-1}\sum_{x\,\in\,X_\ell} \sum_{a\,\in\, A} \sum_{x^\circ\,\in\,X_{\ell-1}}\abr{ \hat q_1^{\,t} (x^\circ,a,x) - q_1^t (x^\circ,a,x) } 
	\\[0.2cm]
	&=& \displaystyle \sum_{\ell\,=\,1}^{L-1}\sum_{a\,\in\, A} \sum_{x^\circ\,\in\,X_{\ell-1}}\norm{ \hat q_1^{\,t} (x^\circ,a,\cdot) - q_1^t (x^\circ,a,\cdot) }_1
	\\[0.2cm]
	&=& \displaystyle \sum_{\ell\,=\,0}^{L-2}\sum_{x\,\in\,X_{\ell}}\sum_{a\,\in\, A}  \norm{ \hat q_1^{\,t} (x,a,\cdot) - q_1^t (x,a,\cdot) }_1.
	\end{array}
	\]
	We now return back to~\eqref{eq.qdifference}, 
	\begin{equation}\label{eq.qdifference_simplified}
	\begin{array}{rcl}
	&& \!\!\!\! \!\!\!\! \!\!
	\displaystyle
	\sum_{\ell\,=\,0}^{L-1}\sum_{x\,\in\,X_\ell}\sum_{a\,\in\,A}\norm{ \hat q_1^{\,t} (x,a,\cdot)- q_1^t (x,a,\cdot)}_1
	\\[0.2cm]
	&\leq&\displaystyle \sum_{\ell\,=\,0}^{L-1}\sum_{x\,\in\,X_\ell}\sum_{a\,\in\,A} \pi^t(a\,\vert\,x) \hat d_1^{\,t}(x) \norm{ \hat P_1^{k_1}(\cdot\,\vert\,x,a)-  P_1(\cdot\,\vert\,x,a) }_1
	\\[0.2cm]
	&&\displaystyle+ \sum_{\ell\,=\,0}^{L-2}\sum_{x\,\in\,X_{\ell}}\sum_{a\,\in\, A}  \norm{ \hat q_1^{\,t} (x,a,\cdot) - q_1^t (x,a,\cdot) }_1
	\end{array}
	\end{equation}
	which is a recursive formula for $\sum_{\ell\,=\,0}^{j}\sum_{x\,\in\,X_{\ell}}\sum_{a\,\in\, A}  \norm{ \hat q_1^{\,t} (x,a,\cdot) - q_1^t (x,a,\cdot) }_1$ over $j\in\{ 0,1,\ldots,L-1 \}$. By the recursion,
	\[
	\begin{array}{rcl}
	&&\!\!\!\! \!\!\!\! \!\!
	\displaystyle
	\sum_{\ell\,=\,0}^{L-1}\sum_{x\,\in\,X_\ell}\sum_{a\,\in\,A}\norm{ \hat q_1^{\,t} (x,a,\cdot)- q_1^t (x,a,\cdot)}_1
	\\[0.2cm]
	&\leq&\displaystyle\sum_{j\,=\,0}^{L-1} \sum_{\ell\,=\,0}^{j}\sum_{x\,\in\,X_\ell}\sum_{a\,\in\,A} \pi^t(a\,\vert\,x) \hat d_1^{\,t} (x) \norm{ \hat P_1^{k_1}(\cdot\,\vert\,x,a)-  P_1(\cdot\,\vert\,x,a) }_1.
	\end{array}
	\]
	Finally, we complete the proof by using~\eqref{eq.qdifference_xa}.
\end{proof}

With Lemma~\ref{lem.qdifference} in place, we are ready to prove Lemma~\ref{lem.qqdifference}.

\begin{proof}[Proof of Lemma~\ref{lem.qqdifference}]
	The proof is based on Lemma~\ref{lem.qdifference}. By~\eqref{eq.qdifference1}, 
	\[
	\begin{array}{rcl}
	&& \!\!\!\! \!\!\!\! \!\!
	\displaystyle
	\norm{\hat q_1^{\,t} - q_1^t}_1
	\\[0.2cm]
	&\leq&\displaystyle\sum_{j\,=\,0}^{L-1} \sum_{\ell\,=\,0}^{j}\sum_{x\,\in\,X_\ell}\sum_{a\,\in\,A}  \big(\pi^t(a\,\vert\,x) \hat d_1^{\,t}(x) - \Ind{(x_\ell,a_\ell)=(x,a)} \big)\norm{ \hat P_1^{k_1}(\cdot\,\vert\,x,a)-  P_1(\cdot\,\vert\,x,a) }_1
	\\[0.2cm]
	&&\displaystyle+\sum_{j\,=\,0}^{L-1} \sum_{\ell\,=\,0}^{j}\sum_{x\,\in\,X_\ell}\sum_{a\,\in\,A} \Ind{(x_\ell,a_\ell)=(x,a)}\norm{ \hat P_1^{k_1}(\cdot\,\vert\,x,a)-  P_1(\cdot\,\vert\,x,a) }_1
	\end{array}
	\]
	where $\Ind{\cdot}$ is the indicator function that is $1$ with probability $\pi^t(a\,\vert\,x) \hat d_1^{\,t}(x)$ and $0$ otherwise.
	
	Let $\rho_1^t(x,a) \DefinedAs \Vert{ \hat P_1^{k_1}(\cdot\,\vert\,x,a)-  P_1(\cdot\,\vert\,x,a) }\Vert_1$. Clearly, $\rho_1^t(x,a) \leq 2$. Summing $\norm{\hat q_1^{\,t} - q_1^t}_1$ from $t=0$ to $t=T-1$ leads to,
	\begin{equation}\label{eq.qdifference_T}
	\begin{array}{rcl}
	&&  \!\!\!\! \!\!\!\! \!\!
	\displaystyle\sum_{t\,=\,0}^{T-1} 
	\norm{\hat q_1^{\,t} - q_1^t}_1
	\\[0.2cm]
	&\leq&\displaystyle \sum_{t\,=\,0}^{T-1} \sum_{j\,=\,0}^{L-1} \sum_{\ell\,=\,0}^{j}\sum_{x\,\in\,X_\ell}\sum_{a\,\in\,A}  \big(\pi^t(a\,\vert\,x) \hat d_1^{\,t}(x) - \Ind{(x_\ell,a_\ell)=(x,a)} \big)\rho_1^t(x,a) 
	\\[0.2cm]
	&&\displaystyle+\sum_{t\,=\,0}^{T-1} \sum_{j\,=\,0}^{L-1} \sum_{\ell\,=\,0}^{j}\sum_{x\,\in\,X_\ell}\sum_{a\,\in\,A} \Ind{(x_\ell,a_\ell)=(x,a)}\rho_1^t(x,a) 
	\end{array}
	\end{equation}
	where the layer $\ell$ depends on episode $t$ implicitly. We next apply the martingale concentration and Lemma~\ref{lem.empiricalP} to the right-hand side of~\eqref{eq.qdifference_T}.
	
	Let $\calF_1^{t}$ be an $\sigma$-algebra that is generated by the state-action sequence, reward/utility functions for the min-player up to episode $t$. By the definition of epoch $k_1\DefinedAs k_1^t$, $\rho_1^t(x,a)$ defines over $\calF_{1}^{t-1}$ only and thus,
	\[
	\mathbb{E} \sbr{  	
		\sum_{x\,\in\,X_\ell}\sum_{a\,\in\,A}  \big(\pi^t(a\,\vert\,x) \hat d_1^{\,t}(x) - \Ind{(x_\ell,a_\ell)=(x,a)} \big)\rho_1^t(x,a) \,\Bigg\vert\, \calF_{1}^{t-1}
	}
	\;=\; 0.
	\]
	Meanwhile, 
	it is easy to see that 
	\[
	\begin{array}{rcl}
	&& \displaystyle \abr{  	
		\sum_{x\,\in\,X_\ell}\sum_{a\,\in\,A}  \big(\pi^t(a\,\vert\,x) \hat d_1^{\,t}(x) - \Ind{(x_\ell,a_\ell)=(x,a)} \big)\rho_1^t(x,a) 
	}
	\\[0.2cm]
	&&\displaystyle
	\;\leq\;
	2	\sum_{x\,\in\,X_\ell}\sum_{a\,\in\,A}  \big(\pi^t(a\,\vert\,x) \hat d_1^{\,t}(x) + \Ind{(x_\ell,a_\ell)=(x,a)} \big)
	\end{array}
	\]
	which is bounded by $4$ since the summands are probability distributions. Hence, \\$\sum_{x\,\in\,X_\ell}\sum_{a\,\in\,A}  \big(\pi^t(a\,\vert\,x) \hat d_1^{\,t} (x) - \Ind{(x_\ell,a_\ell)=(x,a)} \big)\rho_1^t(x,a)$ is a martingale difference sequence that adapts to the filtration $\{\calF_{1}^{t}\}_{t\geq0}$. By the Azuma-Hoeffding inequality, with probability $1-\delta/L$ it holds that
	\begin{equation}\label{eq.qmartingale}
	\sum_{t\,=\,0}^{T-1}\sum_{x\,\in\,X_\ell}\sum_{a\,\in\,A}  \big(\pi^t(a\,\vert\,x) \hat d_1^{\,t}(x) - \Ind{(x_\ell,a_\ell)=(x,a)} \big)\rho_1^t(x,a) 
	\;\leq\;
	4 \sqrt{2T \log \frac{L}{\delta}}
	\end{equation}
	where $\delta\in (0,1)$. By the union bound,~\eqref{eq.qmartingale} holds with probability $1-\delta$ for all $\ell\in\{ 0,1,\ldots,L-1 \}$. Thus, with probability $1-\delta$, we have
	\begin{equation}\label{eq.qmartingale_TL}
	\!\!\!\! 
	\sum_{t\,=\,0}^{T-1} \sum_{j\,=\,0}^{L-1} \sum_{\ell\,=\,0}^{j}\sum_{x\,\in\,X_\ell}\sum_{a\,\in\,A}  \big(\pi^t(a\,\vert\,x) \hat d_1^{\,t}(x) - \Ind{(x_\ell,a_\ell)=(x,a)} \big)\rho_1^t(x,a) 
	\;\leq\;
	2L^2 \sqrt{2T \log \frac{L}{\delta}}.
	\end{equation}
	
	For the rest, we apply Lemma~\ref{lem.empiricalP}. By the definition of epoch $k_1\DefinedAs k_1^t$, we have
	$N_1^{k_1^t} (x,a) =\sum_{k\,=\,0}^{k_1^t-1} n_1^{k}(x,a)$.	An application of~Lemma~\ref{lem.series} yields
	\begin{equation}\label{eq.jaksch_lemma19}
	\sum_{k\,=\,1}^{k_1^t} \frac{n_1^{k}(x,a)}{\max(1, \sqrt{ N_1^{k} (x,a)  })} \;\leq\;2\sqrt{ N_1^{k_1^t} (x,a)   }.
	\end{equation}
	We note that 
	$\sum_{x\,\in\,X_\ell}\sum_{a\,\in\,A} \Ind{(x_\ell,a_\ell)=(x,a)}\rho_1^t(x,a) 
	= \Vert{ \hat P_1^{k_1}(\cdot\,\vert\,x_\ell,a_\ell)-  P_1(\cdot\,\vert\,x_\ell,a_\ell) }\Vert_1$. By Lemma~\ref{lem.empiricalP}, with probability $1-\delta$ it holds that
	\[
	\begin{array}{rcl}
	&& \!\!\!\! \!\!\!\! \!\!
	\displaystyle
	\sum_{t\,=\,0}^{T-1} \sum_{j\,=\,0}^{L-1} \sum_{\ell\,=\,0}^{j}\sum_{x\,\in\,X_\ell}\sum_{a\,\in\,A} \Ind{(x_\ell,a_\ell)=(x,a)}\rho_1^t(x,a) 
	\\[0.2cm]
	&=& \displaystyle \sum_{t\,=\,0}^{T-1} \sum_{j\,=\,0}^{L-1} \sum_{\ell\,=\,0}^{j}\Vert{ \hat P_1^{k_1}(\cdot\,\vert\,x_\ell,a_\ell)-  P_1(\cdot\,\vert\,x_\ell,a_\ell) }\Vert_1
	\\[0.2cm]
	&\leq& \displaystyle \sum_{t\,=\,0}^{T-1} \sum_{j\,=\,0}^{L-1} \sum_{\ell\,=\,0}^{j} \sqrt{ \frac{ 2|X_{\ell+1}| \log(T|A||X|/\delta)}{\max (1, N_1^{k_1}(x_\ell,a_\ell)) } }.
	\end{array}
	\]
	By the definition of $N_1^{k_1}\DefinedAs N_1^{k_1^t}$, using~\eqref{eq.jaksch_lemma19} it is convenient to have 
	\[
	\begin{array}{rcl}
	&& \!\!\!\! \!\!\!\! \!\!
	\displaystyle \sum_{t\,=\,0}^{T-1} \sum_{j\,=\,0}^{L-1} \sum_{\ell\,=\,0}^{j} \sqrt{ \frac{ 2|X_{\ell+1}| \log(T|A||X|/\delta)}{\max (1, N_1^{k_1}(x_\ell,a_\ell)) } } 
	\\[0.2cm]
	&\leq&
	\displaystyle \sum_{j\,=\,0}^{L-1} \sum_{\ell\,=\,0}^{j}\sum_{k\,=\,0}^{k_1^T} \sum_{x\,\in\,X_\ell}\sum_{a\,\in\,A} n_1^k(x,a) \sqrt{ \frac{ 2|X_{\ell+1}| \log(T|A||X|/\delta)}{\max (1, N_1^{k}(x,a)) } } 
	\\[0.2cm]
	&\leq& \displaystyle \sum_{j\,=\,0}^{L-1} \sum_{\ell\,=\,0}^{j}\sum_{x\,\in\,X_\ell}\sum_{a\,\in\,A}2 \sqrt{ 2 N_1^{k^T}(x,a) |X_{\ell+1}| \log\frac{T|A||X|}{\delta}}.
	\end{array}
	\]
	Furthermore, we can make the following simplifications. By the Jensen's inequality,
	\[
	\begin{array}{rcl}
	&& \!\!\!\! \!\!\!\! \!\!
	\displaystyle \sum_{x\,\in\,X_\ell}\sum_{a\,\in\,A}2 \sqrt{ 2 N_1^{k^T}(x,a) |X_{\ell+1}| \log\frac{T|A||X|}{\delta} }
	\\[0.2cm]
	&\leq&
	\displaystyle2 \sqrt{ 2\sum_{x\,\in\,X_\ell}\sum_{a\,\in\,A}N_1^{k^T}(x,a) |X_{\ell+1}||X_\ell||A| \log\frac{T|A||X|}{\delta} }.
	\end{array}
	\]
	We also note that $\sum_{x\,\in\,X_\ell}\sum_{a\,\in\,A}N_1^{k^T}(x,a)\leq T$ and $\sqrt{|X_{\ell+1}||X_\ell|}\leq\rbr{|X_{\ell+1}|+|X_\ell|}/2$. Thus,
	\[
	\begin{array}{rcl}
	&& \!\!\!\! \!\!\!\! \!\!
	\displaystyle \sum_{t\,=\,0}^{T-1} \sum_{j\,=\,0}^{L-1} \sum_{\ell\,=\,0}^{j} \sqrt{ \frac{ 2|X_{\ell+1}| \ln(T|A||X|/\delta)}{\max (1, N_1^{k_1}(x_\ell,a_\ell)) } } 
	\\[0.2cm]
	&\leq& \displaystyle \sum_{j\,=\,0}^{L-1} \sum_{\ell\,=\,0}^{j}2 \sqrt{ 2 T |X_{\ell+1}||X_\ell||A| \log\frac{T|A||X|}{\delta}}
	\\[0.2cm]
	&\leq& \displaystyle \sum_{j\,=\,0}^{L-1} \sum_{\ell\,=\,0}^{j}( |X_{\ell+1}|+|X_\ell|) \sqrt{ 2 T|A| \log\frac{T|A||X|}{\delta}}
	\\[0.2cm]
	&\leq& L|X| \sqrt{ 2 T|A|\log\frac{T|A||X|}{\delta}}.
	\end{array}
	\]
	Therefore, with probability $1-\delta$ it holds that
	\begin{equation}\label{eq.qdifferencerho}
	\begin{array}{rcl}
	\displaystyle
	\sum_{t\,=\,0}^{T-1} \sum_{j\,=\,0}^{L-1} \sum_{\ell\,=\,0}^{j}\sum_{x\,\in\,X_\ell}\sum_{a\,\in\,A} \Ind{(x_\ell,a_\ell)=(x,a)}\rho_1^t(x,a) 
	&\leq&L|X| \sqrt{ 2 T|A| \log\frac{T|A||X|}{\delta} }.
	\end{array}
	\end{equation}
	
	Finally, we take a union of~\eqref{eq.qmartingale_TL} and~\eqref{eq.qdifferencerho} and substitute it into~\eqref{eq.qdifference_T} to conclude the proof.
	
\end{proof}

\section{Proof of Lemma~\ref{lem.gap_pd}}
\label{ap.gap_pd}

We first present a basic property of the Kullback-Leibler divergence that generalizes similar properties in the literature~\citep{nemirovski2009robust,tseng2009accelerated,wei2020online} to the convex-concave minimax problems. For this purpose, we set some standard notations. Let $\calX\subset \mathbb{R}^d$ be a convex set with non-empty interior, $\calX^{\text{int}}\neq \emptyset$. Let $\phi$: $\calX\to\mathbb{R}$ be a function that is is continuously differentiable on $\calX^{\text{int}}$. Let $\Delta_x\subset \calX$ be a compact convex set containing the origin. Denote $\Delta_x^o=\Delta\cap\calX^{\text{ int}}$ and let $\Delta_x^o\neq \emptyset$. We define the Kullback-Leibler divergence, $D$: $\Delta_x\times\Delta_x^o\to\mathbb{R}$,
\[
D(x,x') \;\DefinedAs\; \phi(x) - \phi(x') - \langle \nabla\phi(x'), x-x'\rangle.
\]
An interesting case is when $\Delta_x$ becomes a probability simplex.
If $\phi(x)  = \sum_{i\,=\,1}^d (x_i\log x_i -x_i)$, then $D(x,x') =  \sum_{i\,=\,1}^d x_i\log (x_i /x_i') -\sum_{i\,=\,1}^d (x_i- x_i')$ defines the unnormalized Kullback-Leibler divergence~\citep{cover1999elements,boyd2004convex}. This is the setup we will discuss later.

\begin{lemma}\label{lem.pushback}
	Let $f(x,y)$: $\calX\times\calY\to \mathbb{R}$ be a continuous differentiable function that is convex in $x$ and concave in $y$, where $\calX$ and $\calY$ are compact convex sets in $\mathbb{R}^d$. Suppose for some $x' \in\Delta_x^o$ and $y'\in\Delta_y^o$, 
	\[
	(x^\star,y^\star) \;\in\; \argminimax_{x\,\in\,\Delta_x,\,y\,\in\,\Delta_y}\; f(x,y) + \eta^{-1}D(x\,\vert\,x') - \eta^{-1}D(y\,\vert\,y')
	\]
	and $x^\star\in\Delta_x^o$ and $y^\star\in\Delta_y^o$,
	where $\eta>0$. 
	Then, for any $x\in\Delta_x$ and $y\in\Delta_y$,
	\[
	f(x^\star,y) + \eta^{-1} \big(D(x^\star,x') + D(y^\star,y')\big)
	\;\leq\;
	f(x,y^\star) + \eta^{-1} \big( D(x,x') + D(y,y') - D(x,x^\star) - D(y,y^\star) \big).
	\]
\end{lemma}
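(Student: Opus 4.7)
The plan is to decouple the regularized saddle-point problem into two standard one-sided mirror-descent pushback inequalities and sum them. Write $L(x,y) \DefinedAs f(x,y) + \eta^{-1}D(x,x') - \eta^{-1}D(y,y')$. Since $f$ is convex-concave, $D(\cdot,x')$ is convex, and $-D(\cdot,y')$ is concave, $L$ is convex-concave on $\Delta_x\times\Delta_y$; together with the $\argminimax$ hypothesis and the interior conditions $x^\star\in\Delta_x^o$, $y^\star\in\Delta_y^o$, this yields the usual saddle-point property $L(x^\star,y)\leq L(x^\star,y^\star)\leq L(x,y^\star)$, so $x^\star$ minimizes $L(\cdot,y^\star)$ on $\Delta_x$ and $y^\star$ maximizes $L(x^\star,\cdot)$ on $\Delta_y$.

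First I would exploit the $x$-side optimality. The variational inequality on $\Delta_x$ reads, for every $x\in\Delta_x$,
\[
\langle \nabla_x f(x^\star,y^\star) + \eta^{-1}(\nabla\phi(x^\star)-\nabla\phi(x')),\, x - x^\star\rangle \;\geq\; 0.
\]
Replacing $\langle\nabla_x f(x^\star,y^\star), x-x^\star\rangle$ by its lower bound $f(x,y^\star)-f(x^\star,y^\star)$ via convexity of $f$ in $x$, and applying the standard three-point Bregman identity
\[
\langle \nabla\phi(x^\star)-\nabla\phi(x'), x-x^\star\rangle \;=\; D(x,x') - D(x,x^\star) - D(x^\star,x'),
\]
after rearrangement produces
\[
f(x^\star,y^\star) + \eta^{-1}D(x^\star,x') \;\leq\; f(x,y^\star) + \eta^{-1}\big(D(x,x') - D(x,x^\star)\big). \qquad (\ast)
\]

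A mirror-image argument on the $y$-side, using concavity of $f(x^\star,\cdot)$ and the analogous three-point identity for $D(\cdot,y')$ at $y^\star$, yields
\[
f(x^\star,y) + \eta^{-1}D(y^\star,y') \;\leq\; f(x^\star,y^\star) + \eta^{-1}\big(D(y,y') - D(y,y^\star)\big). \qquad (\ast\ast)
\]
Adding $(\ast)$ and $(\ast\ast)$ and canceling the common $f(x^\star,y^\star)$ recovers exactly the claimed inequality. Each individual step is routine; the only point requiring real care is the sign bookkeeping in $(\ast\ast)$, since the $y$-regularizer enters $L$ with a minus sign and $y^\star$ is a maximizer, so the direction of the variational inequality flips and one must verify that the three-point identity is applied so that the resulting $-D(y,y^\star)$ lands on the same side of the inequality as the $-D(x,x^\star)$ term from $(\ast)$. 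The interior hypotheses on $x^\star,y^\star$ are used only to guarantee that $\nabla\phi(x^\star)$ and $\nabla\phi(y^\star)$ exist, which is needed both to write the first-order conditions and to invoke the Bregman three-point identities.
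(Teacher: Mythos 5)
Your proof is correct and follows essentially the same route as the paper's: the first-order variational inequalities at $(x^\star,y^\star)$, the three-point Bregman identity, and convexity/concavity of $f$ give exactly the two one-sided inequalities $(\ast)$ and $(\ast\ast)$ (the paper's \eqref{eq.pushbackx} and \eqref{eq.pushbacky}), which are then summed. The sign bookkeeping on the $y$-side that you flag is handled correctly, so nothing further is needed.
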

\begin{proof}
	For the smooth convex-concave function $f$, it is necessary to have the first-order stationary condition on $(x^\star,y^\star)$. 
	There exist $\nabla_x f(x^\star,y^\star)$ and $\nabla_y(x^\star,y^\star)$ such that
	\begin{subequations}
		\begin{equation}\label{eq.1stx}
		\inner{\nabla_xf(x^\star,y^\star) +\eta^{-1} \big(\phi(x^\star)-\phi(x')\big) }{x-x^\star}\;\geq\;0,\;\text{ for any } x\in\Delta_x
		\end{equation}
		\begin{equation}\label{eq.1sty}
		\inner{-\nabla_yf(x^\star,y^\star) +\eta^{-1} \big(\phi(y^\star)-\phi(y')\big) }{y-y^\star}\;\geq\;0,\;\text{ for any } y\in\Delta_y.
		\end{equation}
	\end{subequations}
	
	By the definition of $D(\cdot\,\vert\,\cdot)$,
	\[
	\begin{array}{rcl}
	&& \!\!\!\! \!\!\!\! \!\! 
	\eta^{-1} \big(D(x,x') - D(x,x^\star)\big)
	\\[0.2cm]
	&=&
	\eta^{-1}\big(
	\phi(x^\star) - \phi(x') - \langle \nabla\phi(x'), x-x'\rangle
	+ \langle \nabla\phi(x^\star), x-x^\star\rangle
	\big)
	\\[0.2cm]
	&=&\eta^{-1}\big(
	\phi(x^\star) - \phi(x') 
	- \eta^{-1} \langle \nabla\phi(x'), x^\star-x'\rangle 	\big)-\langle \nabla_xf(x^\star,y^\star) , x-x^\star\rangle
	\\[0.2cm]
	&&
	\,+\, \langle \nabla_xf(x^\star,y^\star) + \eta^{-1} \big(\nabla\phi(x^\star)-\nabla\phi(x')\big), x-x^\star\rangle
	\\[0.2cm]
	&=&\eta^{-1}D(x^\star,x')-\langle \nabla_xf(x^\star,y^\star) , x-x^\star\rangle
	\\[0.2cm]
	&&
	\,+\, \langle \nabla_xf(x^\star,y^\star) + \eta^{-1} \big(\nabla\phi(x^\star)-\nabla\phi(x')\big), x-x^\star\rangle.
	\end{array}
	\]
	Application of~\eqref{eq.1stx} leads to
	\begin{equation}\label{eq.pushbackx}
	\begin{array}{rcl}
	\eta^{-1} \big(D(x,x') - D(x,x^\star)\big)
	&\geq&
	\eta^{-1}D(x^\star,x')
	-\langle \nabla_xf(x^\star,y^\star) , x-x^\star\rangle
	\\[0.2cm]
	&\geq&
	\eta^{-1}D(x^\star,x')
	+  f(x^\star,y^\star)- f(x,y^\star) 
	\end{array}
	\end{equation}
	where the last inequality is due to the convexity fo $f(x,y^\star)$ in $x$:
	$f(x,y^\star)\geq f(x^\star,y^\star)+\langle\nabla_x f(x^\star,y^\star),x-x^\star\rangle$.
	
	Similarly, we work on $\eta^{-1} \big(D(y,y') - D(y,y^\star)\big)$ and~\eqref{eq.1sty}.
	\begin{equation}\label{eq.pushbacky}
	\begin{array}{rcl}
	\eta^{-1} \big(D(y,y') - D(y,y^\star)\big)
	&\geq&
	\eta^{-1}D(y^\star,y')
	+ f(x^\star,y) -  f(x^\star,y^\star).
	\end{array}
	\end{equation}
	
	Finally, we conclude the proof by adding~\eqref{eq.pushbackx} to~\eqref{eq.pushbacky} from both sides.
\end{proof}

Before the proof of Lemma~\ref{lem.gap_pd}, we next show some useful bounds on the unnormalized Kullback-Leibler divergence.

\begin{lemma}\label{lem.D_lb}
	Let $q(x,a,x')$ and $q'(x,a,x')$ be two occupancy measures, and $q(x,a)$ and $q'(x,a)$ be the associated state-action visitation probability distributions. Then,
	\[
	D(q, q') \;\geq\;\frac{1}{2L} \norm{q-q'}_1^2.
	\]
\end{lemma}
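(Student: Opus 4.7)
\textbf{Proof plan for Lemma~\ref{lem.D_lb}.} The plan is to exploit the loop-free layered structure of the MDP. For every layer $\ell \in \{0,\ldots,L-1\}$, the occupancy measure property $\sum_{x\in X_\ell}\sum_{a\in A}\sum_{x'\in X_{\ell+1}} q(x,a,x') = 1$ tells us that the restriction $q_\ell(x,a,x') \DefinedAs q(x,a,x')\,\mathbf{1}\{x\in X_\ell\}$ is a probability distribution on $X_\ell\times A\times X_{\ell+1}$, and similarly for $q'_\ell$. Because both $q_\ell$ and $q'_\ell$ have total mass one, the correction term $\sum_{i}(p_i-q_i)$ in the unnormalized KL divergence vanishes on each layer, so $D(q_\ell, q'_\ell)$ coincides with the standard KL divergence $\mathrm{KL}(q_\ell\|q'_\ell)$.

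First I would write $D(q,q') = \sum_{\ell=0}^{L-1} D(q_\ell, q'_\ell)$ using the fact that the domains $X_\ell\times A\times X_{\ell+1}$ are disjoint across $\ell$ by loop-freeness. Then Pinsker's inequality applied layer-by-layer gives
\[
D(q_\ell, q'_\ell) \;=\; \mathrm{KL}(q_\ell\|q'_\ell) \;\geq\; \tfrac{1}{2}\,\|q_\ell - q'_\ell\|_1^2,
\]
and summing yields $D(q,q') \geq \tfrac{1}{2}\sum_{\ell=0}^{L-1}\|q_\ell - q'_\ell\|_1^2$.

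The second step is to pass from the sum of squares back to a single squared $\ell_1$-norm. Since the layers are disjoint, $\|q - q'\|_1 = \sum_{\ell=0}^{L-1}\|q_\ell - q'_\ell\|_1$, so by Cauchy--Schwarz (equivalently, the power-mean inequality) applied to the $L$ nonnegative numbers $\|q_\ell - q'_\ell\|_1$,
\[
\sum_{\ell=0}^{L-1}\|q_\ell - q'_\ell\|_1^2 \;\geq\; \tfrac{1}{L}\Big(\sum_{\ell=0}^{L-1}\|q_\ell - q'_\ell\|_1\Big)^{2} \;=\; \tfrac{1}{L}\,\|q - q'\|_1^2.
\]
Combining the two inequalities produces the desired bound $D(q,q') \geq \tfrac{1}{2L}\|q-q'\|_1^2$. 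If the lemma is meant with $\|\cdot\|_1$ of the state-action marginals $q(x,a)$ rather than the triples $q(x,a,x')$, the same argument goes through, since marginalization contracts $\ell_1$-distance and each layer's marginal is still a probability distribution.

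There is no substantial obstacle: the only care needed is in the bookkeeping, namely verifying that the layered decomposition aligns both for $D$ (where one needs equal total masses to invoke Pinsker on the unnormalized KL) and for the $\ell_1$-norm (where disjointness of layers makes it additive). Once these two facts are in place, the estimate is just Pinsker plus Cauchy--Schwarz.
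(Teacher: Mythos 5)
Your proposal is correct and follows essentially the same route as the paper's proof: decompose layer by layer (the unnormalized correction term vanishes because each layer carries unit mass), apply Pinsker's inequality per layer, and use Cauchy--Schwarz over the $L$ layers to obtain the $1/L$ factor. The only difference is that the paper runs this argument directly on the state-action marginals $q(x,a)$ --- the version the lemma actually uses --- rather than on the triples $q(x,a,x')$; as you note, the identical per-layer Pinsker argument applies to the marginals since each layer's marginal is itself a probability distribution (and that is the right way to get the marginal statement, rather than trying to transfer the triple-level bound via $\ell_1$-contraction, since the KL divergence contracts under marginalization as well).
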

\begin{proof}
	We recall  $q(x,a)$ and $q'(x,a)$,
	\[
	q(x,a)\;=\;\sum_{x'\,\in\,X_\ell} q(x,a,x')
	\;\text{ and }\;
	q'(x,a)\;=\;\sum_{x'\,\in\,X_\ell} q'(x,a,x')
	\]
	where $\ell$ is the layer that $x$ belongs to. We note that $q(x,a)$ and $q'(x,a)$ define probability laws for each $\ell\in\{ 0,1,\ldots,L-1 \}$, and  $\sum_{x\,\in\,X_\ell}\sum_{a\,\in\,A}q(x,a) =  \sum_{x\,\in\,X_\ell}\sum_{a\,\in\,A}q'(x,a) = 1$.
	
	By the definition,
	\[
	\begin{array}{rcl}
	D(q, q') 
	& = & 
	\displaystyle
	\sum_{\ell\,=\,0}^{L-1} \sum_{x\,\in\,X_\ell} \sum_{a\,\in\,A} q(x,a) \log\frac{q(x,a)}{q'(x,a)}
	\,-\,
	\sum_{\ell\,=\,0}^{L-1} \sum_{x\,\in\,X_\ell} \sum_{a\,\in\,A}  \big(q(x,a)-q'(x,a)\big)
	\\[0.2cm]
	& = & 
	\displaystyle
	\sum_{\ell\,=\,0}^{L-1} \sum_{x\,\in\,X_\ell} \sum_{a\,\in\,A} q(x,a) \log\frac{q(x,a)}{q'(x,a)}
	\\[0.2cm]
	& \geq & 
	\displaystyle\frac{1}{2}
	\sum_{\ell\,=\,0}^{L-1} \norm{ q(x,a)-q'(x,a)}_1^2
	\\[0.2cm]
	& \geq & 
	\displaystyle\frac{1}{2L}
	\rbr{ \sum_{\ell\,=\,0}^{L-1} \norm{ q(x,a)-q'(x,a)}_1}^2
	\\[0.2cm]
	& = & 
	\displaystyle\frac{1}{2L} \norm{ q-q'}_1^2
	\end{array}
	\]
	where we apply the Pinsker's inequality to $\sum_{x\,\in\,X_\ell} \sum_{a\,\in\,A} q(x,a) \log\frac{q(x,a)}{q'(x,a)}$ in the first inequality.
	
\end{proof}

\begin{lemma}\label{lem.D_difference}
	Let $q(x,a,x')$ and $q'(x,a,x')$ be two occupancy measures, and $q(x,a)$ and $q'(x,a)$ be the associated state-action visitation probability laws. 
	Define $\tilde{q}{\,'}(x,a) = (1-\theta)q'(x,a)+\theta \frac{1}{|X_\ell||A|}$ for $(x,a)\in X_\ell\times A$, $\ell\in\{0,1,\ldots,L-1 \}$, and $\theta\in (0,1]$. Then,
	\[
	D(q,\tilde{q}{\,'}) -  D(q,q') \;\leq\; \theta L\log (|X||A|)
	\;\text{ and }\;
	D(q,\tilde{q}{\,'}) \;\leq\; L\log(|X||A|/\theta).
	\]
\end{lemma}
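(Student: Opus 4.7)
The plan is to reduce both inequalities to elementary manipulations of the unnormalized KL divergence after exploiting two structural features of $\tilde q\,'$: it is a convex combination of $q'$ and the uniform law, and on each layer $\ell$ both $q(\cdot,\cdot)$ and $\tilde q\,'(\cdot,\cdot)$ are probability distributions that sum to $1$. The latter will make the linear correction terms in the unnormalized KL telescope to zero, so effectively $D(q,\tilde q\,') = \sum_\ell \sum_{(x,a)\in X_\ell\times A} q(x,a)\log\frac{q(x,a)}{\tilde q\,'(x,a)}$, and similarly for $D(q,\tilde q\,')-D(q,q')$.

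For the first inequality, I would start from
\[
D(q,\tilde q\,') - D(q, q') \;=\; \sum_{\ell,x,a} q(x,a)\log\frac{q'(x,a)}{\tilde q\,'(x,a)},
\]
after verifying that the residual $\sum_{x\in X_\ell, a\in A}(\tilde q\,'(x,a)-q'(x,a))$ vanishes layer-by-layer. Then I would apply the concavity of $\log$ to $\tilde q\,'(x,a) = (1-\theta)q'(x,a)+\theta/(|X_\ell||A|)$, giving
\[
\log \tilde q\,'(x,a) \;\geq\; (1-\theta)\log q'(x,a) + \theta\log\tfrac{1}{|X_\ell||A|},
\]
so that $\log\frac{q'(x,a)}{\tilde q\,'(x,a)} \leq \theta\log q'(x,a) + \theta\log(|X_\ell||A|) \leq \theta\log(|X_\ell||A|)$ because $q'(x,a)\leq 1$. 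Summing against the probability weights $q(x,a)$ on each layer and then across layers yields the $\theta L \log(|X||A|)$ bound.

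For the second inequality, I would again use that the linear correction terms in $D(q,\tilde q\,')$ vanish per layer, and then use the pointwise lower bound $\tilde q\,'(x,a) \geq \theta/(|X_\ell||A|)$ directly inside the $\log$, together with $q(x,a)\leq 1$, to obtain $\log\frac{q(x,a)}{\tilde q\,'(x,a)} \leq \log(|X_\ell||A|/\theta)$; summing layer by layer gives at most $L\log(|X||A|/\theta)$.

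I do not anticipate any real obstacle — the only mildly delicate step is the bookkeeping that confirms the cancellation of the $-\sum(p_i - q_i)$ terms in the unnormalized KL, which requires noting that both $q$ and $\tilde q\,'$ marginalize to a probability distribution over $X_\ell\times A$ on each layer (for $\tilde q\,'$ this uses that $q'$ is itself an occupancy measure so $\sum_{x\in X_\ell,a\in A}q'(x,a)=1$, and that $\sum_{x\in X_\ell,a\in A}\frac{1}{|X_\ell||A|}=1$). Once this is in place, both bounds follow from a one-line application of concavity of $\log$ and the trivial bound $\tilde q\,'\geq \theta/(|X_\ell||A|)$ respectively.
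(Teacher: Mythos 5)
Your proposal is correct and follows essentially the same route as the paper's proof: reduce both quantities to pure log-ratio sums by noting the linear correction terms in the unnormalized KL cancel layer-by-layer (both $q$, $q'$, and $\tilde q\,'$ have per-layer mass $1$), then apply concavity of $\log$ to the mixture for the first bound and the pointwise bound $\tilde q\,'(x,a)\geq \theta/(|X_\ell||A|)$ together with $q(x,a)\leq 1$ for the second. No gaps; the bookkeeping you flag is exactly the step the paper also relies on.
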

\begin{proof}
	By the definition,
	\[
	\begin{array}{rcl}
	&&  \!\!\!\! \!\!\!\! \!\! 
	D(q,\tilde{q}{\,'}) -  D(q, q') 
	\\[0.2cm]
	& = & 
	\displaystyle
	\sum_{\ell\,=\,0}^{L-1} \sum_{x\,\in\,X_\ell} \sum_{a\,\in\,A} q(x,a) \rbr{\log\frac{q(x,a)}{\tilde q{\,'}(x,a)} -  \log\frac{q(x,a)}{ q'(x,a)}}
	-
	\sum_{\ell\,=\,0}^{L-1} \sum_{x\,\in\,X_\ell} \sum_{a\,\in\,A}  \big(q'(x,a)-\tilde{q}{\,'}(x,a)\big)
	\\[0.2cm]
	& = & 
	\displaystyle
	\sum_{\ell\,=\,0}^{L-1} \sum_{x\,\in\,X_\ell} \sum_{a\,\in\,A} q(x,a) \rbr{\log{ q'(x,a)}  -  \log{\tilde q{\,'}(x,a)} }
	\\[0.2cm]
	& = & 
	\displaystyle
	\sum_{\ell\,=\,0}^{L-1} \sum_{x\,\in\,X_\ell} \sum_{a\,\in\,A} q(x,a) \rbr{\log{ q'(x,a)}  -  \log\rbr{(1-\theta) q'(x,a) + \theta\frac{1}{|X_\ell||A|}} }.
	\end{array}
	\]
	By the Jensen's inequality,	
	\[
	\begin{array}{rcl}
	&& \!\!\!\! \!\!\!\! \!\! 
	D(q,\tilde{q}{\,'}) -  D(q, q') 
	\\[0.2cm]
	& \leq & 
	\displaystyle
	\sum_{\ell\,=\,0}^{L-1} \sum_{x\,\in\,X_\ell} \sum_{a\,\in\,A} q(x,a) \rbr{\log{ q'(x,a)}  -  (1-\theta)\log q'(x,a)  - \theta \log{ \frac{1}{|X_\ell||A|}} }
	\\[0.2cm]
	& = & 
	\displaystyle
	\sum_{\ell\,=\,0}^{L-1} \sum_{x\,\in\,X_\ell} \sum_{a\,\in\,A} \theta q(x,a) \rbr{\log{ q'(x,a)} +  \log{|X_\ell||A|} }
	\\[0.2cm]
	& \leq  & 
	\displaystyle
	\sum_{\ell\,=\,0}^{L-1} \sum_{x\,\in\,X_\ell}  \sum_{a\,\in\,A} \theta q(x,a)  \log{|X_\ell||A|}
	\\[0.2cm]
	& \leq  & 
	\displaystyle \theta L \log |X||A|
	\end{array}
	\]
	where the second inequality is due to that a negative entropy is non-positive. 
	
	We next prove the second inequality. By the definition,
	\[
	\begin{array}{rcl}
	D(q,\tilde{q}{\,'})  
	& = & 
	\displaystyle
	\sum_{\ell\,=\,0}^{L-1} \sum_{x\,\in\,X_\ell} \sum_{a\,\in\,A} q(x,a) {\log\frac{q(x,a)}{\tilde q{\,'}(x,a)} }
	-\sum_{\ell\,=\,0}^{L-1} \sum_{x\,\in\,X_\ell} \sum_{a\,\in\,A}  \big(q(x,a)-\tilde{q}{\,'}(x,a)\big)
	\\[0.2cm]
	& = & 
	\displaystyle
	\sum_{\ell\,=\,0}^{L-1} \sum_{x\,\in\,X_\ell} \sum_{a\,\in\,A} q(x,a) \rbr{\log{ q(x,a)}  -  \log{\tilde q{\,'}(x,a)} }
	\\[0.2cm]
	& = & 
	\displaystyle
	\sum_{\ell\,=\,0}^{L-1} \sum_{x\,\in\,X_\ell} \sum_{a\,\in\,A} q(x,a) \rbr{\log{ q(x,a)}  -  \log\rbr{(1-\theta) q'(x,a) + \theta\frac{1}{|X_\ell||A|}} }
	\\[0.2cm]
	& \leq & 
	\displaystyle
	\sum_{\ell\,=\,0}^{L-1} \sum_{x\,\in\,X_\ell} \sum_{a\,\in\,A} -q(x,a) {  \log\rbr{(1-\theta) q'(x,a) + \theta\frac{1}{|X_\ell||A|}} }
	\end{array}
	\]
	where the last inequality is due to that a negative entropy is non-positive. We note that $-\log (\cdot)$ is a non-increasing function. We can simplify the upper bound on $D(q,\tilde{q}{\,'})$ above by,
	\[
	\begin{array}{rcl}
	D(q,\tilde{q}{\,'}) 
	& \leq & 
	\displaystyle
	\sum_{\ell\,=\,0}^{L-1} \sum_{x\,\in\,X_\ell} \sum_{a\,\in\,A} -q(x,a) {  \log\rbr{\theta\frac{1}{|X_\ell||A|}} }
	\\[0.2cm]
	& = & 
	\displaystyle
	\sum_{\ell\,=\,0}^{L-1} \sum_{x\,\in\,X_\ell} \sum_{a\,\in\,A} q(x,a)  \log{\frac{|X_\ell||A|}{\theta}} 
	\\[0.2cm]
	& \leq  & 
	\displaystyle
	\sum_{\ell\,=\,0}^{L-1} \sum_{x\,\in\,X_\ell}  \sum_{a\,\in\,A} q(x,a)  \log{\frac{|X||A|}{\theta}}
	\\[0.2cm]
	& =  & 
	\displaystyle  L \log \frac{|X||A|}{\theta}.
	\end{array}
	\]
\end{proof}

We now are ready to prove Lemma~\ref{lem.gap_pd}. 

\begin{proof}[Proof of Lemma~\ref{lem.gap_pd}]
	By Lemma~\ref{lem.empiricalP}, with probability $1-\delta$ it holds that 
	\[
	\Delta(P_1) \;\subset\; \cap_{t \,=\,0}^{T-1} \Delta(k_1^t) 
	\;\text{ and }\; 
	\Delta(P_2) \;\subset\; \cap_{t \,=\,0}^{T-1} \Delta(k_2^t).
	\]
	We note that the solution $(q_1^\star,q_2^\star)$ in hindsight to Problem~\eqref{eq.opt_wc} satisfies $q_1^\star\in\Delta(P_1)$ and $q_2^\star\in \Delta(P_2)$. Hence, $q_1^\star\in  \cap_{t \,=\,0}^{T-1} \Delta(k_1^t)  $ and $q_2^\star\in \Delta(P_2) \cap_{t \,=\,0}^{T-1} \Delta(k_2^t)$ with probability $1-\delta$. For episode $t$, we apply Lemma~\ref{lem.pushback} to the primal update~\eqref{eq.primal} with 
	\[
	f(x,y) \vert_{x\,=\,q_1,\, y\,=\,q_2} 
	\;=\;
	V\, \big\langle{q_1\cdot \hat q_2^{\,t-1}+\hat q_1^{\,t-1}\cdot q_2},{r^{t-1}}\big\rangle 
	\,+\,
	\lambda^{t-1} \langle{q_1},{g^{t-1}}\rangle 
	\,-\,\lambda^{t-1} \langle{q_2},{h^{t-1}}\rangle 
	\]
	and $x^\star = \hat q_1^{\,t}$, $y^\star= \hat q_2^{\,t}$, $x' = \tilde q_1^{\,t-1}$, $y'=\tilde q_2^{\,t-1}$, $x = q_1^\star$, and $y=q_2^\star$. Thus, with  probability $1-\delta$ it holds for any $t$ that
	\[
	\begin{array}{rcl}
	&& \!\!\!\! \!\!\!\! \!\! V\, \big\langle{\hat q_1^{\,t}\cdot \hat q_2^{\,t-1}+\hat q_1^{\,t-1}\cdot  q_2^\star},{r^{t-1}}\big\rangle 
	\,+\,
	\lambda^{t-1} \langle{\hat q_1^{\,t}},{g^{t-1}}\rangle 
	\,-\,\lambda^{t-1} \langle{ q_2^\star},{h^{t-1}}\rangle 
	\\[0.2cm]
	&& \!\!\!\! \!\!\!\! \!\! \,+\, \eta^{-1} \big(D(\hat q_1^{\,t},\tilde q_1^{\,t-1}) + D(\hat q_2^{\,t}, \tilde q_2^{\,t-1})\big)
	\\[0.2cm]
	&\leq  & V\, \big\langle{ q_1^\star\cdot \hat q_2^{\,t-1}+\hat q_1^{\,t-1}\cdot \hat q_2^{\,t}},{r^{t-1}}\big\rangle 
	\,+\,
	\lambda^{t-1} \langle{ q_1^\star},{g^{t-1}}\rangle 
	\,-\,\lambda^{t-1} \langle{ \hat q_2^{\,t}},{h^{t-1}}\rangle 
	\\[0.2cm]
	&& \,+\, \eta^{-1} \big(D( q_1^\star,\tilde q_1^{\,t-1}) \,+\, D( q_2^\star, \tilde q_2^{\,t-1}) \,-\, D( q_1^\star,\hat q_1^{\,t}) \,-\, D( q_2^\star, \hat q_2^{\,t}) \big)
	\end{array}
	\]
	or, equivalently,
	\begin{equation}\label{eq.pushback_V}
	\begin{array}{rcl}
	&& \!\!\!\! \!\!\!\! \!\! V\, \big\langle{\hat q_1^{\,t}\cdot \hat q_2^{\,t-1}- \hat q_1^{\,t-1}\cdot \hat q_2^{\,t}},{r^{t-1}}\big\rangle 
	\,+\,
	\lambda^{t-1} \langle{\hat q_1^{\,t}},{g^{t-1}}\rangle 
	\,+\,\lambda^{t-1} \langle{ \hat q_2^{\,t}},{h^{t-1}}\rangle 
	\\[0.2cm]
	&& \!\!\!\! \!\!\!\! \!\! \,+\, \eta^{-1} \big(D(\hat q_1^{\,t}, \tilde q_1^{\,t-1}) + D(\hat q_2^{\,t}, \tilde q_2^{\,t-1})\big)
	\\[0.2cm]
	&\leq&  V\, \big\langle{ q_1^\star\cdot \hat q_2^{\,t-1}-\hat q_1^{\,t-1}\cdot  q_2^\star },{r^{t-1}}\big\rangle 
	\,+\,
	\lambda^{t-1} \langle{ q_1^\star},{g^{t-1}}\rangle 
	\,+\,\lambda^{t-1} \langle{ q_2^\star},{h^{t-1}}\rangle 
	\\[0.2cm]
	&& \,+\, \eta^{-1} \big(D( q_1^\star,\tilde q_1^{\,t-1}) \,+\, D( q_2^\star, \tilde q_2^{\,t-1}) \,-\, D( q_1^\star,\hat q_1^{\,t}) \,-\, D( q_2^\star, \hat q_2^{\,t}) \big).
	\end{array}
	\end{equation}
	
	Let $\Delta^t\DefinedAs \frac{1}{2} \rbr{(\lambda^{t})^2-(\lambda^{t-1})^2}$ be the drift of the consecutive dual updates. Then,
	\begin{equation}\label{eq.drift_Q}
	\begin{array}{rcl}
	\Delta^t &=& \displaystyle 
	\frac{1}{2}\rbr{ ( \lambda^{t} )^2 - ( \lambda^{t-1} )^2 }
	\\[0.2cm]
	&=& \displaystyle 
	\frac{1}{2}\rbr{ \max\!^2 \Big( \lambda^{t-1} \,+\, \big( \langle{\hat q_1^{\,t}},{g^{t-1}}\rangle+\langle{\hat q_2^{\,t}},{h^{t-1}}\rangle-b \big),\; 0 \Big) - ( \lambda^{t-1} )^2 }
	\\[0.2cm]
	&\leq& \displaystyle 
	\lambda^{t-1}\big( \langle{\hat q_1^{\,t}},{g^{t-1}}\rangle+\langle{\hat q_2^{\,t}},{h^{t-1}}\rangle-b \big) \,+\, \frac{1}{2}\big( \langle{\hat q_1^{\,t}},{g^{t-1}}\rangle+\langle{\hat q_2^{\,t}},{h^{t-1}}\rangle-b \big)^2
	\\[0.2cm]
	&\leq& \displaystyle 
	\lambda^{t-1}\big( \langle{\hat q_1^{\,t}},{g^{t-1}}\rangle+\langle{\hat q_2^{\,t}},{h^{t-1}}\rangle-b \big) \,+\, 2L^2
	\end{array}
	\end{equation}
	where the first inequality is due to $\max^2(x,0)\leq x^2$ and we apply $\langle{\hat q_1^{\,t}},{g^{t-1}}\rangle$, $\langle{\hat q_2^{\,t}},{h^{t-1}}\rangle$, $b\in [0,L]$ in the last inequality.
	Adding~\eqref{eq.drift_Q} to~\eqref{eq.pushback_V} from both sides of the inequalities without changing the inequality direction yields
	\begin{equation}\label{eq.pushback_VQ}
	\begin{array}{rcl}
	&& \!\!\!\! \!\!\!\! \!\! 
	V\, \big\langle{\hat q_1^{\,t}\cdot \hat q_2^{\,t-1}- \hat q_1^{\,t-1}\cdot \hat q_2^{\,t}},{r^{t-1}}\big\rangle 
	\,+\,
	\Delta^{t} 
	\,+\, \eta^{-1} \big(D(\hat q_1^{\,t},\tilde q_1^{\,t-1}) + D(\hat q_2^{\,t}, \tilde q_2^{\,t-1})\big)
	\\[0.2cm]
	&\leq& V\, \big\langle{ q_1^\star\cdot \hat q_2^{\,t-1}-\hat q_1^{\,t-1}\cdot  q_2^\star },{r^{t-1}}\big\rangle 
	\,+\,
	\lambda^{t-1} \big(\langle{ q_1^\star},{g^{t-1}}\rangle 
	+ \langle{ q_2^\star},{h^{t-1}}\rangle -b\big)
	\,+\,2L^2
	\\[0.2cm]
	&& \,+\, \eta^{-1} \big(D( q_1^\star,\tilde q_1^{\,t-1}) + D( q_2^\star, \tilde q_2^{\,t-1}) \,-\, D( q_1^\star,\hat q_1^{\,t}) \,-\, D( q_2^\star, \hat q_2^{\,t}) \big).
	\end{array}
	\end{equation}
	However,
	\[
	\begin{array}{rcl}
	&& \!\!\!\! \!\!\!\! \!\! V\, \big\langle{\hat q_1^{\,t}\cdot \hat q_2^{\,t-1}- \hat q_1^{\,t-1}\cdot \hat q_2^{\,t}},{r^{t-1}}\big\rangle 
	\,+\, \eta^{-1} \big(D(\hat q_1^{\,t},\tilde q_1^{\,t-1}) + D(\hat q_2^{\,t}, \tilde q_2^{\,t-1})\big)
	\\[0.2cm]
	&=& V\, \big\langle{\hat q_1^{\,t}\cdot \hat q_2^{\,t-1}- \tilde q_1^{\,t-1}\cdot \hat q_2^{\,t-1}},{r^{t-1}}\big\rangle 
	\,+\,V\, \big\langle{ \tilde q_1^{\,t-1}\cdot \hat q_2^{\,t-1} - \hat q_1^{\,t-1}\cdot \hat q_2^{\,t-1}},{r^{t-1}}\big\rangle 
	\\[0.2cm]
	&&  
	\,+\,V\, \big\langle{ \hat q_1^{\,t-1}\cdot \hat q_2^{\,t-1} - \hat q_1^{\,t-1}\cdot \tilde q_2^{\,t-1}},{r^{t-1}}\big\rangle 
	\,+\, V\, \big\langle{\hat q_1^{\,t-1}\cdot \tilde q_2^{\,t-1}- \hat q_1^{\,t-1}\cdot \hat q_2^{\,t}},{r^{t-1}}\big\rangle 
	\\[0.2cm]
	&&  \,+\, \eta^{-1} D(\hat q_1^{\,t},\tilde q_1^{\,t-1}) \,+\, \eta^{-1} D(\hat q_2^{\,t}, \tilde q_2^{\,t-1})
	\\[0.2cm]
	&\geq&  -\,V\, \norm{\hat q_2^{\,t-1}\cdot r^{t-1} }_\infty \norm{\hat q_1^{\,t} - \tilde q_1^{\,t-1}}_1
	\,-\,V\norm{ \hat q_2^{\,t-1} \cdot r^{t-1} }_\infty\norm{ \tilde q_1^{\,t-1}- \hat q_1^{\,t-1}}_1
	\\[0.2cm]
	&& 
	\,-\,V\,\norm{ \hat q_1^{\,t-1}\cdot  r^{t-1} }_\infty\norm{ \hat q_2^{\,t-1} - \tilde q_2^{\,t-1}}_1
	\,-\, V\,\norm{ \hat q_1^{\,t-1}\cdot r^{t-1} }_\infty\norm{ \tilde q_2^{\,t-1}-\hat q_2^{\,t} }_1
	\\[0.2cm]
	&& \,+\, (2\eta L)^{-1}\norm{\hat q_1^{\,t}-\tilde q_1^{\,t-1}}_1^2 \,+\, (2\eta L)^{-1} \norm{\hat q_2^{\,t} - \tilde q_2^{\,t-1}}_1
	\\[0.2cm]
	&\geq&  -\,V\, \norm{\hat q_1^{\,t} - \tilde q_1^{\,t-1}}_1
	\,-\, 2\theta V L
	\,+\, (2\eta L)^{-1}\norm{\hat q_1^{\,t}-\tilde q_1^{\,t-1}}_1^2
	\\[0.2cm]
	&& 
	\,-\,2\theta VL
	\,-\, V\,\norm{ \tilde q_2^{\,t-1}-\hat q_2^{\,t} }_1
	\,+\, (2\eta L)^{-1} \norm{\hat q_2^{\,t} - \tilde q_2^{\,t-1}}_1
	\\[0.2cm]
	&\geq& \,-\,4\theta V L\,-\,\eta V^2 L
	\end{array}
	\]
	where we apply the H\"older's inequality and Lemma~\ref{lem.D_lb} in the first inequality, the second inequality is due to that
	\[
	\begin{array}{rcl}
	\norm{ \tilde q_1^{\,t-1}- \hat q_1^{\,t-1}}_1 
	&=& \displaystyle\sum_{\ell\,=\,0}^{L-1}\sum_{x\,\in\,X_\ell}\sum_{a\,\in\,A}\abr{ (1-\theta)\hat {q}_1^{\,t-1}(x,a)+\theta\frac{1}{|X_\ell||A|}-\hat q_1^{\,t-1}(x,a) }
	\\[0.2cm]
	&\leq& \displaystyle \theta\sum_{\ell\,=\,0}^{L-1}\sum_{x\,\in\,X_\ell}\sum_{a\,\in\,A} \hat {q}_1^{\,t-1}(x,a)+\theta\sum_{\ell\,=\,0}^{L-1}\sum_{x\,\in\,X_\ell}\sum_{a\,\in\,A}\frac{1}{|X_\ell||A|}
	\\[0.2cm]
	&=& 2\theta L
	\end{array}
	\]
	and $\Vert{ \tilde q_2^{\,t-1}- \hat q_2^{\,t-1}}\Vert_1 \leq 2\theta L$ that can be proved similarly, and the last inequality is due to 
	$-bx+ax^2\geq - b^2/(4a)$ for any $a$, $b > 0$. Therefore, we take the lower bound above for the left-hand side of~\eqref{eq.pushback_VQ},
	\begin{equation}\label{eq.pushback_simplify}
	\begin{array}{rcl}
	&& \!\!\!\!\!\! 
	\Delta^{t} \,-\,4\theta V L\,-\,\eta V^2 L
	\\[0.2cm]
	&& \!\!\!\!\!\! \leq V\, \big\langle{ q_1^\star\cdot \hat q_2^{\,t-1}-\hat q_1^{\,t-1}\cdot  q_2^\star },{r^{t-1}}\big\rangle 
	\,+\,
	\lambda^{t-1} \big(\langle{ q_1^\star},{g^{t-1}}\rangle 
	+ \langle{ q_2^\star},{h^{t-1}}\rangle -b\big)
	\,+\,2L^2
	\\[0.2cm]
	&& \,+\, \eta^{-1} \big(D( q_1^\star,\tilde q_1^{t-1}) + D( q_2^\star, \tilde q_2^{\,t-1}) -D( q_1^\star,\hat q_1^{\,t}) - D( q_2^\star, \hat q_2^{\,t}) \big).
	\end{array}
	\end{equation}
	By Lemma~\ref{lem.D_difference},
	\[
	\begin{array}{rcl}
	D( q_1^\star,\tilde q_1^{\,t-1}) - D( q_1^\star,\hat q_1^{\,t}) &=& D( q_1^\star,\tilde q_1^{\,t-1}) - D( q_1^\star,\hat q_1^{\,t-1})  + D( q_1^\star,\hat q_1^{\,t-1}) - D( q_1^\star,\hat q_1^{\,t}) 
	\\[0.2cm]
	&\leq & \theta L\log (|X||A|)+ D( q_1^\star,\hat q_1^{\,t-1}) - D( q_1^\star,\hat q_1^{\,t}) 
	\end{array}
	\]
	and, similarly, 
	\[
	D( q_2^\star, \tilde q_2^{\,t-1})  - D( q_2^\star, \hat q_2^{\,t}) \;\leq\; \theta L\log (|Y||B|)+ D( q_2^\star,\hat q_2^{\,t-1}) - D( q_2^\star,\hat q_2^{\,t}). 
	\]
	We now simplify~\eqref{eq.pushback_simplify} into
	\[
	\begin{array}{rcl}
	\Delta^{t} 
	&\leq&  V\, \big\langle{ q_1^\star\cdot \hat q_2^{\,t-1}-\hat q_1^{\,t-1}\cdot  q_2^\star },{r^{t-1}}\big\rangle 
	\,+\,
	\lambda^{t-1} \big(\langle{ q_1^\star},{g^{t-1}}\rangle 
	+ \langle{ q_2^\star},{h^{t-1}}\rangle -b\big)
	\\[0.2cm]
	&& \,+\, \eta^{-1} \big(D( q_1^\star,\hat q_1^{\,t-1}) + D( q_2^\star, \hat q_2^{\,t-1}) -D( q_1^\star,\hat q_1^{\,t}) - D( q_2^\star, \hat q_2^{\,t}) \big)
	\\[0.2cm]
	&& \,+\, \eta^{-1}\theta L \big(\log (|X||A|) +\log (|Y||B|)\big)\,+\,2L^2 \,+\,4\theta V L\,+\,\eta V^2 L
	\end{array}
	\]
	which leads to the desired result by summing it up from $t=1$ to $T$,
	\[
	\begin{array}{rcl}
	\displaystyle
	\sum_{t\,=\,1}^{T}\Delta^{t} 
	&\leq& \displaystyle V\sum_{t\,=\,1}^{T} \big\langle{ q_1^\star\cdot \hat q_2^{\,t-1}-\hat q_1^{\,t-1}\cdot  q_2^\star },{r^{t-1}}\big\rangle 
	\,+\,\sum_{t\,=\,1}^{T}
	\lambda^{t-1} \big(\langle{ q_1^\star},{g^{t-1}}\rangle 
	+ \langle{ q_2^\star},{h^{t-1}}\rangle -b\big)
	\\[0.2cm]
	&& \displaystyle\,+\, \eta^{-1} \sum_{t\,=\,1}^{T}\big(D( q_1^\star,\hat q_1^{\,t-1}) + D( q_2^\star, \hat q_2^{\,t-1}) -D( q_1^\star,\hat q_1^{\,t}) - D( q_2^\star, \hat q_2^{\,t}) \big)
	\\[0.2cm]
	&&\displaystyle \,+\, \eta^{-1}\theta L T\big(\log (|X||A|) +\log (|Y||B|)\big)\,+\,2L^2 T\,+\,4\theta V LT\,+\,\eta V^2 LT
	\\[0.2cm]
	&\leq& \displaystyle V\sum_{t\,=\,1}^{T} \big\langle{ q_1^\star\cdot \hat q_2^{\,t-1}-\hat q_1^{\,t-1}\cdot  q_2^\star },{r^{t-1}}\big\rangle 
	\,+\,\sum_{t\,=\,1}^{T}
	\lambda^{t-1} \big(\langle{ q_1^\star},{g^{t-1}}\rangle 
	+ \langle{ q_2^\star},{h^{t-1}}\rangle -b\big)
	\\[0.2cm]
	&& \displaystyle\,+\, \eta^{-1} \big(D( q_1^\star,\hat q_1^{\,0}) + D( q_2^\star, \hat q_2^{\,0}) \big)
	\\[0.2cm]
	&&\displaystyle \,+\, \eta^{-1}\theta L T\big(\log (|X||A|) +\log (|Y||B|)\big)\,+\,2L^2 T\,+\,4\theta V LT\,+\,\eta V^2 LT
	\end{array}
	\]
	which leads to the desired result by noting that 
	\[
	D( q_1^\star,\hat q_1^{\,0})\;\leq\; L\log(|X||A|), \; D( q_2^\star, \hat q_2^{\,0}) \;\leq\; L\log(|Y||B|), \; \text{ and } \; \sum_{t\,=\,1}^{T}\Delta^{t} \; \geq \; 0. 
	\]
\end{proof}

\section{Proofs of Lemma~\ref{lem.lambda} and Lemma~\ref{lem.violation}}
\label{ap.violation}

We first present the boundedness of the dual update $\lambda^t$ in Lemma~\ref{lem.lambda}. Our proof is based on a new drift analysis in Lemma~\ref{lem.drift} that has been established in~\cite{yu2017online} for providing a high probability bound for stochastic processes.

\begin{proof}[Proof of Lemma~\ref{lem.lambda}]
	Let $\calF^{t}$ be an $\sigma$-algebra that is generated by the state-action sequence, reward/utility functions for both players up to episode $t$. At the beginning, $\calF^0=\{\emptyset,\Omega \}$. We have a discrete-time random process $\{\lambda^{t},t\geq0 \}$ that adapts to $\calF^{t}$. 
	It suffices to check all assumptions in Lemma~\ref{lem.drift}. 
	
	By the dual update~\eqref{eq.dual},
	\[
	\begin{array}{rcl}
	\abr{  \lambda^{t+1}  - \lambda^{t} } 
	&=&
	\abr{ \max\! \Big( \lambda^{t} \,+\, \big( \langle{\hat q_1^{\,t+1}},{g^{t}}\rangle+\langle{\hat q_2^{\,t+1}},{h^{t}}\rangle-b \big),\; 0 \Big) - \lambda^{t}  }
	\\[0.2cm]
	&\leq&\abr{ \langle{\hat q_1^{\,t+1}},{g^{t}}\rangle+\langle{\hat q_2^{\,t+1}},{h^{t}}\rangle-b  }
	\\[0.2cm]
	&\leq& 2L
	\end{array}
	\]
	where the first inequality is clear from two cases for $\max(\cdot)$ and the second inequality is due to $\langle{\hat q_1^{\,t+1}},{g^{t}}\rangle$, $\langle{\hat q_2^{\,t+1}},{h^{t}}\rangle\in [0,L]$, $b\in [0,2L]$. Consequently,
	\begin{equation}\label{eq.lambda.diff}
	\lambda^{t+t_0}  - \lambda^{t}  \;=\;\sum_{s\,=\,t}^{t+t_0-1} \big( \lambda^{s+1}  - \lambda^{s} \big)
	\;\leq\;\sum_{s\,=\,t}^{t+t_0-1} \abr{ \lambda^{s+1}  - \lambda^{s} }
	\;\leq\; 2t_0 L
	\end{equation}
	which leads to $\mathbb{E} [ \, \lambda^{t+t_0}  - \lambda^{t} \,\vert\,\calF^t\, ]\leq 2t_0 L$. It is convenient to take $\delta_{\max} = 2L$ in Lemma~\ref{lem.drift}.
	
	We next determine the validity of other assumptions in Lemma~\ref{lem.drift}. Let us denote the event in Lemma~\ref{lem.empiricalP} by $\mathcal{E}_{\text{good}}$ and we have $P(\mathcal{E}_{\text{good}}) \geq 1-\delta$.
	We recall that the proof of Lemma~\ref{lem.gap_pd} remains to be valid if we replace $q_1^\star$ by $\bar q_1$ and $q_2^\star$ by $\bar q_2$ starting from~\eqref{eq.pushback_V}. By doing so, it is ready to obtain a similar result as~\eqref{eq.pushback_simplify}: under the good event $\mathcal{E}_{\text{good}}$ it holds for any $\tau$ that
	\[
	\begin{array}{rcl}
	&& \!\!\!\! \!\!\!\! \!\! 
	\Delta^{\tau} \,-\,4\theta V L\,-\,\eta V^2 L
	\\[0.2cm]
	& \leq & V\, \big\langle{ \bar q_1\cdot \hat q_2^{\,\tau-1}-\hat q_1^{\,\tau-1}\cdot \bar q_2 },{r^{\tau-1}}\big\rangle 
	\,+\,
	\lambda^{\tau-1} \big(\langle{ \bar q_1},{g^{\tau-1}}\rangle 
	+ \langle{ \bar q_2},{h^{\tau-1}}\rangle -b\big)
	\,+\,2L^2
	\\[0.2cm]
	&& \,+\, \eta^{-1} \big(D(\bar q_1,\tilde q_1^{\,\tau-1}) + D( \bar q_2, \tilde q_2^{\,\tau-1}) -D( \bar q_1,\hat q_1^{\,\tau}) - D( \bar q_2, \hat q_2^{\,\tau}) \big)
	\end{array}
	\]
	or, equivalently,
	\begin{equation}\label{eq.pushback_simplify_slater}
	\begin{array}{rcl}
	&& \!\!\!\! \!\!\!\! \!\! 
	(\lambda^\tau)^2 \,-\, (\lambda^{\tau-1})^2
	\\[0.2cm]
	&\leq& 2V\, \big\langle{ \bar q_1\cdot \hat q_2^{\,\tau-1}-\hat q_1^{\,\tau-1}\cdot \bar q_2 },{r^{\tau-1}}\big\rangle 
	\,+\,
	2\lambda^{\tau-1} \big(\langle{ \bar q_1},{g^{\tau-1}}\rangle 
	+ \langle{ \bar q_2},{h^{\tau-1}}\rangle -b\big)
	\,+\,4L^2
	\\[0.2cm]
	&& \,+\, 2\eta^{-1} \big(D(\bar q_1,\tilde q_1^{\,\tau-1}) + D( \bar q_2, \tilde q_2^{\,\tau-1}) -D( \bar q_1,\hat q_1^{\,\tau}) - D( \bar q_2, \hat q_2^{\,\tau}) \big)
	\,+\,8\theta V L\,+\,2\eta V^2 L.
	\end{array}
	\end{equation}
	We note that $ |\langle{ \bar q_1\cdot \hat q_2^{\,\tau}-\hat q_1^{\,\tau}\cdot \bar q_2 },{r^{\tau}}\rangle|\leq L$. By summing both sides of~\eqref{eq.pushback_simplify_slater} from $\tau=t+1$ to $\tau=t+t_0$,
	\[
	\begin{array}{rcl}
	(\lambda^{t+t_0})^2 \,-\, (\lambda^{t})^2
	& \leq & \displaystyle 2t_0VL
	\,+\,\sum_{\tau\,=\,t}^{t+t_0-1} 
	2\lambda^{\tau} \big(\langle{ \bar q_1},{g^{\tau}}\rangle 
	+ \langle{ \bar q_2},{h^{\tau}}\rangle -b\big)	\,+\,4t_0L^2
	\\[0.2cm]
	&& \,+\, 2\eta^{-1} \big(D(\bar q_1,\tilde q_1^{\,t}) + D( \bar q_2, \tilde q_2^{\,t}) \big)
	\,+\,8t_0\theta V L\,+\,2t_0\eta V^2 L
	\end{array}
	\]
	where we omit two non-positive terms.
	Taking the conditional expectation given $\calF^{t}$ and $\mathcal{E}_{\text{good}}$ yields,
	\begin{equation}\label{eq.pushback_slater}
	\begin{array}{rcl}
	&& \!\!\!\! \!\!\!\! \!\! 
	\mathbb{E} \sbr{ (\lambda^{t+t_0})^2 \,-\, (\lambda^{t})^2 \,\vert\, \calF^{t}, \mathcal{E}_{\text{good}} }
	\\[0.2cm]
	& \leq & \displaystyle 2t_0VL
	\,+\,\sum_{\tau\,=\,t}^{t+t_0-1} 
	2 \mathbb{E} \sbr{ \lambda^{\tau} \big(\langle{ \bar q_1},{g^{\tau}}\rangle 
		+ \langle{ \bar q_2},{h^{\tau}}\rangle -b\big) \,\vert\, \calF^{t}, \mathcal{E}_{\text{good}}  }	\,+\,4t_0L^2
	\\[0.2cm]
	&& \,+\, 2\eta^{-1} \mathbb{E} \sbr{ D(\bar q_1,\tilde q_1^{\,t}) + D( \bar q_2, \tilde q_2^{\,t}) \,\vert\, \calF^{t}, \mathcal{E}_{\text{good}}  }
	\,+\,8t_0\theta V L\,+\,2t_0\eta V^2 L
	\\[0.2cm]
	& \leq & \displaystyle 2t_0 VL
	\,-\,2\xi\sum_{\tau\,=\,t}^{t+t_0-1} 
	\mathbb{E}\sbr{ \lambda^{\tau}  \,\vert\, \calF^{t}, \mathcal{E}_{\text{good}} }
	\,+\,4t_0L^2
	\\[0.2cm]
	&& \,+\, 2\eta^{-1} L(\log(|X||A|/\theta)+\log(|Y||B|/\theta))
	\,+\,8t_0\theta V L\,+\,2t_0\eta V^2 L
	\\[0.2cm]
	& \leq & \displaystyle 2t_0 VL
	\,-\,2\xi t_0 \mathbb{E}\sbr{  \lambda^t \,\vert\, \calF^{t}, \mathcal{E}_{\text{good}}  } 
	\,+\,2 \xi t_0(t_0-1)L
	\,+\,4t_0L^2
	\\[0.2cm]
	&& \,+\, 2\eta^{-1} L(\log(|X||A|/\theta)+\log(|Y||B|/\theta))
	\,+\,8t_0\theta V L\,+\,2t_0\eta V^2 L
	\end{array}
	\end{equation}
	where the second inequality is due to Lemma~\ref{lem.D_difference} and the fact: by the law of total expectation, for any $\tau\geq t$, $\calF^{t}\subset\calF^\tau$ and
	\[
	\begin{array}{rcl}
	\mathbb{E} \sbr{ \lambda^{\tau} \big(\langle{ \bar q_1},{g^{\tau}}\rangle 
		+ \langle{ \bar q_2},{h^{\tau}}\rangle -b\big) \,\vert\, \calF^{t}, \mathcal{E}_{\text{good}}  }
	&=&
	\mathbb{E} \sbr{ \mathbb{E} \sbr{ \lambda^{\tau} \big(\langle{ \bar q_1},{g^{\tau}}\rangle 
			+ \langle{ \bar q_2},{h^{\tau}}\rangle -b\big) \,\vert\, \calF^{\tau} } \,\vert\, \calF^{t}, \mathcal{E}_{\text{good}} }
	\\[0.2cm]
	&=&
	\mathbb{E} \sbr{ \lambda^{\tau}  \mathbb{E} \sbr{ \langle{ \bar q_1},{g^{\tau}}\rangle 
			+ \langle{ \bar q_2},{h^{\tau}}\rangle -b} \,\vert\, \calF^{t}, \mathcal{E}_{\text{good}}  }
	\\[0.2cm]
	&=&
	\mathbb{E} \sbr{ \langle{ \bar q_1},{g^{\tau}}\rangle 
		+ \langle{ \bar q_2},{h^{\tau}}\rangle -b } 
	\mathbb{E} \sbr{ \lambda^{\tau}  \,\vert\, \calF^{t}, \mathcal{E}_{\text{good}} }
	\\[0.2cm]
	&\leq & \,-\,\xi \,	\mathbb{E}\sbr{ \lambda^{\tau}  \,\vert\, \calF^{t}, \mathcal{E}_{\text{good}} }
	\end{array}
	\]
	where the inequality is due to the strict feasibility assumption on $(\bar q_1,\bar q_2)$; the last inequality is due to that 
	\[
	\sum_{\tau\,=\,t}^{t+t_0-1} 
	\mathbb{E}\sbr{ \lambda^{\tau}  \,\vert\, \calF^{t}, \mathcal{E}_{\text{good}} } 
	\;\geq\;
	\sum_{\tau\,=\,t}^{t+t_0-1} 
	\mathbb{E}\sbr{  \lambda^t - 2(\tau-t)L \,\vert\, \calF^{t}, \mathcal{E}_{\text{good}} } 
	\;=\;
	\sum_{\tau\,=\,0}^{t_0-1} 
	\mathbb{E}\sbr{  \lambda^t - 2\tau L \,\vert\, \calF^{t}, \mathcal{E}_{\text{good}} } 
	\]
	which follows the fact $\lambda^\tau\geq \lambda^t - 2(\tau-t)L$ for any $\tau\geq t\geq 0$ if we note that $|\lambda^{t+1}  - \lambda^{t} | \leq 2L$.
	Hence, we can simplify~\eqref{eq.pushback_slater} as
	\[
	\begin{array}{rcl}
	&& \!\!\!\! \!\!\!\! \!\! \mathbb{E} \sbr{ (\lambda^{t+t_0})^2 \,\vert\, \calF^{t}, \mathcal{E}_{\text{good}} }
	\\[0.2cm]
	& \leq & \displaystyle \mathbb{E} \sbr{ (\lambda^{t})^2\,\vert\, \calF^{t}, \mathcal{E}_{\text{good}} }
	\,-\,2\xi t_0 \mathbb{E} \sbr{\lambda^t \,\vert\, \calF^{t}, \mathcal{E}_{\text{good}} }
	\,+\,2 \xi t_0^2L
	\,+\,4t_0L^2
	\,+\, 2t_0 VL
	\\[0.2cm]
	&& \,+\, 2\eta^{-1} L(\log(|X||A|/\theta)+\log(|Y||B|/\theta))
	\,+\,8t_0\theta V L\,+\,2t_0\eta V^2 L
	\\[0.2cm]
	& \leq & \displaystyle \mathbb{E} \sbr{(\lambda^{t})^2\,\vert\, \calF^{t}, \mathcal{E}_{\text{good}} }
	\,-\,\xi t_0 \mathbb{E} \sbr{\lambda^t \,\vert\, \calF^{t}, \mathcal{E}_{\text{good}} }
	\,-\,\xi t_0\Theta
	\,+\,2 \xi t_0^2L
	\,+\,4t_0L^2
	\,+\, 2t_0 VL
	\\[0.2cm]
	&& \,+\, 2\eta^{-1} L(\log(|X||A|/\theta)+\log(|Y||B|/\theta))
	\,+\,8t_0\theta V L\,+\,2t_0\eta V^2 L
	\\[0.2cm]
	& = & \displaystyle \mathbb{E} \sbr{ (\lambda^{t})^2\,\vert\, \calF^{t}, \mathcal{E}_{\text{good}} }
	\,-\,\xi t_0 \mathbb{E} \sbr{\lambda^t\,\vert\, \calF^{t}, \mathcal{E}_{\text{good}} } \,-\,\frac{1}{2} \xi^2 t_0^2
	\\[0.2cm]
	& \leq & \displaystyle
	\rbr{ \mathbb{E} \sbr{\lambda^{t}\,\vert\, \calF^{t}, \mathcal{E}_{\text{good}} }- \frac{1}{2}\xi t_0 }^2
	\end{array}
	\]
	where we apply $\lambda^t\geq \Theta$ for the second inequality and we take $\Theta$ in Lemma~\ref{lem.drift},
	\[
	\Theta \;=\;
	\frac{1}{2}\xi t_0 +2 t_0L
	\,+\,\frac{4L^2 +8\theta V L+2\eta V^2 L
		+2 VL}{\xi}\,+\, \frac{2L(\log(|X||A|/\theta)+\log(|Y||B|/\theta)) }{t_0\xi\eta}.
	\]
	Taking the square root and applying the Jensen's inequality yield
	\[
	\mathbb{E} \sbr{ \lambda^{t+t_0} \,\vert\, \calF^{t}, \mathcal{E}_{\text{good}} }
	\;\leq\;
	\sqrt{\mathbb{E} \sbr{ (\lambda^{t+t_0})^2 \,\vert\, \calF^{t}, \mathcal{E}_{\text{good}} } }
	\;\leq\; 
	\mathbb{E} \sbr{\lambda^{t} \,\vert\, \calF^{t}, \mathcal{E}_{\text{good}} }- \frac{1}{2}\xi t_0
	\]
	which shows that $\mathbb{E} \sbr{ \lambda^{t+t_0} -\lambda^{t}\,\vert\, \calF^{t}, \mathcal{E}_{\text{good}} } \leq - \frac{1}{2}\xi t_0$. 
	Application of law of total expectation to this inequality and~\eqref{eq.lambda.diff} with $\delta < \frac{1}{12}$ yields
	\[
	\begin{array}{rcl}
	\mathbb{E} \sbr{ \lambda^{t+t_0} -\lambda^{t}\,\vert\, \calF^{t} } 
	& = &
	P(\mathcal{E}_{\text{good}} )
	\mathbb{E} \sbr{ \lambda^{t+t_0} -\lambda^{t}\,\vert\, \calF^{t}, \mathcal{E}_{\text{good}} }
	\,+\,
	P(\bar{\mathcal{E}}_{\text{good}})
	\mathbb{E} \sbr{ \lambda^{t+t_0} -\lambda^{t}\,\vert\, \calF^{t}, \bar{\mathcal{E}}_{\text{good}} }
	\\[0.2cm]
	& \leq & - \dfrac{1}{2}\xi t_0 \times(1-\delta) \,+\,  2t_0L\times \delta
	\\[0.2cm]
	& \leq & - \dfrac{1}{4}\xi t_0
	\end{array}
	\]
	which verifies the assumption of Lemma~\ref{lem.drift} if we take $\zeta=\xi/4$.
	
	We now have verified all assumptions of Lemma~\ref{lem.drift} with appropriate parameters $\Theta$, $\delta_{\max}$, $\zeta$. For episode $t$, with probability $1-\delta$ it holds that 
	\[
	\begin{array}{rcl}
	\lambda^t & \leq & \displaystyle \Theta + t_0 \delta_{\max} + t_0 \frac{4\delta_{\max}^2}{\zeta} \log \rbr{\frac{8\delta_{\max}^2}{\zeta}} + t_0 \frac{4\delta_{\max}^2}{\zeta} \log\frac{1}{\delta}.
	\end{array}
	\]
	We complete the proof by taking a union bound over $t = 1,\cdots,T$.
\end{proof}

With Lemma~\ref{lem.lambda} in place, we are ready to prove Lemma~\ref{lem.violation}.

\begin{proof}[Proof of Lemma~\ref{lem.violation}]
	
	Let $Z^t \DefinedAs \sum_{\tau\,=\,0}^{t-1}
	\lambda^{\tau} \big(\langle{ q_1^\star},{g^{\tau}}\rangle 
	+ \langle{ q_2^\star},{h^{\tau}}\rangle -b\big)$. We note that 
	\[
	\begin{array}{rcl}
	&& \!\!\!\! \!\!\!\! \!\!
	\mathbb{E} \sbr{Z^t \,\vert\,\calF^{t-1}} 
	\\[0.2cm]
	&=& \displaystyle
	\mathbb{E} \sbr{ \sum_{\tau\,=\,0}^{t-1}
		\lambda^{\tau} \big(\langle{ q_1^\star},{g^{\tau}}\rangle 
		+ \langle{ q_2^\star},{h^{\tau}}\rangle -b\big)\,\bigg\vert\,\calF^{t-1}}
	\\[0.2cm]
	&=&  \displaystyle
	\mathbb{E} \sbr{ \sum_{\tau\,=\,0}^{t-2}
		\lambda^{\tau} \big(\langle{ q_1^\star},{g^{\tau}}\rangle 
		+ \langle{ q_2^\star},{h^{\tau}}\rangle -b\big)\,\bigg\vert\,\calF^{t-1}}
	\,+\, 
	\lambda^{t-1} \mathbb{E} \sbr{\big(\langle{ q_1^\star},{g^{t-1}}\rangle 
		+ \langle{ q_2^\star},{h^{t-1}}\rangle -b\big)\,\vert\,\calF^{t-1}}
	\\[0.2cm]
	&\leq&  \displaystyle
	\mathbb{E} \sbr{ \sum_{\tau\,=\,0}^{t-2}
		\lambda^{\tau} \big(\langle{ q_1^\star},{g^{\tau}}\rangle 
		+ \langle{ q_2^\star},{h^{\tau}}\rangle -b\big)\,\bigg\vert\,\calF^{t-1}}
	\\[0.4cm]
	&=&
	\mathbb{E} \sbr{Z^{t-1}} 
	\end{array}
	\]
	where the inequality is because of $\mathbb{E} \sbr{\big(\langle{ q_1^\star},{g^{t-1}}\rangle 
		+ \langle{ q_2^\star},{h^{t-1}}\rangle -b\big)\,\vert\,\calF^{t-1}} = \langle{ q_1^\star},{g}\rangle 
	+ \langle{ q_2^\star},{h}\rangle -b\leq 0$. Hence,  $\{Z^t,t\geq0 \}$ a supermartingale. 
	
	We also note that
	$\abr{Z^{t+1}-Z^{t}} 
	= \lambda^{t} \abr{\langle{ q_1^\star},{g^{t}}\rangle 
		+ \langle{ q_2^\star},{h^{t}}\rangle -b}
	\leq 2\lambda^{t} L$.
	Thus, if $|Z^{t+1}-Z^t|> c$ for some $c\in\mathbb{R}^+$, then $\lambda^t> c/(2L)$. Let $Y^t \DefinedAs \lambda^t - c/(2L)$. Therefore,
	\[
	\{ |Z^{t+1}-Z^t|> c \}  \;\subset\;  \{ Y^t > 0 \}.
	\]
	By Lemma~\ref{lem.azuma_general},
	\begin{equation}\label{eq.azuma_dual}
	P \rbr{ \sum_{t\,=\,0}^{T-1}
		\lambda^{t} \big(\langle{ q_1^\star},{g^{t}}\rangle 
		+ \langle{ q_2^\star},{h^{t}}\rangle -b\big)\geq z } \;\leq\; {\rm e}^{-z^2/ (2c^2T)} \,+\, \sum_{\tau\,=\,0}^{T-1} P \rbr{\lambda^t>\frac{c}{2L}}.
	\end{equation}
	By Lemma~\ref{lem.lambda}, with probability $1-\delta$ it holds for any $t$ that
	\[
	\lambda^t \; \leq\; \displaystyle \Theta + 2 t_0L+ t_0 \frac{64 L^2}{\xi} \log \rbr{\frac{128 L^2}{\xi}} + t_0 \frac{64 L^2}{\xi} \log\frac{1}{\delta}
	\]
	or, equivalently,
	\[
	P \rbr{
		\lambda^t \;\geq\; \displaystyle \Theta + 2 t_0L+ t_0 \frac{64 L^2}{\xi} \log \rbr{\frac{128 L^2}{\xi}} + t_0 \frac{64 L^2}{\xi} \log\frac{1}{\delta}
	} \;\leq\; \delta.
	\]
	If we take 
	\[
	c\;=\;  2\Theta L + 4 t_0 L^2+ t_0 \frac{128 L^3}{\xi} \log \rbr{\frac{128 L^2}{\xi}} + t_0 \frac{128 L^3}{\xi} \log\frac{1}{\delta}
	\;\text{ and }\;
	z\;=\;\sqrt{2T c^2 \log (1/(\delta T))}
	\]
	then~\eqref{eq.azuma_dual} becomes
	\[
	P \rbr{ \sum_{t\,=\,0}^{T-1}
		\lambda^{t} \big(\langle{ q_1^\star},{g^{t}}\rangle 
		+ \langle{ q_2^\star},{h^{t}}\rangle -b\big)\geq z } \;\leq\; 2\delta  T
	\]
	which proves the desired result.
\end{proof}

\section{Proof of Theorem~\ref{thm.violation_hat}}
\label{ap.violation_hat}

By the dual update~\eqref{eq.dual},
\begin{equation}\label{eq.lambda_lb}
\begin{array}{rcl}
\lambda^{t} &=& \max\! \Big( \lambda^{t-1} \,+\, \big(\inner{\hat q_1^{\,t}}{g^{t-1}} +\inner{\hat q_2^{\,t}}{h^{t-1}}-b\, \big),\; 0 \Big)
\\[0.2cm]
&\geq &  \lambda^{t-1} \,+\, \big(\inner{\hat q_1^{\,t}}{g^{t-1}} +\inner{\hat q_2^{\,t}}{h^{t-1}}-b\, \big)
\\[0.2cm]
&= &  \lambda^{t-1} \,+\, \big(\inner{\hat q_1^{\,t-1}}{g^{t-1}} +\inner{\hat q_2^{\,t-1}}{h^{t-1}}-b\, \big)\,+\, \inner{\hat q_1^{\,t} - \hat q_1^{\,t-1}}{g^{t-1}} \,+\,\inner{\hat q_2^{\,t}-\hat q_2^{\,t-1}}{h^{t-1}}
\\[0.2cm]
&\geq &  \lambda^{t-1} \,+\, \big(\inner{\hat q_1^{\,t-1}}{g^{t-1}} +\inner{\hat q_2^{\,t-1}}{h^{t-1}}-b\, \big)\,-\, \norm{\hat q_1^{\,t} - \hat q_1^{\,t-1}}_1 \,-\,\norm{\hat q_2^{\,t}-\hat q_2^{\,t-1}}_1
\end{array}
\end{equation}
where the last inequality is due to: $\langle{\hat q_1^{\,t} - \hat q_1^{\,t-1}},{g^{t-1}}\rangle\leq \Vert \hat q_1^{\,t} - \hat q_1^{\,t-1}\Vert_1 \Vert g^{t-1}\Vert_\infty$, $\langle{\hat q_2^{\,t} - \hat q_2^{\,t-1}},{h^{t-1}}\rangle\leq \Vert \hat q_2^{\,t} - \hat q_2^{\,t-1}\Vert_1 \Vert h^{t-1}\Vert_\infty$, and $\Vert g^{t-1}\Vert_\infty$, $\Vert h^{t-1}\Vert_\infty\in [0,1]$. We note that $\lambda^0=0$ from the initialization. Summing up both sides of~\eqref{eq.lambda_lb} from $t=1$ to $t=T$ leads to
\begin{equation}\label{eq.violation_dqq}
\sum_{t\,=\,0}^{T-1} \big(\inner{\hat q_1^{\,t}}{g^{t}} +\inner{\hat q_2^{\,t}}{h^{t}}-b\, \big)
\;\leq\; 
\lambda^T
\,+\, 
\sum_{t\,=\,1}^{T} \rbr{
	\norm{\hat q_1^{\,t} - \hat q_1^{\,t-1}}_1 
	+\norm{\hat q_2^{\,t}-\hat q_2^{\,t-1}}_1}.
\end{equation}
We recall $ \hat q_1^{\,t}\in \Delta(k_1^t)$, $ \hat q_2^{\,t}\in \Delta(k_2^t)$ in the primal update~\eqref{eq.primal} and $\Delta(k_1^t)$ and $\Delta(k_2^t)$ in the confidence sets~\eqref{eq.confidence}.
To bound $\norm{\hat q_1^{\,t} - \hat q_1^{\,t-1}}_1 
+\norm{\hat q_2^{\,t}-\hat q_2^{\,t-1}}_1$, we consider two cases: (i) $k_1^t = k_1^{t-1}$ and $k_2^t = k_2^{t-1}$; (ii) either $k_1^t \neq k_1^{t-1}$ or $k_2^t \neq k_2^{t-1}$.

\noindent\textbf{Case~(i)}.	In this case, we have: $ \hat q_1^{\,t}$, $\hat q_1^{\,t-1}\in \Delta(k_1^t)$, $ \hat q_2^{\,t}$, $\hat q_2^{\,t-1}\in \Delta(k_2^t)$. 
We begin with the primal update~\eqref{eq.primal} and apply Lemma~\ref{lem.pushback} with, 
\[
f(x,y) \vert_{x\,=\,q_1,\, y\,=\,q_2} 
\;=\;
V\, \big\langle{q_1\cdot \hat q_2^{\,t-1}+\hat q_1^{\,t-1}\cdot q_2},{r^{t-1}}\big\rangle 
\,+\,
\lambda^{t-1} \langle{q_1},{g^{t-1}}\rangle 
\,-\,\lambda^{t-1} \langle{q_2},{h^{t-1}}\rangle 
\]
and $x^\star = \hat q_1^{\,t}$, $y^\star= \hat q_2^{\,t}$, $x' = \tilde q_1^{\,t-1}$, $y'=\tilde q_2^{\,t-1}$, $x = \tilde q_1^{\,t-1}$, and $y=\tilde q_2^{\,t-1}$. Thus, 
\[
\begin{array}{rcl}
&& \!\!\!\! \!\!\!\! \!\!
V\, \big\langle{\hat q_1^{\,t}\cdot \hat q_2^{\,t-1}+\hat q_1^{\,t-1}\cdot  \tilde q_2^{\,t-1}},{r^{t-1}}\big\rangle 
\,+\,
\lambda^{t-1} \langle{\hat q_1^{\,t}},{g^{t-1}}\rangle 
\,-\,\lambda^{t-1} \langle{ \tilde q_2^{\,t-1}},{h^{t-1}}\rangle 
\\[0.2cm]
&& 
\,+\, \eta^{-1} \big(D(\hat q_1^{\,t},\tilde q_1^{\,t-1}) + D(\hat q_2^{\,t}, \tilde q_2^{\,t-1})\big)
\\[0.2cm]
&\leq& V\, \big\langle{ \tilde q_1^{\,t-1}\cdot \hat q_2^{\,t-1}+\hat q_1^{\,t-1}\cdot \hat q_2^{\,t}},{r^{t-1}}\big\rangle 
\,+\,
\lambda^{t-1} \langle{ \tilde q_1^{\,t-1}},{g^{t-1}}\rangle 
\,-\,\lambda^{t-1} \langle{ \hat q_2^{\,t}},{h^{t-1}}\rangle 
\\[0.2cm]
&&
\,-\, \eta^{-1} \big(D( \tilde q_1^{\,t-1},\hat q_1^{\,t}) + D( \tilde q_2^{\,t-1}, \hat q_2^{\,t}) \big).
\end{array}
\]
or, equivalently,
\begin{equation}\label{eq.pushback_I}
\begin{array}{rcl}
&& \!\!\!\! \!\!\!\!  \!\! 
\eta^{-1} \big(D(\hat q_1^{\,t},\tilde q_1^{\,t-1}) + D(\hat q_2^{\,t}, \tilde q_2^{\,t-1})\big)
\,+\,
\eta^{-1} \big(D( \tilde q_1^{\,t-1},\hat q_1^{\,t}) + D( \tilde q_2^{\,t-1}, \hat q_2^{\,t}) \big)
\\[0.2cm]
&\leq&  V\, \big\langle{ (\tilde q_1^{\,t-1} -\hat q_1^{\,t} )\cdot \hat q_2^{\,t-1}+\hat q_1^{\,t-1}\cdot (\hat q_2^{\,t} - \tilde q_2^{\,t-1} )},{r^{t-1}}\big\rangle 
\\[0.2cm]
&& 
\,+\,
\lambda^{t-1} \langle{ \tilde q_1^{\,t-1} - \hat q_1^{\,t} },{g^{t-1}}\rangle 
\,+\,\lambda^{t-1} \langle{ \tilde q_2^{\,t-1} - \hat q_2^{\,t}},{h^{t-1}}\rangle.
\end{array}
\end{equation}
We note that $\langle{ (\tilde q_1^{\,t-1} -\hat q_1^{\,t} )\cdot \hat q_2^{\,t-1}},{r^{t-1}}\rangle \leq \Vert (\tilde q_1^{\,t-1} -\hat q_1^{\,t} )\cdot \hat q_2^{\,t-1}\Vert_1\Vert{r^{t-1}}\Vert_\infty\leq \Vert\tilde q_1^{\,t-1} -\hat q_1^{\,t} \Vert_1$, and, similarly, $\langle\hat q_1^{\,t-1}\cdot (\hat q_2^{\,t} - \tilde q_2^{\,t-1} ),{r^{t-1}}\rangle\leq\Vert\hat q_2^{\,t} - \tilde q_2^{\,t-1}\Vert_1$. Thus, we can reduce~\eqref{eq.pushback_I} into
\[
\begin{array}{rcl}
&& \!\!\!\! \!\!\!\! \!\! 
\eta^{-1} \big(D(\hat q_1^{\,t},\tilde q_1^{\,t-1}) + D(\hat q_2^{\,t}, \tilde q_2^{\,t-1})\big)
\,+\,
\eta^{-1} \big(D( \tilde q_1^{\,t-1},\hat q_1^{\,t}) + D( \tilde q_2^{\,t-1}, \hat q_2^{\,t}) \big)
\\[0.2cm]
&\leq&
(V+\lambda^{t-1}) 
\rbr{
	\norm{ \tilde q_1^{\,t-1} - \hat q_1^{\,t} }_1+
	\norm{ \tilde q_2^{\,t-1} - \hat q_2^{\,t}}_1
}
\end{array}
\]
where the left-hand side can be lower bounded by Lemma~\ref{lem.D_lb}, 
\[
D(\hat q_1^{\,t},\tilde q_1^{\,t-1})+D( \tilde q_1^{\,t-1},\hat q_1^{\,t}) \;\geq\; L^{-1}\norm{ \tilde q_1^{\,t-1}-\hat q_1^{\,t} }_1^2
\]
\[
D(\hat q_2^{\,t},\tilde q_2^{\,t-1})+D( \tilde q_2^{\,t-1},\hat q_2^{\,t}) \;\geq\; L^{-1}\norm{ \tilde q_2^{\,t-1}-\hat q_2^{\,t} }_1^2.
\]
Then, we apply the inequality $(x+y)^2\leq2(x^2+y^2) $ and cancel a non-negative term to obtain
\begin{equation}\label{eq.qq_dual}
\norm{ \tilde q_1^{\,t-1}-\hat q_1^{\,t} }_1 + \norm{ \tilde q_2^{\,t-1}-\hat q_2^{\,t} }_1
\;\leq\;
2\eta  L (V+\lambda^{t-1}).
\end{equation}
By the definition of $\tilde q_1^{\,t-1}$ and $\tilde q_2^{\,t-1}$,
\[
\begin{array}{rcl}
\norm{ \tilde q_1^{\,t-1}-\hat q_1^{\,t} }_1 
&=&\displaystyle
\sum_{\ell\,=\,0}^{L-1}\sum_{x\,\in\,X_\ell} \sum_{a\,\in\,A}
\abr{
	(1-\theta)\hat {q}_1^{\,t-1}(x,a)+\theta\frac{1}{|X_\ell||A|} - \hat q_1^{\,t}(x,a)
}
\\[0.2cm]
&\geq&\displaystyle
\sum_{\ell\,=\,0}^{L-1}\sum_{x\,\in\,X_\ell} \sum_{a\,\in\,A}
\rbr{
	(1-\theta)  \abr{
		\hat {q}_1^{\,t-1}(x,a)- \hat q_1^{\,t}(x,a)
	} -\theta \rbr{\frac{1}{|X_\ell||A|} + \hat q_1^{\,t}(x,a)}
}
\\[0.2cm]
&=& (1-\theta) \norm{\hat {q}_1^{\,t-1}-\hat q_1^{\,t}}_1 -2\theta L.
\end{array}
\]
Similarly, we have $\Vert{ \tilde q_2^{\,t-1}-\hat q_2^{\,t} }\Vert_1 \leq (1-\theta) \Vert{\hat {q}_2^{\,t-1}-\hat q_2^{\,t}}\Vert_1 -2\theta L$. Thus, we can further reduce~\eqref{eq.qq_dual} into
\begin{equation}\label{eq.case(i)}
\norm{\hat {q}_1^{\,t-1}-\hat q_1^{\,t}}_1 +\Vert{\hat {q}_2^{\,t-1}-\hat q_2^{\,t}}\Vert_1 
\;\leq\;
2\eta (1-\theta)^{-1} L (V+\lambda^{t-1}) + 4 \theta (1-\theta)^{-1}  L.
\end{equation}

\noindent\textbf{Case~(ii)}. In this case, either $\hat q_1^{\,t}$, $\hat q_1^{\,t-1}$ or $ \hat q_2^{\,t}$, $\hat q_2^{\,t-1}$ might not have the same domain. For instance, when $k_1^t>k_1^{t-1}$, it is possible that $\Delta(k_1^{t})$ becomes different from $\Delta(k_1^{t-1})$. We note that $k_1^t>k_1^{t-1}$ only happens when episode $t$ is the first one that belongs to epoch $k_1^t$. By Lemma~\ref{lem.epoch}, $k_1^T \leq \sqrt{ T|X||A|}\log (8T/(|X||A|))$ and  $k_2^T \leq \sqrt{ T|Y||B|}\log (8T/(|Y||B|))$ if we are given $T\geq \max(|X||A|, |Y||B|)$.

We now combine two cases above for~\eqref{eq.violation_dqq},
\[
\begin{array}{rcl}
&& \!\!\!\! \!\!\!\! \!\!\!\!
\displaystyle
\sum_{t\,=\,1}^{T} \rbr{
	\norm{\hat q_1^{\,t} - \hat q_1^{\,t-1}}_1 
	+\norm{\hat q_2^{\,t}-\hat q_2^{\,t-1}}_1}
\\[0.2cm]
&=& \displaystyle
\sum_{\substack{1\,\leq\,t\,\leq\,T\\k_1^t \,=\, k_1^{k-1}\,\wedge\, k_2^t \,=\, k_2^{k-1}}}\rbr{
	\norm{\hat q_1^{\,t} - \hat q_1^{\,t-1}}_1 
	+\norm{\hat q_2^{\,t}-\hat q_2^{\,t-1}}_1}
\\[0.2cm]
&&\displaystyle
\,+\,
\sum_{\substack{1\,\leq\,t\,\leq\,T\\k_1^t \,=\, k_1^{k-1}\,\vee\, k_2^t \,=\, k_2^{k-1}}}\rbr{
	\norm{\hat q_1^{\,t} - \hat q_1^{\,t-1}}_1 
	+\norm{\hat q_2^{\,t}-\hat q_2^{\,t-1}}_1}
\\[0.2cm]
&\leq& \displaystyle
\sum_{\substack{1\,\leq\,t\,\leq\,T\\k_1^t \,=\, k_1^{k-1}\,\wedge\, k_2^t \,=\, k_2^{k-1}}}\rbr{
	\norm{\hat q_1^{\,t} - \hat q_1^{\,t-1}}_1 
	+\norm{\hat q_2^{\,t}-\hat q_2^{\,t-1}}_1}
\,+\,2L (k_1^T+k_2^T)
\\[0.2cm]
&\leq&\displaystyle
2\eta (1-\theta)^{-1} L\sum_{t\,=\,1}^{T} (V+\lambda^{t-1}) + 4 \theta (1-\theta)^{-1}  LT +2L (k_1^T+k_2^T)
\end{array}
\]
where the first inequality is due to: $\norm{\hat q_1^{\,t} - \hat q_1^{\,t-1}}_1 \leq 2L$ and $\norm{\hat q_2^{\,t}-\hat q_2^{\,t-1}}_1\leq 2L$, and we apply~\eqref{eq.case(i)} from the case (i) for the last inequality. Using the bounds on $k_1^T$, $k_2^T$ in the case (ii), we conclude the desired bound for~\eqref{eq.violation_dqq},
\[
\begin{array}{rcl}
&& \!\!\!\! \!\!\!\! \!\!
\displaystyle\sum_{t\,=\,0}^{T-1} \big(\inner{\hat q_1^{\,t}}{g^{t}} +\inner{\hat q_2^{\,t}}{h^{t}}-b\, \big)
\\[0.2cm]
&\leq&\displaystyle
\lambda^T
\,+\, 
\frac{2\eta L}{1-\theta}
\sum_{t\,=\,1}^{T} \lambda^{t-1}+ \frac{2\eta V+4 \theta}{1-\theta}  LT 
\\[0.2cm]
&&\displaystyle+2L 
\rbr{\sqrt{ T|X||A|}\log (8T/(|X||A|))+ \sqrt{ T|Y||B|}\log (8T/(|Y||B|))}.
\end{array}
\]
We complete the proof by noting $\lambda^0=0$, $V=L\sqrt{T}$, $\eta = 1/(TL)$, and $\theta = 1/T$. 

\section{Constrained MGs with Side Constraints}
\label{ap.CMG_sc}

In this section, we present a special case of Problem~\eqref{eq.opt_wc} that is described as a zero-sum MG with side constraint~\citep{singh2014characterization}. 
Having defined episodic MDPs and occupancy measures in Section~\ref{prelim}, we can formulate a constrained minimax problem in which the objective function is a sum of the expected total rewards over $T$ episodes and the constraint is on two agent' expected total utilities, 
\begin{equation}\label{eq.opt_wc_s}
\begin{array}{rcl}
\minimize\limits_{q_1\,\in\,\Delta(P_1)} \,\,\maximize\limits_{q_2\,\in\,\Delta(P_2)} &&\!\!\!\! \displaystyle \sum_{t\,=\,0}^{T-1} \inner{q_1\cdot q_2}{r^t} 
\\[0.4cm]
\subject &&\!\!\!\!
\inner{q_1}{g} \;\leq\; b_1
\; \text{ and } \;
\inner{q_2}{h}\;\leq\; b_2
\end{array}
\end{equation}
where we take $b_1$, $b_2\in (0, L]$ to avoid trivial cases since we note that $\langle{q_1},{g}\rangle$, $\langle{q_2},{h}\rangle\in [0,L]$. The side constraint corresponds to the limited use of budget/resource for each player. It is straightforward to generalize it to account for multiple constraints. When the transitions $P_1$ and $P_2$ are known, the occupancy measure sets $\Delta(P_1)$ and $\Delta(P_2)$ define convex polytopes on $q_1$ and $q_2$. 

Let $(q_1^\star,q_2^\star)$ be a solution to Problem~\eqref{eq.opt_wc_s} in hindsight. The existence of $(q_1^\star,q_2^\star)$ is well-known under compactness of the constraint sets~\citep{neumann1928theorie,rosen1965existence}. Since two constraints are decoupled, it is natural to define the usual Nash equilibrium via two conditions~\citep{altman2000constrained,daskalakis2020complexity}: (i) $ \sum_{t\,=\,0}^{T-1}\langle q_1^\star\cdot q_2^\star, r^t \rangle \leq \sum_{t\,=\,0}^{T-1} \langle q_1\cdot q_2^\star, r^t \rangle$ for any $q_1 \in \Delta(P_1)$ satisfying $\langle{q_1},{g}\rangle \leq b_1$; (ii) $\sum_{t\,=\,0}^{T-1}\langle q_1^\star\cdot q_2, r^t \rangle \leq \sum_{t\,=\,0}^{T-1}\langle q_1^\star\cdot q_2^\star, r^t \rangle$ for any $q_2\in\Delta(P_2)$ satisfying $\inner{q_2}{h}\leq b_2$. 
With this solution concept, we define the regret for any algorithm that plays the game for $T$ episodes by 
\begin{equation}\label{eq.regret_s}
\text{Regret}(T) \;=\; \sum_{t\,=\,0}^{T-1} \rbr{ \inner{q_1^t \cdot q_2^\star}{r^t} - \inner{q_1^\star \cdot q_2^t}{r^t} }
\end{equation}
where two players take policies $\pi^t$ and $\mu^t$ in episode $t$ and  they define occupancy measures $q_1^t$ and $q_2^t$ under the true transitions $P_1$ and $P_2$. 

To measure the constraint satisfaction, we introduce the violation as a non-negative part of accumulated constraint violations $\langle{q_1^t},{g}\rangle-b_1$ and $\langle{q_2^t},{h}\rangle - b_2$ over $T$ episodes,
\begin{equation}\label{eq.violation_s}
\!\!\!\!
\text{Violation}_1(T) \,=\, \sbr{\sum_{t\,=\,0}^{T-1} \rbr{\inner{q_1^t}{g^t}-b_1}}_+
\, \text{ and } \,
\text{Violation}_2(T) \,=\, \sbr{\sum_{t\,=\,0}^{T-1} \rbr{\inner{q_2^t}{h^t}-b_2}}_+.
\end{equation}

We next make an assumption that guarantees the existence of constrained Nash equilibrium~\citep{altman2000constrained}.

\begin{assumption}[Feasibility]\label{as.feasibility_s}
	There exists a joint policy $(\bar \pi,\bar\mu)$ associated to the occupancy measure $(\bar q_1,\bar q_2)$ and $\xi>0$ such that $\inner{\bar q_1}{g}+\xi\leq b_1$ and $\inner{\bar q_2}{h}+\xi\leq b_1$.
\end{assumption}

\subsection{Algorithm and Performance Guarantees}
We now are ready to specialize Algorithm~\ref{UCB-MPD} to Problem~\eqref{eq.opt_wc_s}. The only change is to replace the primal-dual update~\eqref{eq.primal} and~\eqref{eq.dual} by the following optimistic primal-dual mirror descent step. 

Let us recall that the occupancy measures $q_1^t$ for the min-player and $q_2^t$ for the max-player are defined over the true transitions $P_1$ and $P_2$ in episode $t$. The primal update of our algorithm maintains two occupancy measures $\hat q_1^{\,t}$, $\hat q_2^{\,t}$ to estimate $q_1^t$, $q_2^t$, separately. Although $\hat q_1^{\,t}$, $\hat q_2^{\,t}$ do not necessarily come from the true transitions $P_1$, $P_2$, they propose a min-policy $\pi^t$ for the min-player and a max-policy $\mu^t$ for the max-player given by~\eqref{eq.policy}.

We can revise our Lagrangian-based design to update estimates $\hat q_1^{\,t}$ and $\hat q_2^{\,t}$ as follows. 
Assume that the transitions $P_1$ and $P_2$ are known. We consider a one-episode constrained minimax problem based on reward/utility functions: $r^{t-1}$, $g^{t-1}$, $h^{t-1}$, revealed at the end of episode $t-1$,
\[
\begin{array}{rcl}
\minimize\limits_{q_1\,\in\,\Delta(P_1)} \,\,\maximize\limits_{q_2\,\in\,\Delta(P_2)}
&& \!\!\!\! \inner{q_1\cdot q_2}{r^{t-1}}
\\[0.3cm]
\subject && \!\!\!\!
\inner{q_1}{g^{t-1}} \;\leq\; b_1
\; \text{ and }\;
\inner{q_2}{h^{t-1}} \;\leq\; b_2
\end{array}
\]
where $\Delta(P_1)$ and $\Delta(P_2)$ are sets of valid occupancy measures under $P_1$ and $P_2$, respectively. We apply the method of Lagrange multipliers~\citep{bertsekas2014constrained} to deal with constraints by formulating a generalized Lagrangian-based function,
\[
\begin{array}{rcl}
L^t(q_1,q_2;\lambda_1,\lambda_2) 
&\DefinedAs&
\langle{q_1\cdot q_2},{r^{t-1}}\rangle
\,+\,
\lambda_1 \big( \langle{q_1},{g^{t-1}}\rangle - b_1 \big)
\,-\,
\lambda_2 \big( \langle{ q_2},{h^{t-1}}\rangle- b_2 \big)
\end{array}
\]
where $q_1$ is the first primal variable for the min-player, $q_2$ is the second primal variable for the max-player, and $\lambda_1$, $\lambda_2\geq 0$ work as the Lagrange multiplier or the dual variable in penalizing the min-player/max-player via the first/second $\lambda$-term.  Once we update $\lambda_1=\lambda_1^{t-1}$ and $\lambda_2=\lambda_2^{t-1}$ from the last episode, we reach a constrained saddle-point problem, 
$$\minimize_{q_1\,\in\,\Delta(P_1)}\,\maximize_{q_2\,\in\,\Delta(P_2)}\;\; L^t(q_1,q_2;\lambda_1^{t-1},\lambda_2^{t-1}).$$

However, it is not feasible to take the domains $\Delta(P_1)$ and $\Delta(P_2)$ since the true transitions $P_1$ and $P_2$ are unknown. Instead, we use their optimistic estimates $\Delta(k_1^t)$ and $\Delta(k_2^t)$ in sense that $q_1^t\in \Delta(k_1^t)$ and $q_2^t\in \Delta(k_2^t)$ hold with high probability; see Lemma~\ref{lem.empiricalP}. Denote $\hat q^{\,t} \DefinedAs (\hat q_1^{\,t},\hat q_2^{\,t})$. By the linear approximation of $L^t(q_1,q_2;\lambda^{t-1})$ at the previous iterate $(q_1^{t-1},q_2^{t-1})$, we update the primal variable via an online mirror descent step over the optimistic domains of $q_1$ and $q_2$,
\begin{equation}\label{eq.primal_s}
\begin{array}{rcl}
&&  \!\!\!\! \!\!\!\! \!\!\!\! \!\!
\hat q^t 
\,\leftarrow\,
\displaystyle \argmin_{q_1\,\in\,\Delta(k_1^t)} \argmax_{q_2\,\in\,\Delta(k_2^t)} 
\Big(
V\, 
\big\langle{q_1\cdot \hat q_2^{\,t-1}+\hat q_1^{\,t-1}\cdot q_2},{r^{t-1}}\big\rangle 
\\[0.2cm]
&& \;\;\;\;  \;\;\;\;  \;\;\;\;  \;\;\;\;  \;\;\;\;  \;\;\;\;  \;\;\;\;  \;\;\;\;
\,+\,
\lambda_1^{t-1}\langle{q_1},{g^{t-1}}\rangle 
\, -\, 
\lambda_2^{t-1}\langle{q_2},{h^{t-1}}\rangle
\, +\, 
\displaystyle \eta^{-1} D\big(q\,\vert\,  \tilde{q}^{\,t-1}\big)\Big)
\end{array}
\end{equation}
where $V$, $\eta>0$ are some regularization parameters, $D(\cdot\,\vert\,\cdot)$ is the unnormalized Kullback-Leibler divergence with a slightly abuse in a way that $D(q\,\vert\,q') \DefinedAs D(q_1\,\vert\,q_1')-D(q_2\,\vert\,q_2')$, $\tilde{q}_1^{\,t-1}$ and $\tilde{q}_2^{\,t-1}$ are mixing policies given by~\eqref{eq.mixing}.
The unnormalized Kullback-Leibler (KL) divergence between two distributions $p$, $q$ is defined by $D(p\,\vert\,q) \DefinedAs \sum_{i} p_i \ln\frac{p_i}{q_i} - \sum_{i} (p_i-q_i)$. Moreover,~\eqref{eq.primal_s} has an efficient update that is similar as the one in Appendix~\ref{ap.implementation}. 

Once we obtain $\hat q^{\,t}$, we next perform the dual update. We treat two $\lambda$-related regularization terms in $L^t(\hat q_1^{\,t}, \hat q_2^{\,t};\lambda_1,\lambda_2)$, separately. The dual update works for each player in the usual way by adding up all past constraint violations,
\begin{equation}\label{eq.dual_s}
\lambda_1^{t} \;=\; \max \big( \lambda_1^{t-1} \,+\, (\langle{\hat q_1^{\,t}},{g^{t-1}}\rangle -b_1\, ),\, 0 \big)
\;\, \text{ and } \;\,
\lambda_2^{t} \;=\; \max \big( \lambda_2^{t-1} \,+\, (\langle{\hat q_2^{\,t}},{h^{t-1}}\rangle-b_2\, ),\, 0 \big).
\end{equation}
The dual update~\eqref{eq.dual_s} increases $\lambda_1^{t-1}$ when $\hat q_1^{\,t}$ violates the approximate constraint $\langle{q_1},{g^{t-1}}\rangle \leq b_1$; it is similar for $\lambda_2^{t-1}$. Once we replace the primal-dual update~\eqref{eq.primal} and~\eqref{eq.dual} in line~4 of Algorithm~\ref{UCB-MPD} by~\eqref{eq.primal_s} and~\eqref{eq.dual_s}, we obtain a new version of Algorithm~\ref{UCB-MPD} for Problem~\eqref{eq.opt_wc_s}. 

Similar to Theorem~\ref{thm.main}, we have the following bounds on the regret and the constraint violation.

\begin{theorem}[Regret Bound and Constraint Violation]\label{thm.main_s}
	Let Assumption~\ref{as.feasibility_s} hold. 
	Fix $p\in\rbr{0,1}$ and $T\geq \max(|X||A|,|B||Y|)$. 
	In Algorithm~\ref{UCB-MPD} with the primal-dual update~\eqref{eq.primal_s} and~\eqref{eq.dual_s}, we set
	$V=L\sqrt{T}$, $\eta = 1/(TL)$, and $\theta = 1/T$.
	Then, the regret~\eqref{eq.regret} and the constraint violation~\eqref{eq.violation} satisfy 
	\[
	\begin{array}{rcl}
	\text{\normalfont Regret}(T) & \leq & 
	\tilde O \big( (|X|+|Y|) L \sqrt{T(|A|+|B|)} \big)
	\\[0.2cm]
	\text{\normalfont Violation}_1(T), \text{\normalfont Violation}_2(T) & \leq & 
	\tilde O \big( (|X|+|Y|) L \sqrt{T(|A|+|B|)} \big)
	\end{array}
	\]
	with probability $1-p$, where $\tilde O(\cdot)$ hides factor $\log\tfrac{1}{p}$.
\end{theorem}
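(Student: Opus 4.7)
\textbf{Proof proposal for Theorem~\ref{thm.main_s}.} The plan is to mirror the proof architecture of Theorem~\ref{thm.main}, but tracking two decoupled dual variables $\lambda_1^t,\lambda_2^t$ instead of a single $\lambda^t$. I would first decompose the regret exactly as in the coupled case,
\[
\text{Regret}(T) \;=\; \hat{\text{Regret}}(T) \,+\, \text{Error}_1 \,+\, \text{Error}_2,
\]
and decompose each side constraint violation analogously as
$\text{Violation}_i(T) = \hat{\text{Violation}}_i(T) + \text{Error}_{2+i}$ for $i=1,2$, inserting the auxiliary occupancy measures $q_1^t,q_2^t$ associated with $(\pi^t,\mu^t)$ under the true transitions. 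Since $\text{Error}_1,\ldots,\text{Error}_4$ depend only on the $\ell_1$-distance $\|\hat q_i^{\,t} - q_i^t\|_1$ and on $L^\infty$-bounded reward/utility functions, the bounds of \pref{lem.error12} and \pref{lem.error34} transfer verbatim via \pref{lem.qqdifference}, contributing $\tilde O((|X|+|Y|)L\sqrt{T(|A|+|B|)})$ to each quantity.

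Next I would adapt \pref{lem.gap_pd} to the side-constraint Lagrangian. Because the two $\lambda$-terms in $L^t(q_1,q_2;\lambda_1,\lambda_2)$ are separable in $(q_1,q_2)$, applying \pref{lem.pushback} to the mirror-descent step~\eqref{eq.primal_s} with comparators $(x,y)=(q_1^\star,q_2^\star)$ yields the same $V$-weighted gap inequality as in \pref{lem.gap_pd}, except that the single dual term $\lambda^{t-1}(\langle q_1^\star,g^{t-1}\rangle+\langle q_2^\star,h^{t-1}\rangle - b)$ is replaced by the sum $\lambda_1^{t-1}(\langle q_1^\star,g^{t-1}\rangle - b_1) + \lambda_2^{t-1}(\langle q_2^\star,h^{t-1}\rangle - b_2)$. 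The drift trick~\eqref{eq.drift_Q} then splits into two independent pieces $\Delta_i^t \DefinedAs \tfrac12((\lambda_i^t)^2-(\lambda_i^{t-1})^2)$, each bounded by $\lambda_i^{t-1}\cdot(\text{per-episode violation}_i) + L^2/2$, so the telescoping argument proceeds as before summing over $i\in\{1,2\}$.

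I would then adapt \pref{lem.lambda} to bound $\lambda_1^t+\lambda_2^t$ (equivalently each $\lambda_i^t$, since they are non-negative). Using Assumption~\ref{as.feasibility_s}'s Slater pair $(\bar q_1,\bar q_2)$ with shared slack $\xi$ as the comparator in the adapted gap inequality produces the drift bound
\[
\mathbb{E}\bigl[\,(\lambda_1^{t+t_0})^2+(\lambda_2^{t+t_0})^2 - (\lambda_1^t)^2-(\lambda_2^t)^2 \,\big|\, \calF^t,\,\mathcal{E}_{\text{good}}\,\bigr] \;\leq\; -2\xi t_0\,\mathbb{E}[\lambda_1^t+\lambda_2^t \mid \calF^t,\mathcal{E}_{\text{good}}] \,+\, O(t_0^2),
\]
since the expected negative drift from the Slater condition is $-\xi(\lambda_1^\tau+\lambda_2^\tau)$. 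Applying \pref{lem.drift} to $\lambda_1^t+\lambda_2^t$ (whose per-step increment is bounded by $2L$) yields $\lambda_i^t = \tilde O(L^2\sqrt T)$ with high probability for each $i$. The corresponding adaptation of \pref{lem.violation} follows by an Azuma--Hoeffding argument on the per-player supermartingales $Z_i^t \DefinedAs \sum_{\tau<t}\lambda_i^\tau(\langle q_i^\star,\cdot^\tau\rangle - b_i)$, giving $\sum_\tau \lambda_i^\tau(\langle q_i^\star,\cdot^\tau\rangle - b_i) \leq \tilde O(L^3 T)$ w.h.p.

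Finally, combining with the parameter choices $V=L\sqrt T$, $\eta=1/(TL)$, $\theta=1/T$ produces $\hat{\text{Regret}}(T) = \tilde O((|X|+|Y|)L\sqrt T)$ as in Theorem~\ref{thm.regret_hat}. For the two violation terms, I would apply \pref{thm.violation_hat} separately to each side constraint: the dual-to-violation telescope~\eqref{eq.violation_dqq} splits into two independent inequalities $\sum_t(\langle \hat q_i^{\,t},\cdot^t\rangle - b_i) \leq \lambda_i^T + O(\eta L\sum_\tau \lambda_i^\tau) + \tilde O(L\sqrt{T(|X||A|+|Y||B|)})$, and the epoch-counting argument (Lemma~\ref{lem.epoch}) is unchanged. \textbf{The main obstacle} will be verifying that the Slater-based drift analysis genuinely controls the \emph{sum} $\lambda_1^t+\lambda_2^t$ rather than each dual in isolation---because the saddle-point gap inequality couples the two penalty terms through the primal mirror descent step, one cannot obtain a per-dual Lyapunov contraction without a shared slack. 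Assumption~\ref{as.feasibility_s} provides exactly that shared $\xi$, which is why both side-constraint violations end up with the same $\tilde O(\sqrt T)$ rate rather than one dominating the other.
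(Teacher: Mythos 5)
Your overall architecture is the same as the paper's: the same regret/violation decompositions with $\hat q_i^{\,t}$ versus $q_i^t$, the error bounds transferring from Lemma~\ref{lem.qqdifference} and Lemmas~\ref{lem.error12}--\ref{lem.error34}, an adapted gap lemma with the two separable dual terms $\lambda_1^{t-1}(\langle q_1^\star,g^{t-1}\rangle-b_1)+\lambda_2^{t-1}(\langle q_2^\star,h^{t-1}\rangle-b_2)$, a drift-plus-Azuma argument for the duals using the shared Slater slack $\xi$, and the dual-to-violation telescope with the epoch count from Lemma~\ref{lem.epoch}. Your identification of the shared slack as the mechanism controlling both duals simultaneously is exactly the paper's point.

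There is, however, one technical step that would fail as written: applying Lemma~\ref{lem.drift} to the \emph{sum} $S^t\DefinedAs\lambda_1^t+\lambda_2^t$ using the quadratic drift inequality you state. From
\[
\mathbb{E}\bigl[(\lambda_1^{t+t_0})^2+(\lambda_2^{t+t_0})^2\mid\calF^t\bigr]\;\leq\;(\lambda_1^t)^2+(\lambda_2^t)^2-2\xi t_0\,S^t+C
\]
you cannot deduce the linear negative drift $\mathbb{E}[S^{t+t_0}-S^t\mid\calF^t]\leq -t_0\zeta$ that Lemma~\ref{lem.drift} requires: converting back to $S^{t+t_0}$ via $S^{t+t_0}\leq\sqrt{2}\,\bigl((\lambda_1^{t+t_0})^2+(\lambda_2^{t+t_0})^2\bigr)^{1/2}$ and Jensen leaves a factor $\sqrt{2}$ in front of $S^t$, so the process is not contracting above any threshold $\Theta$. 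The correct Lyapunov variable is the Euclidean norm $\norm{\lambda^t}=\bigl((\lambda_1^t)^2+(\lambda_2^t)^2\bigr)^{1/2}$: the sum-of-squares drift then matches $\norm{\lambda^{t+t_0}}^2$ exactly (no constant loss in the Jensen step), and the Slater term still supplies the needed negativity because $\lambda_1^\tau+\lambda_2^\tau\geq\norm{\lambda^\tau}$ for non-negative duals. This is precisely how the paper's Lemma~\ref{lem.lambda_s} proceeds, and the bound $\lambda_i^t\leq\norm{\lambda^t}=\tilde O(L^2\sqrt{T})$ then covers each dual. A second, minor point: in the violation step the bound on $\norm{\hat q_1^{\,t}-\hat q_1^{\,t-1}}_1+\norm{\hat q_2^{\,t}-\hat q_2^{\,t-1}}_1$ comes out of the joint pushback analysis of~\eqref{eq.primal_s} and therefore involves $V+\norm{\lambda^{t-1}}$ rather than each $\lambda_i^{t-1}$ separately, so the two telescoped inequalities are not fully decoupled as you suggest; since both duals obey the same high-probability bound, this does not affect the final $\tilde O((|X|+|Y|)L\sqrt{T(|A|+|B|)})$ rate, but the bookkeeping should go through the norm. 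Your per-player supermartingales $Z_i^t$ for the Azuma step are fine, as their increments are controlled by $2\norm{\lambda^t}L$.
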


We analyze Algorithm~\ref{UCB-MPD} with the primal-dual update~\eqref{eq.primal_s} and~\eqref{eq.dual_s} by following the proof idea in Appendix~\ref{proof}. For completeness, we provide proof details in next two sections.

\subsection{Regret Analysis}

We recall that our algorithm maintains the occupancy measures $(\hat q_1^{\,t}, \hat q_2^{\,t})$ for estimating policies $(\pi^t,\mu^t)$ and Problem~\eqref{eq.opt_wc_s} defines the comparison solution $(q_1^\star,q_2^\star)$ in hindsight. We decompose the regret~\eqref{eq.regret_s} as follows,
\[
\begin{array}{rcl}
\text{Regret}(T) 
&=&\displaystyle 
\underbrace{
	\sum_{t\,=\,0}^{T-1} \big\langle{ \hat q_1^{\,t}\cdot  q_2^\star- q_1^\star\cdot \hat q_2^{\,t} },{r^{t}}\big\rangle 
}_{\hat{\text{\normalfont Regret}}(T)}
\,+\,
\underbrace{
	\sum_{t\,=\,0}^{T-1} \inner{( q_1^t-\hat q_1^{\,t}) \cdot q_2^\star}{r^t} 
}_{\text{\normalfont Error}_1}
\,+\,
\underbrace{
	\sum_{t\,=\,0}^{T-1} \inner{q_1^\star \cdot (\hat q_2^{\,t}-q_2^t)}{r^t} 
}_{\text{\normalfont Error}_2}
\end{array}
\]
where ${\text{\normalfont Error}_1}$ is the error of using $\hat q_1^{\,t}$ for the min-player and ${\text{\normalfont Error}_2}$ is the error of using $\hat q_2^{\,t}$ for the max-player. By the occupancy measures in Algorithm~\ref{UCB-MPD}, ${\text{\normalfont Error}_1}$ and ${\text{\normalfont Error}_2}$ take the bounds in Lemma~\ref{lem.error12}. However, we need to develop a new upper bound for ${\hat{\text{\normalfont Regret}}(T)}$ as follows. 
\begin{lemma}\label{lem.gap_pd_s}
	Fix $\delta\in (0,1)$. Then, with probability $1-\delta$,
	\[
	\begin{array}{rcl}
	\displaystyle
	{\hat{\text{\normalfont Regret}}(T)}
	&\leq& \displaystyle V^{-1}
	\sum_{t\,=\,0}^{T-1} 
	\left(
	\lambda_1^{t} \big(\langle{ q_1^\star},{g^{t}}\rangle 
	-b_1\big)
	\,+\,
	\lambda_2^{t} \big(\langle{ q_2^\star},{h^{t}}\rangle -b_2\big)
	\right)
	\\[0.2cm]
	&&\displaystyle 
	\,+\, (\eta V)^{-1} L (1+\theta T)\big(\log (|X||A|) +\log (|Y||B|)\big)
	\,+\,(2 V^{-1}L+4\theta +\eta V) LT.
	\end{array}
	\]
\end{lemma}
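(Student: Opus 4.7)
The plan is to adapt the proof of Lemma~\ref{lem.gap_pd} by replacing the single coupled Lagrange multiplier $\lambda^{t-1}$ with the pair $(\lambda_1^{t-1}, \lambda_2^{t-1})$, running two independent drift arguments in parallel, and noting that because the two side constraints in~\eqref{eq.opt_wc_s} are decoupled, the two drift bounds add without producing any cross term. Specifically, by Lemma~\ref{lem.empiricalP}, with probability $1-\delta$ the comparator satisfies $q_1^\star\in \bigcap_t \Delta(k_1^t)$ and $q_2^\star\in \bigcap_t \Delta(k_2^t)$, so $(q_1^\star, q_2^\star)$ is an admissible test point for the minimax update~\eqref{eq.primal_s}.

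I would then apply Lemma~\ref{lem.pushback} to~\eqref{eq.primal_s} at the saddle-point iterate $(\hat q_1^{\,t}, \hat q_2^{\,t})$ with $x'=\tilde q_1^{\,t-1}$, $y'=\tilde q_2^{\,t-1}$, $x=q_1^\star$, $y=q_2^\star$, and
\[
f(q_1,q_2)\;=\;V\big\langle q_1\cdot \hat q_2^{\,t-1}+\hat q_1^{\,t-1}\cdot q_2,\,r^{t-1}\big\rangle
\,+\,\lambda_1^{t-1}\langle q_1,g^{t-1}\rangle
\,-\,\lambda_2^{t-1}\langle q_2,h^{t-1}\rangle.
\]
Rearranging yields the direct analog of~\eqref{eq.pushback_V}, where the LHS carries the penalties $\lambda_1^{t-1}\langle\hat q_1^{\,t},g^{t-1}\rangle + \lambda_2^{t-1}\langle\hat q_2^{\,t},h^{t-1}\rangle$ and the RHS carries $\lambda_1^{t-1}\langle q_1^\star,g^{t-1}\rangle + \lambda_2^{t-1}\langle q_2^\star,h^{t-1}\rangle$ — no cross terms appear.

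Next, define the two per-step dual drifts $\Delta_i^t \DefinedAs \tfrac12\big((\lambda_i^t)^2 - (\lambda_i^{t-1})^2\big)$ for $i=1,2$. By~\eqref{eq.dual_s} and $\max^2(x,0)\leq x^2$, mirroring~\eqref{eq.drift_Q},
\[
\Delta_1^t \;\leq\; \lambda_1^{t-1}\big(\langle\hat q_1^{\,t},g^{t-1}\rangle - b_1\big) + \tfrac12 L^2,
\qquad
\Delta_2^t \;\leq\; \lambda_2^{t-1}\big(\langle\hat q_2^{\,t},h^{t-1}\rangle - b_2\big) + \tfrac12 L^2,
\]
using $\langle\hat q_i^{\,t},\cdot\rangle\in[0,L]$ and $b_i\in(0,L]$. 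Adding these two bounds to the rearranged Lemma~\ref{lem.pushback} inequality cancels the $\lambda_i^{t-1}\langle\hat q_i^{\,t},\cdot\rangle$ terms on both sides and leaves the desired combination $\lambda_1^{t-1}(\langle q_1^\star,g^{t-1}\rangle-b_1) + \lambda_2^{t-1}(\langle q_2^\star,h^{t-1}\rangle-b_2)$ on the RHS, plus an $L^2$ remainder.

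The remaining steps are routine and identical to the original proof: lower-bound the LHS via Lemma~\ref{lem.D_lb} together with the mixing bound $\|\tilde q_i^{\,t-1}-\hat q_i^{\,t-1}\|_1\leq 2\theta L$, pick up an additive $-4\theta V L - \eta V^2 L$ as in~\eqref{eq.pushback_simplify}; convert $D(q_i^\star,\tilde q_i^{\,t-1})$ to $D(q_i^\star,\hat q_i^{\,t-1})$ through Lemma~\ref{lem.D_difference} at cost $\theta L\log(|X||A|)$ and $\theta L\log(|Y||B|)$ per step; sum from $t=1$ to $T$ so that the Bregman terms telescope (bounded by $D(q_1^\star,\hat q_1^{\,0}) + D(q_2^\star,\hat q_2^{\,0}) \leq L\log(|X||A|)+L\log(|Y||B|)$) and the drifts $\sum_t(\Delta_1^t+\Delta_2^t) \geq 0$ can be dropped; and finally divide by $V$. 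There is essentially no new obstacle beyond bookkeeping: the only place where one must be careful is the sign convention in the $\lambda_2$-penalty (entering with a minus in $f$), which ensures that the concave-in-$q_2$ structure is preserved when invoking Lemma~\ref{lem.pushback} and that the drift cancellation produces the correct additive decomposition on the RHS.
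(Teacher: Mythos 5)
Your proposal is correct and follows essentially the same route as the paper's proof: invoke Lemma~\ref{lem.empiricalP} to make $(q_1^\star,q_2^\star)$ admissible, apply Lemma~\ref{lem.pushback} to~\eqref{eq.primal_s} with exactly the $f$ you wrote, bound the two separate drifts $\Delta_1^t,\Delta_2^t$ from~\eqref{eq.dual_s}, and then reuse the mixing, Lemma~\ref{lem.D_lb}, Lemma~\ref{lem.D_difference}, and telescoping steps of Lemma~\ref{lem.gap_pd}. The only cosmetic difference is your per-step drift remainder of $\tfrac12 L^2$ versus the paper's $L^2$, which is slightly tighter and still consistent with the stated bound.
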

\begin{proof}
	By Lemma~\ref{lem.empiricalP}, with probability $1-\delta$ it holds that 
	\[
	\Delta(P_1) \;\subset\; \cap_{t \,=\,0}^{T-1} \Delta(k_1^t) 
	\;\text{ and }\; 
	\Delta(P_2) \;\subset\; \cap_{t \,=\,0}^{T-1} \Delta(k_2^t).
	\]
	We note that the solution $(q_1^\star,q_2^\star)$ in hindsight to Problem~\eqref{eq.opt_wc_s} satisfies $q_1^\star\in\Delta(P_1)$ and $q_2^\star\in \Delta(P_2)$. Hence, $q_1^\star\in  \cap_{t \,=\,0}^{T-1} \Delta(k_1^t)  $ and $q_2^\star\in \Delta(P_2) \cap_{t \,=\,0}^{T-1} \Delta(k_2^t)$ with probability $1-\delta$. For episode $t$, we apply Lemma~\ref{lem.pushback} to the primal update~\eqref{eq.primal_s} with 
	\[
	f(x,y) \vert_{x\,=\,q_1,\, y\,=\,q_2} 
	\;=\;
	V\, \big\langle{q_1\cdot \hat q_2^{\,t-1}+\hat q_1^{\,t-1}\cdot q_2},{r^{t-1}}\big\rangle 
	\,+\,
	\lambda_1^{t-1} \langle{q_1},{g^{t-1}}\rangle 
	\,-\,\lambda_2^{t-1} \langle{q_2},{h^{t-1}}\rangle 
	\]
	and $x^\star = \hat q_1^{\,t}$, $y^\star= \hat q_2^{\,t}$, $x' = \tilde q_1^{\,t-1}$, $y'=\tilde q_2^{\,t-1}$, $x = q_1^\star$, and $y=q_2^\star$. Thus, with  probability $1-\delta$ it holds for any $t$ that
	\[
	\begin{array}{rcl}
	&& \!\!\!\! \!\!\!\! \!\! V\, \big\langle{\hat q_1^{\,t}\cdot \hat q_2^{\,t-1}+\hat q_1^{\,t-1}\cdot  q_2^\star},{r^{t-1}}\big\rangle 
	\,+\,
	\lambda_1^{t-1} \langle{\hat q_1^{\,t}},{g^{t-1}}\rangle 
	\,-\,\lambda_2^{t-1} \langle{ q_2^\star},{h^{t-1}}\rangle 
	\\[0.2cm]
	&& \!\!\!\! \!\!\!\! \!\! \,+\, \eta^{-1} \big(D(\hat q_1^{\,t},\tilde q_1^{\,t-1}) + D(\hat q_2^{\,t}, \tilde q_2^{\,t-1})\big)
	\\[0.2cm]
	&\leq  & V\, \big\langle{ q_1^\star\cdot \hat q_2^{\,t-1}+\hat q_1^{\,t-1}\cdot \hat q_2^{\,t}},{r^{t-1}}\big\rangle 
	\,+\,
	\lambda_1^{t-1} \langle{ q_1^\star},{g^{t-1}}\rangle 
	\,-\,\lambda_2^{t-1} \langle{ \hat q_2^{\,t}},{h^{t-1}}\rangle 
	\\[0.2cm]
	&& \,+\, \eta^{-1} \big(D( q_1^\star,\tilde q_1^{\,t-1}) \,+\, D( q_2^\star, \tilde q_2^{\,t-1}) \,-\, D( q_1^\star,\hat q_1^{\,t}) \,-\, D( q_2^\star, \hat q_2^{\,t}) \big)
	\end{array}
	\]
	or, equivalently,
	\begin{equation}\label{eq.pushback_V_s}
	\begin{array}{rcl}
	&& \!\!\!\! \!\!\!\! \!\! V\, \big\langle{\hat q_1^{\,t}\cdot \hat q_2^{\,t-1}- \hat q_1^{\,t-1}\cdot \hat q_2^{\,t}},{r^{t-1}}\big\rangle 
	\,+\,
	\lambda_1^{t-1} \langle{\hat q_1^{\,t}},{g^{t-1}}\rangle 
	\,+\,\lambda_2^{t-1} \langle{ \hat q_2^{\,t}},{h^{t-1}}\rangle 
	\\[0.2cm]
	&& \!\!\!\! \!\!\!\! \!\! \,+\, \eta^{-1} \big(D(\hat q_1^{\,t}, \tilde q_1^{\,t-1}) + D(\hat q_2^{\,t}, \tilde q_2^{\,t-1})\big)
	\\[0.2cm]
	&\leq&  V\, \big\langle{ q_1^\star\cdot \hat q_2^{\,t-1}-\hat q_1^{\,t-1}\cdot  q_2^\star },{r^{t-1}}\big\rangle 
	\,+\,
	\lambda_1^{t-1} \langle{ q_1^\star},{g^{t-1}}\rangle 
	\,+\,\lambda_2^{t-1} \langle{ q_2^\star},{h^{t-1}}\rangle 
	\\[0.2cm]
	&& \,+\, \eta^{-1} \big(D( q_1^\star,\tilde q_1^{\,t-1}) \,+\, D( q_2^\star, \tilde q_2^{\,t-1}) \,-\, D( q_1^\star,\hat q_1^{\,t}) \,-\, D( q_2^\star, \hat q_2^{\,t}) \big).
	\end{array}
	\end{equation}
	
	Let $\Delta_1^t\DefinedAs \frac{1}{2} \rbr{(\lambda_1^{t})^2-(\lambda_1^{t-1})^2}$ be the drift of the first consecutive dual updates. Then,
	\begin{equation}\label{eq.drift_Q_s1}
	\begin{array}{rcl}
	\Delta_1^t &=& \displaystyle 
	\frac{1}{2}\rbr{ ( \lambda_1^{t} )^2 - ( \lambda_1^{t-1} )^2 }
	\\[0.2cm]
	&=& \displaystyle 
	\frac{1}{2}\rbr{ \max\!^2 \Big( \lambda_1^{t-1} \,+\, \big( \langle{\hat q_1^{\,t}},{g^{t-1}}\rangle-b_1 \big),\; 0 \Big) - ( \lambda_1^{t-1} )^2 }
	\\[0.2cm]
	&\leq& \displaystyle 
	\lambda_1^{t-1}\big( \langle{\hat q_1^{\,t}},{g^{t-1}}\rangle-b_1 \big) \,+\, \frac{1}{2}\big( \langle{\hat q_1^{\,t}},{g^{t-1}}\rangle-b_1 \big)^2
	\\[0.2cm]
	&\leq& \displaystyle 
	\lambda_1^{t-1}\big( \langle{\hat q_1^{\,t}},{g^{t-1}}\rangle-b_1 \big) \,+\, L^2
	\end{array}
	\end{equation}
	where the first inequality is due to $\max^2(x,0)\leq x^2$ and we apply $\langle{\hat q_1^{\,t}},{g^{t-1}}\rangle \in [0,L]$, $b_1\in [0,L]$ in the last inequality. Similarly, if we let $\Delta_2^t\DefinedAs \frac{1}{2} \rbr{(\lambda_2^{t})^2-(\lambda_2^{t-1})^2}$, then
	\begin{equation}\label{eq.drift_Q_s2}
	\Delta_2^t  \;\leq\; \lambda_2^{t-1}\big( \langle{\hat q_2^{\,t}},{h^{t-1}}\rangle-b_2 \big) \,+\, L^2.
	\end{equation}
	Adding~\eqref{eq.drift_Q_s1} and~\eqref{eq.drift_Q_s2} to~\eqref{eq.pushback_V_s} from both sides of the inequalities without changing the inequality direction yields
	\begin{equation}\label{eq.pushback_VQ_s}
	\begin{array}{rcl}
	&& \!\!\!\! \!\!\!\! \!\! 
	V\, \big\langle{\hat q_1^{\,t}\cdot \hat q_2^{\,t-1}- \hat q_1^{\,t-1}\cdot \hat q_2^{\,t}},{r^{t-1}}\big\rangle 
	\,+\,
	\Delta_1^{t} 
	\,+\,
	\Delta_2^{t} 
	\,+\, \eta^{-1} \big(D(\hat q_1^{\,t},\tilde q_1^{\,t-1}) + D(\hat q_2^{\,t}, \tilde q_2^{\,t-1})\big)
	\\[0.2cm]
	&\leq& V\, \big\langle{ q_1^\star\cdot \hat q_2^{\,t-1}-\hat q_1^{\,t-1}\cdot  q_2^\star },{r^{t-1}}\big\rangle 
	\,+\,
	\lambda_1^{t-1} \big(\langle{ q_1^\star},{g^{t-1}}\rangle 
	-b_1\big)
	\,+\,
	\lambda_2^{t-1} \big( \langle{ q_2^\star},{h^{t-1}}\rangle -b_2\big)
	\,+\,2L^2
	\\[0.2cm]
	&& \,+\, \eta^{-1} \big(D( q_1^\star,\tilde q_1^{\,t-1}) + D( q_2^\star, \tilde q_2^{\,t-1}) \,-\, D( q_1^\star,\hat q_1^{\,t}) \,-\, D( q_2^\star, \hat q_2^{\,t}) \big).
	\end{array}
	\end{equation}
	However,
	\[
	\begin{array}{rcl}
	&& \!\!\!\! \!\!\!\! \!\! V\, \big\langle{\hat q_1^{\,t}\cdot \hat q_2^{\,t-1}- \hat q_1^{\,t-1}\cdot \hat q_2^{\,t}},{r^{t-1}}\big\rangle 
	\,+\, \eta^{-1} \big(D(\hat q_1^{\,t},\tilde q_1^{\,t-1}) + D(\hat q_2^{\,t}, \tilde q_2^{\,t-1})\big)
	\\[0.2cm]
	&=& V\, \big\langle{\hat q_1^{\,t}\cdot \hat q_2^{\,t-1}- \tilde q_1^{\,t-1}\cdot \hat q_2^{\,t-1}},{r^{t-1}}\big\rangle 
	\,+\,V\, \big\langle{ \tilde q_1^{\,t-1}\cdot \hat q_2^{\,t-1} - \hat q_1^{\,t-1}\cdot \hat q_2^{\,t-1}},{r^{t-1}}\big\rangle 
	\\[0.2cm]
	&&  
	\,+\,V\, \big\langle{ \hat q_1^{\,t-1}\cdot \hat q_2^{\,t-1} - \hat q_1^{\,t-1}\cdot \tilde q_2^{\,t-1}},{r^{t-1}}\big\rangle 
	\,+\, V\, \big\langle{\hat q_1^{\,t-1}\cdot \tilde q_2^{\,t-1}- \hat q_1^{\,t-1}\cdot \hat q_2^{\,t}},{r^{t-1}}\big\rangle 
	\\[0.2cm]
	&&  \,+\, \eta^{-1} D(\hat q_1^{\,t},\tilde q_1^{\,t-1}) \,+\, \eta^{-1} D(\hat q_2^{\,t}, \tilde q_2^{\,t-1})
	\\[0.2cm]
	&\geq&  -\,V\, \norm{\hat q_2^{\,t-1}\cdot r^{t-1} }_\infty \norm{\hat q_1^{\,t} - \tilde q_1^{\,t-1}}_1
	\,-\,V\norm{ \hat q_2^{\,t-1} \cdot r^{t-1} }_\infty\norm{ \tilde q_1^{\,t-1}- \hat q_1^{\,t-1}}_1
	\\[0.2cm]
	&& 
	\,-\,V\,\norm{ \hat q_1^{\,t-1}\cdot  r^{t-1} }_\infty\norm{ \hat q_2^{\,t-1} - \tilde q_2^{\,t-1}}_1
	\,-\, V\,\norm{ \hat q_1^{\,t-1}\cdot r^{t-1} }_\infty\norm{ \tilde q_2^{\,t-1}-\hat q_2^{\,t} }_1
	\\[0.2cm]
	&& \,+\, (2\eta L)^{-1}\norm{\hat q_1^{\,t}-\tilde q_1^{\,t-1}}_1^2 \,+\, (2\eta L)^{-1} \norm{\hat q_2^{\,t} - \tilde q_2^{\,t-1}}_1
	\\[0.2cm]
	&\geq&  -\,V\, \norm{\hat q_1^{\,t} - \tilde q_1^{\,t-1}}_1
	\,-\, 2\theta V L
	\,+\, (2\eta L)^{-1}\norm{\hat q_1^{\,t}-\tilde q_1^{\,t-1}}_1^2
	\\[0.2cm]
	&& 
	\,-\,2\theta VL
	\,-\, V\,\norm{ \tilde q_2^{\,t-1}-\hat q_2^{\,t} }_1
	\,+\, (2\eta L)^{-1} \norm{\hat q_2^{\,t} - \tilde q_2^{\,t-1}}_1
	\\[0.2cm]
	&\geq& \,-\,4\theta V L\,-\,\eta V^2 L
	\end{array}
	\]
	where we apply the H\"older's inequality and Lemma~\ref{lem.D_lb} in the first inequality, the second inequality is due to that
	\[
	\begin{array}{rcl}
	\norm{ \tilde q_1^{\,t-1}- \hat q_1^{\,t-1}}_1 
	&=& \displaystyle\sum_{\ell\,=\,0}^{L-1}\sum_{x\,\in\,X_\ell}\sum_{a\,\in\,A}\abr{ (1-\theta)\hat {q}_1^{\,t-1}(x,a)+\theta\frac{1}{|X_\ell||A|}-\hat q_1^{\,t-1}(x,a) }
	\\[0.2cm]
	&\leq& \displaystyle \theta\sum_{\ell\,=\,0}^{L-1}\sum_{x\,\in\,X_\ell}\sum_{a\,\in\,A} \hat {q}_1^{\,t-1}(x,a)+\theta\sum_{\ell\,=\,0}^{L-1}\sum_{x\,\in\,X_\ell}\sum_{a\,\in\,A}\frac{1}{|X_\ell||A|}
	\\[0.2cm]
	&=& 2\theta L
	\end{array}
	\]
	and $\Vert{ \tilde q_2^{\,t-1}- \hat q_2^{\,t-1}}\Vert_1 \leq 2\theta L$ that can be proved similarly, and the last inequality is due to 
	$-bx+ax^2\geq - b^2/(4a)$ for any $a$, $b > 0$. Therefore, we take the lower bound above for the left-hand side of~\eqref{eq.pushback_VQ_s},
	\begin{equation}\label{eq.pushback_simplify_s}
	\begin{array}{rcl}
	&& \!\!\!\! \!\!\!\! \!\!
	\Delta_1^{t} \,+\,\Delta_2^{t} \,-\,4\theta V L\,-\,\eta V^2 L
	\\[0.2cm]
	&\leq& V\, \big\langle{ q_1^\star\cdot \hat q_2^{\,t-1}-\hat q_1^{\,t-1}\cdot  q_2^\star },{r^{t-1}}\big\rangle 
	\,+\,
	\lambda_1^{t-1} \big(\langle{ q_1^\star},{g^{t-1}}\rangle 
	-b_1\big)
	\,+\,
	\lambda_2^{t-1} \big(\langle{ q_2^\star},{h^{t-1}}\rangle -b_2\big)
	\,+\,2L^2
	\\[0.2cm]
	&& \,+\, \eta^{-1} \big(D( q_1^\star,\tilde q_1^{t-1}) + D( q_2^\star, \tilde q_2^{\,t-1}) -D( q_1^\star,\hat q_1^{\,t}) - D( q_2^\star, \hat q_2^{\,t}) \big).
	\end{array}
	\end{equation}
	By Lemma~\ref{lem.D_difference},
	\[
	\begin{array}{rcl}
	D( q_1^\star,\tilde q_1^{\,t-1}) - D( q_1^\star,\hat q_1^{\,t}) &=& D( q_1^\star,\tilde q_1^{\,t-1}) - D( q_1^\star,\hat q_1^{\,t-1})  + D( q_1^\star,\hat q_1^{\,t-1}) - D( q_1^\star,\hat q_1^{\,t}) 
	\\[0.2cm]
	&\leq & \theta L\log (|X||A|)+ D( q_1^\star,\hat q_1^{\,t-1}) - D( q_1^\star,\hat q_1^{\,t}) 
	\end{array}
	\]
	and, similarly, 
	\[
	D( q_2^\star, \tilde q_2^{\,t-1})  - D( q_2^\star, \hat q_2^{\,t}) \;\leq\; \theta L\log (|Y||B|)+ D( q_2^\star,\hat q_2^{\,t-1}) - D( q_2^\star,\hat q_2^{\,t}). 
	\]
	We now simplify~\eqref{eq.pushback_simplify_s} into
	\[
	\begin{array}{rcl}
	\Delta_1^{t}  \,+\, \Delta_2^{t}
	&\leq&  V\, \big\langle{ q_1^\star\cdot \hat q_2^{\,t-1}-\hat q_1^{\,t-1}\cdot  q_2^\star },{r^{t-1}}\big\rangle 
	\,+\,
	\lambda_1^{t-1} \big(\langle{ q_1^\star},{g^{t-1}}\rangle 
	-b_1\big)
	\,+\,
	\lambda_2^{t-1} \big( \langle{ q_2^\star},{h^{t-1}}\rangle -b_2\big)
	\\[0.2cm]
	&& \,+\, \eta^{-1} \big(D( q_1^\star,\hat q_1^{\,t-1}) + D( q_2^\star, \hat q_2^{\,t-1}) -D( q_1^\star,\hat q_1^{\,t}) - D( q_2^\star, \hat q_2^{\,t}) \big)
	\\[0.2cm]
	&& \,+\, \eta^{-1}\theta L \big(\log (|X||A|) +\log (|Y||B|)\big)\,+\,2L^2 \,+\,4\theta V L\,+\,\eta V^2 L
	\end{array}
	\]
	which leads to the desired result by summing it up from $t=1$ to $T$,
	\[
	\begin{array}{rcl}
	\displaystyle
	\sum_{t\,=\,1}^{T} \big(\Delta_1^{t} + \Delta_2^{t}  \big)
	&\leq& \displaystyle V\sum_{t\,=\,1}^{T} \big\langle{ q_1^\star\cdot \hat q_2^{\,t-1}-\hat q_1^{\,t-1}\cdot  q_2^\star },{r^{t-1}}\big\rangle 
	\\[0.2cm]
	&& \displaystyle
	\,+\,\sum_{t\,=\,1}^{T}
	\left( \lambda_1^{t-1} \big(\langle{ q_1^\star},{g^{t-1}}\rangle 
	-b_1\big) + \lambda_2^{t-1} \big( \langle{ q_2^\star},{h^{t-1}}\rangle-b_2\big) \right)
	\\[0.2cm]
	&& \displaystyle\,+\, \eta^{-1} \sum_{t\,=\,1}^{T}\big(D( q_1^\star,\hat q_1^{\,t-1}) + D( q_2^\star, \hat q_2^{\,t-1}) -D( q_1^\star,\hat q_1^{\,t}) - D( q_2^\star, \hat q_2^{\,t}) \big)
	\\[0.2cm]
	&&\displaystyle \,+\, \eta^{-1}\theta L T\big(\log (|X||A|) +\log (|Y||B|)\big)\,+\,2L^2 T\,+\,4\theta V LT\,+\,\eta V^2 LT
	\\[0.2cm]
	&\leq& \displaystyle V\sum_{t\,=\,1}^{T} \big\langle{ q_1^\star\cdot \hat q_2^{\,t-1}-\hat q_1^{\,t-1}\cdot  q_2^\star },{r^{t-1}}\big\rangle 
	\\[0.2cm]
	&&\displaystyle
	\,+\,\sum_{t\,=\,1}^{T}
	\left(
	\lambda_1^{t-1} \big(\langle{ q_1^\star},{g^{t-1}}\rangle 
	-b_1\big)
	+	
	\lambda_2^{t-1}\big( \langle{ q_2^\star},{h^{t-1}}\rangle -b_2\big)
	\right)
	\\[0.2cm]
	&& \displaystyle\,+\, \eta^{-1} \big(D( q_1^\star,\hat q_1^{\,0}) + D( q_2^\star, \hat q_2^{\,0}) \big)
	\\[0.2cm]
	&&\displaystyle \,+\, \eta^{-1}\theta L T\big(\log (|X||A|) +\log (|Y||B|)\big)\,+\,2L^2 T\,+\,4\theta V LT\,+\,\eta V^2 LT
	\end{array}
	\]
	which leads to the desired result by noting that 
	\[
	D( q_1^\star,\hat q_1^{\,0})\;\leq\; L\log(|X||A|), \; D( q_2^\star, \hat q_2^{\,0}) \;\leq\; L\log(|Y||B|), \; \text{ and } \; \sum_{t\,=\,1}^{T}\big( \Delta_1^{t}+\Delta_2^{t} \big)  \; \geq \; 0. 
	\]
\end{proof}

To analyze the bound in Lemma~\ref{lem.gap_pd_s},  in Lemma~\ref{lem.lambda_s}, we next utilize a new drift bound from Lemma~\ref{lem.drift} to establish the boundedness of $\lambda^t \DefinedAs (\lambda_1^{t},\lambda_2^{t})$ first. Then, we apply a general Azuma-Hoeffding inequality for supermartingales in Lemma~\ref{lem.violation_s}. 

\begin{lemma}\label{lem.lambda_s}
	Let Assumption~\ref{as.feasibility_s} hold. 
	Fix $\delta\in (0,1)$. For any integer $t_0>0$, with probability $1-T\delta$,
	\[
	\norm{\lambda^t} \; \leq\; \displaystyle \Theta + 2 t_0 L+ t_0 \frac{64 L^2}{\xi} \log \rbr{\frac{128 L^2}{\xi}} + t_0 \frac{64 L^2}{\xi} \log\frac{1}{\delta}
	\]
	for all $t = 1,\ldots,T$, where $\xi >0$ and
	\[
	\begin{array}{rcl}
	\Theta &\DefinedAs& t_0
	\rbr{\tfrac{1}{2}\xi  +2 L} \,+\,\tfrac{4L^2 + (8\theta +2\eta V
		+2 )VL}{\xi}
	\,+\,\tfrac{2L (\log(|X||A|/\theta)+\log(|Y||B|/\theta)) }{t_0\xi\eta}.
	\end{array}
	\]
\end{lemma}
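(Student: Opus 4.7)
\textbf{Proof proposal for Lemma~\ref{lem.lambda_s}.} The plan is to mimic the drift analysis used for Lemma~\ref{lem.lambda}, but applied to the joint process $\lambda^t = (\lambda_1^t,\lambda_2^t)$ with the quadratic potential $\|\lambda^t\|^2 = (\lambda_1^t)^2+(\lambda_2^t)^2$. Concretely, I will invoke Lemma~\ref{lem.drift} on the scalar process $\{\|\lambda^t\|,t\geq 0\}$, which requires verifying (a) bounded per-step increments and (b) a negative conditional drift over $t_0$ steps whenever $\|\lambda^t\|$ exceeds the threshold $\Theta$ stated in the lemma.

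For (a), the clipping form of~\eqref{eq.dual_s} gives $|\lambda_i^{t+1}-\lambda_i^t|\leq |\langle \hat q_i^{\,t+1},u_i^t\rangle - b_i|\leq L$ for each $i\in\{1,2\}$, so $\|\lambda^{t+1}-\lambda^t\|\leq \sqrt{2}\,L$ and $\|\lambda^{t+t_0}-\lambda^t\|\leq \sqrt{2}\,t_0 L$; up to the harmless factor $\sqrt{2}$ this plays the role of $\delta_{\max}=2L$ in Lemma~\ref{lem.lambda}. For (b), I will apply Lemma~\ref{lem.pushback} to the primal update~\eqref{eq.primal_s} with comparison point the Slater policy $(\bar q_1,\bar q_2)$ from Assumption~\ref{as.feasibility_s} (in place of $(q_1^\star,q_2^\star)$), yielding an inequality of the same shape as~\eqref{eq.pushback_simplify_slater}. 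Summing the two dual drifts~\eqref{eq.drift_Q_s1} and~\eqref{eq.drift_Q_s2} and then summing from $\tau=t+1$ to $t+t_0$ leads to an upper bound on $\|\lambda^{t+t_0}\|^2-\|\lambda^t\|^2$ in which the only $\lambda$-dependent contribution is
\[
\sum_{\tau\,=\,t}^{t+t_0-1}\!\!2\Big(\lambda_1^{\tau}(\langle\bar q_1,g^\tau\rangle-b_1)+\lambda_2^{\tau}(\langle\bar q_2,h^\tau\rangle-b_2)\Big).
\]
Taking expectation conditional on $\mathcal F^t$ and the good event $\mathcal E_{\text{good}}$ of Lemma~\ref{lem.empiricalP}, the stochastic feasibility gap is bounded above by $-\xi\,\mathbb{E}[\lambda_1^\tau+\lambda_2^\tau\mid\mathcal F^t,\mathcal E_{\text{good}}]\leq -\xi\,\mathbb{E}[\|\lambda^\tau\|\mid\mathcal F^t,\mathcal E_{\text{good}}]$, where the last step uses $(\lambda_1+\lambda_2)^2\geq\lambda_1^2+\lambda_2^2$ for nonnegative $\lambda_1,\lambda_2$.

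The remaining calculation is bookkeeping identical to the single-dual case: replace $\lambda^\tau$ by $\|\lambda^\tau\|$ throughout the chain of inequalities culminating in
\[
\mathbb{E}\!\left[\|\lambda^{t+t_0}\|^2\,\big|\,\mathcal F^t,\mathcal E_{\text{good}}\right]\;\leq\;\Big(\mathbb{E}[\|\lambda^t\|\mid\mathcal F^t,\mathcal E_{\text{good}}]-\tfrac{1}{2}\xi t_0\Big)^{\!2},
\]
which via Jensen's inequality converts to $\mathbb{E}[\|\lambda^{t+t_0}\|-\|\lambda^t\|\mid\mathcal F^t,\mathcal E_{\text{good}}]\leq -\tfrac12\xi t_0$, and then averaging over $\mathcal E_{\text{good}}$ vs.\ its complement (losing only $\delta$-level mass) yields the drift $-\tfrac14\xi t_0$. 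This identifies $\zeta=\xi/4$ and a threshold of exactly the form of $\Theta$ in the statement, once the Bregman terms are bounded by Lemma~\ref{lem.D_difference}.

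The only real obstacle is checking that the cross term from the minimax structure still telescopes correctly. In Lemma~\ref{lem.lambda} a single dual multiplied both constraint-violation pieces, so the cross-coupling between the two players' contributions was automatic; here the two duals are decoupled in~\eqref{eq.dual_s}, but the minimax regularizer in~\eqref{eq.primal_s} still produces a joint term $V\langle \bar q_1\cdot\hat q_2^{\,\tau-1}-\hat q_1^{\,\tau-1}\cdot\bar q_2,r^{\tau-1}\rangle$ that is bounded by $L$ in absolute value and so enters $\Theta$ additively, exactly as before. After this verification, Lemma~\ref{lem.drift} delivers, with probability $1-\delta$ per episode, the claimed high-probability bound on $\|\lambda^t\|$; a union bound over $t=1,\ldots,T$ closes the proof.
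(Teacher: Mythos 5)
Your proposal is correct and follows essentially the same route as the paper: it treats $\|\lambda^t\|$ as the scalar process for Lemma~\ref{lem.drift}, verifies the bounded-increment condition from the clipped updates~\eqref{eq.dual_s}, uses Lemma~\ref{lem.pushback} with the Slater point $(\bar q_1,\bar q_2)$ together with the summed drifts~\eqref{eq.drift_Q_s1}--\eqref{eq.drift_Q_s2}, bounds the feasibility gap via $\lambda_1^\tau+\lambda_2^\tau\geq\|\lambda^\tau\|$, and closes with $\zeta=\xi/4$ and a union bound over $t$. This matches the paper's argument step for step.
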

\begin{proof}
	Let $\calF^{t}$ be an $\sigma$-algebra that is generated by the state-action sequence, reward/utility functions for both players up to episode $t$. At the beginning, $\calF^0=\{\emptyset,\Omega \}$. We have a discrete-time random process $\{\norm{\lambda^t},t\geq0 \}$ that adapts to $\calF^{t}$. 
	It suffices to check all assumptions in Lemma~\ref{lem.drift}. 
	
	By the dual update~\eqref{eq.dual_s},
	\[
	\begin{array}{rcl}
	\abr{  \lambda_1^{t+1}  - \lambda_1^{t} } 
	&=&
	\abr{ \max\! \Big( \lambda_1^{t} \,+\, \big( \langle{\hat q_1^{\,t+1}},{g^{t}}\rangle - b_1 \big),\; 0 \Big) - \lambda_1^{t}  }
	\\[0.2cm]
	&\leq&\abr{ \langle{\hat q_1^{\,t+1}},{g^{t}}\rangle - b_1  }
	\\[0.2cm]
	&\leq& L
	\end{array}
	\]
	where the first inequality is clear from two cases for $\max(\cdot)$ and the second inequality is due to $\langle{\hat q_1^{\,t+1}},{g^{t}}\rangle\in [0,L]$, $b_1\in [0,L]$. Similarly, $\abr{  \lambda_2^{t+1}  - \lambda_2^{t} } \leq L$. Hence,
	\[
	\abr{ \norm{\lambda^{t+1}} - \norm{\lambda^t}}
	\;\leq\; 
	\norm{\lambda^{t+1}-\lambda^{t} } 
	\;=\; \sqrt{(\lambda_1^{t+1}-\lambda_1^{t})^2 + (\lambda_2^{t+1}-\lambda_2^{t})^2} \;\leq\;{2} L.
	\]
	Consequently,
	\begin{equation}\label{eq.lambda.normdiff}
	\norm{\lambda}^{t+t_0}  - \norm{\lambda}^{t}  \;=\;\sum_{s\,=\,t}^{t+t_0-1} \big( \norm{\lambda}^{s+1}  - \norm{\lambda}^{s} \big)
	\;\leq\;\sum_{s\,=\,t}^{t+t_0-1} \abr{ \norm{\lambda}^{s+1}  - \norm{\lambda}^{s} }
	\;\leq\; 2t_0 L
	\end{equation}
	which leads to $\mathbb{E} [ \, \norm{\lambda}^{t+t_0}  - \norm{\lambda}^{t} \,\vert\,\calF^t\, ]\leq 2t_0 L$. It is convenient to take $\delta_{\max} = 2L$ in Lemma~\ref{lem.drift}.
	
	We next determine the validity of other assumptions in Lemma~\ref{lem.drift}. Let us denote the event in Lemma~\ref{lem.empiricalP} by $\mathcal{E}_{\text{good}}$ and we have $P(\mathcal{E}_{\text{good}})\geq 1-\delta$. Let $\Delta^t \DefinedAs \frac{1}{2} \big( \norm{\lambda^t}^2-\norm{\lambda^{t-1}}^2 \big)$. Clearly, $\Delta^t = \Delta_1^t+\Delta_2^t$. 
	We recall that the proof of Lemma~\ref{lem.gap_pd} remains to be valid if we replace $q_1^\star$ by $\bar q_1$ and $q_2^\star$ by $\bar q_2$ starting from~\eqref{eq.pushback_V_s}. By doing so, it is ready to obtain a similar result as~\eqref{eq.pushback_simplify_s}: under the good event $\mathcal{E}_{\text{good}}$ it holds for any $\tau$ that
	\[
	\begin{array}{rcl}
	&& \!\!\!\! \!\!\!\! \!\! 
	\Delta^{\tau} \,-\,4\theta V L\,-\,\eta V^2 L
	\\[0.2cm]
	& \leq & V\, \big\langle{ \bar q_1\cdot \hat q_2^{\,\tau-1}-\hat q_1^{\,\tau-1}\cdot \bar q_2 },{r^{\tau-1}}\big\rangle 
	\,+\,
	\lambda_1^{\tau-1} \big(\langle{ \bar q_1},{g^{\tau-1}}\rangle 
	-b_1\big)
	\,+\,
	\lambda_2^{\tau-1} \big( \langle{ \bar q_2},{h^{\tau-1}}\rangle -b_2\big)
	\,+\,2L^2
	\\[0.2cm]
	&& \,+\, \eta^{-1} \big(D(\bar q_1,\tilde q_1^{\,\tau-1}) + D( \bar q_2, \tilde q_2^{\,\tau-1}) -D( \bar q_1,\hat q_1^{\,\tau}) - D( \bar q_2, \hat q_2^{\,\tau}) \big).
	\end{array}
	\]
	or, equivalently,
	\begin{equation}\label{eq.pushback_simplify_slater_s}
	\begin{array}{rcl}
	&& \!\!\!\! \!\!\!\! \!\! 
	\norm{\lambda^\tau}^2 \,-\, \norm{\lambda^{\tau-1}}^2
	\\[0.2cm]
	&\leq& 2V\, \big\langle{ \bar q_1\cdot \hat q_2^{\,\tau-1}-\hat q_1^{\,\tau-1}\cdot \bar q_2 },{r^{\tau-1}}\big\rangle 
	\,+\,
	2\lambda_1^{\tau-1} \big(\langle{ \bar q_1},{g^{\tau-1}}\rangle 
	-b_1\big)
	\,+\,
	2\lambda_2^{\tau-1} \big(
	\langle{ \bar q_2},{h^{\tau-1}}\rangle -b_2\big)
	\,+\,4L^2
	\\[0.2cm]
	&& \,+\, 2\eta^{-1} \big(D(\bar q_1,\tilde q_1^{\,\tau-1}) + D( \bar q_2, \tilde q_2^{\,\tau-1}) -D( \bar q_1,\hat q_1^{\,\tau}) - D( \bar q_2, \hat q_2^{\,\tau}) \big)
	\,+\,8\theta V L\,+\,2\eta V^2 L.
	\end{array}
	\end{equation}
	We note that $ |\langle{ \bar q_1\cdot \hat q_2^{\,\tau}-\hat q_1^{\,\tau}\cdot \bar q_2 },{r^{\tau}}\rangle|\leq L$. By summing both sides of~\eqref{eq.pushback_simplify_slater_s} from $\tau=t+1$ to $\tau=t+t_0$,
	\[
	\begin{array}{rcl}
	\norm{\lambda^{t+t_0}}^2 \,-\, \norm{\lambda^{t}}^2
	& \leq & \displaystyle 2t_0VL
	\,+\,2\sum_{\tau\,=\,t}^{t+t_0-1} 
	\left(
	\lambda_1^{\tau} \big(\langle{ \bar q_1},{g^{\tau}}\rangle - b_1\big)
	+
	\lambda_2^{\tau} \big( \langle{ \bar q_2},{h^{\tau}}\rangle - b_2\big)
	\right)
	\,+\,4t_0L^2
	\\[0.2cm]
	&& \,+\, 2\eta^{-1} \big(D(\bar q_1,\tilde q_1^{\,t}) + D( \bar q_2, \tilde q_2^{\,t}) \big)
	\,+\,8t_0\theta V L\,+\,2t_0\eta V^2 L
	\end{array}
	\]
	where we omit two non-positive terms.
	Taking the conditional expectation given $\calF^{t}$ and $\mathcal{E}_{\text{good}}$ yields,
	\begin{equation}\label{eq.pushback_slater_s}
	\begin{array}{rcl}
	&& \!\!\!\! \!\!\!\! \!\! 
	\mathbb{E} \sbr{ \norm{\lambda^{t+t_0}}^2 \,-\, \norm{\lambda^{t}}^2 \,\vert\, \calF^{t}, \mathcal{E}_{\text{good}}}
	\\[0.2cm]
	& \leq & \displaystyle 2t_0VL
	\,+\, 2 \sum_{\tau\,=\,t}^{t+t_0-1} 
	\mathbb{E} \sbr{ \lambda_1^{\tau} \big(\langle{ \bar q_1},{g^{\tau}}\rangle 
		-b_1\big) +  \lambda_2^{\tau} \big( \langle{ \bar q_2},{h^{\tau}}\rangle - b_2 \big) \,\vert\, \calF^{t}, \mathcal{E}_{\text{good}}}	\,+\,4t_0L^2
	\\[0.2cm]
	&& \,+\, 2\eta^{-1} \mathbb{E} \sbr{ D(\bar q_1,\tilde q_1^{\,t}) + D( \bar q_2, \tilde q_2^{\,t}) \,\vert\, \calF^{t}, \mathcal{E}_{\text{good}}}
	\,+\,8t_0\theta V L\,+\,2t_0\eta V^2 L
	\\[0.2cm]
	& \leq & \displaystyle 2t_0 VL
	\,-\,2\xi\sum_{\tau\,=\,t}^{t+t_0-1} 
	\mathbb{E}\sbr{ \norm{\lambda^{\tau} } \,\vert\, \calF^{t}, \mathcal{E}_{\text{good}}}
	\,+\,4t_0L^2
	\\[0.2cm]
	&& \,+\, 2\eta^{-1} L(\log(|X||A|/\theta)+\log(|Y||B|/\theta))
	\,+\,8t_0\theta V L\,+\,2t_0\eta V^2 L
	\\[0.2cm]
	& \leq & \displaystyle 2t_0 VL
	\,-\,2\xi t_0 \mathbb{E}\sbr{  \norm{\lambda^t} \,\vert\, \calF^{t}, \mathcal{E}_{\text{good}}} 
	\,+\,2 \xi t_0(t_0-1)L
	\,+\,4t_0L^2
	\\[0.2cm]
	&& \,+\, 2\eta^{-1} L(\log(|X||A|/\theta)+\log(|Y||B|/\theta))
	\,+\,8t_0\theta V L\,+\,2t_0\eta V^2 L
	\end{array}
	\end{equation}
	where the second inequality is due to Lemma~\ref{lem.D_difference} and the fact: by the law of total expectation, for any $\tau\geq t$, $\calF^{t}\subset\calF^\tau$ and
	\[
	\begin{array}{rcl}
	&& \!\!\!\! \!\!\!\! \!\!\mathbb{E} \sbr{ \lambda_1^{\tau} \big(\langle{ \bar q_1},{g^{\tau}}\rangle 
		-b_1\big) + \lambda_2^{\tau} \big( \langle{ \bar q_2},{h^{\tau}}\rangle -b_2\big) \,\vert\, \calF^{t}, \mathcal{E}_{\text{good}}}
	\\[0.2cm]
	&=&
	\mathbb{E} \sbr{ \mathbb{E} \sbr{ \lambda_1^{\tau} \big(\langle{ \bar q_1},{g^{\tau}}\rangle 
			-b_1\big) + \lambda_2^{\tau} \big(
			\langle{ \bar q_2},{h^{\tau}}\rangle -b_2\big) \,\vert\, \calF^{\tau}, \mathcal{E}_{\text{good}}} \,\vert\, \calF^{t}, \mathcal{E}_{\text{good}}}
	\\[0.2cm]
	&=&
	\mathbb{E} \sbr{ \lambda_1^{\tau}  \mathbb{E} \sbr{ \langle{ \bar q_1},{g^{\tau}}\rangle 
			-b_1} \,\vert\, \calF^{t}, \mathcal{E}_{\text{good}}}
	\,+\,
	\mathbb{E} \sbr{ \lambda_2^{\tau}  \mathbb{E} \sbr{
			\langle{ \bar q_2},{h^{\tau}}\rangle -b_2} \,\vert\, \calF^{t}, \mathcal{E}_{\text{good}}}
	\\[0.2cm]
	&=&
	\mathbb{E} \sbr{ \langle{ \bar q_1},{g^{\tau}}\rangle - b_1} 
	\mathbb{E} \sbr{ \lambda_1^{\tau}  \,\vert\, \calF^{t}, \mathcal{E}_{\text{good}}}
	\,+\,
	\mathbb{E} \sbr{  \langle{ \bar q_2},{h^{\tau}}\rangle -b_2} 
	\mathbb{E} \sbr{ \lambda_2^{\tau}  \,\vert\, \calF^{t}, \mathcal{E}_{\text{good}}}
	\\[0.2cm]
	&\leq & \,-\, \xi \,	\mathbb{E}\sbr{ \lambda_1^{\tau} + \lambda_2^{\tau} \,\vert\, \calF^{t}, \mathcal{E}_{\text{good}}}
	\\[0.2cm]
	&\leq & \,-\, \xi \,	\mathbb{E}\sbr{ \norm{\lambda^{\tau} } \,\vert\, \calF^{t}, \mathcal{E}_{\text{good}}}
	\end{array}
	\]
	where the inequality is due to the strict feasibility assumption on $(\bar q_1,\bar q_2)$; the last inequality is due to that 
	\[
	\begin{array}{rcl}
	\displaystyle 
	\sum_{\tau\,=\,t}^{t+t_0-1} 
	\mathbb{E}\sbr{ \norm{\lambda^{\tau}}  \,\vert\, \calF^{t}, \mathcal{E}_{\text{good}}} 
	& \geq &
	\displaystyle
	\sum_{\tau\,=\,t}^{t+t_0-1} 
	\mathbb{E}\sbr{  \norm{\lambda^t} - 2(\tau-t)L \,\vert\, \calF^{t}, \mathcal{E}_{\text{good}}} 
	\\[0.2cm]
	& = &
	\displaystyle
	\sum_{\tau\,=\,0}^{t_0-1} 
	\mathbb{E}\sbr{ \norm{ \lambda^t} - 2\tau L \,\vert\, \calF^{t}, \mathcal{E}_{\text{good}}} 
	\end{array}
	\]
	which follows the fact $ \norm{\lambda^\tau} \geq \norm{\lambda^t} - 2(\tau-t)L$ for any $\tau\geq t\geq 0$ if we note that $ \abr{ \norm{\lambda^{t+1}}  - \norm{\lambda^{t}} } \leq 2L$.
	Hence, we can simplify~\eqref{eq.pushback_slater_s} as
	\[
	\begin{array}{rcl}
	&& \!\!\!\! \!\!\!\! \!\! 
	\mathbb{E} \sbr{ \norm{\lambda^{t+t_0}}^2 \,\vert\, \calF^{t}, \mathcal{E}_{\text{good}}}
	\\[0.2cm]
	& \leq & \displaystyle \mathbb{E} \sbr{ \norm{\lambda^{t}}^2\,\vert\, \calF^{t}, \mathcal{E}_{\text{good}}}
	\,-\,2\xi t_0 \mathbb{E} \sbr{ \norm{\lambda^t }\,\vert\, \calF^{t}, \mathcal{E}_{\text{good}}}
	\,+\,2 \xi t_0^2L
	\,+\,4t_0L^2
	\,+\, 2t_0 VL
	\\[0.2cm]
	&& \,+\, 2\eta^{-1} L(\log(|X||A|/\theta)+\log(|Y||B|/\theta))
	\,+\,8t_0\theta V L\,+\,2t_0\eta V^2 L
	\\[0.2cm]
	& \leq & \displaystyle \mathbb{E} \sbr{\norm{\lambda^{t}}^2\,\vert\, \calF^{t}, \mathcal{E}_{\text{good}}}
	\,-\,\xi t_0 \mathbb{E} \sbr{\norm{\lambda^t }\,\vert\, \calF^{t}, \mathcal{E}_{\text{good}}}
	\,-\,\xi t_0\Theta
	\,+\,2 \xi t_0^2L
	\,+\,4t_0L^2
	\,+\, 2t_0 VL
	\\[0.2cm]
	&& \,+\, 2\eta^{-1} L(\log(|X||A|/\theta)+\log(|Y||B|/\theta))
	\,+\,8t_0\theta V L\,+\,2t_0\eta V^2 L
	\\[0.2cm]
	& = & \displaystyle \mathbb{E} \sbr{\norm{\lambda^{t}}^2\,\vert\, \calF^{t}, \mathcal{E}_{\text{good}}}
	\,-\,\xi t_0 \mathbb{E} \sbr{\norm{\lambda^t}\,\vert\, \calF^{t}, \mathcal{E}_{\text{good}}} \,-\,\frac{1}{2} \xi^2 t_0^2
	\\[0.2cm]
	& \leq & \displaystyle
	\rbr{ \mathbb{E} \sbr{\norm{\lambda^{t}}\,\vert\, \calF^{t}, \mathcal{E}_{\text{good}}} - \frac{1}{2}\xi t_0 }^2
	\end{array}
	\]
	where we apply $\lambda^t\geq \Theta$ for the second inequality and we take $\Theta$ in Lemma~\ref{lem.drift},
	\[
	\Theta \;=\;
	\frac{1}{2}\xi t_0 +2 t_0L
	\,+\,\frac{4L^2 +8\theta V L+2\eta V^2 L
		+2 VL}{\xi}\,+\, \frac{2L(\log(|X||A|/\theta)+\log(|Y||B|/\theta)) }{t_0\xi\eta}.
	\]
	Taking the square root and applying the Jensen's inequality yield
	\[
	\mathbb{E} \sbr{ \norm{ \lambda^{t+t_0}} \,\vert\, \calF^{t}, \mathcal{E}_{\text{good}}}
	\;\leq\;
	\sqrt{\mathbb{E} \sbr{ \norm{\lambda^{t+t_0}}^2 \,\vert\, \calF^{t}, \mathcal{E}_{\text{good}}} }
	\;\leq\; 
	\norm{\lambda^{t}} - \frac{1}{2}\xi t_0
	\]
	which shows that $\mathbb{E} \sbr{ \norm{\lambda^{t+t_0}} - \norm{\lambda^{t}}\,\vert\, \calF^{t}, \mathcal{E}_{\text{good}} } \leq - \frac{1}{2}\xi t_0$. Application of law of total expectation to this inequality and~\eqref{eq.lambda.normdiff} with $\delta<\frac{1}{12}$ yields
	\[
	\begin{array}{rcl}
	\mathbb{E} \sbr{ \norm{\lambda^{t+t_0}} - \norm{\lambda^{t}}\,\vert\, \calF^{t}} 
	& = & P(\mathcal{E}_{\text{good}})\mathbb{E} \sbr{ \norm{\lambda^{t+t_0}} - \norm{\lambda^{t}}\,\vert\, \calF^{t}, \mathcal{E}_{\text{good}} } 
	\\[0.2cm]
	&  & + \,
	P(\bar{\mathcal{E}}_{\text{good}})\mathbb{E} \sbr{ \norm{\lambda^{t+t_0}} - \norm{\lambda^{t}}\,\vert\, \calF^{t}, \bar{\mathcal{E}}_{\text{good}} } 
	\\[0.2cm]
	& \leq &  - \dfrac{1}{2}\xi t_0 \times (1-\delta) \,+\, 2t_0L\times\delta
	\\[0.2cm]
	& \leq & - \dfrac{1}{4}\xi t_0 
	\end{array}
	\]
	which verifies the assumption of Lemma~\ref{lem.drift} if we take $\zeta=\xi/4$.
	
	We now have verified all assumptions of Lemma~\ref{lem.drift} with appropriate parameters $\Theta$, $\delta_{\max}$, $\zeta$. For episode $t$, with probability $1-\delta$ it holds that 
	\[
	\begin{array}{rcl}
	\norm{\lambda^t} & \leq & \displaystyle \Theta + t_0 \delta_{\max} + t_0 \frac{4\delta_{\max}^2}{\zeta} \log \rbr{\frac{8\delta_{\max}^2}{\zeta}} + t_0 \frac{4\delta_{\max}^2}{\zeta} \log\frac{1}{\delta}.
	\end{array}
	\]
	We complete the proof by taking a union bound over $t = 1,\cdots,T$.
\end{proof}

\begin{lemma}\label{lem.violation_s}
	Let Assumption~\ref{as.feasibility_s} hold.
	Fix $\delta\in (0,1)$. 
	For any integer $t_0>0$, with probability $1-2T\delta$,
	\[
	\sum_{t\,=\,0}^{T-1}
	\left(
	\lambda_1^{t} \big(\langle{ q_1^\star},{g^{t}}\rangle - b_1\big)
	+
	\lambda_2^{t} \big(\langle{ q_2^\star},{h^{t}}\rangle -b_2\big)
	\right)
	\;\leq\; \sqrt{2T c^2 \log (1/(\delta T))}
	\]
	where
	$c \DefinedAs  2\Theta L + 4 t_0 L^2 + \tfrac{128 t_0 L^3}{\xi} \rbr{\log \rbr{\tfrac{128 L^2}{\xi}} +\log\tfrac{1}{\delta}}$ 
	and
	$\xi >0$. 
\end{lemma}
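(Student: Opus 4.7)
The plan is to mimic the proof of Lemma~\ref{lem.violation}, with the single dual $\lambda^t$ replaced by the pair $(\lambda_1^t,\lambda_2^t)$ and the coupled bound on $\lambda^t$ replaced by the bound on $\|\lambda^t\|$ from Lemma~\ref{lem.lambda_s}. First I would define the discrete-time process
\[
Z^t \;\DefinedAs\; \sum_{\tau\,=\,0}^{t-1} \left( \lambda_1^{\tau}(\langle q_1^\star,g^\tau\rangle - b_1) + \lambda_2^{\tau}(\langle q_2^\star,h^\tau\rangle - b_2) \right)
\]
adapted to the filtration $\calF^{t-1}$ generated by the observed trajectories and reward/utility realizations through episode $t-1$. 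Since $\lambda_1^{t-1}$ and $\lambda_2^{t-1}$ are $\calF^{t-1}$-measurable and independent of the fresh utility draws $g^{t-1}$, $h^{t-1}$, Assumption~\ref{as.feasibility_s} gives $\mathbb{E}[\langle q_1^\star,g^{t-1}\rangle-b_1]\leq -\xi\leq 0$ and analogously for the second term, so $\mathbb{E}[Z^t\,\vert\,\calF^{t-1}]\leq Z^{t-1}$ and $\{Z^t\}$ is a supermartingale.

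Next I would bound the increments. Since $g^\tau,h^\tau\in[0,1]^L$ and $b_1,b_2\in[0,L]$,
\[
|Z^{t+1}-Z^t| \;\leq\; \lambda_1^t L + \lambda_2^t L \;\leq\; \sqrt{2}\,L\,\|\lambda^t\| \;\leq\; 2L\,\|\lambda^t\|.
\]
Consequently, the event $\{|Z^{t+1}-Z^t|>c\}$ is contained in $\{\|\lambda^t\|>c/(2L)\}$, i.e., in $\{Y^t>0\}$ for $Y^t\DefinedAs \|\lambda^t\|-c/(2L)$. This is exactly the setting of the generalized Azuma--Hoeffding inequality Lemma~\ref{lem.azuma_general}, which yields
\[
P\bigg(\sum_{t\,=\,0}^{T-1} \big(\lambda_1^t(\langle q_1^\star,g^t\rangle-b_1)+\lambda_2^t(\langle q_2^\star,h^t\rangle-b_2)\big)\geq z\bigg) \;\leq\; {\rm e}^{-z^2/(2c^2T)} + \sum_{\tau\,=\,0}^{T-1} P\big(\|\lambda^\tau\|>c/(2L)\big).
\]

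Finally, I would invoke Lemma~\ref{lem.lambda_s}, which ensures that with probability $1-\delta$ (per episode) one has $\|\lambda^\tau\|\leq \Theta + 2t_0 L + \tfrac{64 t_0 L^2}{\xi}(\log(128L^2/\xi)+\log(1/\delta))$. Choosing
\[
c \;=\; 2\Theta L + 4t_0 L^2 + \frac{128\,t_0 L^3}{\xi}\Big(\log\!\big(\tfrac{128L^2}{\xi}\big)+\log\tfrac{1}{\delta}\Big)
\]
makes the tail sum bounded by $T\delta$, and taking $z=\sqrt{2Tc^2\log(1/(\delta T))}$ makes the subgaussian term bounded by $\delta T$, so the union gives the stated failure probability $2T\delta$. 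The proof is essentially routine once Lemma~\ref{lem.lambda_s} is in hand; the only mildly subtle point is passing from the scalar bound on $\|\lambda^t\|$ to a simultaneous bound on $\lambda_1^t$ and $\lambda_2^t$, which is handled by the trivial inequality $\lambda_1^t+\lambda_2^t\leq\sqrt{2}\,\|\lambda^t\|$ used above.
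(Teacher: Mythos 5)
Your proposal matches the paper's own proof essentially step for step: the same supermartingale $Z^t$, the same increment bound $|Z^{t+1}-Z^t|\le 2L\norm{\lambda^t}$ feeding into Lemma~\ref{lem.azuma_general}, and the same use of Lemma~\ref{lem.lambda_s} with the stated choices of $c$ and $z$ to get failure probability $2T\delta$. The only quibble is that the drift inequality should be attributed to the feasibility of the hindsight solution $(q_1^\star,q_2^\star)$ for Problem~\eqref{eq.opt_wc_s}, which gives $\mathbb{E}[\langle q_1^\star,g^{t-1}\rangle]-b_1\le 0$ (and similarly for $h$), rather than to Assumption~\ref{as.feasibility_s}, whose strict margin $-\xi$ pertains to $(\bar q_1,\bar q_2)$ and is not needed here; the conclusion is unaffected.
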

\begin{proof}
	Let $Z^t \DefinedAs \sum_{\tau\,=\,0}^{t-1} \left(
	\lambda_1^{\tau} \big(\langle{ q_1^\star},{g^{\tau}}\rangle -b_1\big)+\lambda_2^{\tau} \big(\langle{ q_2^\star},{h^{\tau}}\rangle -b_2\big) \right)$. We note that 
	\[
	\begin{array}{rcl}
	&& \!\!\!\! \!\!\!\! \!\!
	\mathbb{E} \sbr{Z^t \,\vert\,\calF^{t-1}} 
	\\[0.2cm]
	&=& \displaystyle
	\mathbb{E} \sbr{ \sum_{\tau\,=\,0}^{t-1}
		\left(
		\lambda_1^{\tau} \big(\langle{ q_1^\star},{g^{\tau}}\rangle -b_1\big)
		+
		\lambda_2^{\tau} \big(\langle{ q_2^\star},{h^{\tau}}\rangle -b_2\big)
		\right)
		\,\bigg\vert\,\calF^{t-1}}
	\\[0.2cm]
	&=&  \displaystyle
	\mathbb{E} \sbr{ \sum_{\tau\,=\,0}^{t-2}
		\left(
		\lambda_1^{\tau} \big(\langle{ q_1^\star},{g^{\tau}}\rangle -b_1\big)
		+
		\lambda_2^{\tau} \big( \langle{ q_2^\star},{h^{\tau}}\rangle -b_2\big)
		\right)
		\,\bigg\vert\,\calF^{t-1}}
	\\[0.2cm]
	&&
	\,+\, 
	\lambda_1^{t-1} \mathbb{E} \sbr{\big(\langle{ q_1^\star},{g^{t-1}}\rangle -b_1\big)\,\vert\,\calF^{t-1}}
	\,+\,
	\lambda_2^{t-1} \mathbb{E} \sbr{\big(\langle{ q_2^\star},{h^{t-1}}\rangle -b_2\big)\,\vert\,\calF^{t-1}}
	\\[0.2cm]
	&\leq&  \displaystyle
	\mathbb{E} \sbr{ \sum_{\tau\,=\,0}^{t-2}
		\left(
		\lambda_1^{\tau} \big(\langle{ q_1^\star},{g^{\tau}}\rangle -b_1\big)
		+
		\lambda_2^{\tau} \big( \langle{ q_2^\star},{h^{\tau}}\rangle -b_2\big)
		\right)
		\,\bigg\vert\,\calF^{t-1}}
	\\[0.4cm]
	&=&
	\mathbb{E} \sbr{Z^{t-1}} 
	\end{array}
	\]
	where the inequality is because of $\mathbb{E} \sbr{\big(\langle{ q_1^\star},{g^{t-1}}\rangle 
		-b_1\big)\,\vert\,\calF^{t-1}} = \langle{ q_1^\star},{g}\rangle 
	-b_1\leq 0$ and \\
	$ \mathbb{E} \sbr{\big( \langle{ q_2^\star},{h^{t-1}}\rangle -b_1\big)\,\vert\,\calF^{t-1}} \leq \langle{ q_2^\star},{h}\rangle -b_2 \leq 0$. Hence,  $\{Z^t,t\geq0 \}$ a supermartingale. 
	
	We also note that
	\[
	\abr{Z^{t+1}-Z^{t}} 
	\; = \; 
	\lambda_1^{t} \abr{\langle{ q_1^\star},{g^{t}}\rangle -b_1}
	+
	\lambda_2^{t} \abr{ \langle{ q_2^\star},{h^{t}}\rangle -b_2}
	\; \leq \;
	2 \norm{\lambda^{t}} L
	\]
	Thus, if $|Z^{t+1}-Z^t|> c$ for some $c\in\mathbb{R}^+$, then $ \norm{\lambda^t}> c/(2L)$. Let $Y^t \DefinedAs \norm{\lambda^t} - c/(2L)$. Therefore,
	\[
	\{ |Z^{t+1}-Z^t|> c \}  \;\subset\;  \{ Y^t > 0 \}.
	\]
	By Lemma~\ref{lem.azuma_general},
	\begin{equation}\label{eq.azuma_dual_s}
	P \rbr{ \sum_{t\,=\,0}^{T-1}
		\left(
		\lambda_1^{t} \big(\langle{ q_1^\star},{g^{t}}\rangle - b_1\big)
		+
		\lambda_2^{t} \big( \langle{ q_2^\star},{h^{t}}\rangle - b_2\big)
		\right)
		\geq z } \;\leq\; {\rm e}^{-z^2/ (2c^2T)} \,+\, \sum_{\tau\,=\,0}^{T-1} P \rbr{ \norm{\lambda^t}>\frac{c}{2L}}.
	\end{equation}
	By Lemma~\ref{lem.lambda_s}, with probability $1-\delta$ it holds for any $t$ that
	\[
	\norm{\lambda^t} \; \leq\; \displaystyle \Theta + 2 t_0L + t_0 \frac{64 L^2}{\xi} \log \rbr{\frac{128 L^2}{\xi}} + t_0 \frac{64 L^2}{\xi} \log\frac{1}{\delta}
	\]
	or, equivalently,
	\[
	P \rbr{
		\norm{\lambda^t} \;\geq\; \displaystyle \Theta + 2 t_0L + t_0 \frac{64 L^2}{\xi} \log \rbr{\frac{128 L^2}{\xi}} + t_0 \frac{64 L^2}{\xi} \log\frac{1}{\delta}
	} \;\leq\; \delta.
	\]
	If we take 
	\[
	c\;=\;  2\Theta L + 4 t_0 L^2 + t_0 \frac{128 L^3}{\xi} \log \rbr{\frac{128 L^2}{\xi}} + t_0 \frac{128 L^3}{\xi} \log\frac{1}{\delta}
	\;\text{ and }\;
	z\;=\;\sqrt{2T c^2 \log (1/(\delta T))}
	\]
	then~\eqref{eq.azuma_dual_s} becomes
	\[
	P \rbr{ \sum_{t\,=\,0}^{T-1}
		\left(
		\lambda_1^{t} \big(\langle{ q_1^\star},{g^{t}}\rangle -b_1\big)
		+
		\lambda_2^{t} \big( \langle{ q_2^\star},{h^{t}}\rangle -b_2\big)
		\right)
		\geq z } 
	\;\leq\; 
	2\delta  T
	\]
	which proves the desired result.
\end{proof}

We now ready to conclude a bound on ${\hat{\text{\normalfont Regret}}(T)}$ by combining Lemma~\ref{lem.violation_s} and Lemma~\ref{lem.gap_pd_s}.
\begin{theorem}\label{thm.regret_hat_s}
	Let Assumption~\ref{as.feasibility_s} hold. Fix $T\geq \max(|X||A|,|B||Y|)$. Let $V=L\sqrt{T}$, $\eta = 1/(TL)$, $t_0=\sqrt{T}$, and $\theta = 1/T$. Then,
	with probability $1-2T\delta$ it holds that
	\[
	{\hat{\text{\normalfont Regret}}(T)}\;\leq\;
	\tilde O \big({(|X|+|Y|) L \sqrt T}\big).
	\]
\end{theorem}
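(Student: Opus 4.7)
The plan is to mirror the argument used for Theorem~\ref{thm.regret_hat}, substituting the side-constrained analogues at each step. First I would plug the chosen parameters $V=L\sqrt{T}$, $\eta = 1/(TL)$, $\theta=1/T$ into the bound from Lemma~\ref{lem.gap_pd_s}. A direct calculation shows $\eta V = 1/\sqrt{T}$, $V^{-1}L = 1/\sqrt{T}$, and $(\eta V)^{-1}=\sqrt{T}$, so the deterministic piece $(\eta V)^{-1} L(1+\theta T)(\log(|X||A|)+\log(|Y||B|)) + (2V^{-1}L+4\theta+\eta V)LT$ collapses to $\tilde O(L\sqrt{T})$. Consequently, with probability $1-\delta$,
\[
{\hat{\text{Regret}}(T)} \;\leq\; \frac{1}{L\sqrt{T}} \sum_{t\,=\,0}^{T-1} \Big(\lambda_1^{t}(\langle q_1^\star,g^t\rangle - b_1) + \lambda_2^{t}(\langle q_2^\star,h^t\rangle - b_2)\Big) \,+\, \tilde O(L\sqrt{T}).
\]

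Second, I would control the stochastic remainder using Lemma~\ref{lem.violation_s} with $t_0=\sqrt{T}$. Before that, observe that Lemma~\ref{lem.lambda_s} provides an $\ell_2$-norm bound on $\lambda^t=(\lambda_1^t,\lambda_2^t)$, which dominates each individual $\lambda_i^t$ and is the quantity used internally in Lemma~\ref{lem.violation_s}. Inspecting the parameter $\Theta$ in Lemma~\ref{lem.lambda_s}, each summand scales as $\tilde O(L^2\sqrt{T})$ under the present choice of $(V,\eta,\theta,t_0)$ and the assumption $T\geq\max(|X||A|,|Y||B|)$, so that $\Theta = \tilde O(L^2\sqrt{T})$. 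This in turn gives $c = \tilde O(L^3\sqrt{T})$ in Lemma~\ref{lem.violation_s}, and hence with probability $1-2T\delta$,
\[
\sum_{t\,=\,0}^{T-1} \Big(\lambda_1^{t}(\langle q_1^\star,g^t\rangle - b_1) + \lambda_2^{t}(\langle q_2^\star,h^t\rangle - b_2)\Big) \;\leq\; \sqrt{2Tc^{2}\log(1/(\delta T))} \;=\; \tilde O(L^3 T).
\]

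Combining the two displays and dividing by $V=L\sqrt{T}$ yields ${\hat{\text{Regret}}(T)} \leq \tilde O(L^2 \sqrt{T}) + \tilde O(L\sqrt{T}) = \tilde O(L^2\sqrt{T})$; the desired bound then follows from $L\leq |X|+|Y|$. The only step that requires some care is verifying that $\Theta$ and $c$ indeed scale as claimed after substituting the parameters, since the expressions contain several terms of different orders in $L$, $T$, $\eta$, and $\theta$; but this is a routine bookkeeping exercise following exactly the same pattern as in the proof of Theorem~\ref{thm.regret_hat}, with the single constraint violation $\langle q_1,g\rangle + \langle q_2,h\rangle - b$ replaced by its two decoupled counterparts and the scalar dual $\lambda^t$ replaced by $\|\lambda^t\|$.
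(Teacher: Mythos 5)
Your proposal is correct and follows essentially the same route as the paper's proof: substitute the parameter choices into Lemma~\ref{lem.gap_pd_s} to reduce the deterministic terms to $\tilde O(L\sqrt{T})$, bound the dual-weighted sum via Lemma~\ref{lem.violation_s} (using $\Theta=\tilde O(L^2\sqrt{T})$, hence $c=\tilde O(L^3\sqrt{T})$ and the sum $\tilde O(L^3T)$), then divide by $V=L\sqrt{T}$ and invoke $L\leq |X|+|Y|$. The extra bookkeeping you spell out (e.g.\ $\eta V=1/\sqrt{T}$, the role of $\|\lambda^t\|$ from Lemma~\ref{lem.lambda_s}) matches what the paper leaves implicit.
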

\begin{proof} 
	Using the given parameters $V$, $\eta$, $t_0$, and $\theta$ for Lemma~\ref{lem.gap_pd_s}, ${\hat{\text{\normalfont Regret}}(T)}$ is upper bounded by 
	$\frac{1}{L\sqrt{T}}
	\sum_{t\,=\,0}^{T-1}
	\left(
	\lambda_1^{t} \big(\langle{ q_1^\star},{g^{t}}\rangle  -b_1\big)
	+
	\lambda_2^{t} \big( \langle{ q_2^\star},{h^{t}}\rangle -b_2\big)
	\right)
	+ \tilde O(L\sqrt{T })$
	with probability $1-\delta$.
	We note that $\Theta\leq	\tilde O(L^2\sqrt{T})$ and $T\geq \max(|X||A|,|B||Y|)$.	
	Using parameters in Lemma~\ref{lem.violation_s}, with probability $1-2T\delta$,
	\[
	\sum_{t\,=\,0}^{T-1}
	\left(
	\lambda_1^{t} \big(\langle{ q_1^\star},{g^{t}}\rangle - b_1\big)
	+
	\lambda_2^{t} \big( \langle{ q_2^\star},{h^{t}}\rangle - b_2\big)
	\right)
	\;\leq\; \tilde O(L^3 T).
	\] 
	We complete the proof by noting $L\leq |X|+|Y|$. 
\end{proof}

We conclude the regret bound in Theorem~\ref{thm.main_s} by combining Lemma~\ref{lem.error12} and Theorem~\ref{thm.regret_hat_s}, and $\delta=p/(2T)$.

\subsection{Constraint Violation Analysis}

We begin with a decomposition using the auxiliary occupancy measures $(q_1^t, q_2^t)$. By inserting $\langle \hat q_1^{\,t},g^{t}\rangle$ and $\langle \hat q_2^{\,t},h^{t}\rangle$ into $\text{Violation}_1(T) $ and $\text{Violation}_2(T) $, we have
\[
\begin{array}{rcl}
\text{Violation}_1(T) 
&=&
\underbrace{\sbr{\sum_{t\,=\,0}^{T-1} \rbr{\inner{\hat q_1^{\,t}}{g^t}-b_1}}_+}_{\hat{\text{\normalfont Violation}_1}(T)}
\,+\,
\underbrace{
	\sum_{t\,=\,0}^{T-1} \inner{q_1^t-\hat q_1^{\,t}}{g^t}
}_{\text{\normalfont Error}_3}
\end{array}
\]
\[
\begin{array}{rcl}
\text{Violation}_2(T) 
&=&
\underbrace{\sbr{\sum_{t\,=\,0}^{T-1} \rbr{\inner{\hat q_2^{\,t}}{h^t}-b_2}}_+}_{\hat{\text{\normalfont Violation}_2}(T)}
\, + \,
\underbrace{
	\sum_{t\,=\,0}^{T-1}\inner{q_2^t-\hat q_2^{\,t}}{h^t}
}_{\text{\normalfont Error}_4}.
\end{array}
\]

For ${\text{\normalfont Error}_3}$ and ${\text{\normalfont Error}_4}$, we have the same bounds in Lemma~\ref{lem.error34}.
We next bound ${\hat{\text{\normalfont Violation}_1}(T)}$ and ${\hat{\text{\normalfont Violation}_2}(T)}$ by applying the epoch property~\citep{jaksch2010near}; see a proof in Appendix~\ref{ap.violation_hat}.

\begin{theorem}\label{thm.violation_hat_s}
	Let $V=L\sqrt{T}$, $\eta = 1/(TL)$, $t_0=\sqrt{T}$, and $\theta = 1/T$. Then,
	\[
	\begin{array}{rcl}
	{\hat{\text{\normalfont Violation}_1}(T)}, {\hat{\text{\normalfont Violation}_2}(T)}
	&\leq&\displaystyle
	\norm{\lambda^T}
	\,+\, 
	\frac{2 }{T-1}
	\sum_{t\,=\,1}^{T} \norm{\lambda^{t-1}}
	\,+\,
	\tilde O\big(L\sqrt{ T(|X||A|+|Y||B|)}\big).
	\end{array}
	\]
\end{theorem}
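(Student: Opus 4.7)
The strategy mirrors the proof of \pref{thm.violation_hat}, adapted to the two decoupled dual variables of the side-constraint setting. First, applying the same telescoping argument as in~\eqref{eq.lambda_lb}--\eqref{eq.violation_dqq} separately to each component of~\eqref{eq.dual_s} yields
\[
\sum_{t\,=\,0}^{T-1} \big(\inner{\hat q_1^{\,t}}{g^{t}} - b_1 \big) \;\leq\; \lambda_1^T \,+\, \sum_{t\,=\,1}^{T} \norm{\hat q_1^{\,t} - \hat q_1^{\,t-1}}_1,
\]
and analogously for the second constraint. Since $\lambda_1^T, \lambda_2^T \leq \norm{\lambda^T}$, each side of the desired inequality reduces to controlling the aggregate primal drift $\sum_{t\,=\,1}^{T} \big(\norm{\hat q_1^{\,t}-\hat q_1^{\,t-1}}_1 + \norm{\hat q_2^{\,t}-\hat q_2^{\,t-1}}_1\big)$.

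Next, I split the drift into the same two cases as in \pref{ap.violation_hat}. In the same-epoch case ($k_1^t = k_1^{t-1}$ and $k_2^t = k_2^{t-1}$), I apply Lemma~\ref{lem.pushback} to~\eqref{eq.primal_s} with the probing point $x = \tilde q_1^{\,t-1}$, $y = \tilde q_2^{\,t-1}$. Following the manipulations leading to~\eqref{eq.qq_dual}, one obtains
\[
L^{-1}\big( s_1^2 + s_2^2 \big) \;\leq\; \eta (V + \lambda_1^{t-1}) s_1 \,+\, \eta (V + \lambda_2^{t-1}) s_2,
\]
where $s_i \DefinedAs \norm{\tilde q_i^{\,t-1} - \hat q_i^{\,t}}_1$. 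This is the one step that differs structurally from the single-$\lambda$ proof; the fix is to upper bound $\lambda_i^{t-1} \leq \norm{\lambda^{t-1}}$ coordinatewise and then invoke $(s_1+s_2)^2 \leq 2(s_1^2+s_2^2)$ to arrive at $s_1 + s_2 \leq 2\eta L\big(V + \norm{\lambda^{t-1}}\big)$. Converting back from the mixed policies $\tilde q_i^{\,t-1}$ to $\hat q_i^{\,t-1}$ via $\norm{\tilde q_i^{\,t-1} - \hat q_i^{\,t-1}}_1 \leq 2\theta L$ incurs the usual $(1-\theta)^{-1}$ factor and an additive $4\theta(1-\theta)^{-1} L$. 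In the epoch-change case, the crude bound $\norm{\hat q_i^{\,t}-\hat q_i^{\,t-1}}_1 \leq 2L$ together with the number of such episodes being at most $k_1^T + k_2^T$ and Lemma~\ref{lem.epoch} (as invoked in \pref{ap.violation_hat}) contributes $\tilde O\big(L\sqrt{T(|X||A|+|Y||B|)}\big)$.

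Combining the two cases,
\[
\sum_{t\,=\,1}^{T} \big(\norm{\hat q_1^{\,t}-\hat q_1^{\,t-1}}_1 + \norm{\hat q_2^{\,t}-\hat q_2^{\,t-1}}_1 \big) \;\leq\; \frac{2\eta L}{1-\theta}\sum_{t\,=\,1}^{T}\!\big(V + \norm{\lambda^{t-1}}\big) + \frac{4\theta L T}{1-\theta} + \tilde O\!\big(L\sqrt{T(|X||A|+|Y||B|)}\big).
\]
Substituting the specified parameters $V = L\sqrt T$, $\eta = 1/(TL)$, $\theta = 1/T$ makes $2\eta L(1-\theta)^{-1} = 2/(T-1)$, while the $V$-term and $\theta$-term contribute at most $\tilde O(L\sqrt T)$ and absorb into the $\tilde O(\cdot)$ expression. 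Combined with the bound $\lambda_i^T \leq \norm{\lambda^T}$ from the first step, this yields the stated inequality for both $\hat{\text{\normalfont Violation}_1}(T)$ and $\hat{\text{\normalfont Violation}_2}(T)$.

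\textbf{Main obstacle.} The critical point is the case-(i) mirror-descent bound: in the coupled-constraint version there is a single $\lambda^{t-1}$ that factors cleanly out of $\eta(V+\lambda^{t-1})(s_1+s_2)$, whereas here the two Lagrange multipliers weight $s_1$ and $s_2$ asymmetrically. The uniform upper bound $\lambda_i^{t-1} \leq \norm{\lambda^{t-1}}$ restores the symmetric form at the cost of replacing $\lambda^{t-1}$ by $\norm{\lambda^{t-1}}$ in the final expression; this is precisely what produces the $\norm{\lambda^T}$ and $\frac{2}{T-1}\sum_{t\,=\,1}^T \norm{\lambda^{t-1}}$ terms in the statement rather than their scalar analogues.
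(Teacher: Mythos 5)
Your proposal is correct and follows essentially the same route as the paper's proof: telescoping the two dual updates separately, bounding $\lambda_i^T\leq\norm{\lambda^T}$, applying Lemma~\ref{lem.pushback} at the probing point $(\tilde q_1^{\,t-1},\tilde q_2^{\,t-1})$ in the same-epoch case with the coordinatewise bound $\lambda_i^{t-1}\leq\norm{\lambda^{t-1}}$ before the Pinsker/Cauchy step, and using Lemma~\ref{lem.epoch} for epoch-change episodes. The "fix" you identify as the main obstacle is exactly the step the paper takes, so no substantive difference remains.
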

\begin{proof}
	By the dual update~\eqref{eq.dual_s},
	\begin{subequations}
		\begin{equation}\label{eq.lambda_lb_s1}
		\begin{array}{rcl}
		\lambda_1^{t} &=& \max\! \Big( \lambda_1^{t-1} \,+\, \big(\inner{\hat q_1^{\,t}}{g^{t-1}} - b_1\, \big),\; 0 \Big)
		\\[0.2cm]
		&\geq &  \lambda_1^{t-1} \,+\, \big(\inner{\hat q_1^{\,t}}{g^{t-1}} - b_1\, \big)
		\\[0.2cm]
		&= &  \lambda_1^{t-1} \,+\, \big(\inner{\hat q_1^{\,t-1}}{g^{t-1}} - b_1\, \big)\,+\, \inner{\hat q_1^{\,t} - \hat q_1^{\,t-1}}{g^{t-1}}
		\\[0.2cm]
		&\geq &  \lambda_1^{t-1} \,+\, \big(\inner{\hat q_1^{\,t-1}}{g^{t-1}} - b_1\, \big)\,-\, \norm{\hat q_1^{\,t} - \hat q_1^{\,t-1}}_1 
		\end{array}
		\end{equation}
		where the last inequality is due to: $\langle{\hat q_1^{\,t} - \hat q_1^{\,t-1}},{g^{t-1}}\rangle\leq \Vert \hat q_1^{\,t} - \hat q_1^{\,t-1}\Vert_1 \Vert g^{t-1}\Vert_\infty$,
		and $\Vert g^{t-1}\Vert_\infty\in [0,1]$. Similarly, 
		\begin{equation}\label{eq.lambda_lb_s2}
		\lambda_2^{t} \;\geq\; \lambda_2^{t-1} \,+\, \big(\inner{\hat q_2^{\,t-1}}{h^{t-1}} - b_2\, \big)\,-\, \norm{\hat q_2^{\,t} - \hat q_2^{\,t-1}}_1.
		\end{equation}
	\end{subequations}
	We note that $\lambda_1^0=\lambda_2^0=0$ from the initialization. Summing up both sides of~\eqref{eq.lambda_lb_s1} from $t=1$ to $t=T$ leads to
	\begin{subequations}
		\begin{equation}\label{eq.violation_dqq_s1}
		\sum_{t\,=\,0}^{T-1} \big(\inner{\hat q_1^{\,t}}{g^{t}} -b_1\, \big)
		\;\leq\; 
		\lambda_1^T
		\,+\, 
		\sum_{t\,=\,1}^{T}
		\norm{\hat q_1^{\,t} - \hat q_1^{\,t-1}}_1.
		\end{equation}
		Similarly, 
		\begin{equation}\label{eq.violation_dqq_s2}
		\sum_{t\,=\,0}^{T-1} \big(\inner{\hat q_2^{\,t}}{h^{t}} -b_2\, \big)
		\;\leq\; 
		\lambda_2^T
		\,+\, 
		\sum_{t\,=\,1}^{T}
		\norm{\hat q_2^{\,t} - \hat q_2^{\,t-1}}_1.
		\end{equation}
	\end{subequations}
	Hence,
	\begin{equation}\label{eq.violation_dqq_s}
	{\hat{\text{\normalfont Violation}_1}(T)}, {\hat{\text{\normalfont Violation}_2}(T)}
	\;\leq\;
	\norm{\lambda^T}
	\,+\,
	\sum_{t\,=\,1}^{T}
	\left( \norm{\hat q_1^{\,t} - \hat q_1^{\,t-1}}_1 + \norm{\hat q_1^{\,t} - \hat q_1^{\,t-1}}_1 \right).
	\end{equation}
	
	We recall $ \hat q_1^{\,t}\in \Delta(k_1^t)$, $ \hat q_2^{\,t}\in \Delta(k_2^t)$ in the primal update~\eqref{eq.primal_s} and $\Delta(k_1^t)$ and $\Delta(k_2^t)$ in the confidence sets~\eqref{eq.confidence}.
	To bound $\norm{\hat q_1^{\,t} - \hat q_1^{\,t-1}}_1 
	+\norm{\hat q_2^{\,t}-\hat q_2^{\,t-1}}_1$, we consider two cases: (i) $k_1^t = k_1^{t-1}$ and $k_2^t = k_2^{t-1}$; (ii) either $k_1^t \neq k_1^{t-1}$ or $k_2^t \neq k_2^{t-1}$.
	
	\noindent\textbf{Case~(i)}.	In this case, we have: $ \hat q_1^{\,t}$, $\hat q_1^{\,t-1}\in \Delta(k_1^t)$, $ \hat q_2^{\,t}$, $\hat q_2^{\,t-1}\in \Delta(k_2^t)$. 
	We begin with the primal update~\eqref{eq.primal} and apply Lemma~\ref{lem.pushback} with, 
	\[
	f(x,y) \vert_{x\,=\,q_1,\, y\,=\,q_2} 
	\;=\;
	V\, \big\langle{q_1\cdot \hat q_2^{\,t-1}+\hat q_1^{\,t-1}\cdot q_2},{r^{t-1}}\big\rangle 
	\,+\,
	\lambda_1^{t-1} \langle{q_1},{g^{t-1}}\rangle 
	\,-\,\lambda_2^{t-1} \langle{q_2},{h^{t-1}}\rangle 
	\]
	and $x^\star = \hat q_1^{\,t}$, $y^\star= \hat q_2^{\,t}$, $x' = \tilde q_1^{\,t-1}$, $y'=\tilde q_2^{\,t-1}$, $x = \tilde q_1^{\,t-1}$, and $y=\tilde q_2^{\,t-1}$. Thus, 
	\[
	\begin{array}{rcl}
	&& \!\!\!\! \!\!\!\! \!\!
	V\, \big\langle{\hat q_1^{\,t}\cdot \hat q_2^{\,t-1}+\hat q_1^{\,t-1}\cdot  \tilde q_2^{\,t-1}},{r^{t-1}}\big\rangle 
	\,+\,
	\lambda_1^{t-1} \langle{\hat q_1^{\,t}},{g^{t-1}}\rangle 
	\,-\,
	\lambda_2^{t-1} \langle{ \tilde q_2^{\,t-1}},{h^{t-1}}\rangle 
	\\[0.2cm]
	&& 
	\,+\, \eta^{-1} \big(D(\hat q_1^{\,t},\tilde q_1^{\,t-1}) + D(\hat q_2^{\,t}, \tilde q_2^{\,t-1})\big)
	\\[0.2cm]
	&\leq& V\, \big\langle{ \tilde q_1^{\,t-1}\cdot \hat q_2^{\,t-1}+\hat q_1^{\,t-1}\cdot \hat q_2^{\,t}},{r^{t-1}}\big\rangle 
	\,+\,
	\lambda_1^{t-1} \langle{ \tilde q_1^{\,t-1}},{g^{t-1}}\rangle 
	\,-\,
	\lambda_2^{t-1} \langle{ \hat q_2^{\,t}},{h^{t-1}}\rangle 
	\\[0.2cm]
	&&
	\,-\, \eta^{-1} \big(D( \tilde q_1^{\,t-1},\hat q_1^{\,t}) + D( \tilde q_2^{\,t-1}, \hat q_2^{\,t}) \big).
	\end{array}
	\]
	or, equivalently,
	\begin{equation}\label{eq.pushback_I_s}
	\begin{array}{rcl}
	&& \!\!\!\! \!\!\!\!  \!\! 
	\eta^{-1} \big(D(\hat q_1^{\,t},\tilde q_1^{\,t-1}) + D(\hat q_2^{\,t}, \tilde q_2^{\,t-1})\big)
	\,+\,
	\eta^{-1} \big(D( \tilde q_1^{\,t-1},\hat q_1^{\,t}) + D( \tilde q_2^{\,t-1}, \hat q_2^{\,t}) \big)
	\\[0.2cm]
	&\leq&  V\, \big\langle{ (\tilde q_1^{\,t-1} -\hat q_1^{\,t} )\cdot \hat q_2^{\,t-1}+\hat q_1^{\,t-1}\cdot (\hat q_2^{\,t} - \tilde q_2^{\,t-1} )},{r^{t-1}}\big\rangle 
	\\[0.2cm]
	&& 
	\,+\,
	\lambda_1^{t-1} \langle{ \tilde q_1^{\,t-1} - \hat q_1^{\,t} },{g^{t-1}}\rangle 
	\,+\,
	\lambda_2^{t-1} \langle{ \tilde q_2^{\,t-1} - \hat q_2^{\,t}},{h^{t-1}}\rangle.
	\end{array}
	\end{equation}
	We note that $\langle{ (\tilde q_1^{\,t-1} -\hat q_1^{\,t} )\cdot \hat q_2^{\,t-1}},{r^{t-1}}\rangle \leq \Vert (\tilde q_1^{\,t-1} -\hat q_1^{\,t} )\cdot \hat q_2^{\,t-1}\Vert_1\Vert{r^{t-1}}\Vert_\infty\leq \Vert\tilde q_1^{\,t-1} -\hat q_1^{\,t} \Vert_1$, and, similarly, $\langle\hat q_1^{\,t-1}\cdot (\hat q_2^{\,t} - \tilde q_2^{\,t-1} ),{r^{t-1}}\rangle\leq\Vert\hat q_2^{\,t} - \tilde q_2^{\,t-1}\Vert_1$. Thus, we can reduce~\eqref{eq.pushback_I_s} into
	\[
	\begin{array}{rcl}
	&& \!\!\!\! \!\!\!\! \!\! 
	\eta^{-1} \big(D(\hat q_1^{\,t},\tilde q_1^{\,t-1}) + D(\hat q_2^{\,t}, \tilde q_2^{\,t-1})\big)
	\,+\,
	\eta^{-1} \big(D( \tilde q_1^{\,t-1},\hat q_1^{\,t}) + D( \tilde q_2^{\,t-1}, \hat q_2^{\,t}) \big)
	\\[0.2cm]
	&\leq&
	(V+\lambda_1^{t-1}) 
	\norm{ \tilde q_1^{\,t-1} - \hat q_1^{\,t} }_1
	\,+\,
	(V+\lambda_2^{t-1}) 
	\norm{ \tilde q_2^{\,t-1} - \hat q_2^{\,t}}_1
	\\[0.2cm]
	&\leq&
	(V+ \norm{\lambda^{t-1}}) 
	\left(
	\norm{ \tilde q_1^{\,t-1} - \hat q_1^{\,t} }_1
	+
	\norm{ \tilde q_2^{\,t-1} - \hat q_2^{\,t}}_1
	\right)
	\end{array}
	\]
	where the left-hand side can be lower bounded by Lemma~\ref{lem.D_lb}, 
	\[
	D(\hat q_1^{\,t},\tilde q_1^{\,t-1})+D( \tilde q_1^{\,t-1},\hat q_1^{\,t}) \;\geq\; L^{-1}\norm{ \tilde q_1^{\,t-1}-\hat q_1^{\,t} }_1^2
	\]
	\[
	D(\hat q_2^{\,t},\tilde q_2^{\,t-1})+D( \tilde q_2^{\,t-1},\hat q_2^{\,t}) \;\geq\; L^{-1}\norm{ \tilde q_2^{\,t-1}-\hat q_2^{\,t} }_1^2.
	\]
	Then, we apply the inequality $(x+y)^2\leq2(x^2+y^2) $ and cancel a non-negative term to obtain
	\begin{equation}\label{eq.qq_dual_s}
	\norm{ \tilde q_1^{\,t-1}-\hat q_1^{\,t} }_1 + \norm{ \tilde q_2^{\,t-1}-\hat q_2^{\,t} }_1
	\;\leq\;
	2\eta  L (V + \norm{\lambda^{t-1}}).
	\end{equation}
	By the definition of $\tilde q_1^{\,t-1}$ and $\tilde q_2^{\,t-1}$,
	\[
	\begin{array}{rcl}
	\norm{ \tilde q_1^{\,t-1}-\hat q_1^{\,t} }_1 
	&=&\displaystyle
	\sum_{\ell\,=\,0}^{L-1}\sum_{x\,\in\,X_\ell} \sum_{a\,\in\,A}
	\abr{
		(1-\theta)\hat {q}_1^{\,t-1}(x,a)+\theta\frac{1}{|X_\ell||A|} - \hat q_1^{\,t}(x,a)
	}
	\\[0.2cm]
	&\geq&\displaystyle
	\sum_{\ell\,=\,0}^{L-1}\sum_{x\,\in\,X_\ell} \sum_{a\,\in\,A}
	\rbr{
		(1-\theta)  \abr{
			\hat {q}_1^{\,t-1}(x,a)- \hat q_1^{\,t}(x,a)
		} -\theta \rbr{\frac{1}{|X_\ell||A|} + \hat q_1^{\,t}(x,a)}
	}
	\\[0.2cm]
	&=& (1-\theta) \norm{\hat {q}_1^{\,t-1}-\hat q_1^{\,t}}_1 -2\theta L.
	\end{array}
	\]
	Similarly, we have $\Vert{ \tilde q_2^{\,t-1}-\hat q_2^{\,t} }\Vert_1 \leq (1-\theta) \Vert{\hat {q}_2^{\,t-1}-\hat q_2^{\,t}}\Vert_1 -2\theta L$. Thus, we can further reduce~\eqref{eq.qq_dual_s} into
	\begin{equation}\label{eq.case(i)_s}
	\norm{\hat {q}_1^{\,t-1}-\hat q_1^{\,t}}_1 +\Vert{\hat {q}_2^{\,t-1}-\hat q_2^{\,t}}\Vert_1 
	\;\leq\;
	2\eta (1-\theta)^{-1} L (V + \norm{\lambda^{t-1}}) + 4 \theta (1-\theta)^{-1}  L.
	\end{equation}
	
	\noindent\textbf{Case~(ii)}. In this case, either $\hat q_1^{\,t}$, $\hat q_1^{\,t-1}$ or $ \hat q_2^{\,t}$, $\hat q_2^{\,t-1}$ might not have the same domain. For instance, when $k_1^t>k_1^{t-1}$, it is possible that $\Delta(k_1^{t})$ becomes different from $\Delta(k_1^{t-1})$. We note that $k_1^t>k_1^{t-1}$ only happens when episode $t$ is the first one that belongs to epoch $k_1^t$. By Lemma~\ref{lem.epoch}, $k_1^T \leq \sqrt{ T|X||A|}\log (8T/(|X||A|))$ and  $k_2^T \leq \sqrt{ T|Y||B|}\log (8T/(|Y||B|))$ if we are given $T\geq \max(|X||A|, |Y||B|)$.
	
	We now combine two cases above for~\eqref{eq.violation_dqq_s},
	\[
	\begin{array}{rcl}
	&& \!\!\!\! \!\!\!\! \!\!\!\!
	\displaystyle
	\sum_{t\,=\,1}^{T} \rbr{
		\norm{\hat q_1^{\,t} - \hat q_1^{\,t-1}}_1 
		+\norm{\hat q_2^{\,t}-\hat q_2^{\,t-1}}_1}
	\\[0.2cm]
	&=& \displaystyle
	\sum_{\substack{1\,\leq\,t\,\leq\,T\\k_1^t \,=\, k_1^{k-1}\,\wedge\, k_2^t \,=\, k_2^{k-1}}}\rbr{
		\norm{\hat q_1^{\,t} - \hat q_1^{\,t-1}}_1 
		+\norm{\hat q_2^{\,t}-\hat q_2^{\,t-1}}_1}
	\\[0.2cm]
	&&\displaystyle
	\,+\,
	\sum_{\substack{1\,\leq\,t\,\leq\,T\\k_1^t \,=\, k_1^{k-1}\,\vee\, k_2^t \,=\, k_2^{k-1}}}\rbr{
		\norm{\hat q_1^{\,t} - \hat q_1^{\,t-1}}_1 
		+\norm{\hat q_2^{\,t}-\hat q_2^{\,t-1}}_1}
	\\[0.2cm]
	&\leq& \displaystyle
	\sum_{\substack{1\,\leq\,t\,\leq\,T\\k_1^t \,=\, k_1^{k-1}\,\wedge\, k_2^t \,=\, k_2^{k-1}}}\rbr{
		\norm{\hat q_1^{\,t} - \hat q_1^{\,t-1}}_1 
		+\norm{\hat q_2^{\,t}-\hat q_2^{\,t-1}}_1}
	\,+\,2L (k_1^T+k_2^T)
	\\[0.2cm]
	&\leq&\displaystyle
	2\eta (1-\theta)^{-1} L\sum_{t\,=\,1}^{T} (V + \norm{\lambda^{t-1}}) + 4 \theta (1-\theta)^{-1}  LT +2L (k_1^T+k_2^T)
	\end{array}
	\]
	where the first inequality is due to: $\norm{\hat q_1^{\,t} - \hat q_1^{\,t-1}}_1 \leq 2L$ and $\norm{\hat q_2^{\,t}-\hat q_2^{\,t-1}}_1\leq 2L$, and we apply~\eqref{eq.case(i)_s} from the case (i) for the last inequality. Using the bounds on $k_1^T$, $k_2^T$ in the case (ii), we conclude the desired bound for~\eqref{eq.violation_dqq_s},
	\[
	\begin{array}{rcl}
	&& \!\!\!\! \!\!\!\! \!\!
	\displaystyle {\hat{\text{\normalfont Violation}_1}(T)}, {\hat{\text{\normalfont Violation}_2}(T)}
	\\[0.2cm]
	&\leq&\displaystyle
	\norm{\lambda^T}
	\,+\, 
	\frac{2\eta L}{1-\theta}
	\sum_{t\,=\,1}^{T} \norm{\lambda^{t-1}} + \frac{2\eta V+4 \theta}{1-\theta}  LT 
	\\[0.2cm]
	&&\displaystyle+2L 
	\rbr{\sqrt{ T|X||A|}\log (8T/(|X||A|))+ \sqrt{ T|Y||B|}\log (8T/(|Y||B|))}.
	\end{array}
	\]
	We complete the proof by noting $\lambda_1^0=\lambda_2^0=0$, $V=L\sqrt{T}$, $\eta = 1/(TL)$, and $\theta = 1/T$.
\end{proof}

To get the violation bound, we apply Lemma~\ref{lem.lambda_s} to Theorem~\ref{thm.violation_hat_s}, use Lemma~\ref{lem.error34}, and take $\delta=p/(2T)$.


\section{Supporting Lemmas}

We collect some useful lemmas in literature for the convenience of reading our paper. 

The following drift analysis of stochastic processes is useful in the constraint violation analysis.
\begin{lemma}\citep{yu2017online}\label{lem.drift} 
	Let $\{ Z^t, t\geq0 \}$ be a discrete-time stochastic process that is adapted to a filtration $\{\calF^t, t\geq 0 \}$ with $Z^0 =0$ and $\calF^0 = \{\emptyset,\Omega\}$. Assume that there exists $t_0 \in \mathbb{Z}^+$, $\Theta\in \mathbb{R}^+$, $ \delta_{\max}\in\mathbb{R}^+$, and $\zeta\in (0, \delta_{\max}]$ such that for all $t\geq 1$,
	\[
	\abr{ Z^{t+1}-Z^t } \;\leq\; \delta_{\max}
	\;\text{ and }\;
	\mathbb{E} \sbr{ Z^{t+t_0}-Z^t  \,\vert\, \calF^t}
	\;\leq\;
	\begin{cases}
	t_0\, \delta_{\max} \;\; \text{ when } Z^t \leq \Theta
	\\[0.2cm]
	-\,t_0\,\zeta \;\;\;\; \text{ otherwise } Z^t \geq \Theta.
	\end{cases}
	\]
	Then, with probability $1-\delta$ it holds for any $t$ that 
	\[
	Z^t \;\leq \; \Theta + t_0 \delta_{\max} + t_0 \frac{4\delta_{\max}^2}{\zeta} \log \rbr{\frac{8\delta_{\max}^2}{\zeta}} + t_0 \frac{4\delta_{\max}^2}{\zeta} \log\frac{1}{\delta}.
	\]
\end{lemma}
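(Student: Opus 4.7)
The plan is to apply a standard exponential Lyapunov drift argument of Hajek/Pemantle--Rosenthal type, adapted to the block drift condition (every $t_0$ steps). Letting $\phi^t := e^{r(Z^t - \Theta)}$ for a parameter $r>0$ to be tuned, the goal is to establish a uniform moment bound $\mathbb{E}[\phi^t] \leq C$ and then convert to a high-probability tail bound on $Z^t$ via Markov's inequality, reading off the final constants so they match the stated form.

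First I would telescope the one-step bound to get $|Z^{t+t_0}-Z^t| \leq t_0\delta_{\max}$ deterministically. Using $e^u \leq 1 + u + u^2$ for $|u|\leq 1$ and restricting to $r \leq 1/(t_0\delta_{\max})$, one can write
\[
\mathbb{E}\bigl[e^{r(Z^{t+t_0}-Z^t)} \mid \mathcal{F}^t\bigr] \;\leq\; 1 + r\,\mathbb{E}[Z^{t+t_0}-Z^t \mid \mathcal{F}^t] + r^2 t_0^2 \delta_{\max}^2.
\]
On the event $\{Z^t>\Theta\}$ the hypothesis gives $\mathbb{E}[Z^{t+t_0}-Z^t \mid \mathcal{F}^t] \leq -t_0\zeta$, so picking $r = \zeta/(4 t_0 \delta_{\max}^2)$ (which satisfies $r\leq 1/(t_0\delta_{\max})$ since $\zeta\leq\delta_{\max}$) yields a multiplicative contraction $\mathbb{E}[\phi^{t+t_0}\mid\mathcal{F}^t] \leq \rho\,\phi^t$ for some $\rho<1$ with $1-\rho \asymp \zeta^2/\delta_{\max}^2$. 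On the complementary event $\{Z^t\leq\Theta\}$ the deterministic bound $\phi^{t+t_0}\leq e^{r t_0 \delta_{\max}}$ holds, which is a universal constant (at most $e^{1/4}$).

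Combining the two cases gives the recursion $\mathbb{E}[\phi^{t+t_0}] \leq \rho\,\mathbb{E}[\phi^t] + e^{r t_0 \delta_{\max}}$. Starting from $\phi^0 = e^{-r\Theta}\leq 1$ and iterating along the subsequence $\{kt_0\}_{k\geq 0}$ yields $\mathbb{E}[\phi^{kt_0}] \leq C$ uniformly, where $C = e^{rt_0\delta_{\max}}/(1-\rho)$; using $1-e^{-x}\geq x/2$ on $[0,1]$ shows $C \lesssim 8\delta_{\max}^2/\zeta$ up to the absolute factor $e^{1/4}$. Intermediate times $t = kt_0 + s$ with $0\leq s < t_0$ are handled by $\phi^t \leq e^{rt_0\delta_{\max}} \phi^{kt_0}$, which preserves the uniform bound up to a constant. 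Finally Markov's inequality gives $\mathbb{P}(Z^t > \Theta + z) \leq \mathbb{E}[\phi^t] e^{-rz}$; setting this equal to $\delta$ and inverting,
\[
z \;=\; \tfrac{1}{r}\bigl(\log C' + \log(1/\delta)\bigr) \;+\; t_0\delta_{\max},
\]
where the additive $t_0\delta_{\max}$ absorbs the intermediate-step slack. Substituting $1/r = 4 t_0 \delta_{\max}^2/\zeta$ and $C' \leq 8\delta_{\max}^2/\zeta$ reproduces the stated bound.

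The main obstacle is purely one of constants: ensuring the multiplier $r = \zeta/(4t_0\delta_{\max}^2)$ produces both the prefactor $4 t_0 \delta_{\max}^2/\zeta$ and the logarithm argument $8\delta_{\max}^2/\zeta$ without loss. The arithmetic is tight enough that one must apply $e^u \leq 1+u+u^2$ (rather than looser Hoeffding-type bounds) and estimate $1-\rho$ with the $x/2$ lower bound on $1-e^{-x}$; no genuinely new probabilistic idea is required beyond the classical exponential Lyapunov--drift plus Chernoff template.
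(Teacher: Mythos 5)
The paper does not prove this lemma: it is quoted directly from \citep{yu2017online} (their drift lemma) in the ``Supporting Lemmas'' section, so there is no in-paper proof to compare against. Your exponential Lyapunov--drift argument --- $\phi^t = e^{r(Z^t-\Theta)}$ with $r=\zeta/(4t_0\delta_{\max}^2)$, the $e^u\le 1+u+u^2$ expansion, a contraction/reset recursion along the subsequence $\{kt_0\}$, and a Chernoff--Markov step --- is precisely the argument used in that reference, and every step you outline goes through. The one quibble is your constant for the stationary moment bound: with your $r$ the contraction factor is $\rho = 1-\tfrac{3\zeta^2}{16\delta_{\max}^2}$, so $1/(1-\rho)\asymp \delta_{\max}^2/\zeta^2$ and the uniform bound you actually obtain is $C'\le 8\delta_{\max}^2/\zeta^{2}$, not $8\delta_{\max}^2/\zeta$ as you assert; this reproduces the $\log(8\delta_{\max}^2/\zeta^{2})$ that appears in the source lemma of \citep{yu2017online}, and the $\log(8\delta_{\max}^2/\zeta)$ in the paper's transcription appears to be a typo rather than a sharper statement. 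The discrepancy is a constant inside a logarithm and is immaterial to the $\tilde O(\cdot)$ bounds of Theorem~\ref{thm.main}.
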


A general Azuma-Hoeffding inequality for supermartingales with unbounded differences is given as follows.

\begin{lemma}\citep{yu2017online}\label{lem.azuma_general} 
	Let $\{ Z^t, t\geq0 \}$ be a supermartingale that is adapted to a filtration $\{\calF^t, t\geq 0 \}$ with $Z^0 =0$ and $\calF^0 = \{\emptyset,\Omega\}$. 
	Let $\{ Y^t, t\geq0 \}$ be a discrete-time stochastic process that is adapted to a filtration $\{\calF^t, t\geq 0 \}$.
	Assume that there exists a constant $c\in\mathbb{R}^+$ such that $\{|Z^{t+1}-Z^t| > c\} \subset \{ Y^t>0 \}$ for any $t\geq 0$.
	Then, for any $z \in\mathbb{R}^+$ and $t\geq 1$,
	\[
	P( Z^t \geq z ) \;\leq\; {\rm e}^{-z^2/ (2c^2t)} \,+\, \sum_{\tau\,=\,0}^{t-1} P(Y^t>0).
	\]
\end{lemma}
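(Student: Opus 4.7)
The plan is to couple $\{Z^t\}$ with a stopped (hence bounded-increment) supermartingale that agrees with $Z^t$ on the ``good'' event, apply the classical Azuma–Hoeffding bound to the coupled process, and pay the union-bound cost of the bad event through the $Y^t$ process.

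First I would introduce the stopping time
\[
\tau^\star \;\DefinedAs\; \inf\big\{\, \tau \geq 0 \,:\, \abr{Z^{\tau+1} - Z^\tau} > c \,\big\}
\]
(with $\inf\emptyset = \infty$), which is an $\{\calF^t\}$-stopping time because the event $\{\tau^\star \leq s\} = \cup_{\tau\,=\,0}^{s-1}\{|Z^{\tau+1}-Z^\tau| > c\}$ is $\calF^s$-measurable. I would then define the stopped process $\bar Z^t \DefinedAs Z^{t \wedge \tau^\star}$. By the optional stopping theorem for supermartingales, $\{\bar Z^t, t\geq 0\}$ is itself a supermartingale adapted to $\{\calF^t\}$ with $\bar Z^0 = 0$. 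Its increments satisfy $\bar Z^{t+1} - \bar Z^t = (Z^{t+1}-Z^t)\mathbf{1}_{\{t < \tau^\star\}}$, and on $\{t < \tau^\star\}$ the definition of $\tau^\star$ forces $|Z^{t+1}-Z^t| \leq c$, so $|\bar Z^{t+1}-\bar Z^t| \leq c$ almost surely.

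Next I would split the target probability using the good event $\calA_t \DefinedAs \{\tau^\star \geq t\}$:
\[
P(Z^t \geq z) \;\leq\; P\big(\{Z^t \geq z\}\cap \calA_t\big) \,+\, P(\calA_t^c).
\]
On $\calA_t$ the two processes coincide, $\bar Z^t = Z^t$, so $P(\{Z^t \geq z\}\cap \calA_t) \leq P(\bar Z^t \geq z)$. Applying the standard Azuma–Hoeffding inequality for supermartingales with increments bounded by $c$ (via a Chernoff bound on $\mathbb{E}[{\rm e}^{\lambda \bar Z^t}]$ together with Hoeffding's lemma, which uses only $\mathbb{E}[\bar Z^{s+1}-\bar Z^s \,|\, \calF^s] \leq 0$ and $|\bar Z^{s+1}-\bar Z^s|\leq c$, and then optimizing in $\lambda$) gives $P(\bar Z^t \geq z) \leq {\rm e}^{-z^2/(2c^2 t)}$.

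For the complementary event, I would use
\[
\calA_t^c \;=\; \bigcup_{\tau\,=\,0}^{t-1}\{\abr{Z^{\tau+1}-Z^\tau} > c\} \;\subseteq\; \bigcup_{\tau\,=\,0}^{t-1}\{Y^\tau > 0\},
\]
where the last inclusion is the hypothesis of the lemma. A union bound then yields $P(\calA_t^c) \leq \sum_{\tau\,=\,0}^{t-1} P(Y^\tau > 0)$, and combining with the previous display finishes the proof. The only conceptually delicate step is verifying that $\bar Z^t$ inherits the supermartingale property and the bounded-increment property simultaneously; both follow cleanly from optional stopping and the definition of $\tau^\star$, so I do not anticipate a genuine obstacle, just care in writing the measurability arguments.
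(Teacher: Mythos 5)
Your overall strategy---replace $Z^t$ by a truncated copy with increments bounded by $c$, apply the classical Azuma--Hoeffding bound to the copy, and charge the discrepancy event to $\sum_\tau P(Y^\tau>0)$---is exactly the standard route to this lemma (the paper quotes it from \citep{yu2017online} without proof). The gap is in the truncation itself: $\tau^\star \DefinedAs \inf\{\tau\ge 0: \abr{Z^{\tau+1}-Z^\tau}>c\}$ is \emph{not} a stopping time for $\{\calF^t\}$. Deciding whether $\tau^\star\le s$ requires observing $Z^{s+1}$, so $\{\tau^\star\le s\}=\cup_{\tau=0}^{s}\{\abr{Z^{\tau+1}-Z^\tau}>c\}\in\calF^{s+1}$ but in general $\notin\calF^s$ (your displayed identity with the union running only to $s-1$ is the event $\{\tau^\star\le s-1\}$, not $\{\tau^\star\le s\}$). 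Consequently $\Ind{t<\tau^\star}$ is not $\calF^t$-measurable, optional stopping does not apply, and $\bar Z^t=Z^{t\wedge\tau^\star}$ need not be a supermartingale: the indicator is correlated with the very increment it multiplies, and conditioning on $\abr{Z^{t+1}-Z^t}\le c$ can bias the increment upward. Concretely, take i.i.d.\ increments equal to $+c$ with probability $p$ and $-M$ with probability $1-p$, with $M\ge pc/(1-p)$ so that $Z^t$ is a supermartingale; then $\bar Z^t$ freezes at the first $-M$ step and otherwise climbs by $+c$, so $\mathbb{E}[\bar Z^1]=pc>0$ and $P(\bar Z^1\ge c)=p$, which exceeds ${\rm e}^{-1/2}$ once $p>0.61$. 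The intermediate claim $P(\bar Z^t\ge z)\le {\rm e}^{-z^2/(2c^2t)}$ is therefore false for your $\bar Z$.

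The fix is pointed to by the one hypothesis your argument never uses: that $Y^t$ is \emph{adapted} to $\calF^t$ (in your write-up adaptedness of $Y$ plays no role, since the final union bound needs only the inclusion of events). Stop instead at $\sigma\DefinedAs\inf\{\tau\ge0: Y^\tau>0\}$, or equivalently set $\tilde Z^t\DefinedAs\sum_{\tau=0}^{t-1}(Z^{\tau+1}-Z^\tau)\Ind{Y^\tau\le 0}$. Now $\{\sigma>t\}\in\calF^t$ because $Y$ is adapted; on $\{Y^t\le 0\}$ the contrapositive of the hypothesis gives $\abr{Z^{t+1}-Z^t}\le c$; and
\[
\mathbb{E}\big[(Z^{t+1}-Z^t)\,\Ind{Y^t\le0}\,\big\vert\,\calF^t\big]
\;=\;\Ind{Y^t\le0}\;\mathbb{E}\big[Z^{t+1}-Z^t\,\big\vert\,\calF^t\big]\;\le\;0
\]
since the indicator can now be pulled out of the conditional expectation. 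This yields a genuine supermartingale with increments in $[-c,c]$, Azuma--Hoeffding applies, and $\{Z^t\ne\tilde Z^t\}\subset\cup_{\tau=0}^{t-1}\{Y^\tau>0\}$ closes the argument exactly as you intended. (As a side note, the $P(Y^t>0)$ inside the sum in the paper's statement should read $P(Y^\tau>0)$, as you correctly assumed.)
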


The following two lemmas are useful in the epoch analysis. 
\begin{lemma}\citep{jaksch2010near}\label{lem.series} 
	Let a sequence of positive numbers be $x_1,\ldots,x_n$. Assume that $0\leq x_k\leq X_{k-1}\DefinedAs \max(1, \sum_{i\,=\,1}^{k-1}x_i)$ for $1\leq k\leq n$. Then,
	\[
	\sum_{k\,=\,1}^{n}\frac{x_k}{\sqrt{X_{k-1}}}\;\leq\;(\sqrt{2}+1)\sqrt{X_n}.
	\]
\end{lemma}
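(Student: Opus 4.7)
The plan is to split the sum at the first index $m$ where the running total leaves the clamping region (i.e., first exceeds $1$), bound the early ``warm-up'' terms by a constant, and telescope the remaining terms using the identity $\sqrt{X_k}-\sqrt{X_{k-1}}=(X_k-X_{k-1})/(\sqrt{X_k}+\sqrt{X_{k-1}})$. The crucial consequence of the hypothesis $x_k\leq X_{k-1}$ is that it forces $X_k\leq 2X_{k-1}$ whenever the sum has already escaped the clamp, which will produce the constant $\sqrt{2}+1$.

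First I would dispatch the degenerate case $\sum_{i=1}^n x_i\leq 1$. Here $X_k\equiv 1$ and the left-hand side collapses to $\sum_{k=1}^n x_k\leq 1\leq(\sqrt{2}+1)\sqrt{X_n}$, so the lemma is immediate. Otherwise let $m$ be the smallest index with $\sum_{i=1}^m x_i>1$. For every $k\leq m$ we have $X_{k-1}=1$, hence $\sum_{k=1}^m x_k/\sqrt{X_{k-1}}=\sum_{k=1}^m x_k=X_m$. The hypothesis at step $m$ gives $x_m\leq X_{m-1}=1$, while $\sum_{i=1}^{m-1}x_i\leq 1$ by minimality of $m$; together they force $X_m\leq 2$.

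Next, for $k\geq m+1$ the running sum already exceeds $1$, so $X_k=X_{k-1}+x_k$ with $X_{k-1}\geq 1$. The hypothesis $x_k\leq X_{k-1}$ then yields $X_k\leq 2X_{k-1}$, so $\sqrt{X_k}/\sqrt{X_{k-1}}\leq\sqrt{2}$. Multiplying numerator and denominator by $\sqrt{X_k}+\sqrt{X_{k-1}}$ gives the elementary identity
\[
\frac{x_k}{\sqrt{X_{k-1}}}\;=\;\bigl(\sqrt{X_k}-\sqrt{X_{k-1}}\bigr)\!\left(\frac{\sqrt{X_k}}{\sqrt{X_{k-1}}}+1\right)\;\leq\;(\sqrt{2}+1)\bigl(\sqrt{X_k}-\sqrt{X_{k-1}}\bigr),
\]
and telescoping over $k=m+1,\ldots,n$ bounds the tail by $(\sqrt{2}+1)\bigl(\sqrt{X_n}-\sqrt{X_m}\bigr)$.

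Finally I would combine the two pieces to obtain $\sum_{k=1}^n x_k/\sqrt{X_{k-1}}\leq X_m+(\sqrt{2}+1)(\sqrt{X_n}-\sqrt{X_m})$. Since $X_m\leq 2\leq(\sqrt{2}+1)^2$, we have $\sqrt{X_m}\leq\sqrt{2}+1$ and hence $X_m\leq(\sqrt{2}+1)\sqrt{X_m}$; the $\sqrt{X_m}$ contributions then cancel and the whole bound collapses to $(\sqrt{2}+1)\sqrt{X_n}$, as claimed. The only delicate point is pinning down the transition index $m$ and verifying the warm-up bound $X_m\leq 2$ cleanly; once that is in place every remaining step is routine algebra and no genuine obstacle is anticipated.
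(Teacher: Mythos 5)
Your proof is correct. Note that the paper does not prove this lemma at all --- it simply cites it as Lemma~19 of Jaksch et al.\ (2010), whose original argument is a short induction on $n$: assuming the bound for $n-1$, one shows $(\sqrt{2}+1)\sqrt{X_{n-1}}+x_n/\sqrt{X_{n-1}}\leq(\sqrt{2}+1)\sqrt{X_{n-1}+x_n}$ by squaring and invoking $x_n\leq X_{n-1}$. Your route is a direct, non-inductive alternative: you isolate the transition index $m$ where the running sum first escapes the clamp at $1$, bound the warm-up contribution by $X_m\leq 2\leq(\sqrt{2}+1)\sqrt{X_m}$, and telescope the tail via the conjugate identity together with the doubling bound $X_k\leq 2X_{k-1}$. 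All the steps check out, including the sign condition $\sqrt{X_k}-\sqrt{X_{k-1}}\geq 0$ needed to multiply through by the bound $\sqrt{X_k}/\sqrt{X_{k-1}}+1\leq\sqrt{2}+1$, and the edge cases ($m=n$, or the sum never exceeding $1$) are handled. What the inductive proof buys is brevity; what yours buys is a transparent picture of where the constant $\sqrt{2}+1$ comes from (the worst-case doubling ratio of consecutive partial sums). Either is acceptable as a replacement for the citation.
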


\begin{lemma}\citep{jaksch2010near}\label{lem.epoch} 
	Assume that $T\geq \max(|X||A|, |Y||B|)$. Then, the epochs $k_1^T$ and $k_2^T$ for episode $T$ 
	\[
	k_1^T\;\leq\; |X||A|\log \rbr{\frac{8T}{|X||A|}}\;\leq\;\sqrt{T  |X||A|}\log \rbr{\frac{8T}{|X||A|}}
	\]
	\[
	k_2^T\;\leq\; |Y||B|\log \rbr{\frac{8T}{|Y||B|}}\;\leq\;\sqrt{T  |Y||B|}\log \rbr{\frac{8T}{|Y||B|}}.
	\]
\end{lemma}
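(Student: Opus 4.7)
The plan is to reproduce the standard UCRL2 epoch-counting argument of Jaksch et al. (2010, Proposition 18), applied separately to each player. I will focus on the min-player; the max-player bound is obtained by the same argument with $(Y,B)$ in place of $(X,A)$.

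First I would attribute each epoch transition to a triggering state-action pair. For every $(x,a) \in X \times A$, let $K(x,a) \subseteq \{1, \ldots, k_1^T - 1\}$ be the set of epochs $k$ whose closure is triggered by the condition $n_1^k(x,a) \geq N_1^k(x,a)$ in line~7 of Algorithm~\ref{UCB-MPD}. Since at least one such pair must fire to advance the epoch, $k_1^T - 1 \leq \sum_{(x,a)} |K(x,a)|$. Next I would run the doubling argument: whenever $k \in K(x,a)$, the update $N_1^{k+1}(x,a) = N_1^k(x,a) + n_1^k(x,a)$ together with $n_1^k(x,a) \geq N_1^k(x,a)$ gives $N_1^{k+1}(x,a) \geq 2 N_1^k(x,a)$ when $N_1^k(x,a) \geq 1$, while for the single epoch with $N_1^k(x,a)=0$ a direct check shows that $(x,a)$ can contribute at most once before its count becomes positive. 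Consequently $|K(x,a)| \leq 1 + \log_2\!\bigl(\max(1, N_1^{k_1^T}(x,a))\bigr)$.

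Summing over $(x,a)$ and applying Jensen's inequality to the concave function $\log_2$, together with the budget $\sum_{(x,a)} N_1^{k_1^T}(x,a) \leq T$, yields
\[
k_1^T \;\leq\; |X||A| \,+\, |X||A| \log_2\!\left(\frac{T}{|X||A|}\right) \;\leq\; |X||A| \log\!\left(\frac{8T}{|X||A|}\right),
\]
where in the last step I absorb the additive $|X||A|$ and the change-of-base constant into the factor $8$ inside the logarithm (this is where the hypothesis $T \geq |X||A|$ keeps the logarithm positive and the absorption valid). The second inequality $|X||A|\log(8T/(|X||A|)) \leq \sqrt{T|X||A|}\log(8T/(|X||A|))$ then follows from $|X||A| \leq T$, so $|X||A| = \sqrt{|X||A| \cdot |X||A|} \leq \sqrt{T|X||A|}$. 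The argument for $k_2^T$ is identical upon swapping $(X,A) \to (Y,B)$ and invoking $T \geq |Y||B|$.

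The only real subtlety, and the step I expect to require the most care, is the bookkeeping for pairs $(x,a)$ with $N_1^k(x,a) = 0$ at the moment they first trigger an epoch; one must verify that this can occur at most once per pair before the doubling regime kicks in, and absorb these $|X||A|$ initial contributions into the $\log(8T/(|X||A|))$ bound via the constant $8$. Beyond that, the steps are elementary and mirror the original UCRL2 analysis verbatim.
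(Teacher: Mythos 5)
Your proof is correct and is essentially the argument the paper relies on: the paper gives no proof of this lemma, importing it verbatim from Jaksch et al.\ (2010, Proposition~18), and your epoch-attribution, doubling, and Jensen steps reproduce that proof, with the three additive $|X||A|$ terms accounting exactly for the factor $8=2^3$ inside the logarithm. The one caveat is your claim that a change-of-base constant can be absorbed into the factor $8$: it cannot (for any $c>1$ one has $c\log u>\log(8u)$ once $u$ is large), so the bound should simply be read with $\log=\log_2$ as in the cited source, under which your chain $k_1^T\leq 1+|X||A|+|X||A|\log_2\bigl(2T/(|X||A|)\bigr)\leq |X||A|\log_2\bigl(8T/(|X||A|)\bigr)$ closes cleanly.
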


\end{document}